\documentclass{article}
\usepackage[preprint]{neurips_2026}

\usepackage[utf8]{inputenc} 
\usepackage[T1]{fontenc}    
\usepackage{hyperref}       
\usepackage{url}            
\usepackage{booktabs}       
\usepackage{amsfonts}       
\usepackage{nicefrac}       
\usepackage{microtype}      
\usepackage{xcolor}         

\usepackage{enumitem}
\usepackage{comment}
\usepackage{placeins}
\usepackage{amsmath}
\usepackage{amsthm}
\usepackage{amssymb}
\usepackage{pgfplots}
\usepackage{pgfplotstable}
\usepackage{graphicx}
\usepackage{graphviz}
\usepackage{subcaption}
\usetikzlibrary{math}
\usetikzlibrary{shapes,arrows}
\usetikzlibrary{arrows.meta,positioning,calc,backgrounds,fit,decorations.pathreplacing}
\usetikzlibrary{backgrounds,fit}
\tikzset{>={Latex[width=2.5mm,length=2.5mm]}}
\usetikzlibrary{positioning}
\tikzstyle{block}=[draw opacity=0.7,line width=1.4cm]
\tikzset{arrow_e/.style = {->,> = latex'}}
\newtheorem{thm}{Theorem}[section]
\newtheorem{prop}{Proposition}[section]
\newtheorem{rem}{Remark}[section]

\newtheorem{lem}{Lemma}[section]
\newtheorem{defn}{Definition}

\newtheorem{assump}{Assumption}

\newcommand{\boxend}{\hfill \ensuremath{\Box}}

\usepackage{algorithm}
\usepackage{algpseudocode}
\usepackage{booktabs}
\usepackage{multirow}

\usepackage{tikz}
\usetikzlibrary{shapes.geometric, arrows.meta, positioning, fit}

\tikzstyle{doc} = [rectangle, draw, text centered, minimum height=1.2em, minimum width=2.5em, fill=white]
\tikzstyle{db} = [cylinder, draw, shape border rotate=90, aspect=0.25, minimum height=2em, minimum width=1.5em, fill=blue!10]
\tikzstyle{llm} = [draw, thick, regular polygon, regular polygon sides=6, minimum size=1.5em, fill=orange!20]
\tikzstyle{arrow} = [thick, ->, >=Stealth]
\tikzstyle{block} = [rectangle, draw, fill=yellow!10, text width=6em, text centered, minimum height=3em]

\usepackage[most]{tcolorbox}

\newtcolorbox{insightbox}{
enhanced,
colback=gray!15,
colframe=white,
boxrule=0pt,
arc=0mm,
left=10pt,
right=10pt,
top=8pt,
bottom=8pt,
borderline west={3pt}{0pt}{blue!60!black}
}

\title{Chain-based Adaptive Reconfiguration Over Lattices for Hallucination Reduction}

\author{
  Joan Vendrell Gallart \\
  Department of Mechanical and Aerospace\\
  University of California Irvine\\
  Irvine, CA 92617-4322, USA \\
  \texttt{jvendrel@uci.edu} \\
  \And
  Solmaz Kia \\
  Department of Mechanical and Aerospace \\
  University of California Irvine\\
  Irvine, CA 92617-4322, USA \\
  \texttt{solmaz@uci.edu} \\
  \AND
  Russell Bent \\
  T-5 Los Alamos National Laboratory \\
  Los Alamos, NM 88220, USA \\
  \texttt{rbent@lanl.gov} \\
  \And
  Michael Grosskopf \\
  CAI-4 Los Alamos National Laboratory \\
  Los Alamos, NM 88220, USA \\
  \texttt{ mikegros@lanl.gov} \\
}

\begin{document}

\maketitle

\begin{abstract}
  We introduce \texttt{CAROL} (\textbf{\underline{C}}hain-based \textbf{\underline{A}}daptive \textbf{\underline{R}}econfiguration \textbf{\underline{O}}ver \textbf{\underline{L}}attices), a probabilistic framework for test-time hallucination reduction in large language models. Rather than relying on token-level uncertainty, \texttt{CAROL} defines a semantic uncertainty measure based on the consistency between generated responses and a trusted context, inducing a string-submodular objective over a lattice of textual sequences. This formulation enables hallucination mitigation to be cast as a Markov chain accept–reject process with provable convergence and near-optimality guarantees, allowing the model to iteratively refine outputs toward semantic consistency. By operating at the level of meaning, \texttt{CAROL} unifies hallucination detection and mitigation within a single framework. Empirical results on question answering and multi-agent reasoning benchmarks show that \texttt{CAROL} significantly reduces hallucinations and improves reliability and interpretability compared to likelihood-based and retrieval-augmented baselines, while maintaining competitive computational efficiency.
\end{abstract}

\section{Introduction}
\label{sec::introduction}

Large Language Models (LLMs) are increasingly deployed within high-stakes decision-making pipelines, spanning clinical reasoning, scientific discovery, and autonomous cyber--physical systems~\cite{FZ-GW-ZH-LS-HC-YW-NL-PY:23}. Despite their impressive fluency, these models exhibit a persistent and consequential failure mode: they often generate responses that are linguistically well-formed yet semantically unsupported by their underlying knowledge, a phenomenon broadly referred to as \emph{hallucination}~\cite{EJ-SH:24}. In agentic and multi-step
reasoning settings, such errors do not remain localized; instead, they propagate and compound across reasoning chains, undermining reliability precisely where correctness is most critical.

Existing approaches to hallucination mitigation largely fall into two paradigms: \emph{prevention} and \emph{detection}. Prevention strategies intervene during generation, either through training-time modifications~\cite{HL-SW-YZ-YD-JL:24, PS-FX:22} or via test-time augmentation mechanisms such as Retrieval-Augmented Generation (\texttt{RAG})~\cite{PL-EP-AP-FP-VK-NG-HK-ML-WY-TR-SR-DK:20, JH-SB-IH-AK:24}, which enrich prompts with external context. While \texttt{RAG} improves factual grounding in many cases, adding more context does not guarantee semantic consistency. Empirically, prompt expansion introduces substantial overheads, including increased latency and cost~\cite{JV-MG-RB:26}, long-context degradation~\cite{YD-MT-SR-SR-SB-AG-AW-RS-EH-HP:25}, and uneven
attention allocation~\cite{NL-KL-JH-AP-MB-FP-PL:23}, and can itself destabilize generation, particularly for smaller or resource-constrained models.

Detection methods instead attempt to flag hallucinations after generation, typically by estimating uncertainty via token-level entropy or related likelihood-based proxies~\cite{AG-FB-MK:24}. However, such measures treat tokens independently and remain fundamentally syntactic: they fail to account for semantic relationships between generated content and trusted knowledge sources (Table~\ref{tab:equivalence}). Recent work on \emph{semantic entropy}~\cite{SF-JK-LK-YG:24} advances this direction by measuring variability across groups of meaning-equivalent tokens (sentences). Yet this improvement comes at a cost:
semantic entropy requires multiple forward passes of the same prompt and conflates consistency with correctness. A systematically wrong model may produce stable yet incorrect outputs under repeated sampling~\cite{MS-YC-YT-AS:23}. Repetition, by itself, does not confer truth.

\begin{table}[t]
\centering
\caption{{\footnotesize Illustration of semantic, syntactic, and lexical equivalence. Foundation models implicitly favor lexical equivalence, which often fails to capture semantic alignment.}}
\label{tab:equivalence}
\footnotesize
\begin{tabular}{p{4cm} p{5cm} c c c}
\toprule
\textbf{Sentence A} & \textbf{Sentence B} &
\textbf{Lexical} & \textbf{Syntactic} & \textbf{Semantic} \\
\midrule
\multirow{3}{*}{%
\begin{tabular}[t]{@{}l@{}}
2028 Olympics are held \\
in Los Angeles.
\end{tabular}}
& 2028 Olympics are NOT held in Los Angeles. & $\approx$ & \checkmark & $\times$ \\
& 2028 Olympics are held in Brisbane. & $\approx$ & \checkmark & $\times$ \\
& LA hosts the 2028 Olympics. & $\times$ & $\times$ & \checkmark \\
\bottomrule
\end{tabular}
\end{table}

Critically, both prevention and detection address semantic correctness only through indirect, heuristic signals. They encourage consistency but do not \emph{enforce} alignment with trusted knowledge, nor do they impose explicit structure on how meaning should accumulate over a generated response. As a result, inference-time guarantees remain elusive. This limitation reflects a deeper issue: hallucinations are fundamentally \emph{sequence-level semantic failures}, yet most mitigation strategies operate
at the level of tokens, likelihoods, or prompt manipulation. What is missing is a principled way to reason about \emph{how meaning evolves over a generated sequence}, and how each newly generated unit contributes, positively or negatively, to semantic consistency with known facts.

\paragraph{Research question and guiding hypothesis.}
Can hallucination be treated not as a statistical artifact of token generation, but as a measurable semantic deviation from trusted knowledge, and can this reframing yield a principled inference-time framework with theoretical guarantees, without modifying the model or requiring repeated sampling? We investigate whether grounding generation in a logically structured axiom set $\Gamma$ and casting hallucination mitigation as mutual information maximization over a vocabulary lattice, novel in this context, admits closed-form sub-optimality bounds and provable convergence at inference time.

A key question underlying this perspective is how to formally define “truth” in generated language. Under Tarskian semantics \cite{IN:04}, truth is not intrinsic but defined relative to a reference model. Consequently, any hallucination mitigation method must rely—explicitly or implicitly—on an external grounding source. Existing approaches incorporate such references indirectly, without enforcing consistency. In contrast, we make this dependence explicit and algorithmically actionable through a trusted axiom set $\Gamma$, treating hallucination as a deviation from a reference model of truth.

Our central hypothesis is that this view motivates a fundamental shift: from \emph{token assembly} to \emph{string assembly}. Rather than appending symbols from a vocabulary $\mathbb{V}$ and evaluating reliability retroactively via distributional proxies, the generative unit is elevated to a semantically coherent $\ell$-gram, whose entailment relationship with $\Gamma$ is enforced \emph{online}. Crucially, $\Gamma$ is not used to augment the prompt, but to define the semantic reference against which each generated unit is evaluated.

This reframing naturally induces a diminishing-returns structure: once a sequence is well-aligned with $\Gamma$, adding redundant or weakly related content yields progressively smaller semantic gains and may even introduce errors. This property is precisely captured by \emph{string submodularity}, a generalization of classical set submodularity that respects generation order and operates over the prefix structure of sequences~\cite{ZZ-EC-AP-WM:13}. Modeling semantic alignment as a string-submodular mutual information objective over the vocabulary lattice yields three key advantages: (i) a theoretically grounded notion of semantic consistency, (ii) tractable inference-time optimization with provable guarantees, and (iii) a unified mechanism in which hallucination detection and mitigation emerge from the same accept--reject signal.

Building on this structure, we introduce \texttt{CAROL}
(\textbf{\underline{C}}hain-based
\textbf{\underline{A}}daptive
\textbf{\underline{R}}econfiguration
\textbf{\underline{O}}ver
\textbf{\underline{L}}attices), a probabilistic test-time refinement algorithm that models $\ell$-gram generation as a Markov chain over the vocabulary lattice, guided by a string-submodular semantic objective. \texttt{CAROL} requires neither repeated sampling nor access to internal model probabilities, enabling black-box deployment with convergence and near-optimality guarantees. Our main contributions are:
\begin{itemize}
\item \textbf{(C1) One-shot semantic consistency metric.} We introduce an inference-time hallucination score computable in a single forward pass, based on exemplar clustering of $\ell$-grams relative to $\Gamma$~\cite{AK-DG:14}. On \textbf{FEVER}~\cite{JT-AV-CC-AM:18}, it achieves $59.5\%$ detection accuracy, outperforming token-level entropy by $10$ points and matching \textsf{HalluciNot}~\cite{BP-AL-PJ-PA:25} at $\nicefrac{1}{6}$ the GPU cost.

\item \textbf{(C2) String-submodular objective with near-optimality guarantees.}
We show that the axiom-grounded consistency score induces a string-submodular mutual information function $\mathrm{I}(\mathsf{S};\Gamma)$ over the vocabulary lattice $\mathcal{L}(\mathbb{V}^*)$, yielding closed-form sub-optimality bounds.

\item \textbf{(C3) \texttt{CAROL} algorithm with provable convergence.} Using probabilistic submodular sampling~\cite{AG-HH-AK:15} and path coupling~\cite{RB-MD:97}, we bound the mixing time of \texttt{CAROL} to sub-$\mathcal{O}(n \log n)$, establishing the first provable convergence result for inference-time hallucination control in black-box LLMs. Across \textbf{TruthfulQA}~\cite{SL-JH-OE:21}, \textbf{HaluEval}~\cite{JL-XC-WZ-JN-JW:23}, and \textbf{HotPotQA}~\cite{ZY-PQ-SZ-YB-WC-RS-CM:18}, \texttt{CAROL} reduces hallucination by up to $40\%$ relative to \texttt{RAG}, at comparable latency and lower API cost.
\end{itemize}

\paragraph{Paper outline.}
Section~\ref{sec::preliminaries} reviews background on string submodularity. Section~\ref{sec::statement} formalizes the problem setting. Section~\ref{sec::method} presents \texttt{CAROL} and Section~\ref{sec::analysis} its theoretical properties. Section~\ref{sec::numerical} reports empirical results, and Section~\ref{sec::conclusion} concludes with future directions.

\section{Preliminaries}
\label{sec::preliminaries}

This section introduces the discrete optimization structures that underlies our framework. All concepts are independent of language models and hallucination and serve solely to formalize the search space and objective class instantiated in Section~\ref{sec::statement}.

Let $\mathbb{V}$ denote a ground set of atomic elements (e.g., tokens, phrases, or semantic units) and let $\mathbb{V}^*$ denote the set of all finite sequences over $\mathbb{V}$. A sequence $\mathsf{S} = (s_1,\ldots,s_\ell)\in\mathbb{V}^*$ has length
$|\mathsf{S}|=\ell$. For two sequences $\mathsf{S}^1,\mathsf{S}^2\in\mathbb{V}^*$, we denote concatenation by $\mathsf{S}^1\oplus\mathsf{S}^2$ and write $\mathsf{S}^1\preceq\mathsf{S}^2$ if $\mathsf{S}^1$ is a prefix of $\mathsf{S}^2$. Let us define a group of $\kappa$ sequences as $\mathbb{S}=(\mathsf{S}^1,\cdots,\mathsf{S}^\kappa)$. 

In many sequential decision problems, not all sequences in $\mathbb{V}^*$ are admissible. Feasibility often depends on the entire prefix history, encoding constraints such as coherence or structural consistency. We model such path-dependent constraints by a family $\mathcal{I}\subseteq\mathbb{V}^*$ of \emph{feasible} sequences. A convenient abstraction for prefix-closed feasibility systems is provided by greedoids.

\begin{defn}[Greedoid~\cite{BK-LL:81}]
A pair $(\mathbb{V},\mathcal{I})$ is a \emph{greedoid} if: \textbf{(G1)} (\emph{Non-emptiness}) $\emptyset \in \mathcal{I}$; \textbf{(G2)} (\emph{Accessibility}) for any nonempty $\mathsf{S}\in\mathcal{I}$, there exists $s\in\mathsf{S}$ such that $\mathsf{S}\ominus\{s\}\in\mathcal{I}$;
  \textbf{(G3)} (\emph{Exchange}) for any $\mathsf{S}_1,\mathsf{S}_2\in
    \mathcal{I}$ with $|\mathsf{S}_1|<|\mathsf{S}_2|$, there exists
    $s\in\mathsf{S}_2\ominus\mathsf{S}_1$ such that
    $\mathsf{S}_1\oplus s\in\mathcal{I}$.
\boxend
\end{defn}

\begin{defn}[Basis of a Greedoid]
\label{def:basis}
A \emph{basis} is a maximal feasible sequence $\mathsf{S}\in\mathcal{I}$ such that no extension $\mathsf{S}\oplus s$ with $s\in\mathbb{V}$ remains feasible. The set of all bases is denoted $\mathcal{B}(\mathcal{I})$.
\boxend
\end{defn}

\paragraph{String submodularity.}
Let $f:\mathbb{V}^*\to\mathbb{R}_{\ge 0}$ be an objective defined over sequences. The function $f$ is \emph{string submodular} if for any $\mathsf{S}^1\preceq\mathsf{S}^2$ and any $s\in\mathbb{V}$ not appearing in $\mathsf{S}^2$,
\begin{equation}
f(\mathsf{S}^1\oplus s)-f(\mathsf{S}^1)
\;\ge\;
f(\mathsf{S}^2\oplus s)-f(\mathsf{S}^2).
\end{equation}
The function is \emph{normalized} if $f(\emptyset)=0$ and
\emph{prefix-monotone} if $f(\mathsf{S}^2)\ge f(\mathsf{S}^1)$ whenever $\mathsf{S}^1\preceq\mathsf{S}^2$.

String submodularity captures diminishing returns over ordered prefixes and generalizes classical set submodularity to sequential domains.

\section{Problem Statement}
\label{sec::statement}

We now specialize the abstract structures of Section~\ref{sec::preliminaries} to the problem of inference-time hallucination reduction. Let $\mathsf{q}$ be a query forming a prompt $\rho$ and let $\Gamma$ be a set of trusted contextual
statements. Given a language model $F(\rho;\theta)$ pretrained over a dataset $\mathcal{D}$, with no assumptions on $\Gamma \cap \mathcal{D}$, the generated response is a finite sequence of $\ell-$grams $\mathbb{S}=(\mathsf{S}^i)_{i=1}^\kappa$ such that $\mathsf{S}^i=(s^i_1,\ldots,s^i_\ell)\in\mathbb{V}^*$.

\begin{assump}[Reliable Context]
\label{assump:context}
The context $\Gamma$ consists of trusted statements, modeled as a set of axioms $\mathcal{A}$ that serve as the reference for semantic evaluation (see Appendix~\ref{app:axioms}).
\end{assump}

\begin{assump}[Black-box Generation]
\label{assump:no_information}
The language model $F(\rho;\theta)$ is treated as a black box: only the generated outputs $\mathsf{S}$ are observable, and no internal probabilities are accessible.
\end{assump}

Hallucination is assessed at the level of complete, semantically coherent responses rather than individual tokens. Accordingly, not all sequences in $\mathbb{V}^*$ are considered valid candidates. We define a feasibility family $\mathcal{I}\subseteq\mathbb{V}^*$ encoding path-dependent semantic coherence constraints over partial generations and assume $(\mathbb{V},\mathcal{I})$ forms a greedoid, see Proposition~\ref{prop:llm_greedoid}. Each feasible prefix represents a partially coherent response, while bases $\mathcal{B}(\mathcal{I})$ correspond to maximally extended responses whose semantic content can be evaluated holistically. Our goal is to select, among all admissible complete responses, one that is most semantically consistent with the trusted context:
\begin{equation}
\label{eqn:general_main_problem}
\min_{\mathbb{S}\in\mathcal{B}(\mathcal{I})}\ \mathbb{H}(\mathbb{S}\mid\Gamma),
\end{equation}
where $\mathbb{H}(\mathbb{S}\mid\Gamma)$ quantifies semantic inconsistency between the response and the axioms in $\Gamma$. Under Assumptions~\ref{assump:context}--\ref{assump:no_information},
$\mathbb{H}$ is computable in a single forward pass and does not rely on repeated sampling or likelihood information. The construction of $\mathbb{H}$ proceeds independently of $\mathcal{I}$ and serves purely as the optimization objective.

\paragraph{Step 1: Entailment as semantic alignment.}
We adopt entailment as a proxy measure for semantic alignment between generated responses and the trusted context. Following Tarski’s model-theoretic notion of truth, a statement $\varphi$ is entailed by $\Gamma$ if no interpretation makes every element of $\Gamma$ true while $\varphi$ false, denoted $\Gamma\vDash_{\mathcal{FS}}\varphi$. We define a continuous relaxation in embedding space by representing statements as embeddings $\varphi=\phi(\mathsf{S})$, and computing alignment as
\begin{equation}
\mathcal{E}(\mathsf{S}^1,\mathsf{S}^2)=
\frac{\langle \varphi^1,\varphi^2\rangle}{\|\varphi^2\|},
\label{eqn::entailment}
\end{equation}
which serves as an asymmetric similarity score capturing directionality rather than exact logical implication.

\paragraph{Step 2: Exemplar clustering and semantic entropy.}
A response is considered non-hallucinated if its semantic units are mutually non-redundant and consistently aligned with $\Gamma$. To operationalize this, we cluster the context $\Gamma$ around the generated $\ell$-grams using the entailment operator via exemplar-based clustering~\cite{ID-ML-AG-PW-JV-SR:22}. Each $x\in\Gamma$ is assigned to the nearest $\ell$-gram:
\begin{equation}
\label{eqn:clustering}
d(\mathbb{S},x)=\min_{\mathsf{S}^i\in\mathbb{S}}\mathcal{E}(\mathsf{S}^i,x),
\end{equation}
with cluster assignment probabilities
\begin{equation}
\label{eqn:cluster_prob}
p(x|\mathsf{S}^i)=
\frac{\exp(-\beta\,\mathcal{E}(\mathsf{S}^i,x))}
{\sum_{i\in c}\exp(-\beta\,\mathcal{E}(\mathsf{S}^i,x))},
\end{equation}
where $c=\{1,\cdots,\kappa\}$ is the amount of generated $\ell-$grams (medoids) and $\beta$ is a distribution hyperparameter. The resulting partition induces a semantic entropy over the response, where low entropy signals concentration around the trusted context and high entropy signals divergence.

\subsection{Hallucination reduction as string-submodular maximization}

Notice that by the definition of $\mathbb{H}$ as the deviation of $\mathbb{S}$ with respect to $\Gamma$ (Step 1), to minimize~\eqref{eqn:general_main_problem} is equivalent to maximizing semantic alignment with $\Gamma$. We therefore reformulate the problem as
\begin{equation}
\label{eqn:main_problem}
\max_{\mathbb{S}\in\mathcal{I}} \ \mathsf{I}(\mathbb{S};\Gamma)
\quad \text{s.t.} \quad |\mathsf{S}|\le\ell,~\forall\mathsf{S}\in\mathbb{S}
\end{equation}
where $\mathsf{I}(\mathbb{S};\Gamma)=\mathsf{SE}(\Gamma)-\mathsf{SE}(\Gamma|\mathbb{S})$ denotes the clustering-induced surrogate mutual information between the response and the axiom set, $\mathsf{SE}(\Gamma)=\sum_{x\in\Gamma}p(x)\log p(x)$ and $\mathsf{SE}(\Gamma,\mathbb{S}) =  -\sum_{c\in\mathcal{C}}\big[\sum_{x\in\Gamma|x\in c}p(x|\mathsf{S}^c)\big]\log \big[\sum_{x\in\Gamma|x\in c}p(x|\mathsf{S}^c)\big]$ stands for the semantic entropy of the clustering of $\Gamma$ over $\mathbb{S}$. With certain abuse of notation we use $x\in c$ to describe element $x$ in the cluster with medoid $c$. 

As the response grows, each additional semantic unit tends to overlap with existing content or restate information already aligned with $\Gamma$, yielding diminishing marginal semantic gain. In Appendix~\ref{app:analysis}, we show that this induces string submodularity of $\mathsf{I}(\cdot;\Gamma)$ over the feasible sequence family. This structure penalizes redundancy and enables principled, near-optimal inference-time optimization aligned with the sequential nature of LLM generation.

\section{Chain-based Adaptive Reconfiguration Over Lattices Algorithm}
\label{sec::method}

As discussed in Section~\ref{sec::introduction}, we define reliability through the coverage that a generated response $\mathsf{S}$ achieves over a set of foundational axioms $\Gamma$. Leveraging the submodularity of~\eqref{eqn:main_problem}, see Appendix~\ref{app:analysis}, we build upon probabilistic submodular models~\cite{AG-HH-AK:15} to introduce a hallucination-aware rejection mechanism based on a sequential greedy scheme (Algorithm~\ref{alg:algorithm}). Our approach conceptualizes the generation process as a search over a combinatorial lattice $\mathcal{L}(\mathbb{V})$ (Appendix~\ref{app:lattice}). In this space, a large language model is abstracted as a policy $\pi_\theta$ that induces transitions between states by appending tokens $v \in \mathbb{V}$. This naturally aligns with the standard Markov Decision Process (MDP) formulation $\mathcal{M} = (\mathcal{S}, \mathcal{A}, \pi_\theta, R)$, where states correspond to partial sequences, actions to vocabulary elements, and $\pi_\theta$ defines the next-token distribution.

Existing approaches optimize $\pi_\theta$ directly, evaluating generation quality at each step. In contrast, we decouple search from optimization. Specifically, we reinterpret $\pi_\theta$ as an \emph{auxiliary proposal distribution} $q$, whose role is to explore the lattice $\mathcal{L}(\mathbb{V})$. Starting from $\mathcal{S}_0 = \mathsf{q}$, the proposal iteratively expands the state until a semantically meaningful unit $\mathsf{S}^i = \mathcal{S}_t \ominus \mathcal{S}_{t-1}$ is formed (lines 5--8). Each candidate unit $\mathsf{S}^i$ is then evaluated by clustering the context $\Gamma$ against it (Algorithm~\ref{alg:clustering}), enabling the computation of its marginal contribution in terms of mutual information (line 9). The current state $\mathcal{S}_t$ thus accumulates the query $\mathsf{q}$ and all accepted semantic units up to iteration $t$. The resulting marginal gain defines a probabilistic submodular objective, which induces a target distribution $p$ (line 10). This distribution governs an accept--reject step (lines 11--16), determining whether the candidate $\mathsf{S}^i$ is incorporated into the state, see Remark~\ref{rem:gibbs}. In this way, generation is guided not by likelihood alone, but by incremental semantic consistency with respect to $\Gamma$, effectively maximizing~\eqref{eqn:main_problem}.

\begin{algorithm}[t]
    \caption{{\small Chain-based Adaptive Reconfiguration Over Lattices}}
    \label{alg:algorithm}{\footnotesize
    \begin{algorithmic}[1]
        \State \textbf{Input:} Ground set vocabulary $\mathbb{V}$, submodular distribution $p(\cdot)\propto\exp(\beta F(\cdot)))$, LLM $q(\cdot)$, maximum allowed iterations $T_{\max}$, initial query $\mathsf{q}$ and context $\Gamma$.
        \State $\mathcal{S}_0\leftarrow\mathsf{q}$
        \For{$t=1$\textbf{  to  }$\min(T_{\max},\tau)$}
        \State $\mathsf{S}^t \leftarrow \mathsf{S}^{t-1}$
        \While{$|\mathsf{S}^t|\leq\ell$}\Comment{Get a meaningful response}
           \State $v\leftarrow q(\mathbb{V},\mathsf{S}^t)$
           \State $\mathsf{S}^t \leftarrow \mathsf{S}^t\oplus v$
        \EndWhile
        \State $\Delta_F(\mathsf{S}^t|\mathcal{S}_t,\Gamma)\leftarrow I(\mathcal{S}_t\oplus\mathsf{S}^t;\Gamma,\mathcal{S}_t):=\mathsf{SE}(\Gamma|\mathcal{S}_t) - \mathsf{SE}(\Gamma|\mathcal{S}_t\oplus\mathsf{S}^t)$ \Comment{Mutual Information~\eqref{eqn:main_problem}}
        \State $p_{add}\leftarrow \exp(\beta\Delta_F(v|\mathcal{S}_t))/(1+\exp(\Delta_F(v|\mathcal{S}_t)))$
        \State $z\leftarrow\text{Uniform}([0,1])$
        \If{$z\leq p_{add}$} \Comment{Accept or Reject}
        \State $\mathcal{S}_{t+1}\leftarrow\mathcal{S}_t\oplus\{\mathsf{S}^t\}$
        \Else
        \State $\mathcal{S}_{t+1}\leftarrow\mathcal{S}_t\ominus\{\mathsf{S}^t\}$
        \EndIf
        \State Update $q(\cdot)$ \Comment{Fine-tunning or Chain-of-Thoughts}
        \EndFor
        \State \textbf{Return:} response $\mathbb{S} = \mathcal{S}_t\ominus\mathsf{q}$
    \end{algorithmic}
    }
\end{algorithm}

After \texttt{CAROL} accepts or rejects the generated string, the pipeline either continuously to generate a new string from the current state $\mathcal{S}_t$ or step-backs into previous step $\mathcal{S}_{t-1}$ and generates again from there \cite{JV-AK-SK:25}. Under this case, in order to avoid leading to the same response, the process must be tuned, see Figure \ref{fig:agentic_pipeline}. This update step can be performed in two distinct directions based on the MDP formulation: First, supposing access to the model, a fine-tuning operation can be performed to update the model policy, see Appendix~\ref{app:fine_tuning}. Alternatively, under Assumption~\ref{assump:no_information}, if there is no access to model's policy modification or the computational resources available suppose a limiting constraint, a Chain-of-Thoughts reasoning \cite{SH-YG-HL-TL-XS-XW-SX-HM-AS-QG-ZW-ZH:24} can be used to update the state $\mathcal{S}_t$ into a new state $\widetilde{\mathcal{S}}_t$ in order to shift models response towards optimality (e.g. $\mathcal{S}_t = \{$ \textit{"Which is California's capital?", "Los Angeles is the capital of California"}$\}$ and $\widetilde{\mathcal{S}}_t = \{$ \textit{"Which is California's capital?", "Los Angeles is the capital of California", "Revise the previous statement as it is not contrasted."$\}$}).

\begin{figure}[t] 
    \centering
    \begin{tikzpicture}[scale=0.3,
                        >=Latex,
                        font=\scriptsize,
                        box/.style={rectangle, rounded corners=6pt, draw, very thick, align=center, inner sep=6pt, minimum width=28mm, minimum height=8mm, fill=#1!12},
                        dbox/.style={rectangle, rounded corners=6pt, draw, very thick, align=center, inner sep=6pt, minimum width=40mm, minimum height=14mm, fill=#1!12},
                        page/.style={rectangle, draw, very thick, minimum width=18mm, minimum height=12mm, fill=red!5},
                        tinybox/.style={rectangle, draw, very thick, minimum width=14mm, minimum height=8mm, align=center, fill=#1!10},
                        carol/.style={rectangle, draw, very thick, minimum width=8mm, minimum height=18mm, fill=black!80},
                        arrow/.style={-Latex, very thick},
                        thinarrow/.style={-Latex, thick},
                        note/.style={font=\footnotesize, inner sep=1pt, fill=white, align=center}
                    ]
                    
        \node[box=gray] (docs) at (-6, 1.9)
        {\includegraphics[width=6mm]{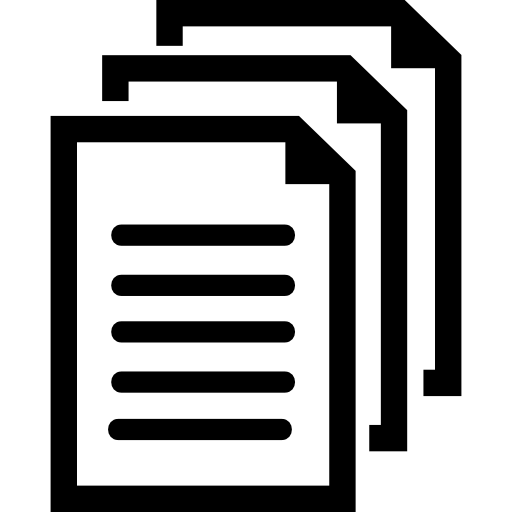}\hspace{2mm} \raisebox{0.5\height}{\begin{tabular}{@{}l@{}}
        Task-specific\\
        documents
        \end{tabular}}};
        
        \node[box=gray, minimum height=4mm] (query) [below=2mm of docs]
          {\includegraphics[width=5.5mm]{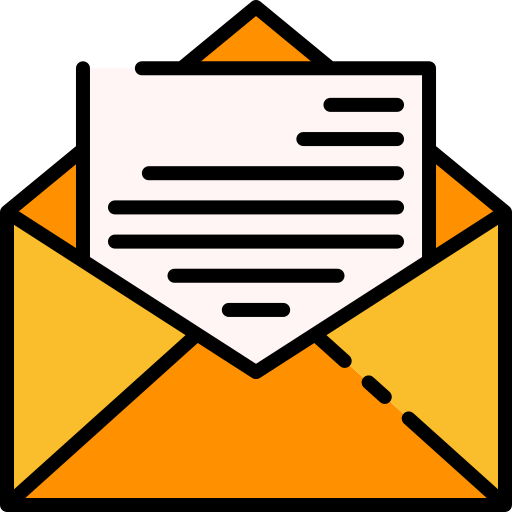} \hspace{2mm} \raisebox{0.7\height}{\begin{tabular}{@{}l@{}}
        Query $\mathsf{q}$
        \end{tabular}}};
        
        
        \coordinate (encjoin) at ($(docs.east)!0.5!(query.east) + (1.0cm,0)$);
        \node[circle, fill=black, inner sep=0.8pt] (encdot) at (encjoin) {};
        
        \node[box=green, minimum width=20mm] (embed) [right=15mm of encjoin] {Embedding\\model};
        
        \draw[arrow] (docs.east) -- (encdot);
        \draw[arrow] (query.east) -- (encdot);
        
        \draw[arrow] (encdot) -- node[midway, above=2pt, note]{encode} (embed.west);
        
        \node[dbox=cyan,minimum width=20mm] (vdb) [right=22mm of embed]
          {\includegraphics[width=7mm]{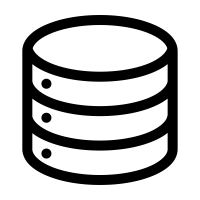}\\[-1mm]Vector\\database};
        \draw[arrow] (embed) -- node[above=1pt, note] {$\phi(\rho)$} (vdb);
        
        \node[dbox=yellow, minimum width=30mm, minimum height=18mm] (bundle) [below=4mm of vdb]
          {Query $\mathsf{q}$\\[2mm] ---\\ \includegraphics[width=7mm]{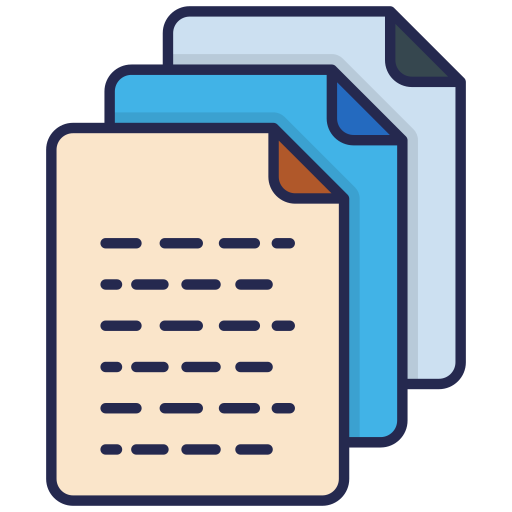}\hspace{2mm}\includegraphics[width=7mm]{Figures/model_representation/letter.png}\\[1mm]
           Context $\Gamma$};
        \draw[arrow] (vdb.south) -- node[right=0mm, note, xshift=-25mm] {Similarity search} (bundle.north);
        
        \node[box=white, minimum width=34mm, minimum height=16mm, draw=red!65, fill=red!7] (llm) [left=15mm of bundle]
          {\includegraphics[width=9mm]{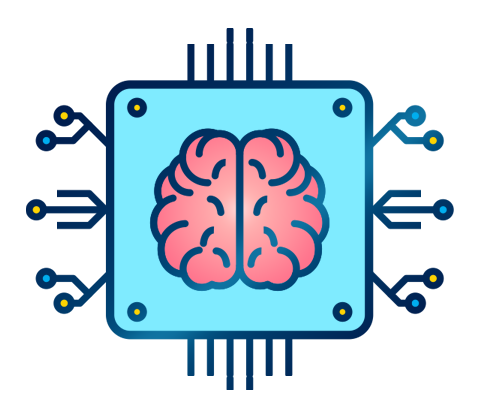}\\[-1mm]Large Language\\Model};
        
        \draw[arrow] (bundle.west) -- node[above=1pt, note] {$\mathsf{q}$} (llm.east);
        
        \node[tinybox=gray] (lgrama) [left=6mm of llm] {$\ell$-gram};
        \node[carol] (carol) [left=6mm of lgrama] {};
        \node[page]  (resp)  [left=6mm of carol] {response $\mathsf{S}$};
        
        \draw[arrow] (llm.west) -- (lgrama.east);
        \draw[arrow] (lgrama.west) -- (carol.east);
        \draw[arrow] (carol.west) -- (resp.east);
        
        \node[font=\scriptsize\bfseries, text=white, rotate=90] at (carol) {\texttt{CAROL}};
        
        \coordinate (tap) at ($(llm.north)+(0,10mm)$);
        \coordinate (midup) at ($(resp.north)!0!(llm.north)+(0,15mm)$);
        
        \draw[thinarrow] (resp.north) -- (midup)
    node[above=2pt, note]{update} -- (tap) -- (llm.north);

        \coordinate (tap2) at ($(llm.north)+(-100mm,-60mm)$);  
        \coordinate (tap3) at ($(llm.north)+(-100mm,-42mm)$); 
        \coordinate (bundleTop) at ($(bundle.west)+(0,-30mm)$);
        \draw[thinarrow](bundleTop) -- node[midway, below=1.8pt, note]{Context} (tap2) -- (tap3);
    \end{tikzpicture}
    \caption{\small Illustration of the proposed pipeline. The system retrieves additional context to generate factual information and encodes it. Then it performs a similarity search to generate a set of context axioms $\Gamma$ which is used later by \texttt{CAROL} to accept or reject the generated response, providing updating feedback to the model.}
    \label{fig:agentic_pipeline}
\end{figure}

As shown in Section~\ref{sec::numerical}, Algorithm~\ref{alg:algorithm} is sensitive to the sampling temperature $\beta$, or equivalently, to the acceptance threshold $z$. This parameter controls a fundamental trade-off between reliability and generative flexibility, as discussed in Appendix~\ref{app:limitations_broader_impact}. Lower values enforce stricter adherence to the axioms $\Gamma$, favoring conservative and highly consistent outputs, while higher values allow greater exploration, enabling the generation of novel but potentially less grounded content. Equally important is the role of the context $\Gamma$ and the clustering mechanism, see Appendix~\ref{app:axioms}. The performance of \texttt{CAROL} is inherently tied to the quality and coverage of the underlying axioms: the method constrains generation to remain within the semantic support induced by $\Gamma$. Consequently, richer and more reliable context sets directly expand the space of valid, non-hallucinated responses. In this sense, \texttt{CAROL} does not increase nor decrease model capacity, but rather reallocates it toward faithfully exploiting the available trusted knowledge.

\section{Theoretical Analysis}
\label{sec::analysis}

As discussed in Section~\ref{sec::introduction}, the strength of the proposed framework lies in the submodular structure of the objective, which enables closed-form sub-optimality guarantees. The generation process induces a Markov chain over the solution space, where the LLM serves as a proposal distribution $q(\cdot)$ and the mutual information defines an acceptance--rejection distribution $p(\cdot)$. This formulation allows us to characterize the algorithm through mixing-time analysis (Definition~\ref{defn::mixing_times}), with the stationary distribution concentrating on high-value responses $\mathbb{S}^\star$ for a given query $\mathsf{q}$ over the vocabulary space.

\begin{defn}
    \label{defn::mixing_times}
    The mixing time $t_\text{mix}(\epsilon)$ of a Markov Chain is the maximum time until the Markov Chain is $\epsilon$-close to its steady state distribution such that
    $$t_\text{mix}(\epsilon) = \min\big\{ t\geq 0: \max_{x\in\mathcal{S}}\big[\max_{\mathcal{A}\subseteq\mathcal{S}}|\text{Pr}(X_t\in\mathcal{A}|X_0=x) - \pi(\mathcal{A})| \big] \leq \epsilon \big\}.~~~~~~~~~~~~~~~~~~~~~~~~~~~~~~~~~~~~~~~~\boxend$$
\end{defn}

Let us denote the marginal gain on the mutual information of adding a string $\mathsf{S}^i$ to state $\mathcal{S}_t$ as
$$\Delta_F(\mathsf{S}^t|\mathcal{S}_t,\Gamma) :=  I(\mathcal{S}_t\oplus\mathsf{S}^t;\Gamma,\mathcal{S}_t)=\mathsf{SE}(\Gamma|\mathcal{S}_t) - \mathsf{SE}(\Gamma|\mathcal{S}_t\oplus\mathsf{S}^t), $$
and the following curvature quantity,
$$\gamma_{\mathsf{I},\beta} := \max_{\mathcal{S}_t\in\mathcal{L},\mathsf{S}^j\in\mathbb{V}^*} \sum_{\mathsf{S}^i\in\mathbb{V}^*} \tanh{\Big( \frac{\beta}{2}\big|\Delta_F(\mathsf{S}^i|\mathcal{S}_t,\Gamma) - \Delta_F(\mathsf{S}^i|\mathcal{S}_t\oplus\mathsf{S}^j,\Gamma) \big| \Big)}$$
representing the maximum possible mutual information among all new possible states. Then, the following Theorem~\ref{thm:fast_mixing} bounds the mixing time for the proposed Algorithm~\ref{alg:algorithm}.

\begin{thm}
  \label{thm:fast_mixing}
    For any function $\mathsf{I}:\mathbb{V}^*\rightarrow\mathbb{R}$, if $\gamma_{\mathsf{I},\beta} < 1$ for any iteration $t$, then the mixing time of Algorithm~\ref{alg:algorithm} is bounded as
    $$t_{mix}(\epsilon)\leq \frac{1}{q_{min}(\mathsf{S}^\ell) - q_{max}(\mathsf{S}^\ell) \bar{\gamma}_{\mathsf{I},\beta}}n(\log{(n)}+\log{(\frac{1}{\epsilon})}) = \tau,$$
    where $\bar{\gamma}_{\mathsf{I},\beta} = \max_t \gamma_{\mathsf{I},\beta}$, $n$ is the number of possible tokens in $\mathbb{V}$ and $q_{min}(\mathsf{S}^\ell)$ and $q_{max}(\mathsf{S}^\ell)$ are the probability of the auxiliary distribution $q$ to sample the least and most probable $\ell$-gram as a response.
\end{thm}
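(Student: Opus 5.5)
We will follow the path-coupling argument of Bubley and Dyer~\cite{RB-MD:97}, in the form used for Gibbs sampling of probabilistic submodular models~\cite{AG-HH-AK:15}. The approach has four steps: fix a metric on the state space, couple two chains at adjacent states, bound the expected distance after one step, and lift the contraction to a mixing-time bound.

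\textbf{Setup and metric.} We view one iteration of Algorithm~\ref{alg:algorithm} as a single step of a Markov chain on the lattice $\mathcal{L}(\mathbb{V}^*)$. At the current state $\mathcal{S}_t$, the chain first draws a candidate $\ell$-gram $\mathsf{S}^i$ from the proposal $q$. It then keeps or discards $\mathsf{S}^i$ according to the logistic rule $p_{add}=\sigma(\beta\Delta_F(\mathsf{S}^i|\mathcal{S}_t,\Gamma))$. By Remark~\ref{rem:gibbs}, this is a Gibbs-type heat-bath update whose stationary law is $p\propto\exp(\beta\,\mathsf{I}(\cdot;\Gamma))$. We equip the state space with the Hamming-type distance $\delta$, which counts the $\ell$-grams in which two states differ. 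The diameter of the space is then at most $n$.

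\textbf{Coupling and one-step contraction.} Take two states $X=\mathcal{S}_t$ and $Y=\mathcal{S}_t\oplus\mathsf{S}^j$ at distance $1$. We couple the two chains so that both propose the same $\mathsf{S}^i$ and share the same uniform $z$. Two cases arise.
\begin{itemize}
\item If the proposal is $\mathsf{S}^i=\mathsf{S}^j$, both chains make the same decision about $\mathsf{S}^j$ and coalesce. This happens with probability at least $q_{min}(\mathsf{S}^\ell)$.
\item If $\mathsf{S}^i\neq\mathsf{S}^j$, the chains can separate only when $z$ lands between the two acceptance probabilities. That event has probability $|\sigma(\beta\Delta_F(\mathsf{S}^i|X))-\sigma(\beta\Delta_F(\mathsf{S}^i|Y))|$. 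Since $|\sigma(a)-\sigma(b)|\le\tanh(|a-b|/4)$ for the logistic function, this is at most $\tanh(\tfrac{\beta}{2}|\Delta_F(\mathsf{S}^i|X)-\Delta_F(\mathsf{S}^i|Y)|)$, up to the constant convention in the paper's $\gamma$.
\end{itemize}
We weight each such term by $q(\mathsf{S}^i)\le q_{max}(\mathsf{S}^\ell)$ and sum over $\mathsf{S}^i$. This gives
\[
\mathbb{E}[\delta(X',Y')]\le 1-q_{min}(\mathsf{S}^\ell)+q_{max}(\mathsf{S}^\ell)\gamma_{\mathsf{I},\beta}\le 1-\alpha,
\]
with $\alpha:=q_{min}-q_{max}\bar\gamma_{\mathsf{I},\beta}$. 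The hypothesis $\gamma<1$, together with $q_{max}\bar\gamma<q_{min}$, which must be assumed implicitly for the bound to be meaningful, makes $\alpha>0$.

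\textbf{Path coupling to mixing time.} The path coupling lemma extends this one-step contraction from adjacent pairs to all pairs. After $t$ steps we get $\mathbb{E}\,\delta\le n(1-\alpha)^t\le ne^{-\alpha t}$. Because $\delta$ is integer-valued, Markov's inequality bounds the total-variation distance to stationarity by this expected distance. Requiring $ne^{-\alpha t}\le\epsilon$ gives $t\ge\alpha^{-1}(\log n+\log(1/\epsilon))$. The extra factor of $n$ in $\tau$ accounts for the fact that a fixed coordinate is refreshed only once per roughly $n$ steps. Equivalently, a per-coordinate contraction of $\alpha/n$ gives exactly the stated $\tau$.

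\textbf{Main obstacle.} The delicate step is the contraction estimate. The algorithm does not act on a fixed-size ground set; it grows or backtracks strings, and the update of $q$ between iterations makes the chain time-inhomogeneous. The sum in $\gamma$ also ranges over all of $\mathbb{V}^*$. To handle this, we would restrict to bounded-length states, so that the relevant ground set has effective size $n$. We would then treat each iteration as a heat-bath move on that set with a time-varying proposal. The assumption $\gamma<1$ at every $t$, replaced by its supremum $\bar\gamma$, is what allows the contraction to hold uniformly, and the coupling argument then applies with a uniform rate.
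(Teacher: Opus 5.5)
Your proposal is correct and follows the paper's proof essentially step for step. Both use path coupling on states differing in one $\ell$-gram $r$, coalescence with probability $q(r)\ge q_{min}(\mathsf{S}^\ell)$ when $r$ is proposed, the bound $\tanh\bigl(\frac{\beta}{2}|\Delta_F(i|\mathsf{S})-\Delta_F(i|\mathsf{R})|\bigr)$ on the disagreement probability otherwise, and the Bubley--Dyer theorem to turn the contraction $1-q_{min}+q_{max}\gamma_{\mathsf{I},\beta}$ into $t_{mix}(\epsilon)\le\alpha^{-1}(\log n+\log\frac{1}{\epsilon})$. The extra positivity condition $q_{max}\bar{\gamma}_{\mathsf{I},\beta}<q_{min}$ that you flag is exactly what the paper supplies separately in Lemma~\ref{lem:positive_denominator}.

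The only point to tidy is the factor $n$ in $\tau$. The paper's proof also ends without it, so it is pure slack, since $\log n+\log\frac{1}{\epsilon}\ge 0$. Your ``per-coordinate contraction $\alpha/n$'' rationale double-counts, because the factor $q(r)\ge q_{min}$ in your contraction already accounts for how rarely the differing coordinate is refreshed.
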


\begin{proof}
    This proof is conducted by \emph{path coupling}. See Appendix~\ref{app:analysis} for the full development. 
\end{proof}

Note that Theorem~\ref{thm:fast_mixing} establishes a mixing time faster than $\mathcal{O}(n\log n)$. The hallucination level of a generated response is quantified via the semantic entropy computed from the final exemplar-based clustering solution (cf. Section~\ref{sec::statement}). This provides not only a sub-optimality measure, but also a principled robustness metric indicating the degree to which a response can be trusted. It is also worth to comment that uncertainty on $\Gamma$ consistency does not break Theorem~\ref{thm:fast_mixing}, but acts like a perturbation term on the semantic objective, see Appendix~\ref{app:analysis}.

\subsection{Clustering Analysis}

One of the key novelties of this work is the use of an exemplar-based clustering method. Existing approaches to semantic entropy rely on classical clustering techniques, which are highly sensitive to the choice of clustering \emph{temperature}, often producing either overly coarse or overly fragmented partitions (see Figure~\ref{fig:temperature}). As shown in \cite{LK-YG-SF:23}, the resulting partition critically influences the computed semantic entropy. This sensitivity is further amplified when $\ell$-gram probabilities are derived from cluster sizes \cite{SF-JK-LK-YG:24}, where an increasing number of clusters can artificially inflate entropy, introducing a strong dependence on data volume for stability.

To overcome these limitations, we use the exemplar-based clustering procedure in Algorithm~\ref{alg:clustering}, which treats generated $\ell$-grams as medoids and therefore avoids temperature tuning. Cluster density, rather than cluster count, drives the semantic entropy: dense assignments indicate agreement with $\Gamma$ and yield low entropy, while sparse or diffuse partitions signal disagreement and higher hallucination risk, see Appendix~\ref{app:results}. Moreover, the submodularity-aware selection, as shown in Proposition~\ref{prop:greedy+clustering}, gives a near-optimal coverage guarantee, discouraging redundant medoids and promoting sparse representatives with broader coverage of $\Gamma$.

\begin{prop}[Sequential Greedy Algorithm on \eqref{eqn:main_problem}]
    \label{prop:greedy+clustering}
    Sequential Greedy Algorithm will obtain a near-optimal exemplar-based clustering such that, being $f(\cdot)$ the overall pairwise distance centroid-item and $\mathcal{S}_c$ the formed clustering over the space $(\mathbb{S},\Gamma)$, then  $f(\mathcal{S}_c)\geq(1-\text{e}^{-1}) f(\mathcal{S}_c^\star)$.
\end{prop}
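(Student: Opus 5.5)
The plan is to reduce Proposition~\ref{prop:greedy+clustering} to the classical Nemhauser--Wolsey--Fisher guarantee for greedy maximization of a normalized, monotone submodular set function under a cardinality constraint. I will use the string-submodular analogue of Zhang et al.~\cite{ZZ-EC-AP-WM:13}, which applies because the greedy choices are appended in order. First I will fix the objective as the standard exemplar-clustering utility of~\cite{AK-DG:14}. Take a phantom exemplar $e_0$ with $\mathcal{E}$-distance to every $x\in\Gamma$ at least that of any $\ell$-gram. Then set
\begin{equation*}
f(\mathcal{S}_c)=\sum_{x\in\Gamma}\Big[\mathcal{E}(e_0,x)-\min_{\mathsf{S}^i\in\mathcal{S}_c\cup\{e_0\}}\mathcal{E}(\mathsf{S}^i,x)\Big],
\end{equation*}
which measures the reduction in total centroid--item distance induced by the medoids in~\eqref{eqn:clustering}. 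With this choice $f(\emptyset)=0$, so $f$ is normalized.

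Next I will verify the three structural properties. Monotonicity, including prefix-monotonicity, holds because adding a medoid can only decrease each $\min$. Diminishing returns holds because, for each $x$, the gain from adding $s$ is $\big(\min_{\mathcal{S}}\mathcal{E}-\mathcal{E}(s,x)\big)^+$. This quantity is nonincreasing in $\mathcal{S}$ under inclusion, and hence also along prefixes $\mathsf{S}^1\preceq\mathsf{S}^2$. Summing over $x\in\Gamma$ gives string submodularity. Feasibility is the constraint $|\mathbb{S}|\le\kappa$ from~\eqref{eqn:main_problem}, which is a uniform matroid and therefore a greedoid. I will then consider the sequential greedy rule that appends the $\ell$-gram of largest marginal gain.

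The core inequality is the standard one. Let $\mathcal{S}_c^\star$ be optimal with $|\mathcal{S}_c^\star|\le\kappa$, and let $G_k$ be the greedy prefix after $k$ steps. By monotonicity and submodularity,
\begin{equation*}
f(\mathcal{S}_c^\star)\le f(G_k\oplus \mathcal{S}_c^\star)\le f(G_k)+\sum_{s\in\mathcal{S}_c^\star}\big(f(G_k\oplus s)-f(G_k)\big)\le f(G_k)+\kappa\big(f(G_{k+1})-f(G_k)\big).
\end{equation*}
Rearranging gives $f(\mathcal{S}_c^\star)-f(G_{k+1})\le(1-1/\kappa)\big(f(\mathcal{S}_c^\star)-f(G_k)\big)$. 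Iterating $\kappa$ times yields $f(\mathcal{S}_c)\ge(1-(1-1/\kappa)^\kappa)f(\mathcal{S}_c^\star)\ge(1-e^{-1})f(\mathcal{S}_c^\star)$.

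The main obstacle is the second inequality in the chain above. For string functions, the optimum's elements are appended after $G_k$ rather than sitting at the prefix $G_k$ itself, so the telescoping step needs diminishing returns for arbitrary extensions. Prefix-only diminishing returns is not enough here. I will handle this by noting that the clustering objective $f$ depends only on the \emph{set} of medoids and is order-invariant. It is therefore genuinely set-submodular, and the chain follows from classical submodularity. Where the paper intends $f$ to be the mutual-information surrogate $\mathsf{I}$ rather than the distance form, I would instead invoke the bound of~\cite{ZZ-EC-AP-WM:13}. That result requires postfix monotonicity and a curvature condition, which I would check using the soft assignment~\eqref{eqn:cluster_prob} in the limit $\beta\to\infty$.
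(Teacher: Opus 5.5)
Your proposal is correct and takes essentially the paper's route. The paper also uses the exemplar-clustering utility with a phantom exemplar $e_0$ and decomposes it into per-context-element terms $f_x(\mathbb{S})=\max_{\mathsf{S}\in\mathbb{S}}\bigl(d(x,e_0)-d(x,\mathsf{S})\bigr)_+$. It then proves normalization, monotonicity and set submodularity by a case analysis equivalent to your $(\cdot)^+$ marginal-gain argument, and cites the Nemhauser--Wolsey--Fisher bound, which you write out inline. Your remark that $f$ depends only on the set of medoids, so the string-order issue disappears, is left implicit in the paper; your mutual-information fallback branch is not needed for the statement as posed.
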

\begin{proof}
    See Appendix~\ref{app:analysis} for the full development. 
\end{proof}

\begin{figure}[t]
    \centering
    \begin{minipage}{0.35\linewidth}
        \centering
        \includegraphics[width=\linewidth]{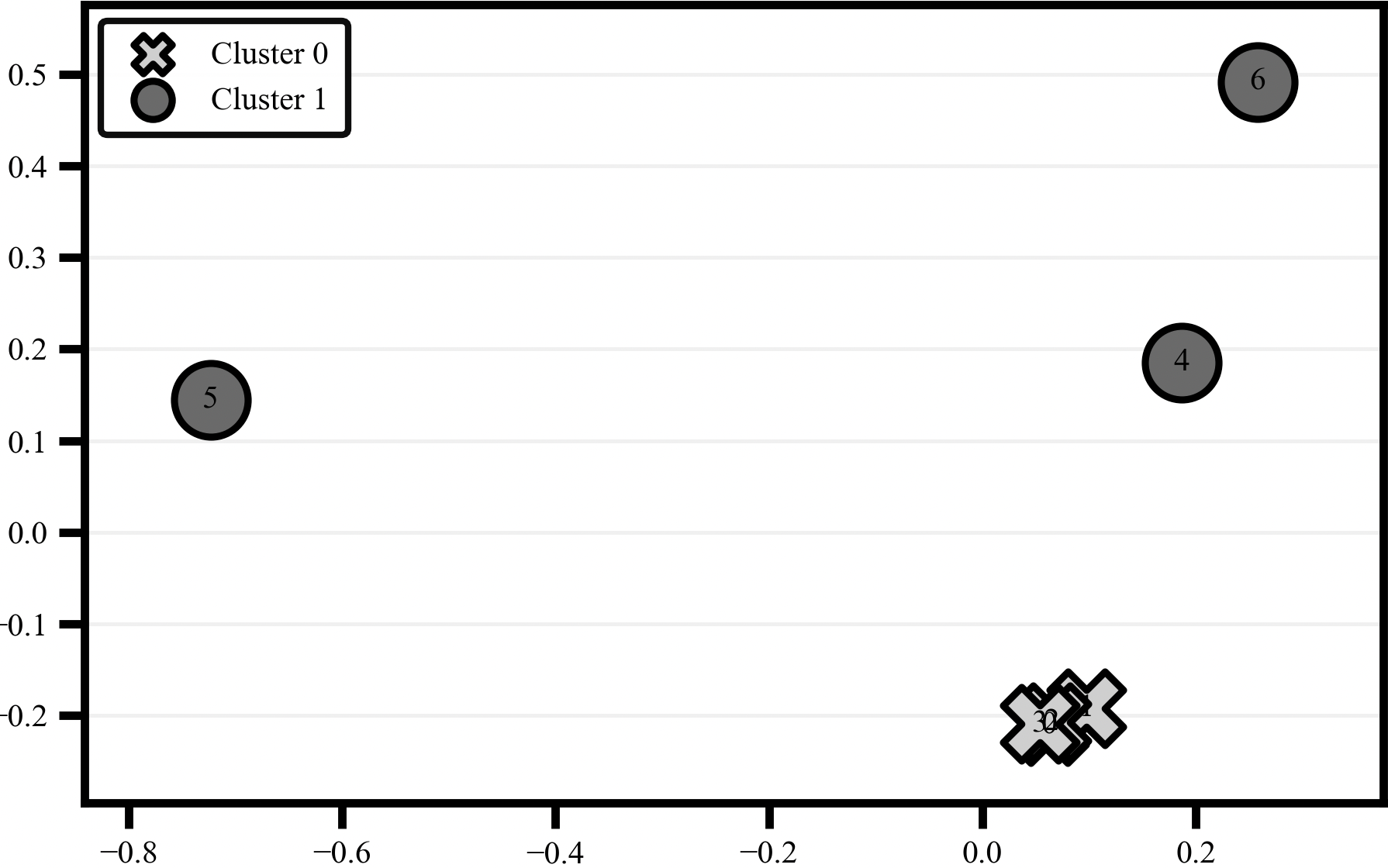}\\
        {\small (a) Clustering Temperature $0.2$}
    \end{minipage}\hfill
    \begin{minipage}{0.35\linewidth}
        \centering
        \includegraphics[width=\linewidth]{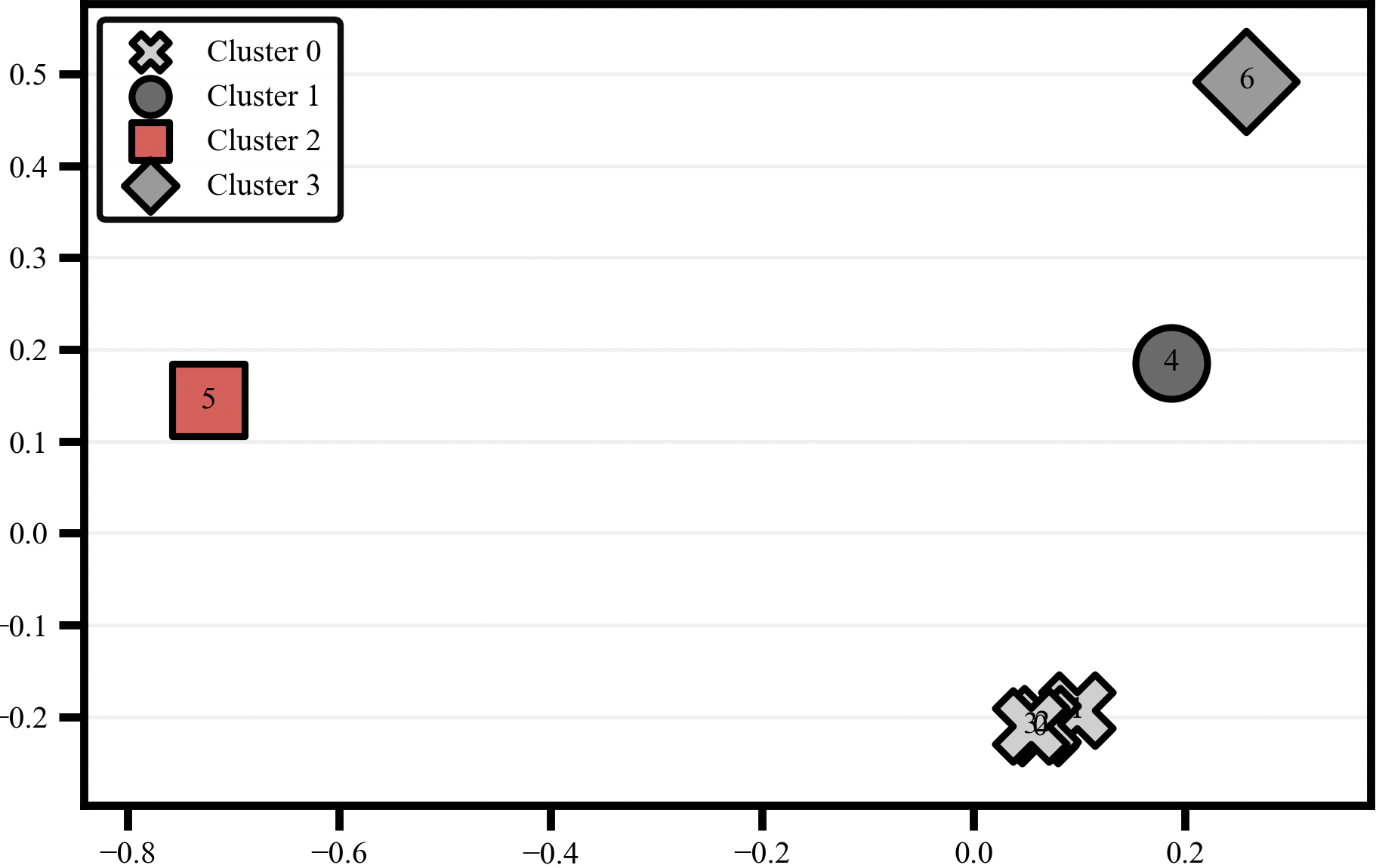}\\
        {\small (b) Clustering Temperature $0.6$}
    \end{minipage}\hfill
    \begin{minipage}{0.28\linewidth}
        \centering
        \includegraphics[width=0.93\linewidth]{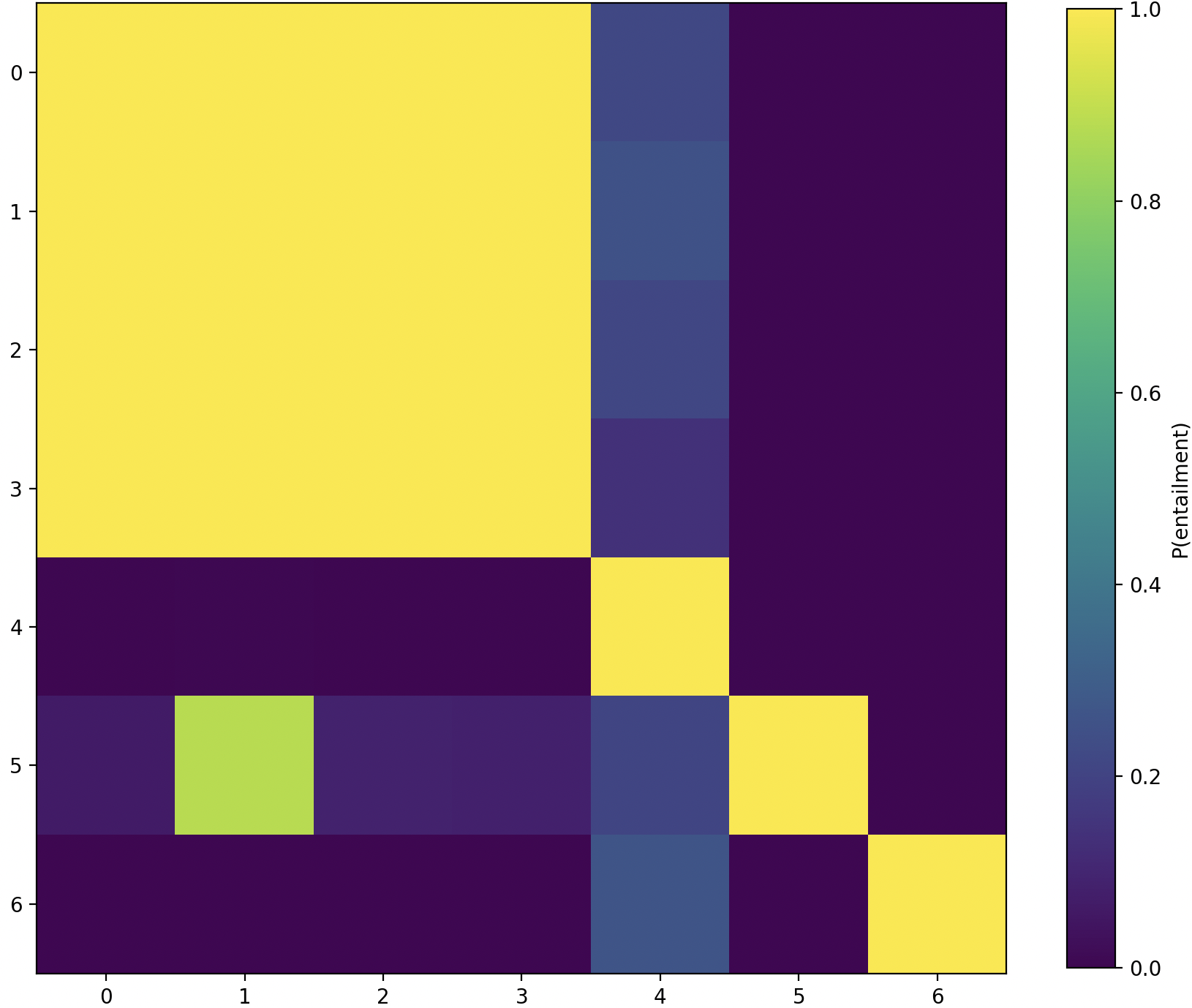}\\
        {\small (c) Confusion Matrix}
    \end{minipage}
    \caption{\small Example of clustering sensitivity to temperature. A total of $7$ sentences has been clustered: [\textit{"Paris is France's capital city.", "Paris is the capital of France.", "The capital of France is Paris.", "France is a country in Europe.",  "Paris is known for the Eiffel Tower.", "Berlin is France capital.", "Paris is in France and is the capital."}]. Note that in function of the clustering \emph{temperature} the clustering is more or less refined.}
    \label{fig:temperature}
\end{figure}

\begin{algorithm}
    \footnotesize
    \caption{\small Exemplar-based Clustering}
    \label{alg:clustering} \footnotesize
    \begin{algorithmic}[1]
        \State \textbf{Input:} Ground set of strings $\Gamma$, set of generated $\ell$-grams $\mathbb{S}=\{\mathsf{S}^1,\cdots,\mathsf{S}^\kappa\}$.
        \State $\mathcal{S}\leftarrow \{[i]:[\mathsf{S}^i]~\text{\textbf{for}}~i=1,\cdots,\kappa\}$
        \For{$x\in\Gamma$}
        \State $j \leftarrow \arg\min_{i\in\{1,\cdots,\kappa\}} \mathcal{E}(\mathsf{S}^i,x)$
        \State $\mathcal{S}_j \leftarrow \mathcal{S}_j \cup \{x\}$
        \EndFor
        \State \textbf{Return:} clustering $\mathcal{S}$
    \end{algorithmic}
\end{algorithm}

\section{Case Study}
\label{sec::numerical}

To evaluate the effectiveness of the proposed framework, we consider two complementary case studies. First, we assess the proposed semantic entropy-based hallucination metric \eqref{eqn:main_problem} on the \textbf{FEVER} dataset~\cite{JT-AV-CC-AM:18}, comparing it against the \textsf{HalluciNot} model~\cite{BP-AL-PJ-PA:25} and classical token-level entropy baselines. Second, we evaluate the full \texttt{CAROL} framework across three benchmark datasets: \textbf{TruthfulQA}~\cite{SL-JH-OE:21}, \textbf{HaluEval}~\cite{JL-XC-WZ-JN-JW:23}, and \textbf{HotPotQA}~\cite{ZY-PQ-SZ-YB-WC-RS-CM:18}. These datasets capture diverse hallucination settings, including fact fabrication under misleading prompts (TruthfulQA), hallucinations in summarization, question answering, and dialogue (HaluEval), and multi-hop reasoning with confabulation effects (HotPotQA; see Section~\ref{sec::introduction}).

We evaluate two language models, \emph{GPT-5-nano} and \emph{Llama-3.1-8B}, under Algorithm~\ref{alg:algorithm}, and compare their performance against standard \texttt{RAG}-based pipelines. Our evaluation considers not only hallucination reduction, but also the trade-off between response quality, latency, and computational cost. As described in Section~\ref{sec::method}, \texttt{CAROL} operates over $\ell$-gram semantic units derived from the context $\Gamma$. In practice, we implement this within an agentic framework using \texttt{URSA}~\cite{Grosskopf2025URSA}, deployed on an \emph{NVIDIA GeForce RTX 2080 Ti} GPU, see Appendix~\ref{app:ursa}. The system comprises three agents: \emph{Planner}, which generates the query and constructs $\Gamma$; \emph{Researcher}, which produces candidate responses (treated as $\ell$-grams); and \emph{Reasoner}, which aggregates accepted outputs across iterations.

\subsection{Hallucination Detection}

Prior work has demonstrated the advantages of semantic entropy over classical token-level uncertainty metrics~\cite{LK-YG-SF:23}. In this section, we show that the proposed metric not only outperforms token-level baselines, but also remains competitive with specialized hallucination detection methods such as \textsf{HalluciNot}. We evaluate performance in terms of detection accuracy, along with implementation metrics including execution time and GPU utilization. Accuracy is computed as the proportion of correctly classified instances (true positives and true negatives) over the total number of examples. Additional results and insights are provided in Appendix~\ref{app:results}, \ref{app:ursa} and \ref{app:datasets}.

\begin{table}[t]
\centering
\caption{\scriptsize Comparison between the proposed semantic entropy metric, classical token-level entropy metric and \textsf{HalluciNot} hallucination detector on \textbf{FEVER} dataset. Results are reported as mean $\pm$ standard deviation over multiple runs. We report detection accuracy, specificity, latency, and GPU allocation percentage. See Appendix~\ref{subsec:fever}. }
\label{tab:hallucination_comparison}
\scriptsize
\begin{tabular}{lcccc}
\toprule
\textbf{Method} & \textbf{Accuracy} $\uparrow$ & \textbf{Specificity} $\uparrow$ & \textbf{Latency (s/sample)} $\downarrow$ & \textbf{GPU Usage (\%)} $\downarrow$ \\
\midrule
Token-level Entropy & $0.4898 \pm 0.004$ & $0.4633 \pm 0.006$ & $\mathbf{0.189 \pm 0.002}$ & $\mathbf{3.765 \pm 0.010}$ \\
Semantic Entropy    & $\mathbf{0.5954 \pm 0.005}$ & $0.6477 \pm 0.007$ & $0.314 \pm 0.003$ & $3.835 \pm 0.012$ \\
\textsf{HalluciNot} & $0.3348 \pm 0.003$ & $\mathbf{0.9411 \pm 0.002}$ & $1.460 \pm 0.020$ & $24.75 \pm 0.150$ \\
\bottomrule
\end{tabular}
\end{table}

As reported in Table~\ref{tab:hallucination_comparison}, the proposed method identifies hallucinated statements more reliably than token-level entropy, which provides limited semantic discrimination. It also achieves performance comparable to \textsf{HalluciNot} with substantially lower computational overhead, making it suitable for resource-constrained settings. The resulting detection policy is more conservative, as the clustering-induced sparsity prioritizes contextual consistency and reduces false positives.

\subsection{Hallucination Mitigation}

Finally, we evaluate \texttt{CAROL}'s hallucination mitigation performance by comparing it against a standard \texttt{RAG} implementation (Table~\ref{tab:carol_vs_rag_hierarchy}). All considered datasets provide ground-truth optimal responses for each query. Accuracy is therefore computed by measuring the distance between the generated response $\mathbb{S}$ and the ground-truth $\mathbb{S}^\star$, as defined in Section~\ref{sec::preliminaries}. A response is deemed correct if it lies within an $\epsilon$-neighborhood of the ground truth, and incorrect otherwise. Overall accuracy is then computed as the proportion of optimal responses over the total number of evaluated samples.

We note that \textbf{TruthfulQA} offers a fine-grained evaluation setting, with multiple task categories and annotated examples of hallucinated, correct, and optimal responses, enabling detailed analysis of model behavior. Similarly, \textbf{HotPotQA} includes diverse question types and difficulty levels; in our experiments, we focus on \emph{bridge} and \emph{comparison} prompts to assess multi-hop reasoning performance. Additional results and breakdowns are provided in Appendix~\ref{app:results}.

\begin{table}[t]
\centering
\caption{\scriptsize \texttt{CAROL} vs.\ \texttt{RAG} comparison across datasets, models, and integrations. Results are reported as mean $\pm$ standard deviation over multiple runs. Lower hallucination, latency, and cost are better; higher accuracy is better.}
\label{tab:carol_vs_rag_hierarchy}
\tiny
\begin{tabular}{l | l | l c c c c}
\toprule
\textbf{Dataset} & \textbf{Model} & \textbf{Integration} &
\textbf{Hallucination} $\downarrow$ &
\textbf{Accuracy} $\uparrow$ &
\textbf{Latency (s)} $\downarrow$ &
\textbf{Cost (\$)} $\downarrow$ \\
\midrule

\multirow{4}{*}{TruthfulQA}
& \multirow{2}{*}{GPT\textendash5\textendash nano}
    & \texttt{RAG} & - & $0.356 \pm 0.006$ & $\mathbf{119.2 \pm 1.10}$ & $0.2933 \pm 0.004$ \\
    \cmidrule(lr){3-7}
&   & \texttt{CAROL} & $0.257 \pm 0.005$ & $\mathbf{0.556 \pm 0.007}$ & $164.8 \pm 1.35$ & $\mathbf{0.1463 \pm 0.003}$ \\
\cmidrule(lr){2-7}
& \multirow{2}{*}{Llama\textendash3.1\textendash8B}
    & \texttt{RAG} & - & $0.127 \pm 0.004$ & $\mathbf{120.9 \pm 1.25}$ & - \\
    \cmidrule(lr){3-7}
&   & \texttt{CAROL}& $0.348 \pm 0.006$ & $\mathbf{0.540 \pm 0.008}$ & $194.3 \pm 1.60$ & - \\
\midrule

\multirow{4}{*}{HaluEval}
& \multirow{2}{*}{GPT\textendash5\textendash nano}
    & \texttt{RAG} & - & $0.638 \pm 0.008$ & $\mathbf{0.517 \pm 0.007}$ & $0.0376 \pm 0.001$ \\
    \cmidrule(lr){3-7}
&   & \texttt{CAROL} & $0.138 \pm 0.004$ & $\mathbf{0.690 \pm 0.006}$ & $1.062 \pm 0.015$ & $\mathbf{0.0163 \pm 0.001}$ \\
\cmidrule(lr){2-7}
& \multirow{2}{*}{Llama\textendash3.1\textendash8B}
    & \texttt{RAG} & - & $0.574 \pm 0.007$ & $\mathbf{1.096 \pm 0.010}$ & - \\
    \cmidrule(lr){3-7}
&   & \texttt{CAROL} & $0.254 \pm 0.005$ & $\mathbf{0.616 \pm 0.008}$ & $1.630 \pm 0.020$ & - \\
\midrule

\multirow{4}{*}{HotPotQA}
& \multirow{2}{*}{GPT\textendash5\textendash nano}
    & \texttt{RAG} & - & $0.865 \pm 0.009$ & $\mathbf{1.518 \pm 0.012}$ & $0.1107 \pm 0.002$ \\
    \cmidrule(lr){3-7}
&   & \texttt{CAROL} & $0.591 \pm 0.007$ & $\mathbf{0.978 \pm 0.010}$ & $2.858 \pm 0.030$ & $\mathbf{0.0319 \pm 0.001}$ \\
\cmidrule(lr){2-7}
& \multirow{2}{*}{Llama\textendash3.1\textendash8B}
    & \texttt{RAG} & - & $0.809 \pm 0.008$ & $\mathbf{1.281 \pm 0.011}$ & - \\
    \cmidrule(lr){3-7}
&   & \texttt{CAROL} & $0.497 \pm 0.006$ & $\mathbf{0.962 \pm 0.009}$ & $2.383 \pm 0.025$ & - \\
\bottomrule
\end{tabular}
\end{table}

Table~\ref{tab:carol_vs_rag_hierarchy} shows that the proposed method outperforms standard \texttt{RAG} not only in accuracy but also in computational cost. This behavior stems from the design of \texttt{CAROL}, which can be interpreted as an adaptive form of \texttt{RAG}: additional information is incorporated only when the generated response deviates from the optimal trajectory. As a result, token usage is selectively regulated, leading to reduced computational overhead. Moreover, reduction in prompt size yields faster per-step computations, enabling the incorporation of the rewiring mechanism with limited impact on latency. Nevertheless, \texttt{CAROL} exhibits higher overall latency than standard \texttt{RAG}, primarily due to the clustering step, which becomes a bottleneck for large context sizes. Improving this component-through precomputed similarity structures over $\Gamma$, remains an important direction for future work, see Appendix~\ref{app:limitations_broader_impact}.

The performance gains are particularly pronounced for smaller models such as \emph{Llama-3.1-8B}. As noted in~\cite{JV-MG-RB:26}, such models are more sensitive to prompt length and can suffer from instability when handling large contexts. By regulating context usage, \texttt{CAROL} provides a more robust alternative in resource-constrained settings. Furthermore, results on \textbf{TruthfulQA} indicate that among non-hallucinated outputs, \texttt{CAROL} achieves a higher proportion of optimal responses. This supports the intuition behind the exemplar-based clustering introduced in Section~\ref{sec::statement}: beyond enforcing semantic consistency with $\Gamma$, the clustering mechanism promotes sparsity, reducing redundancy and improving the overall quality of generated responses. Additional details are provided in Appendix~\ref{app:results}.

\section{Conclusions}
\label{sec::conclusion}
\vspace{-0.2in}
This work introduced \texttt{CAROL}, a principled framework for hallucination reduction in large language models that operates entirely at test time in a black-box setting, requiring neither access to internal probabilities nor repeated model evaluations. Central to the approach is a single-pass semantic entropy measure grounded in logical entailment, enabling direct assessment of consistency with respect to a trusted context $\Gamma$. To support this, we propose an exemplar-based clustering mechanism that eliminates sensitivity to temperature-based partitioning and avoids iterative refinement. This formulation induces a string-submodular objective over a lattice of textual sequences, allowing hallucination mitigation to be cast as a probabilistic accept–reject process with provable convergence and near-optimality guarantees. Empirically, \texttt{CAROL} achieves more reliable and consistent outputs than entropy- and retrieval-based baselines at competitive cost. More broadly, it reframes hallucination as a sequence-level deviation from axiomatic knowledge, providing a scalable foundation for reliable and interpretable test-time generation in agentic AI systems.

\vskip 0.2in
\newpage
\bibliographystyle{plainnat}
\bibliography{sample}

\newpage
\appendix
\section{Extended results}
\label{app:results}

In this appendix we present the breakdown of the results for each dataset. By looking closer at the results some extra conclusions can be abstracted about the advantages that \texttt{CAROL} present over \texttt{RAG} method.

\subsection{FEVER Dataset}
\label{subsec:fever}

\begin{figure}[ht]
    \centering
    \begin{subcaptiongroup}
        \begin{subfigure}{0.3\textwidth}
            \centering
            \includegraphics[width=\linewidth]{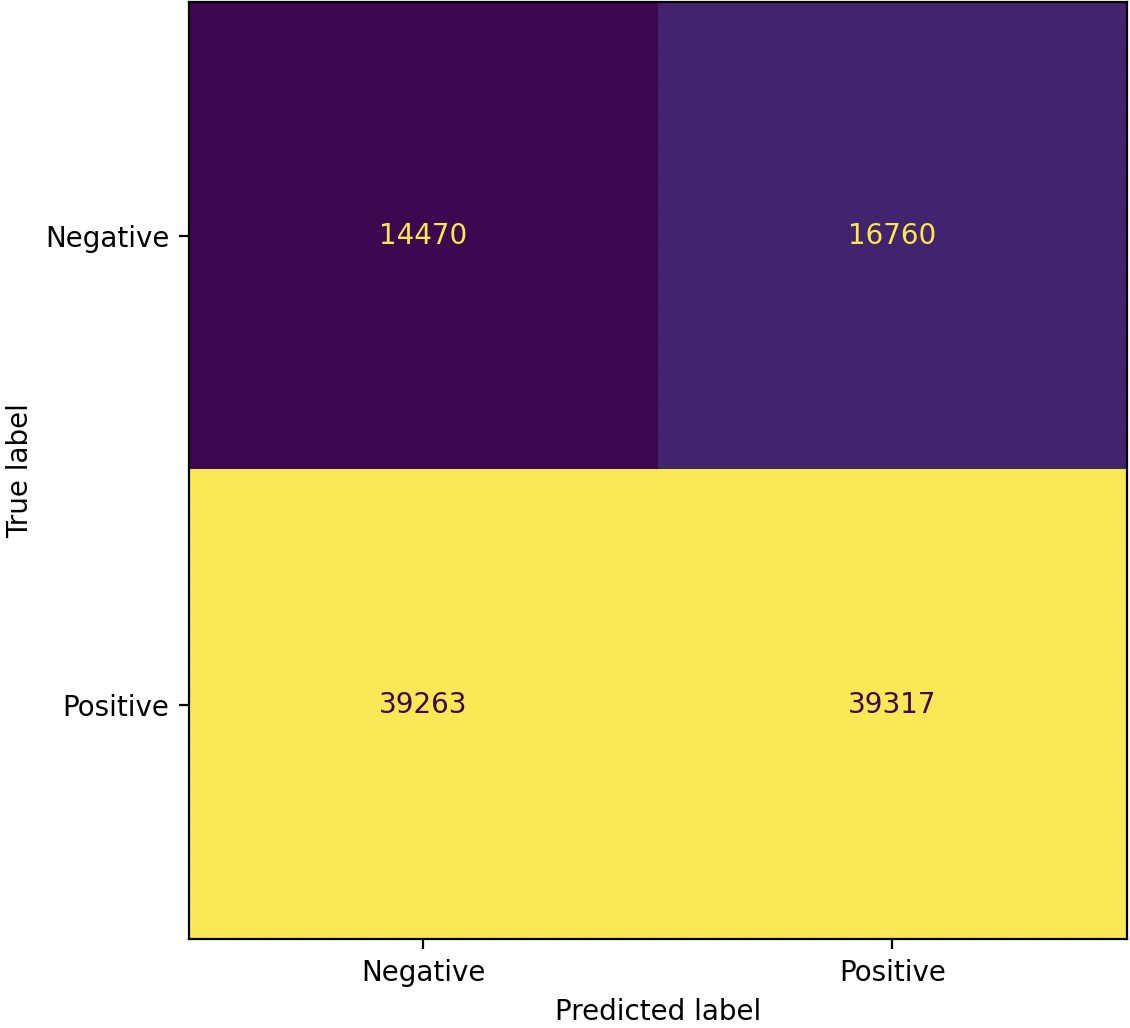}
            \caption{Token-level Entropy}
        \end{subfigure}
        \begin{subfigure}{0.3\textwidth}
            \centering
            \includegraphics[width=\linewidth]{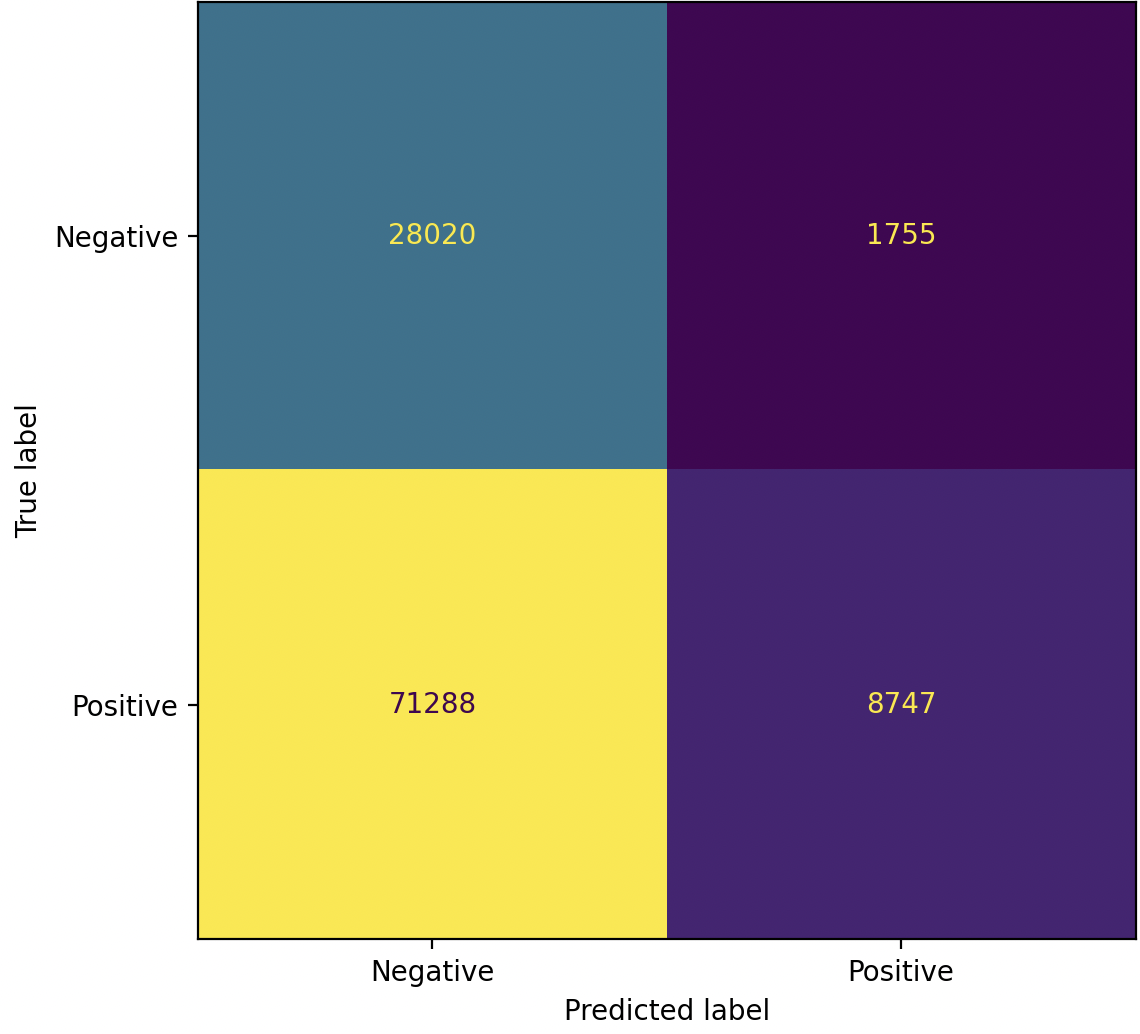}
            \caption{Semantic Entropy}
        \end{subfigure}
        \begin{subfigure}{0.3\textwidth}
            \centering
            \includegraphics[width=\linewidth]{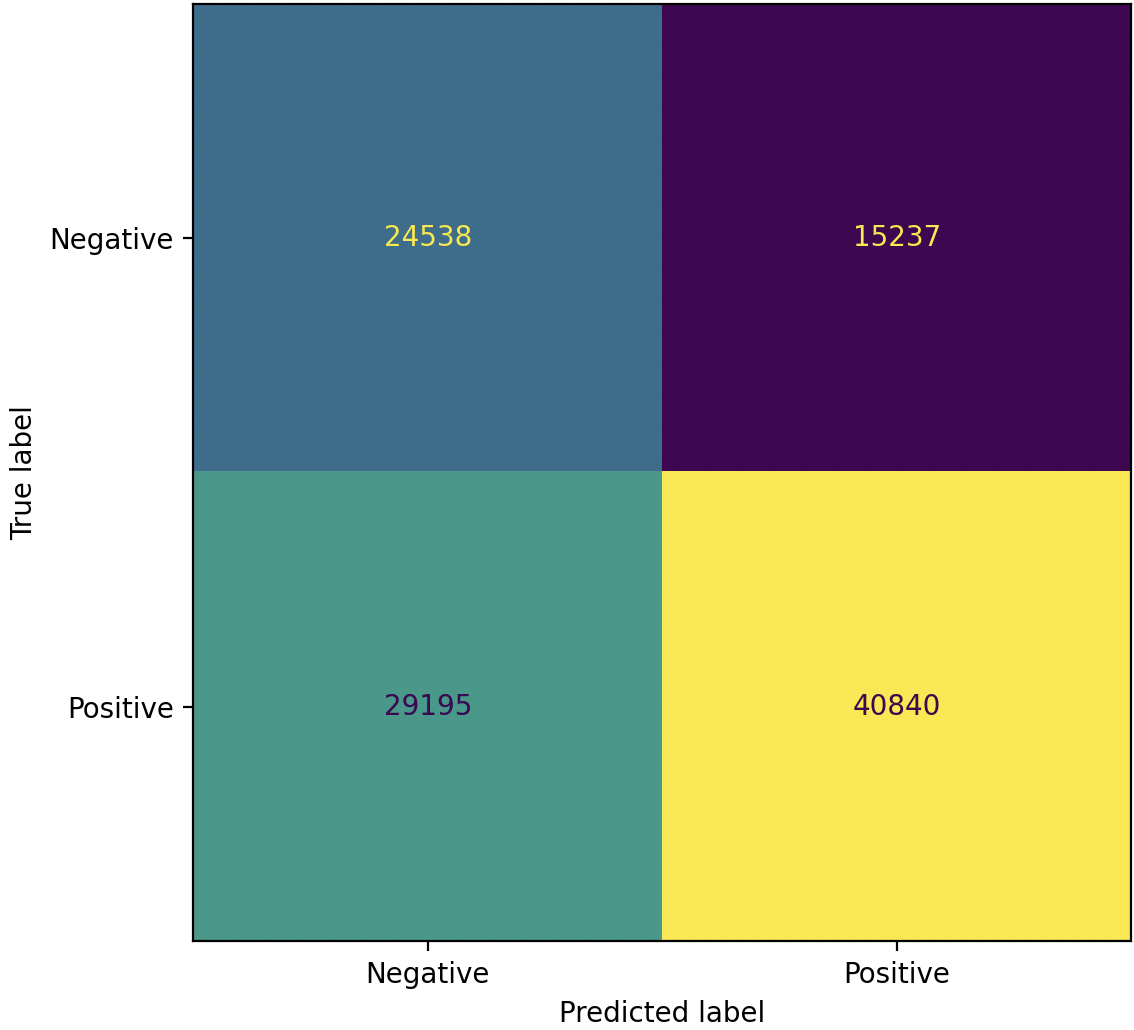}
            \caption{\textsf{HalluciNot}}
        \end{subfigure}
    \end{subcaptiongroup}
\caption{Confusion matrix for each hallucination metric on \textbf{FEVER} dataset.}
\end{figure}

\begin{figure}[ht]
    \centering
    \includegraphics[width=0.7\linewidth]{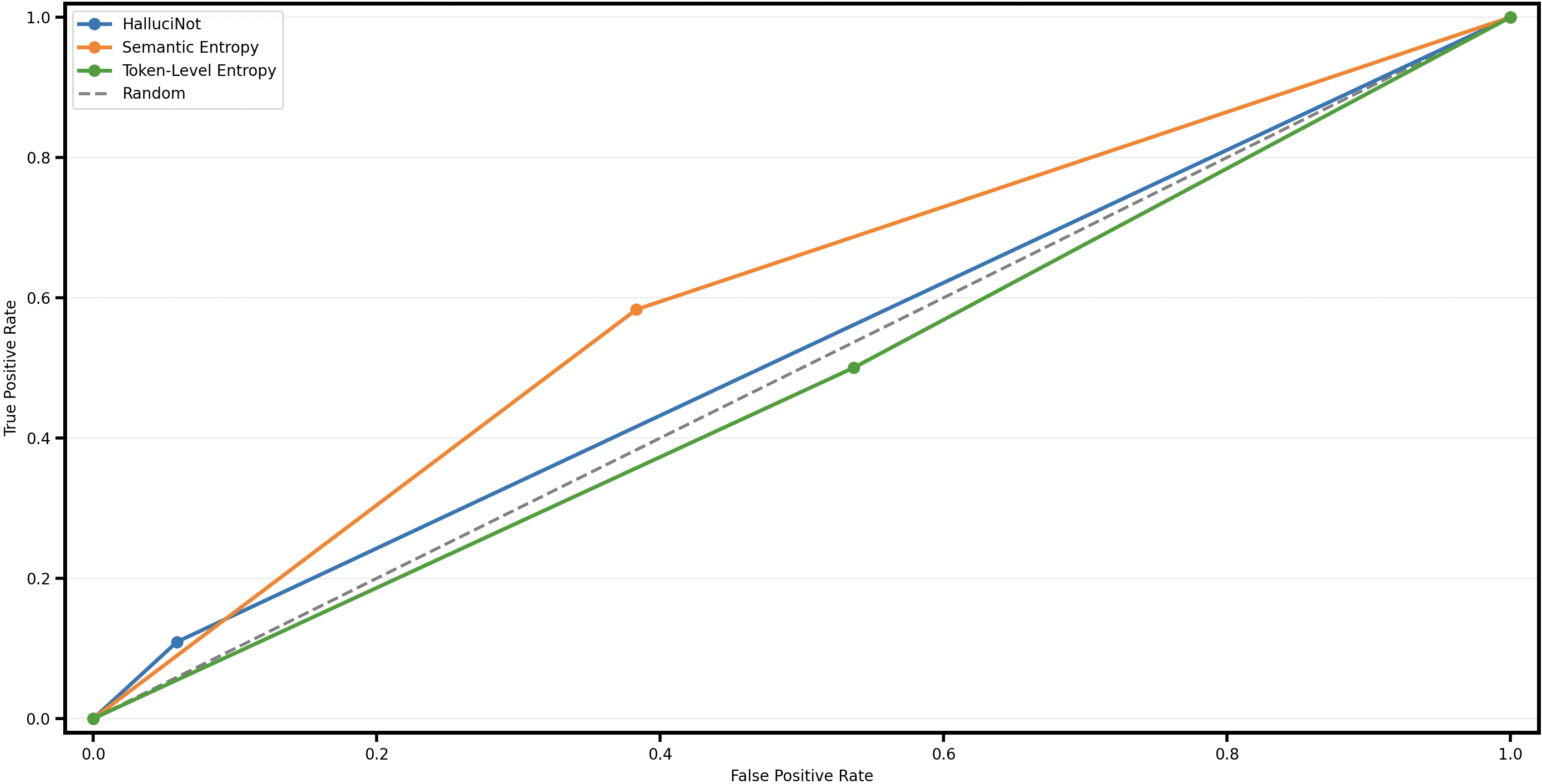}
    \caption{ROC curve for the hallucination metric on the \textbf{FEVER} dataset.}
    \label{fig:roc_fever}
\end{figure}

\begin{figure}[ht]
    \centering
    \includegraphics[width=0.7\linewidth]{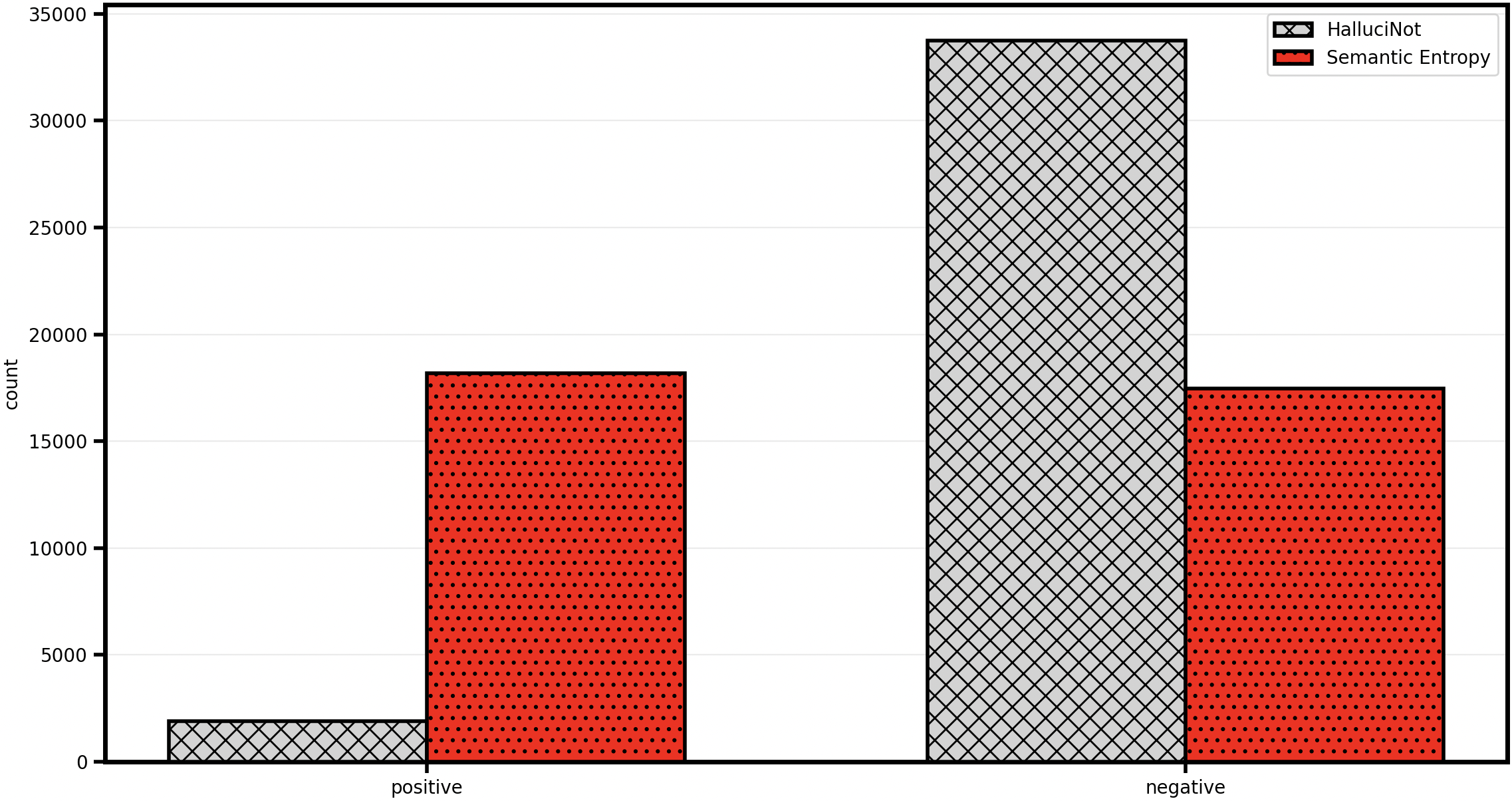}
    \caption{Test examples for the hallucination metric on the \textbf{FEVER} dataset.}
    \label{fig:results_fever}
\end{figure}

\begin{table}[t]
\centering
\caption{\scriptsize Comparison between the proposed semantic entropy metric, classical token-level entropy metric and \textsf{HalluciNot} hallucination detector on \textbf{FEVER}. Results are reported as mean $\pm$ standard deviation over multiple runs. We report extended performance metrics.}
\label{tab:hallucination_comparison}
\scriptsize
\begin{tabular}{lccc}
\toprule
\textbf{Metric} & \textbf{Token-level Entropy} & \textbf{Semantic Entropy} & \textbf{\textsf{HalluciNot}} \\
\midrule
Accuracy $\uparrow$    & $0.4898 \pm 0.004$ & $\mathbf{0.5954 \pm 0.005}$ & $0.3348 \pm 0.003$ \\
Precision $\uparrow$   & $0.7011 \pm 0.006$ & $0.7283 \pm 0.006$ & $\mathbf{0.8329 \pm 0.004}$ \\
Recall $\uparrow$      & $0.5003 \pm 0.005$ & $\mathbf{0.5831 \pm 0.006}$ & $0.1093 \pm 0.002$ \\
Specificity $\uparrow$ & $0.4633 \pm 0.006$ & $0.6169 \pm 0.007$ & $\mathbf{0.9411 \pm 0.002}$ \\
F1 $\uparrow$          & $0.5840 \pm 0.005$ & $\mathbf{0.6477 \pm 0.006}$ & $0.1932 \pm 0.003$ \\
FPR $\downarrow$       & $0.5367 \pm 0.006$ & $0.3831 \pm 0.005$ & $\mathbf{0.0589 \pm 0.002}$ \\
Latency (s/sample) $\downarrow$ & $\mathbf{0.1890 \pm 0.002}$ & $0.3140 \pm 0.003$ & $1.4600 \pm 0.020$ \\
Throughput (sample/s) $\uparrow$ & $\mathbf{5.2900 \pm 0.050}$ & $3.1800 \pm 0.040$ & $0.6800 \pm 0.010$ \\
GPU Usage (\%) $\downarrow$ & $\mathbf{3.7650 \pm 0.010}$ & $3.8350 \pm 0.012$ & $24.750 \pm 0.150$ \\
\bottomrule
\end{tabular}
\end{table}

\FloatBarrier


\subsection{TruthfulQA}
\label{subsec:truthful}

\begin{figure}[ht]
    \centering
    \begin{subcaptiongroup}
        \begin{subfigure}{0.3\textwidth}
            \centering
            \includegraphics[width=\linewidth]{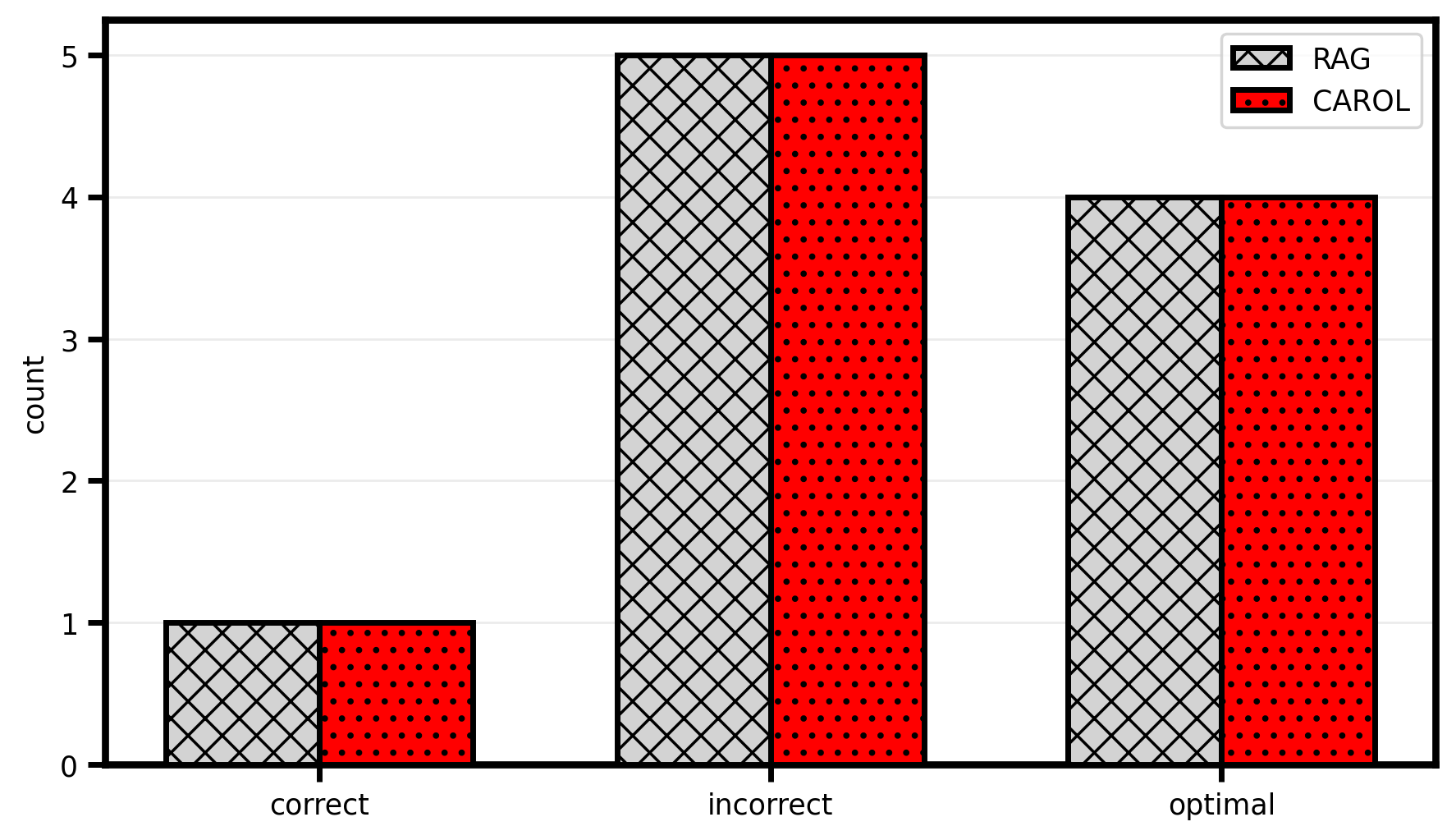}
            \caption{Advertising}
        \end{subfigure}
        \begin{subfigure}{0.3\textwidth}
            \centering
            \includegraphics[width=\linewidth]{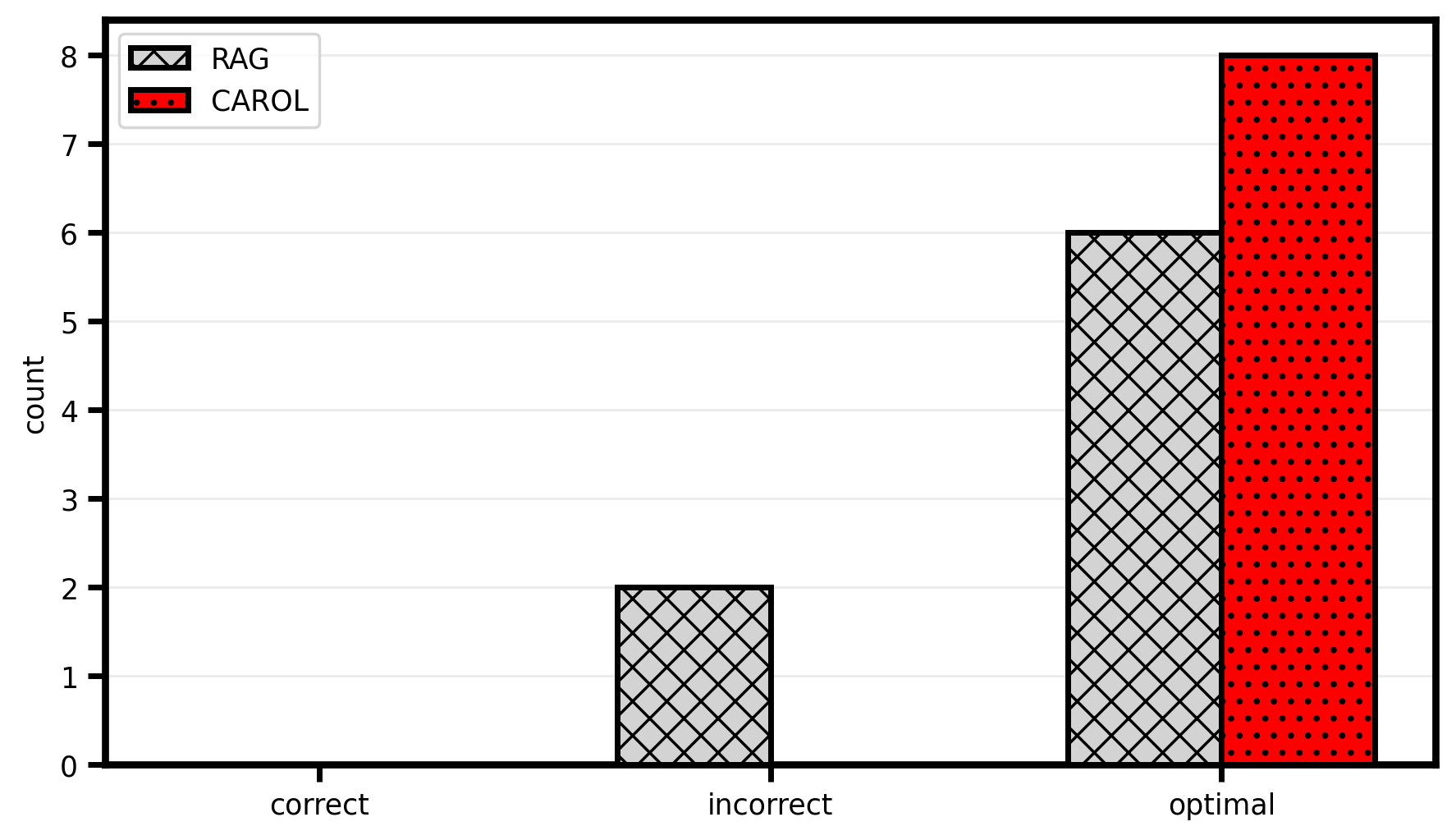}
            \caption{Confusion}
        \end{subfigure}
        \begin{subfigure}{0.3\textwidth}
            \centering
            \includegraphics[width=\linewidth]{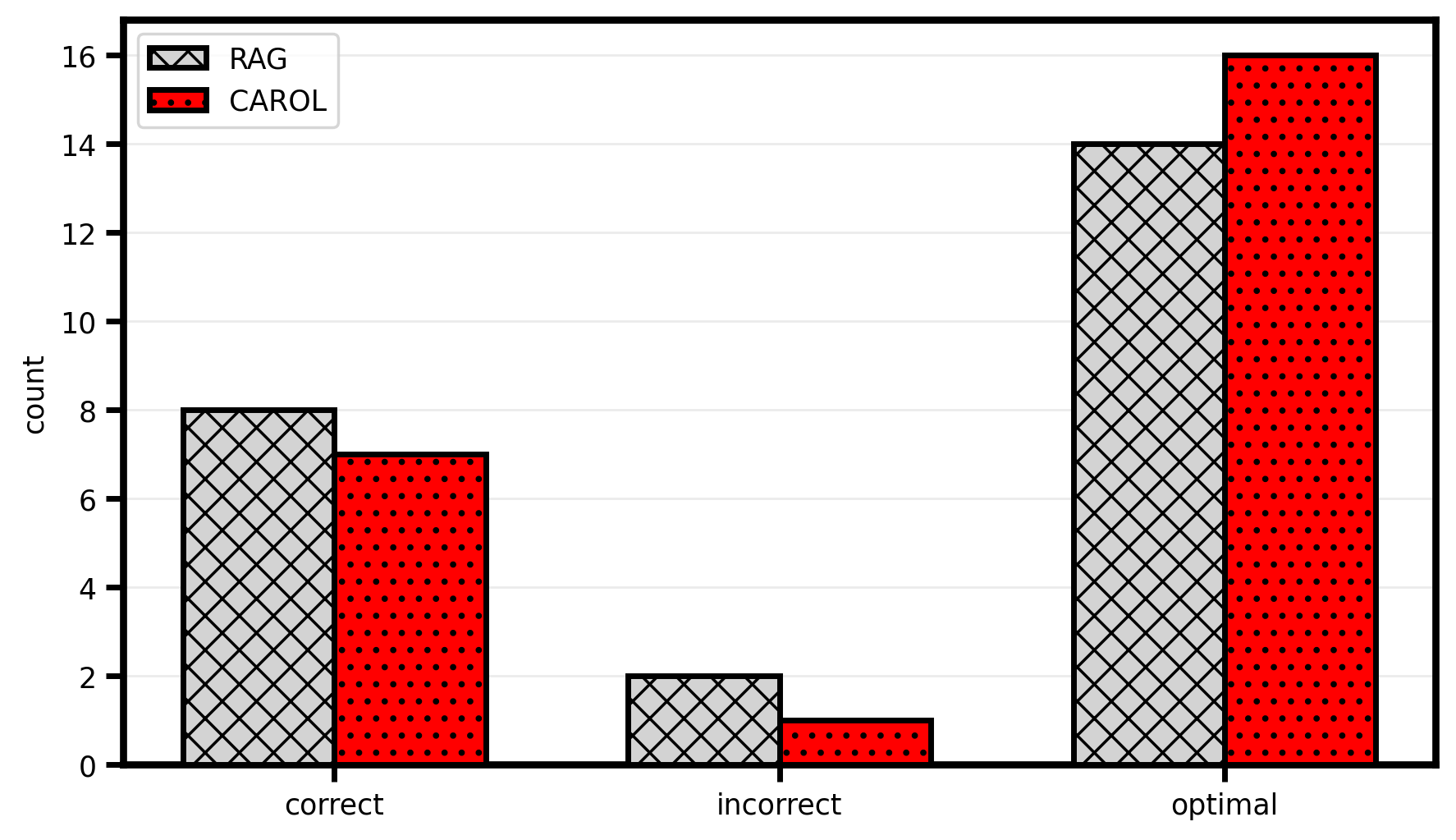}
            \caption{Conspiracies}
        \end{subfigure}
    \end{subcaptiongroup}
    \begin{subcaptiongroup}
        \begin{subfigure}{0.3\textwidth}
            \centering
            \includegraphics[width=\linewidth]{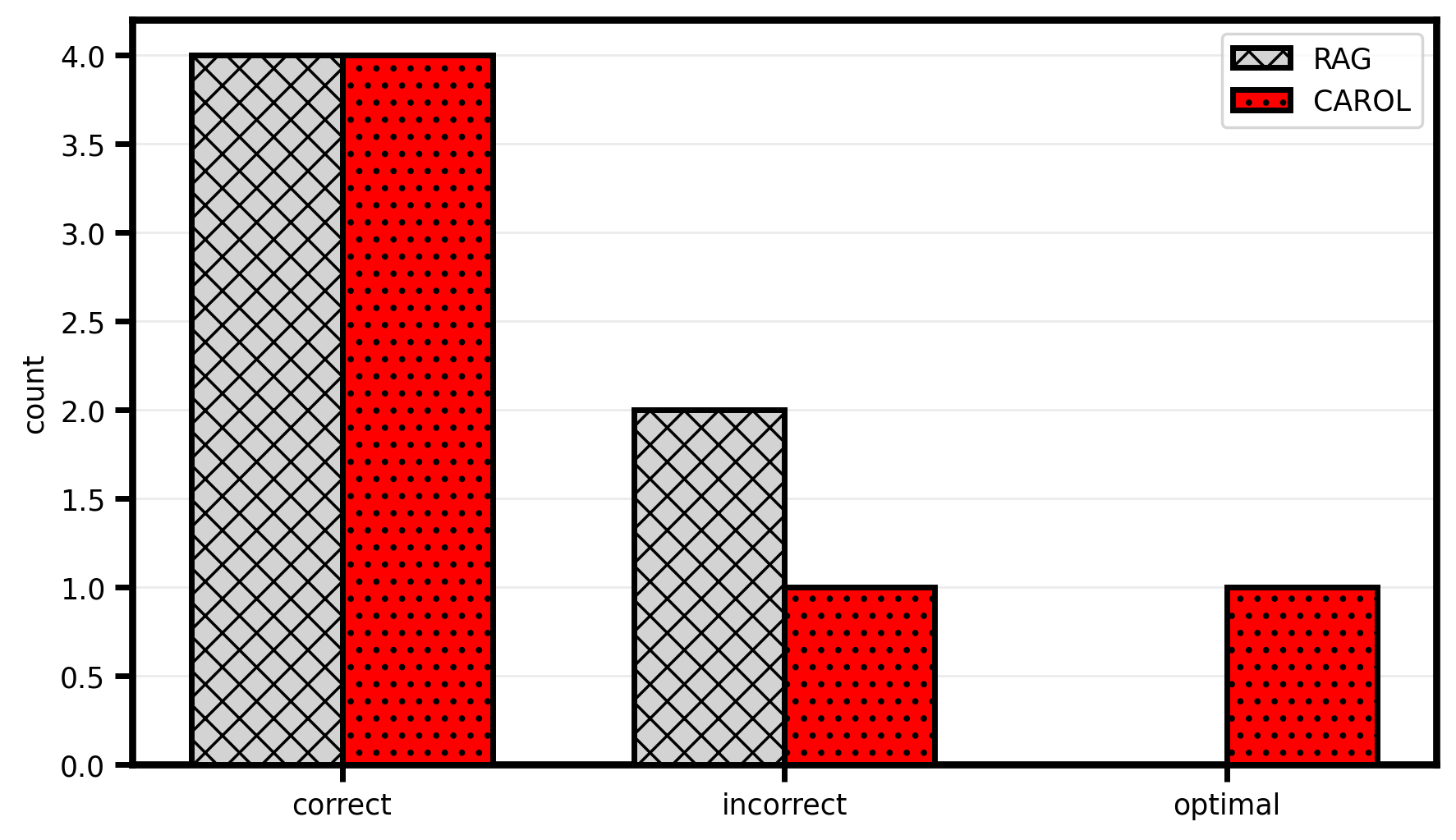}
            \caption{Mandela Effect}
        \end{subfigure}
        \begin{subfigure}{0.3\textwidth}
            \centering
            \includegraphics[width=\linewidth]{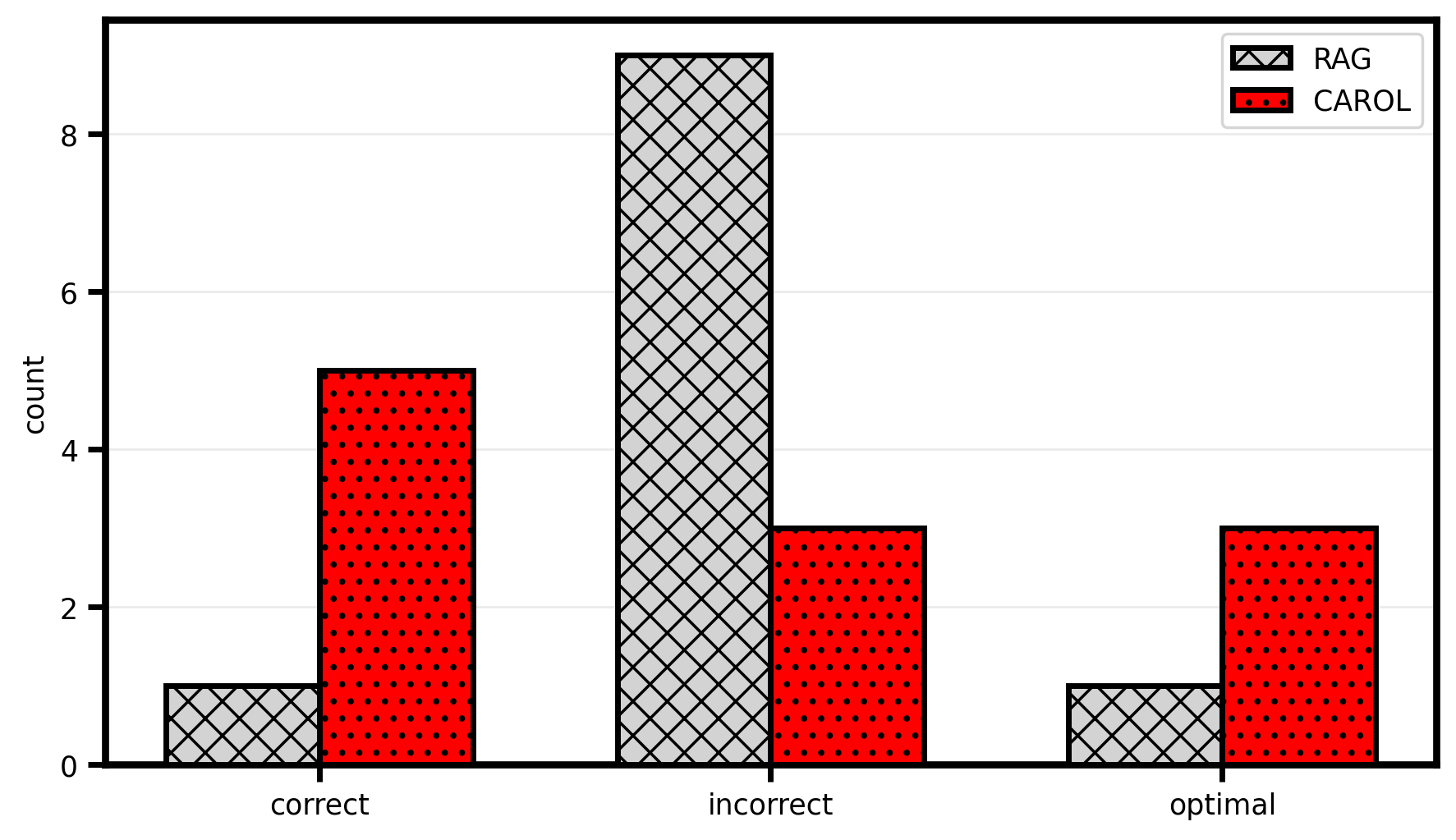}
            \caption{Distraction}
        \end{subfigure}
        \begin{subfigure}{0.3\textwidth}
            \centering
            \includegraphics[width=\linewidth]{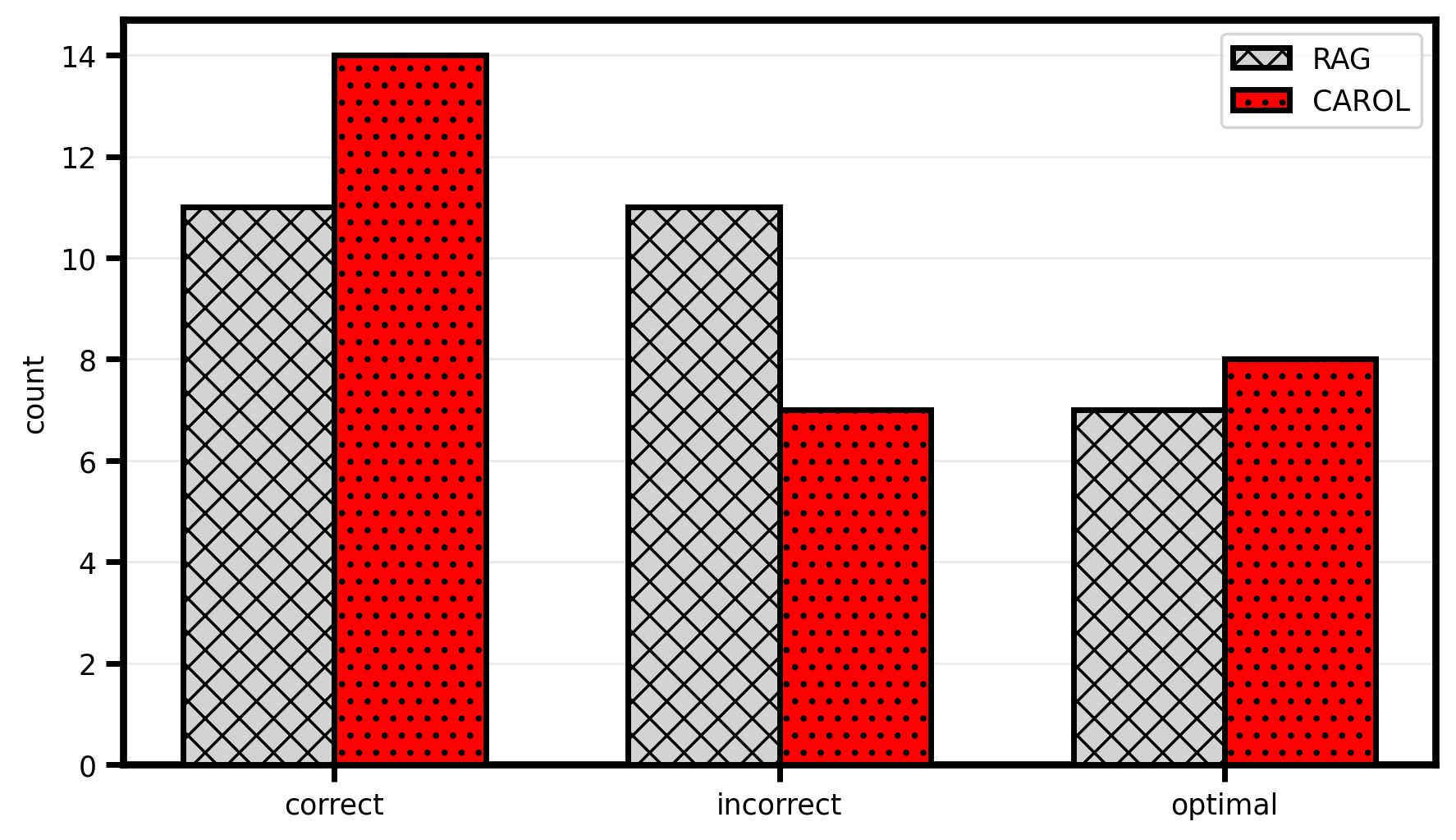}
            \caption{Economics}
        \end{subfigure}
    \end{subcaptiongroup}
    \begin{subcaptiongroup}
        \begin{subfigure}{0.3\textwidth}
            \centering
            \includegraphics[width=\linewidth]{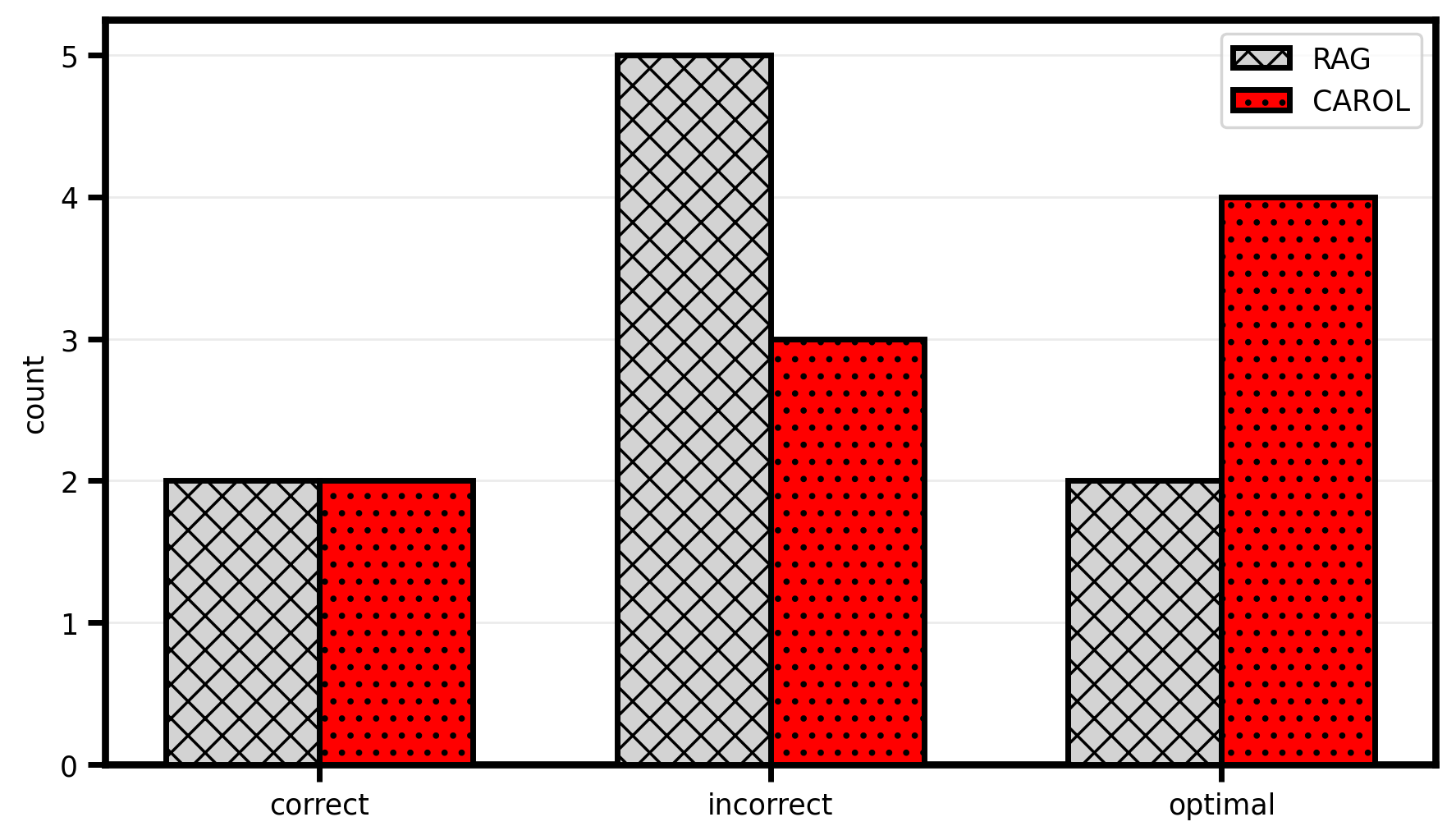}
            \caption{Education}
        \end{subfigure}
        \begin{subfigure}{0.3\textwidth}
            \centering
            \includegraphics[width=\linewidth]{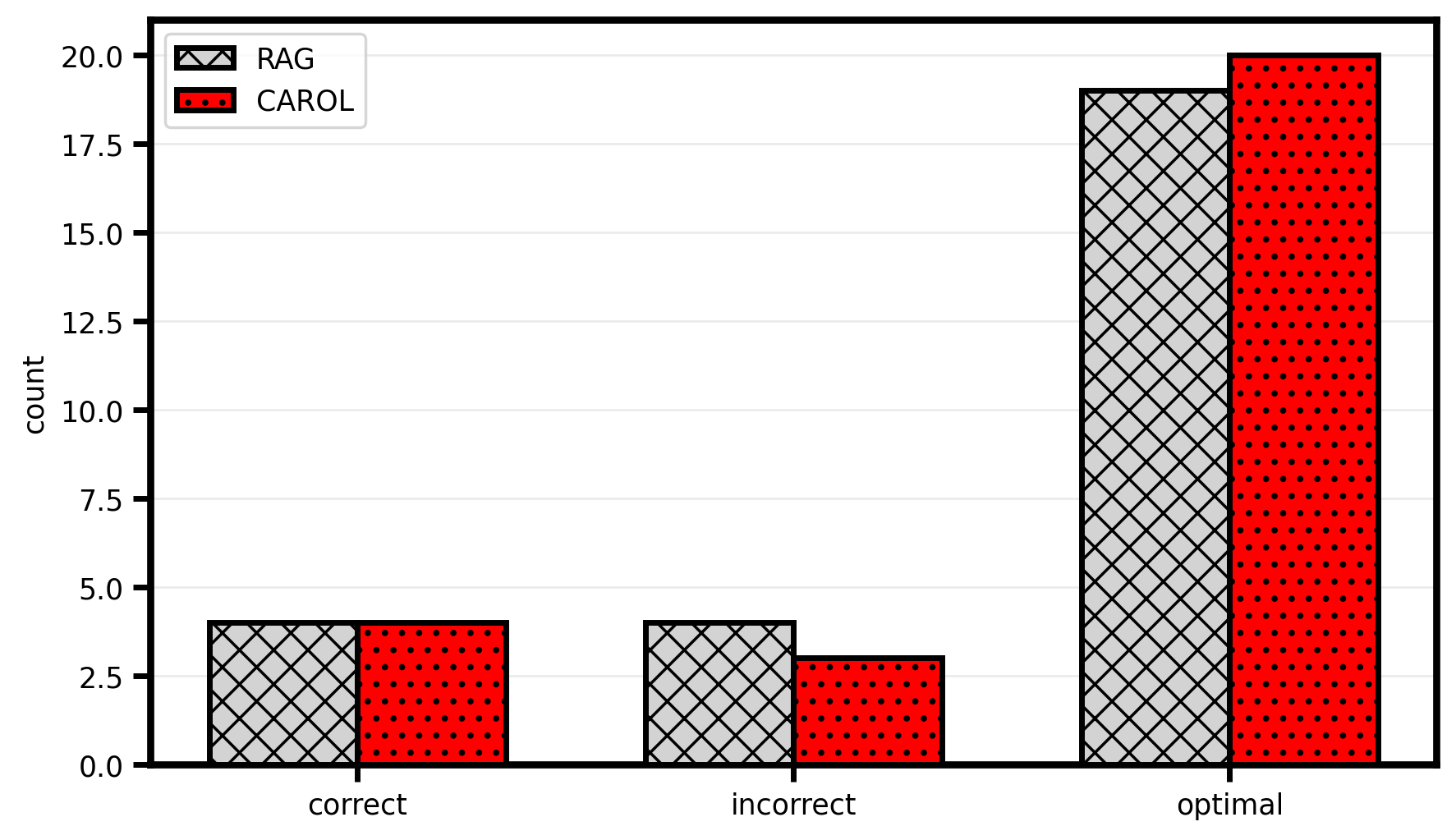}
            \caption{Fiction}
        \end{subfigure}
        \begin{subfigure}{0.3\textwidth}
            \centering
            \includegraphics[width=\linewidth]{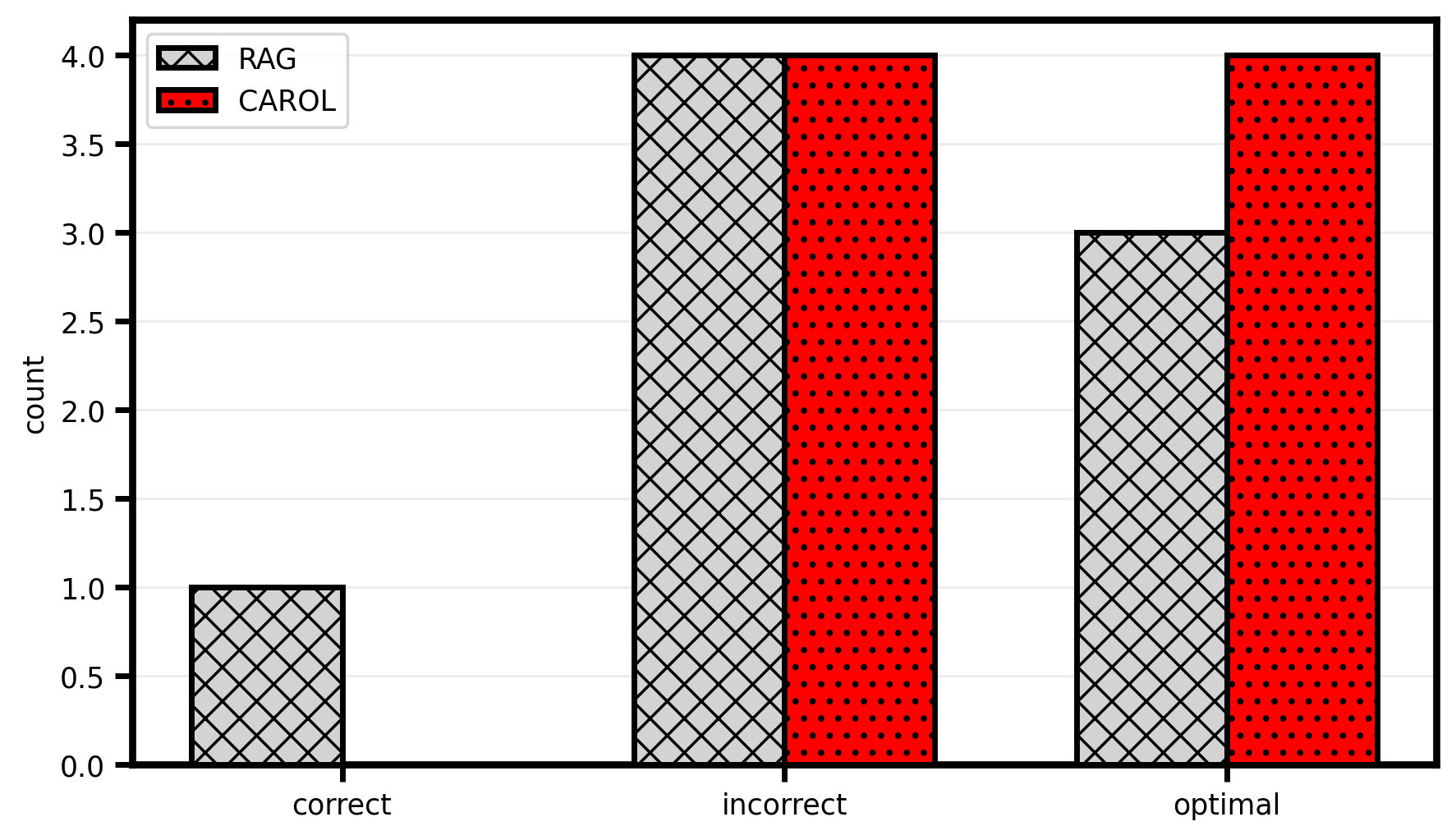}
            \caption{Finance}
        \end{subfigure}
    \end{subcaptiongroup}
    \begin{subcaptiongroup}
        \begin{subfigure}{0.3\textwidth}
            \centering
            \includegraphics[width=\linewidth]{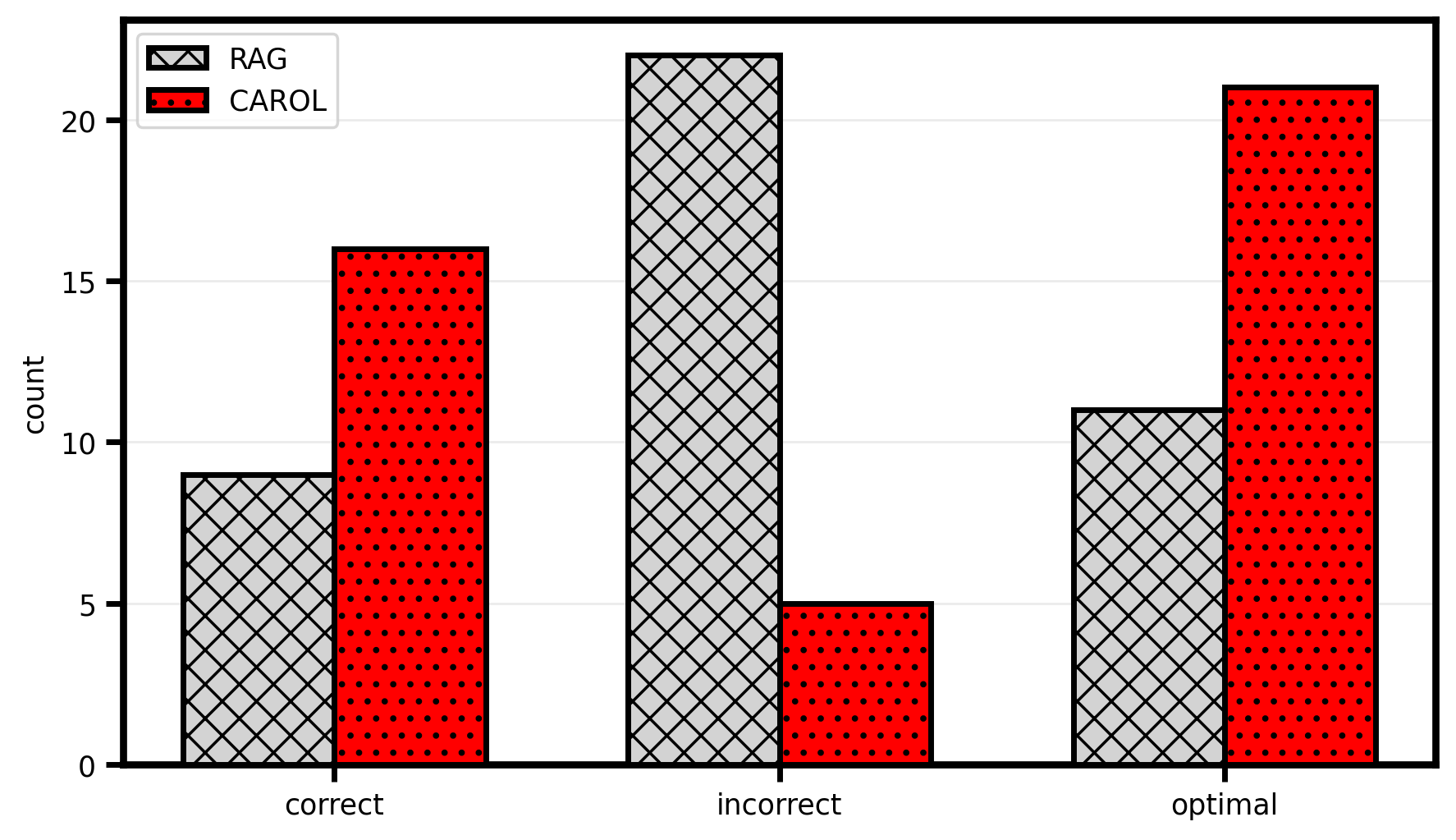}
            \caption{Health}
        \end{subfigure}
        \begin{subfigure}{0.3\textwidth}
            \centering
            \includegraphics[width=\linewidth]{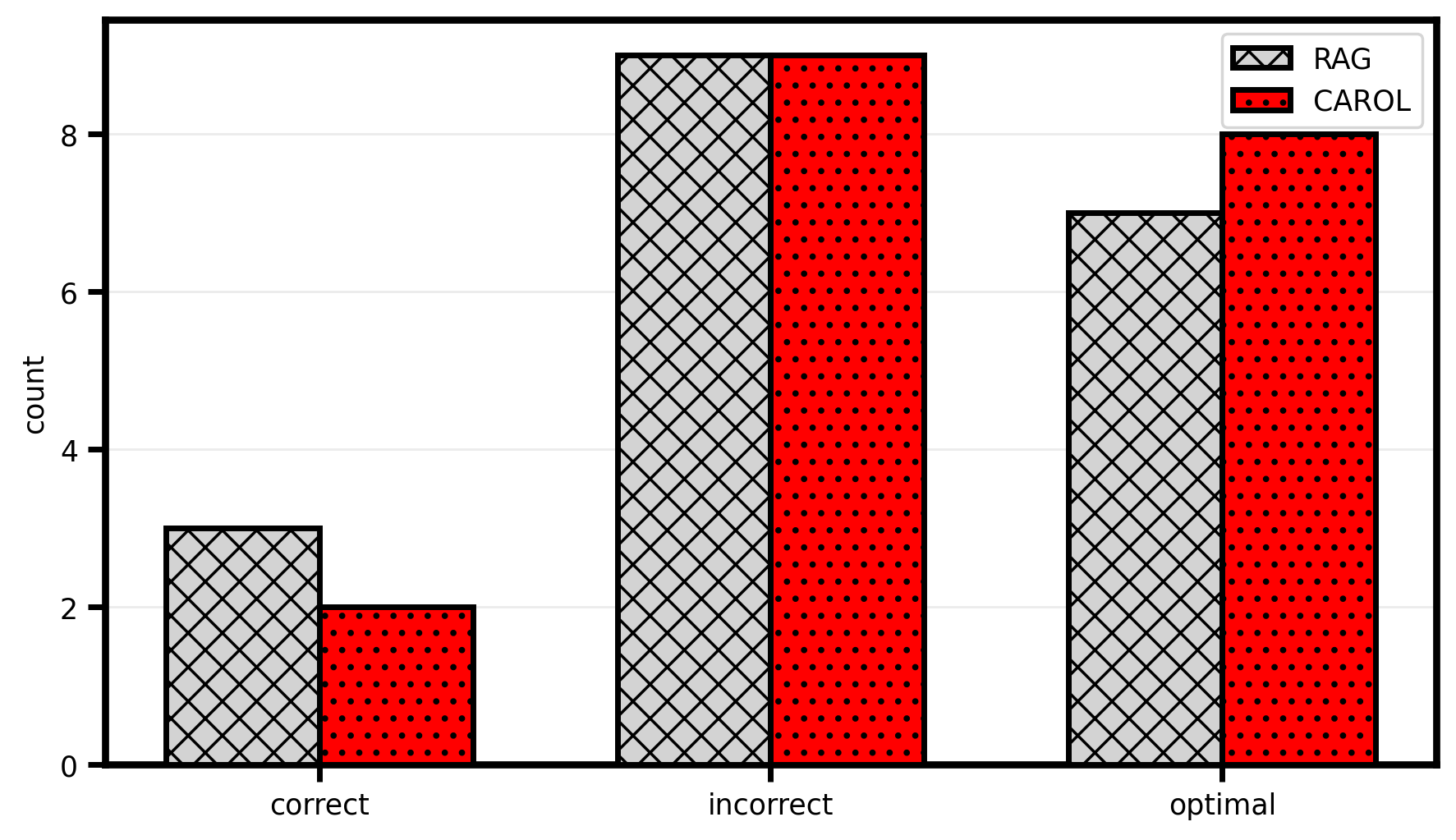}
            \caption{History}
        \end{subfigure}
        \begin{subfigure}{0.3\textwidth}
            \centering
            \includegraphics[width=\linewidth]{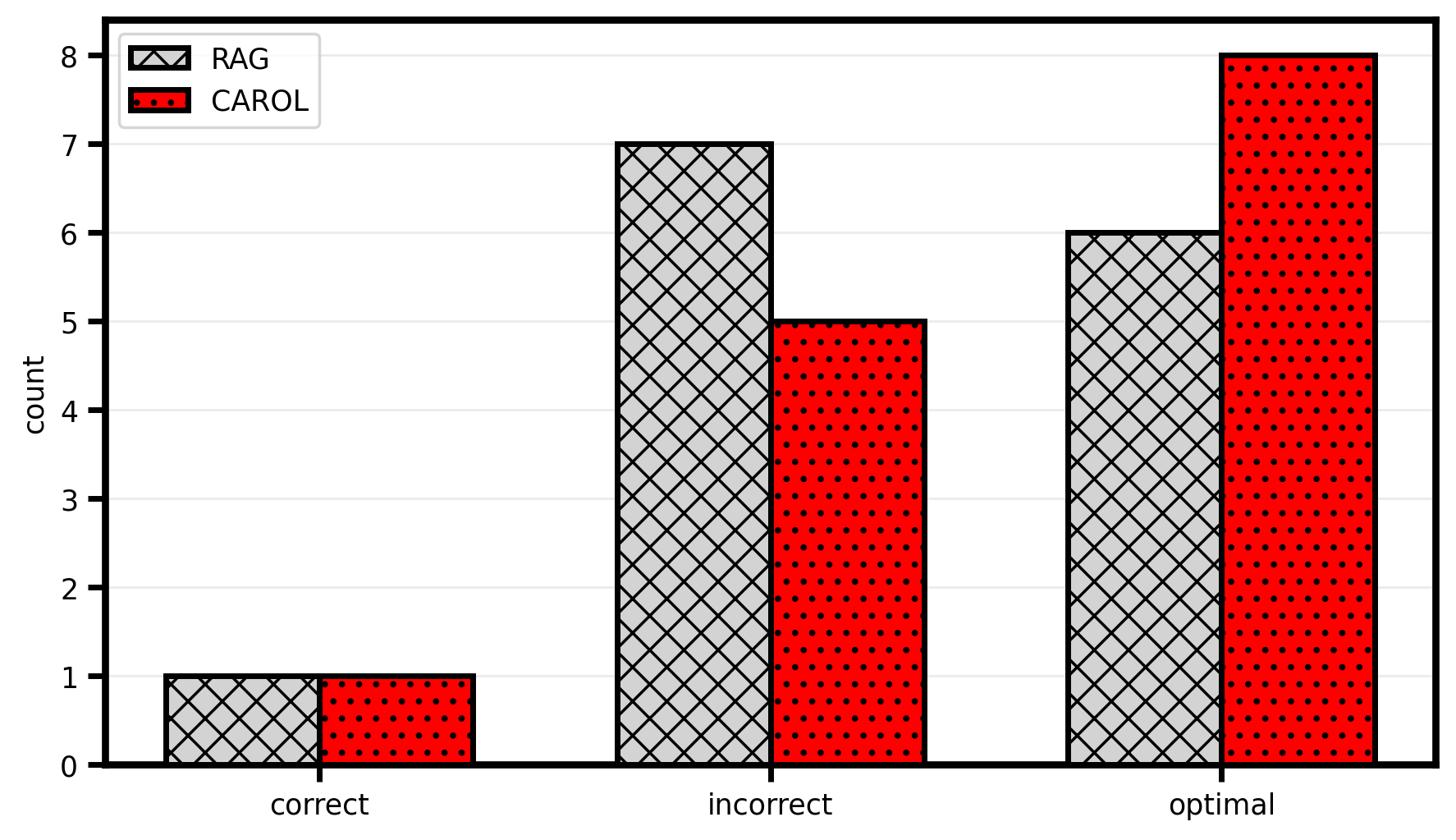}
            \caption{Indexical Error}
        \end{subfigure}
    \end{subcaptiongroup}
\caption{\textbf{TruthfulQA} on GPT-5-nano extended results in each category, part $1$.}
\end{figure}

\begin{figure}[ht]
    \centering
    \begin{subcaptiongroup}
        \begin{subfigure}{0.3\textwidth}
            \centering
            \includegraphics[width=\linewidth]{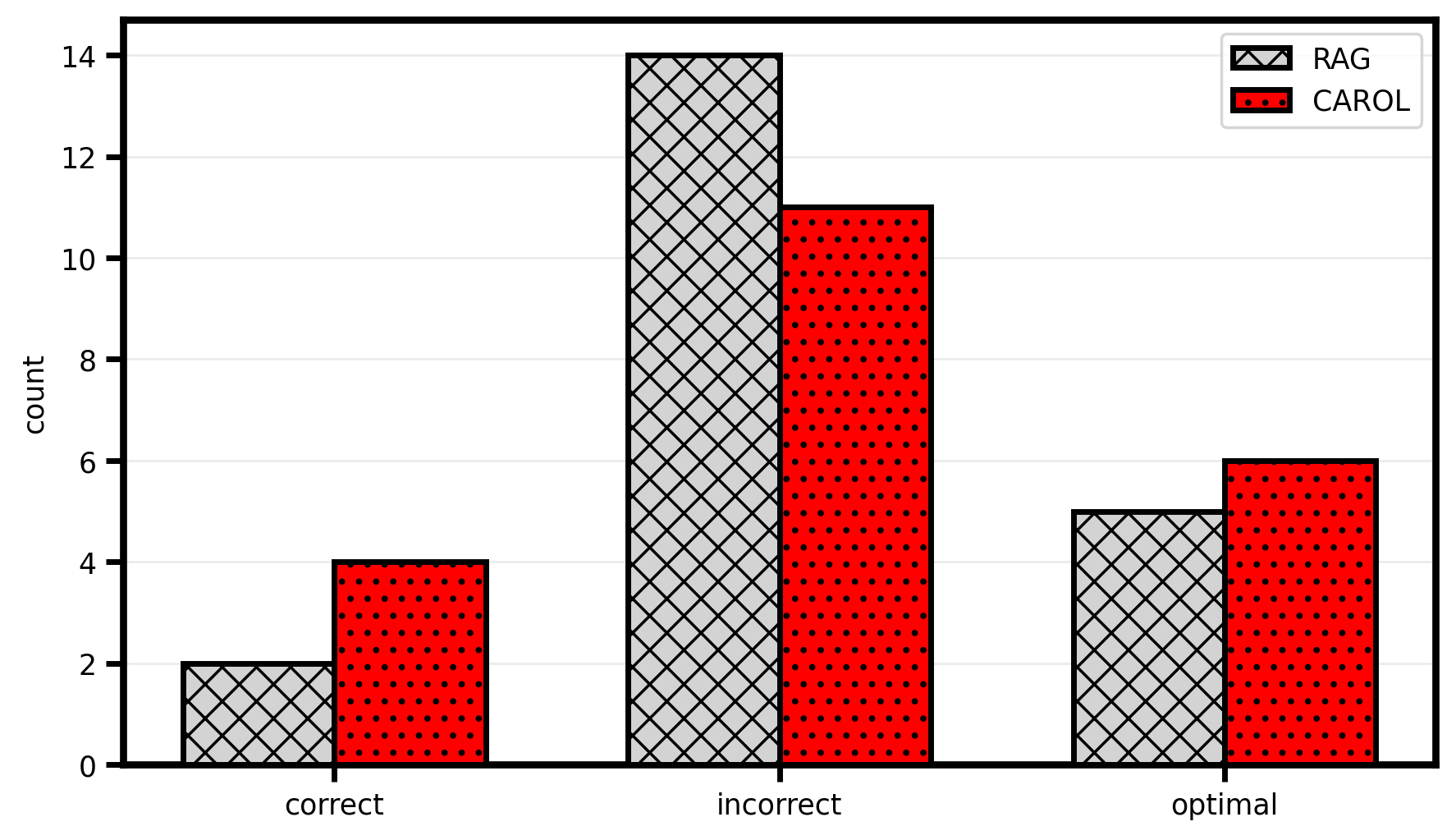}
            \caption{Language}
        \end{subfigure}
        \begin{subfigure}{0.3\textwidth}
            \centering
            \includegraphics[width=\linewidth]{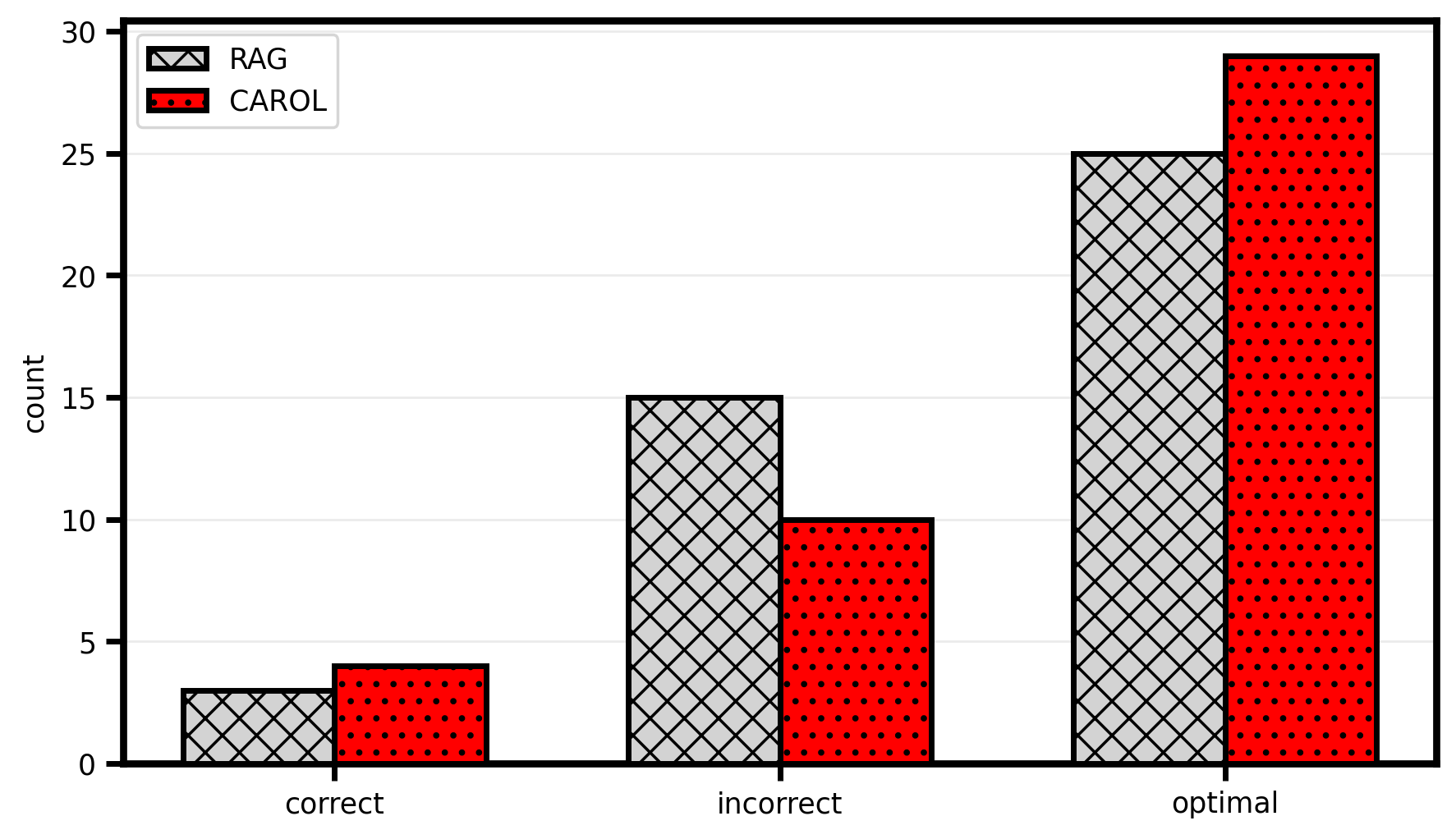}
            \caption{Law}
        \end{subfigure}
        \begin{subfigure}{0.3\textwidth}
            \centering
            \includegraphics[width=\linewidth]{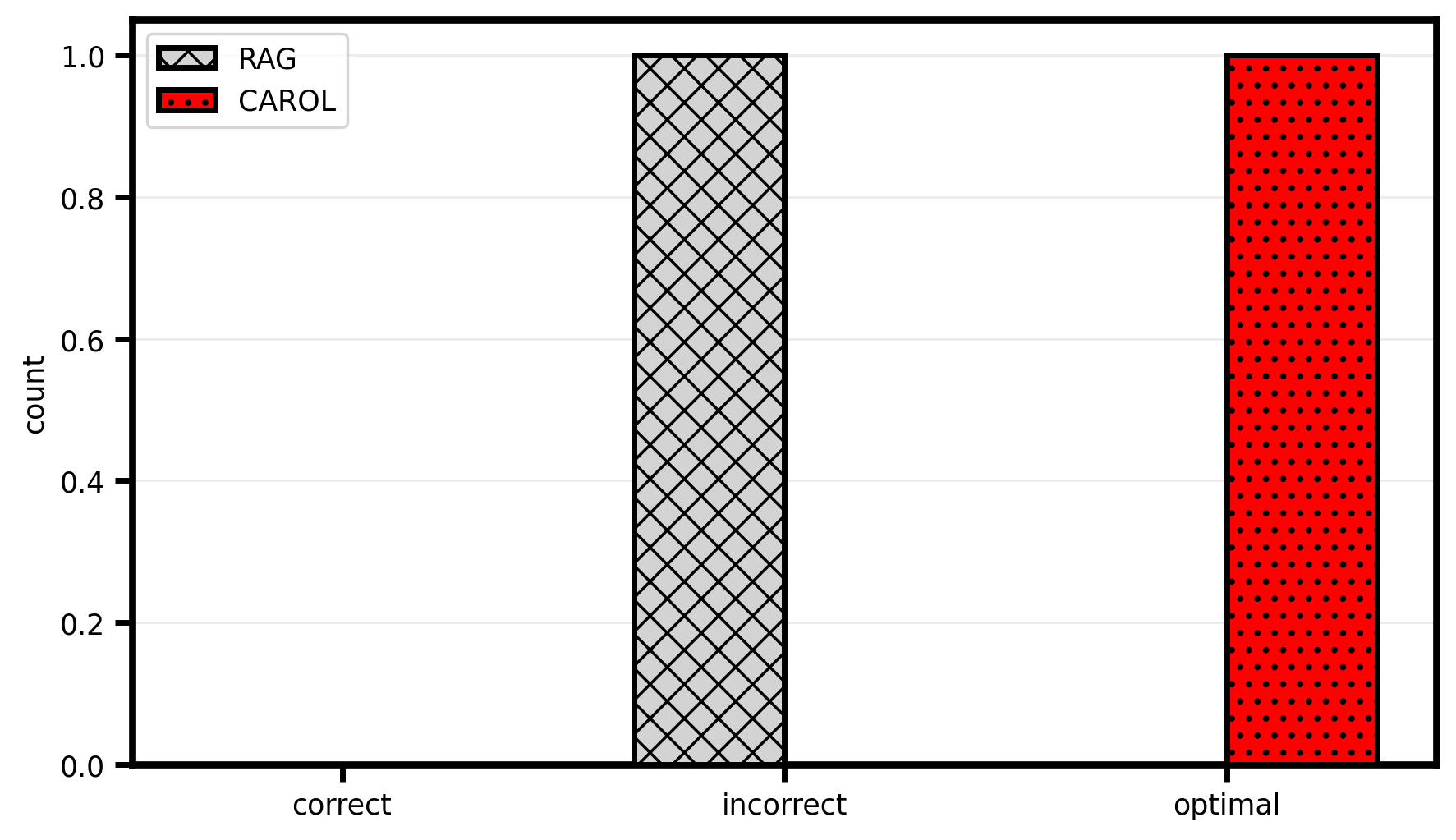}
            \caption{Logical Falsehood}
        \end{subfigure}
    \end{subcaptiongroup}
    \begin{subcaptiongroup}
        \begin{subfigure}{0.3\textwidth}
            \centering
            \includegraphics[width=\linewidth]{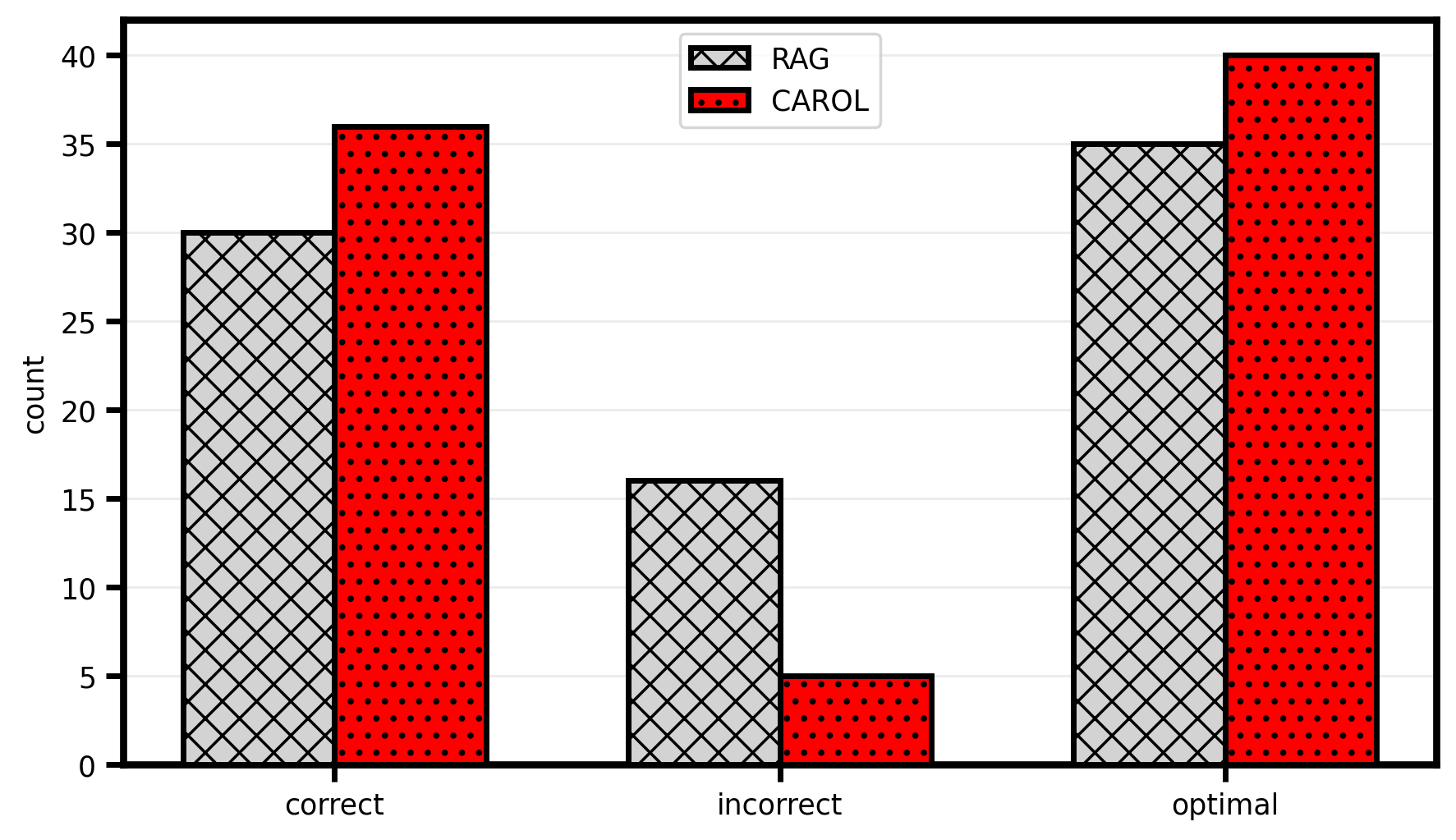}
            \caption{Misconceptions}
        \end{subfigure}
        \begin{subfigure}{0.3\textwidth}
            \centering
            \includegraphics[width=\linewidth]{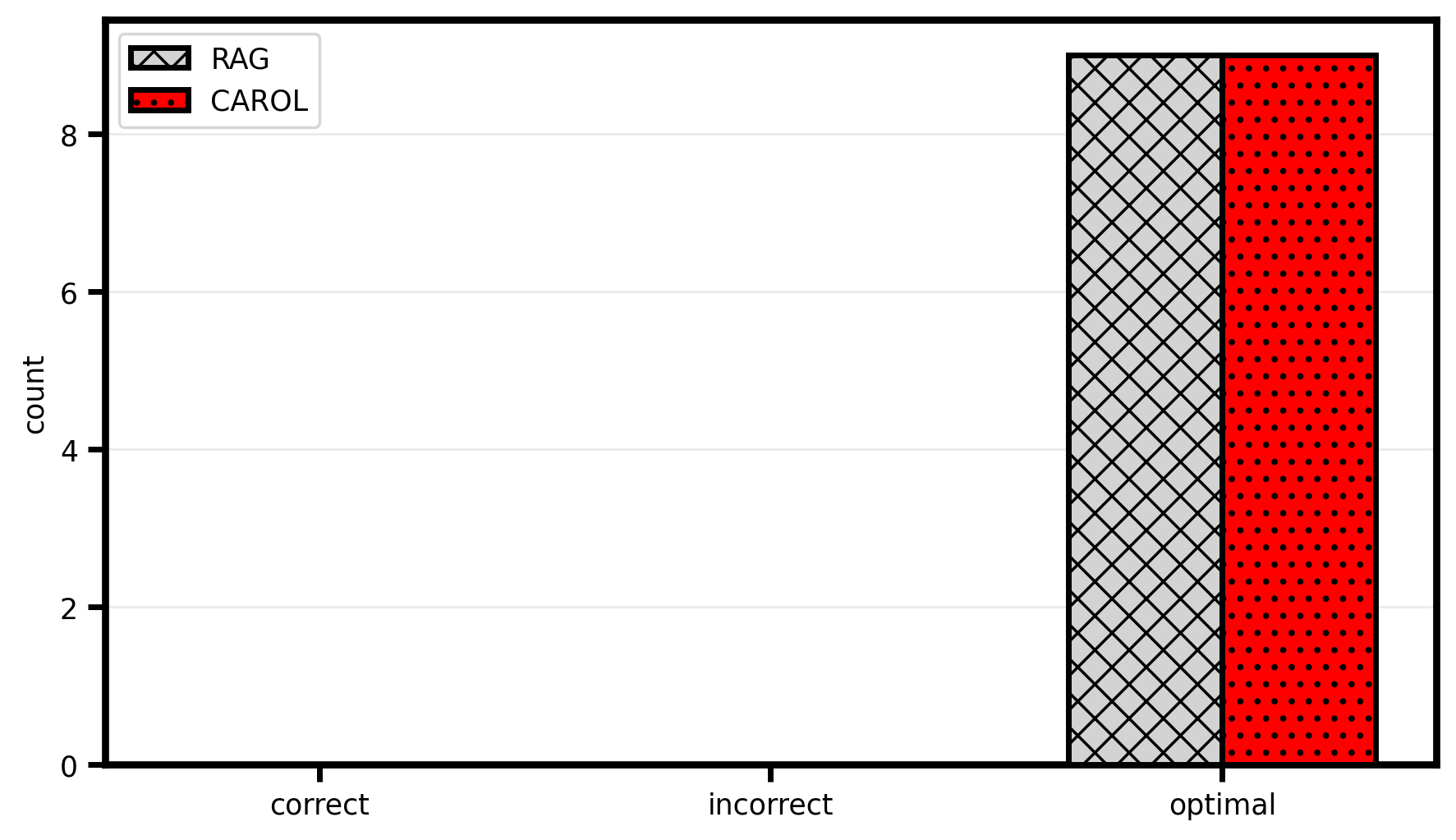}
            \caption{Misinformation}
        \end{subfigure}
        \begin{subfigure}{0.3\textwidth}
            \centering
            \includegraphics[width=\linewidth]{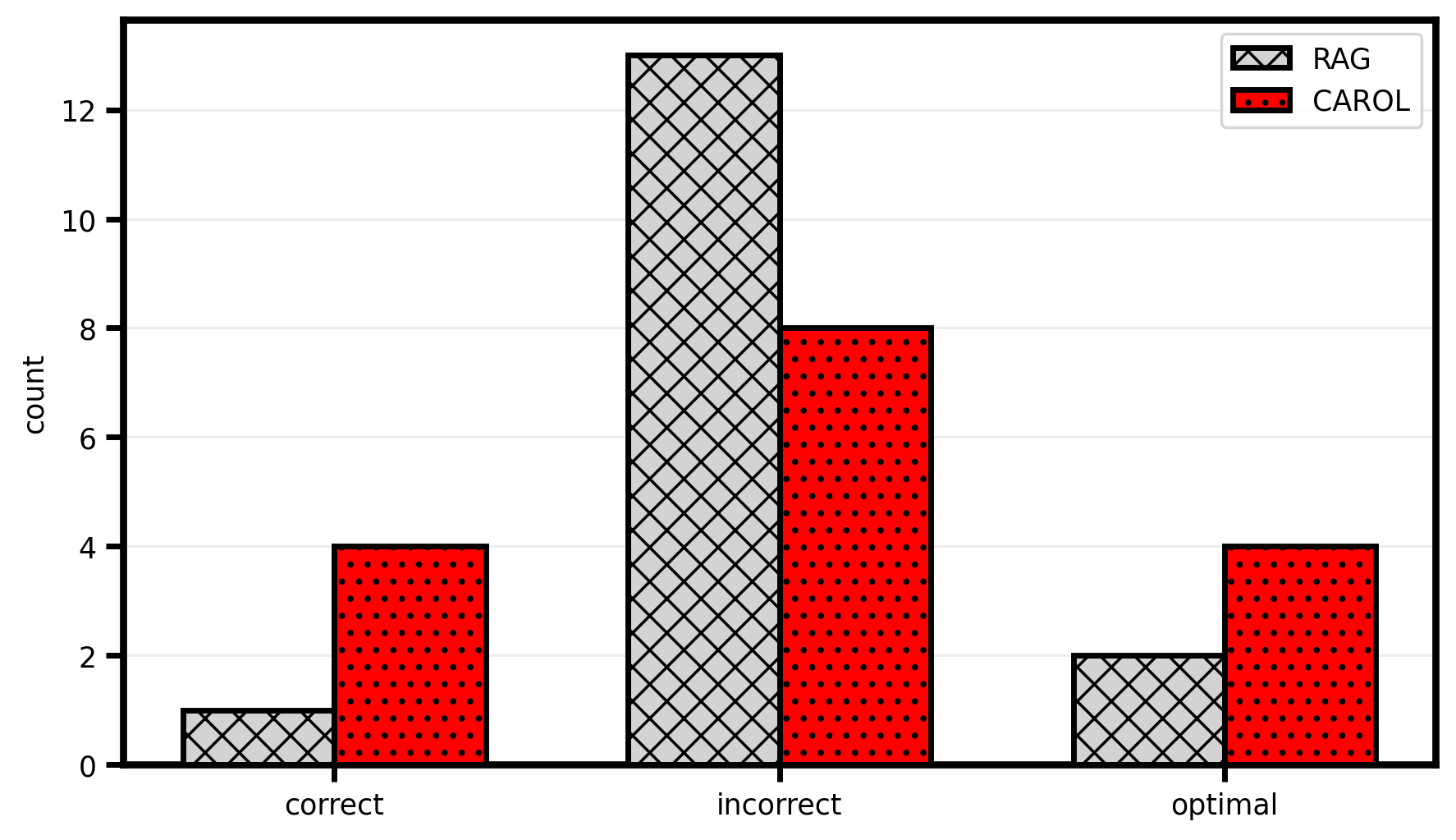}
            \caption{Misquotations}
        \end{subfigure}
    \end{subcaptiongroup}
    \begin{subcaptiongroup}
        \begin{subfigure}{0.3\textwidth}
            \centering
            \includegraphics[width=\linewidth]{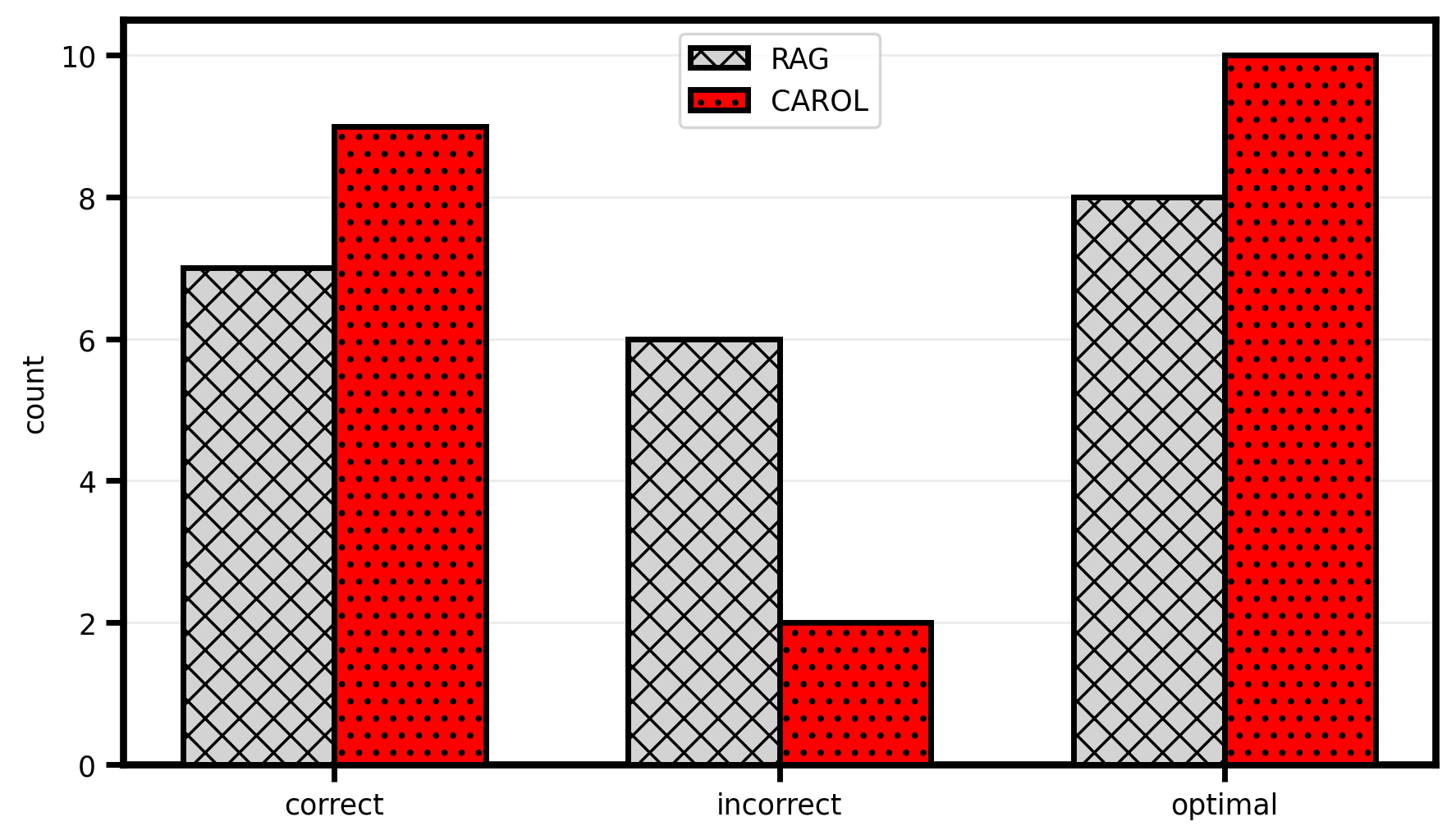}
            \caption{Myths}
        \end{subfigure}
        \begin{subfigure}{0.3\textwidth}
            \centering
            \includegraphics[width=\linewidth]{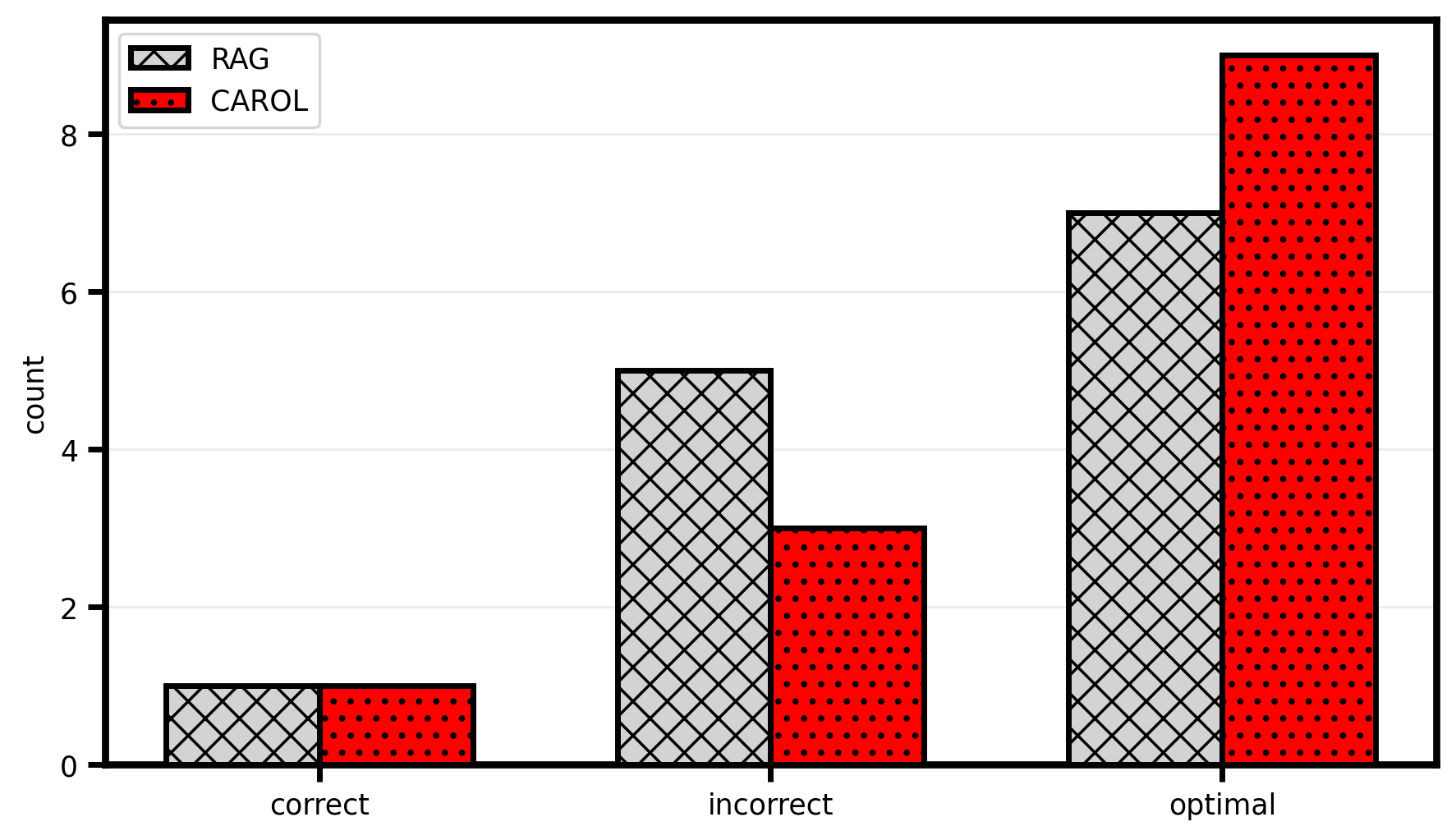}
            \caption{Nutrition}
        \end{subfigure}
        \begin{subfigure}{0.3\textwidth}
            \centering
            \includegraphics[width=\linewidth]{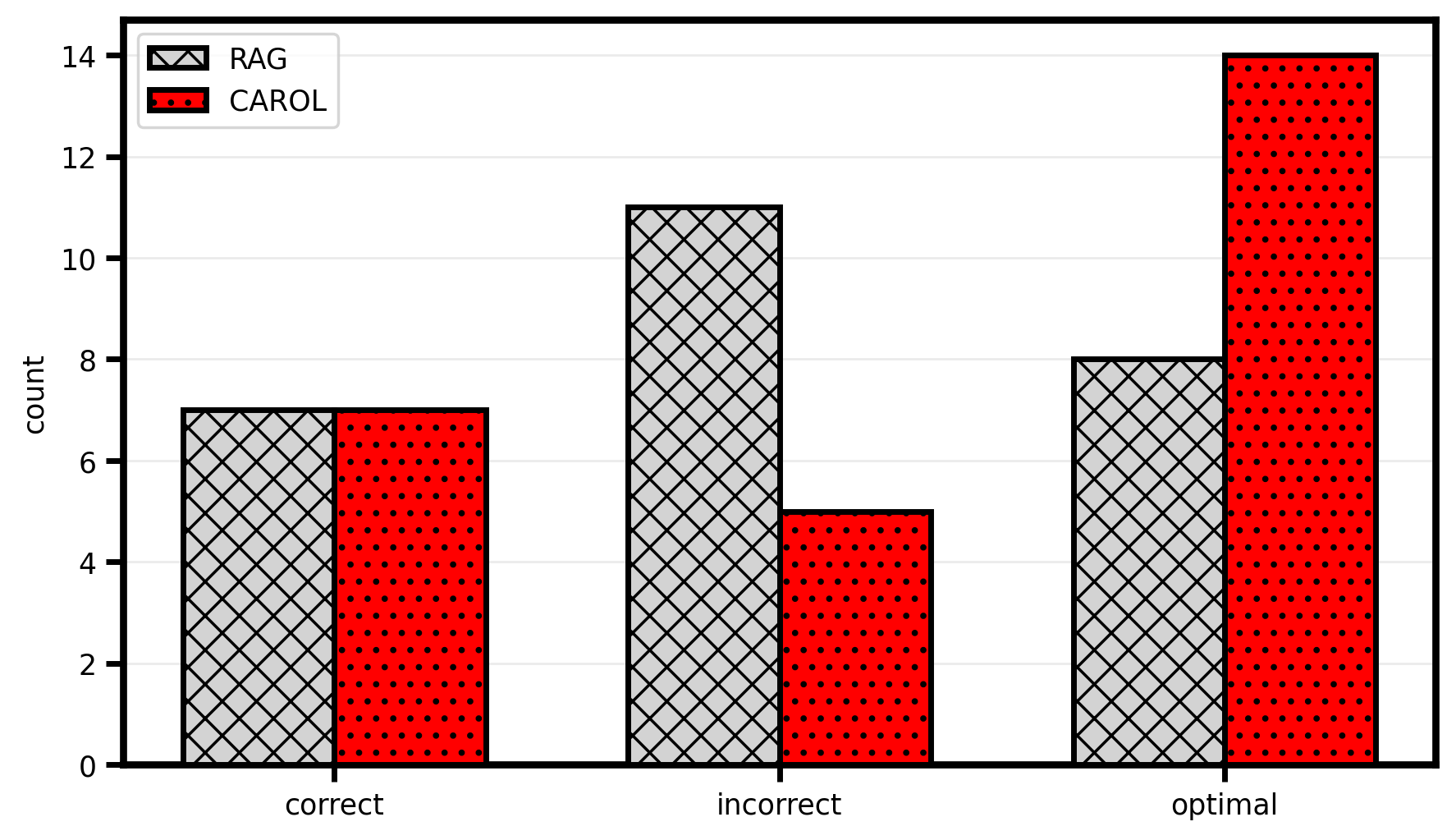}
            \caption{Paranormal}
        \end{subfigure}
    \end{subcaptiongroup}
    \begin{subcaptiongroup}
        \begin{subfigure}{0.3\textwidth}
            \centering
            \includegraphics[width=\linewidth]{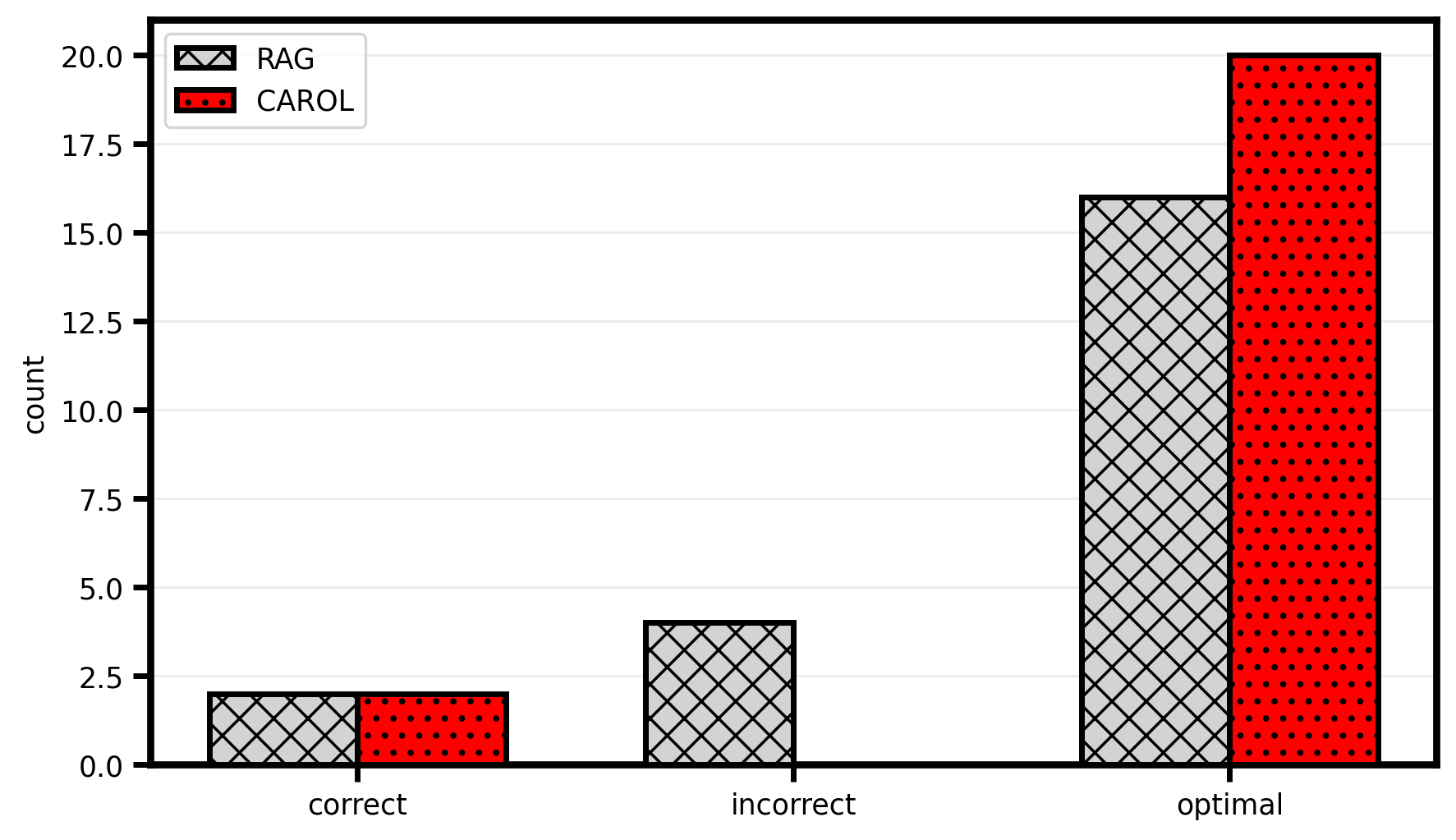}
            \caption{People}
        \end{subfigure}
        \begin{subfigure}{0.3\textwidth}
            \centering
            \includegraphics[width=\linewidth]{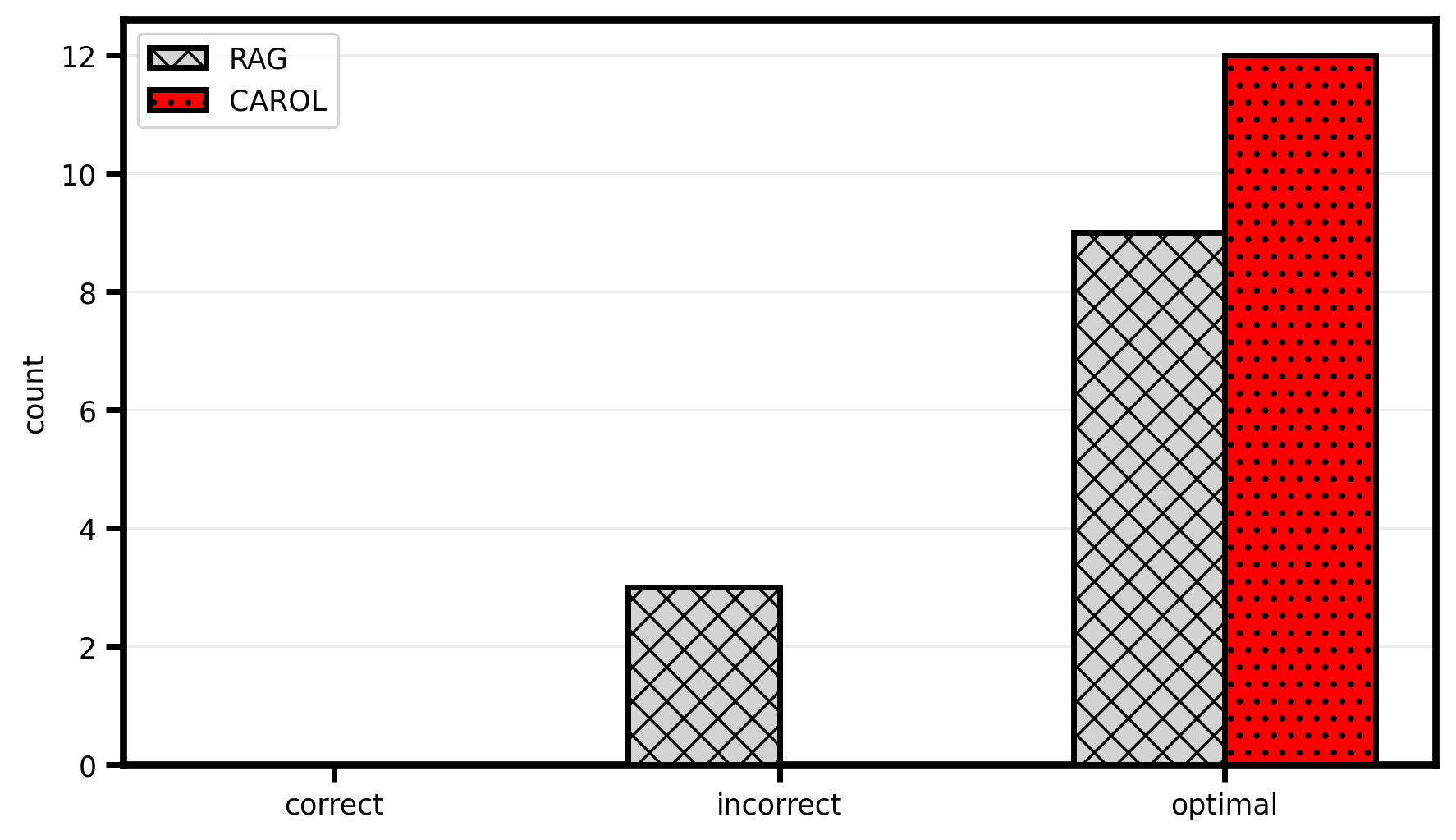}
            \caption{Places}
        \end{subfigure}
        \begin{subfigure}{0.3\textwidth}
            \centering
            \includegraphics[width=\linewidth]{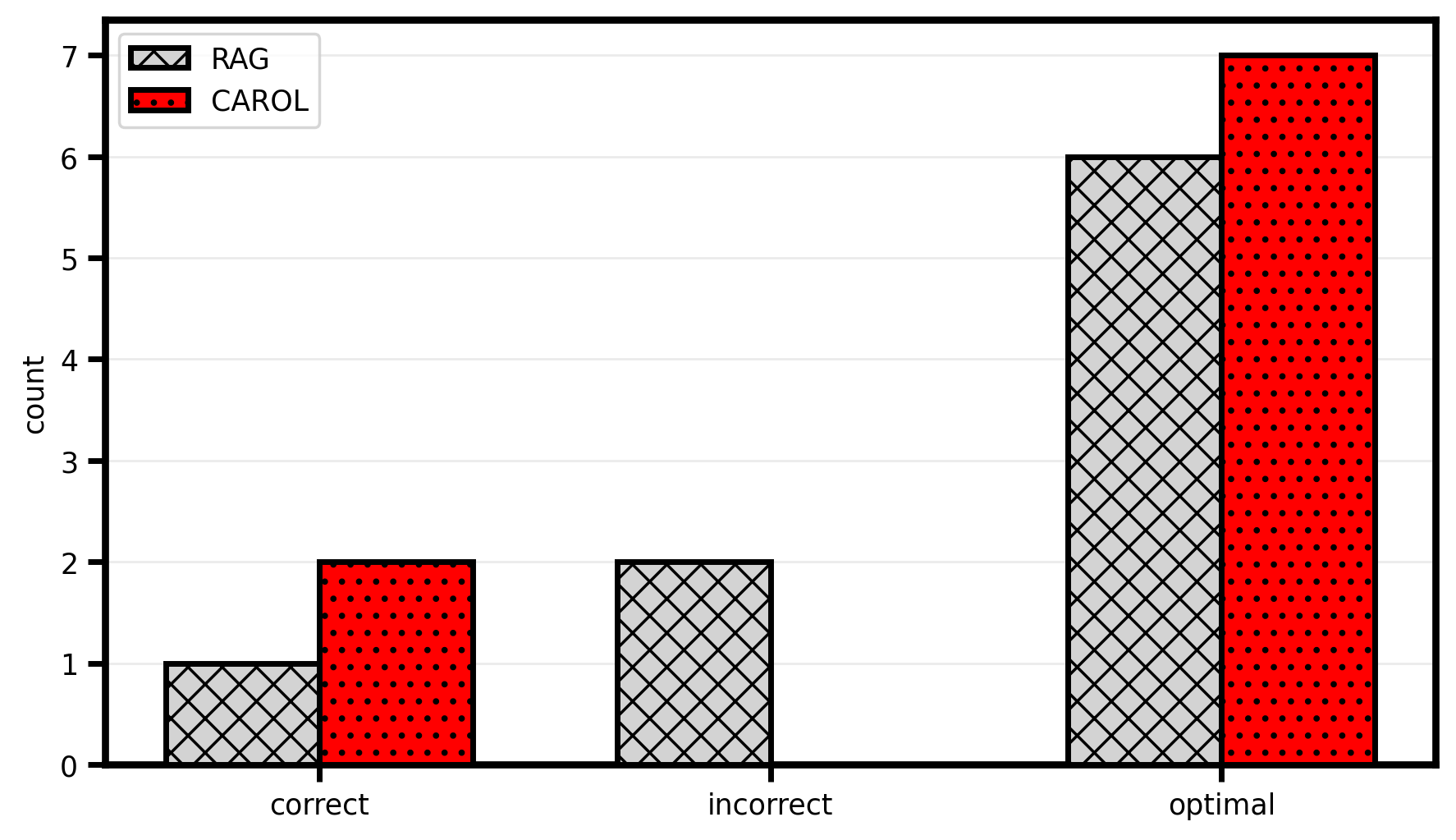}
            \caption{Politics}
        \end{subfigure}
    \end{subcaptiongroup}
\caption{\textbf{TruthfulQA} on GPT-5-nano extended results in each category, part $2$.}
\end{figure}

\begin{figure}[ht]
    \centering
    \begin{subcaptiongroup}
        \begin{subfigure}{0.3\textwidth}
            \centering
            \includegraphics[width=\linewidth]{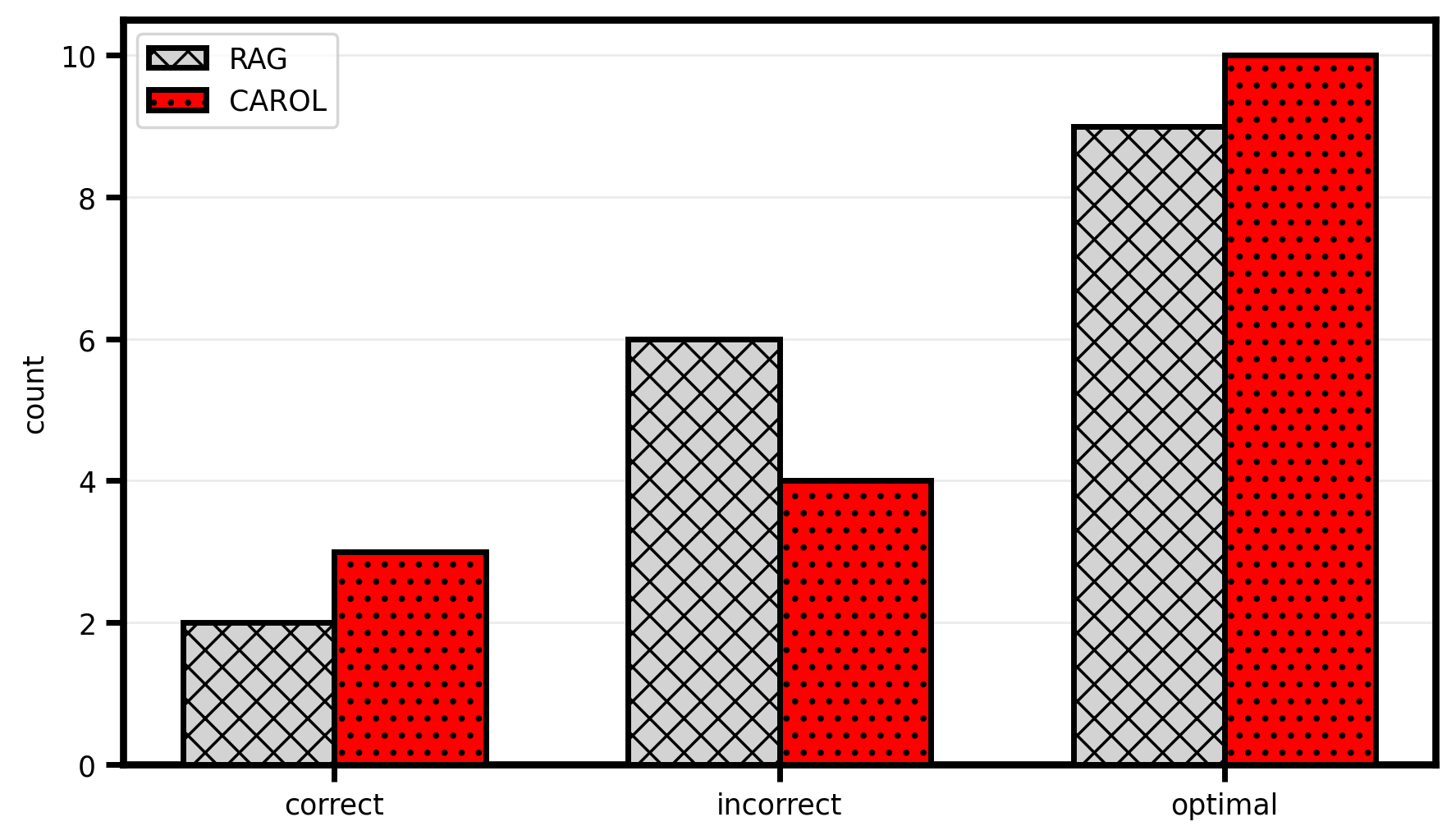}
            \caption{Proverbs}
        \end{subfigure}
        \begin{subfigure}{0.3\textwidth}
            \centering
            \includegraphics[width=\linewidth]{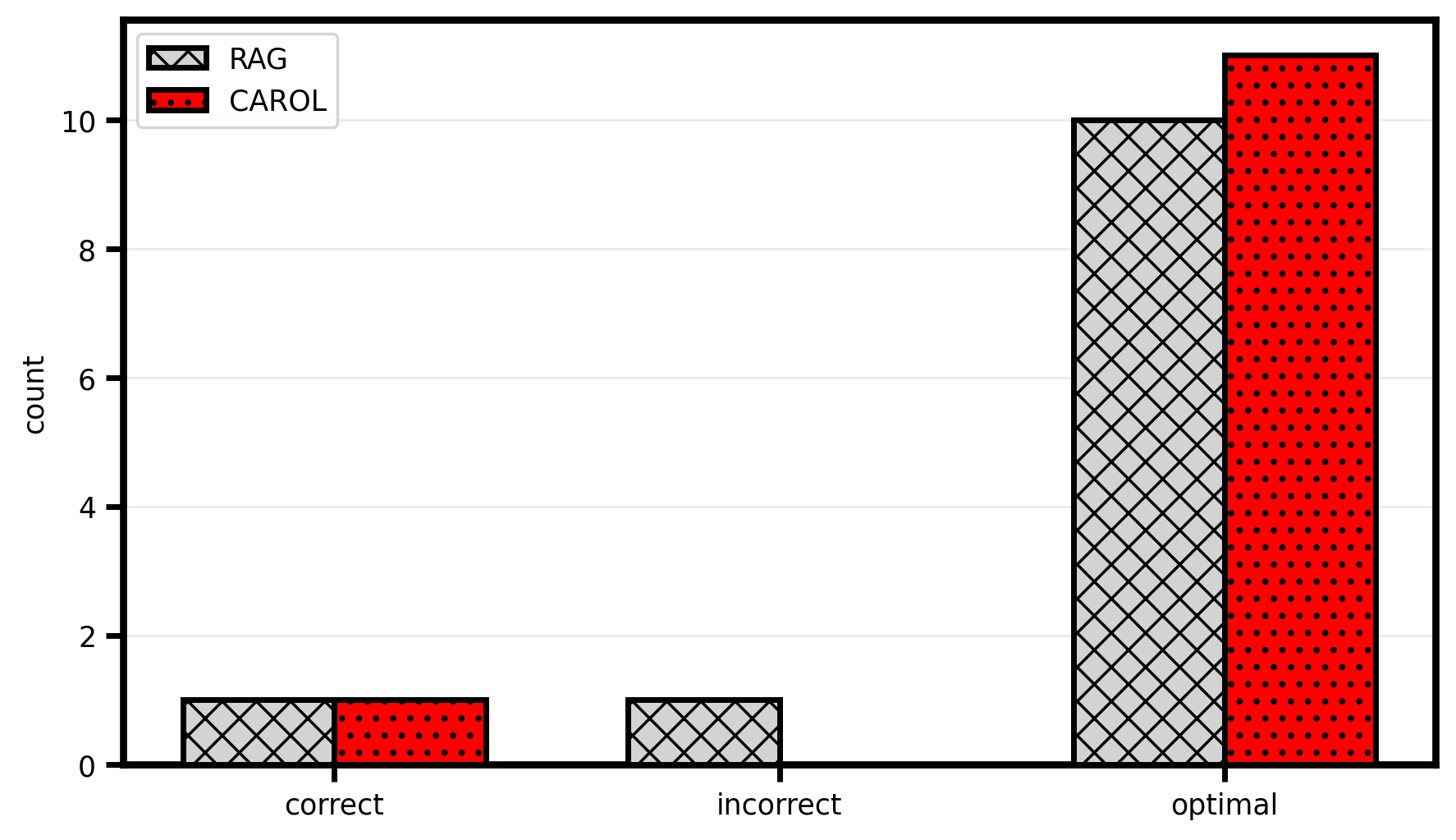}
            \caption{Psychology}
        \end{subfigure}
        \begin{subfigure}{0.3\textwidth}
            \centering
            \includegraphics[width=\linewidth]{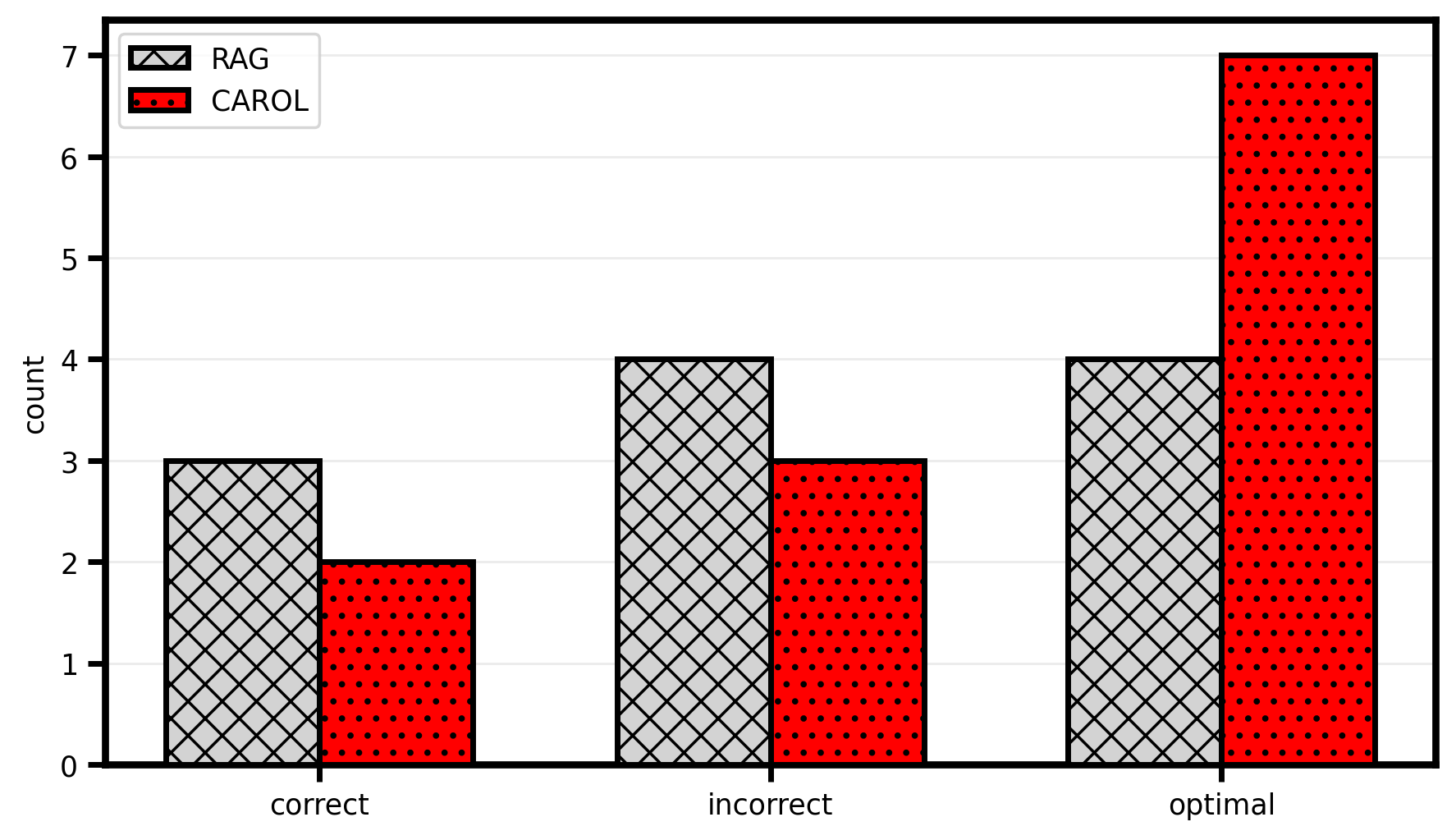}
            \caption{Religion}
        \end{subfigure}
    \end{subcaptiongroup}
    \begin{subcaptiongroup}
        \begin{subfigure}{0.3\textwidth}
            \centering
            \includegraphics[width=\linewidth]{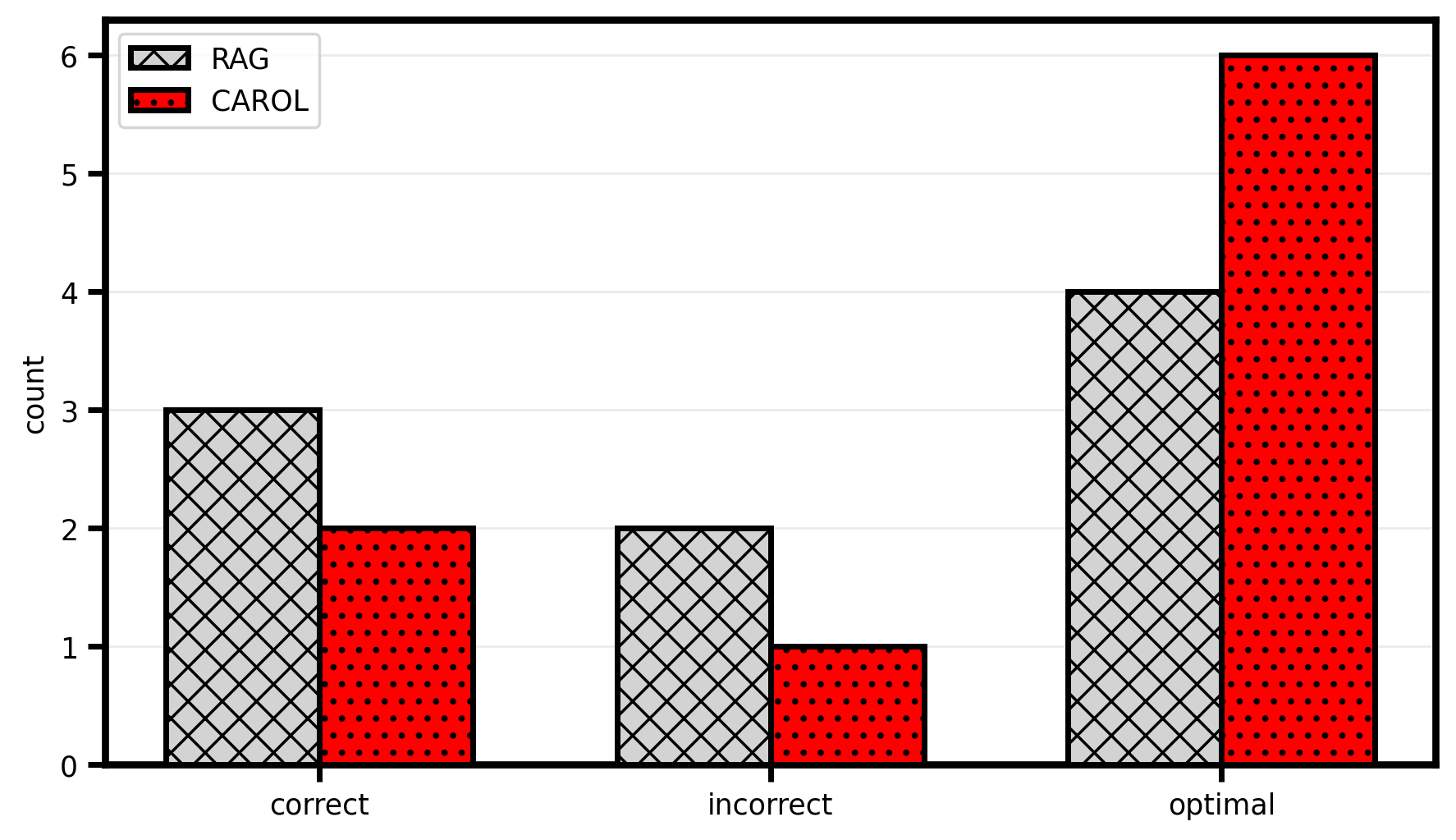}
            \caption{Science}
        \end{subfigure}
        \begin{subfigure}{0.3\textwidth}
            \centering
            \includegraphics[width=\linewidth]{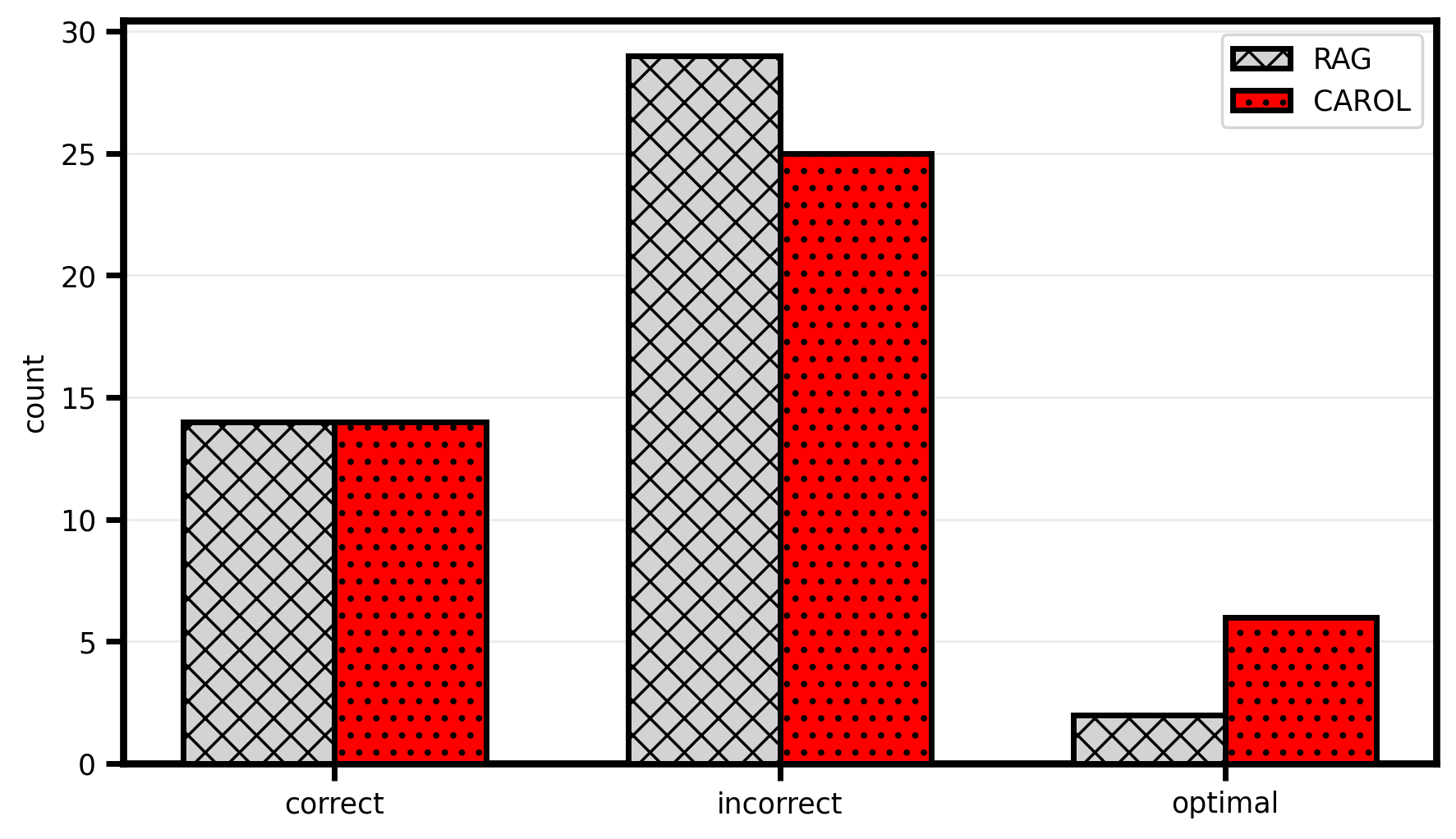}
            \caption{Sociology}
        \end{subfigure}
        \begin{subfigure}{0.3\textwidth}
            \centering
            \includegraphics[width=\linewidth]{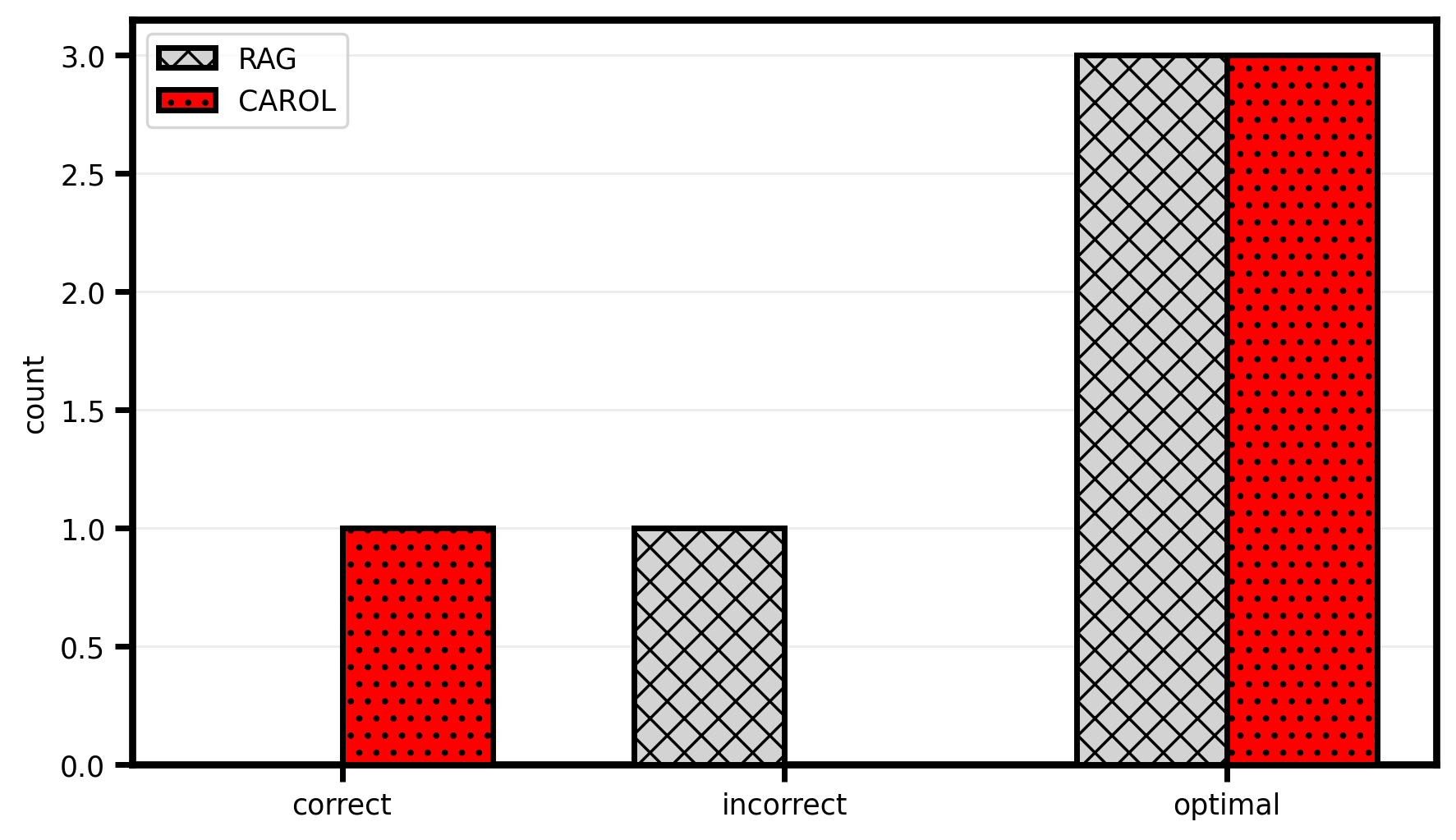}
            \caption{Statistics}
        \end{subfigure}
    \end{subcaptiongroup}
    \begin{subcaptiongroup}
        \begin{subfigure}{0.3\textwidth}
            \centering
            \includegraphics[width=\linewidth]{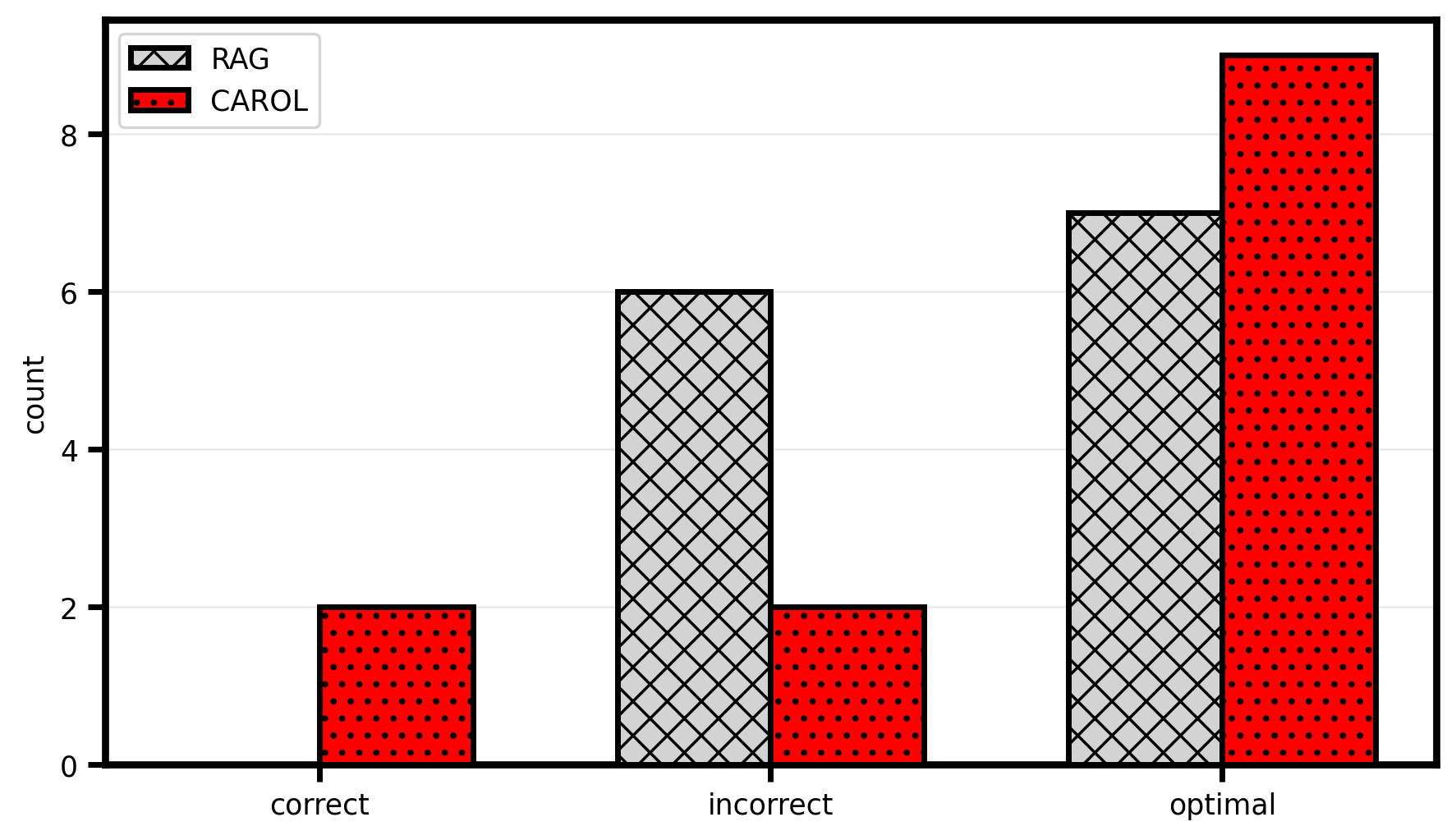}
            \caption{Stereotypes}
        \end{subfigure}
        \begin{subfigure}{0.3\textwidth}
            \centering
            \includegraphics[width=\linewidth]{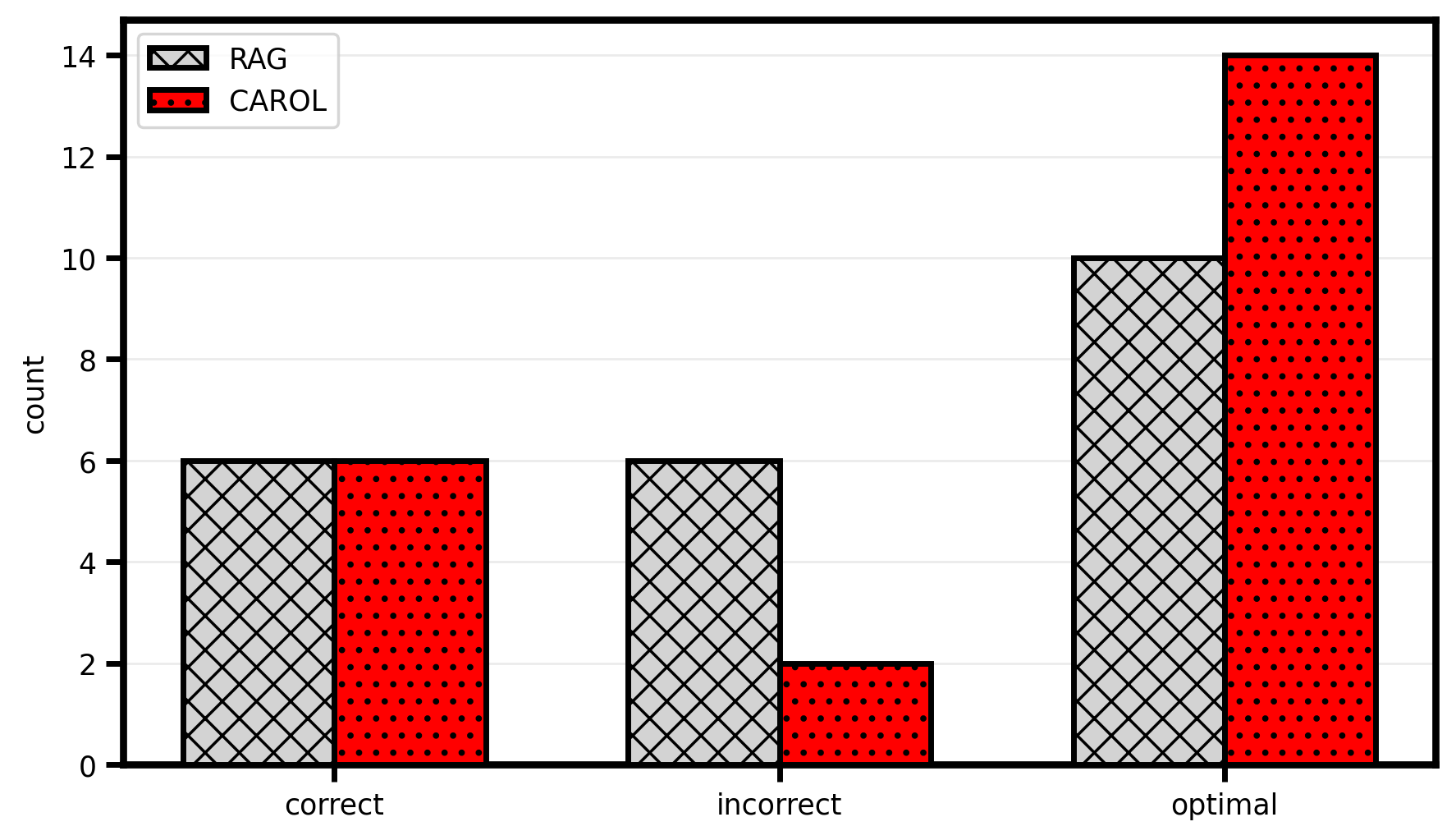}
            \caption{Superstition}
        \end{subfigure}
        \begin{subfigure}{0.3\textwidth}
            \centering
            \includegraphics[width=\linewidth]{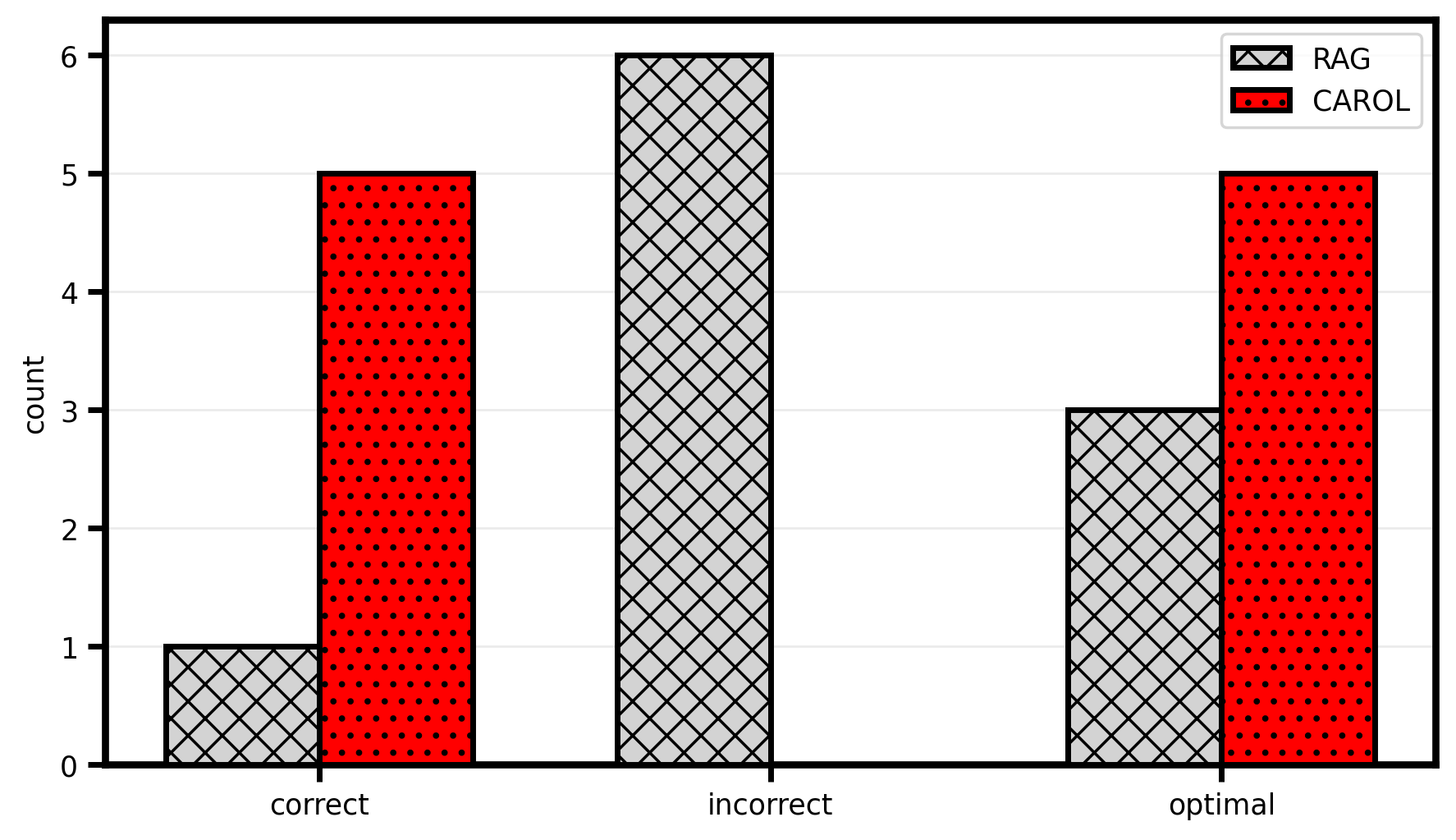}
            \caption{Subjective}
        \end{subfigure}
    \end{subcaptiongroup}
    \begin{subcaptiongroup}
        \begin{subfigure}{0.3\textwidth}
            \centering
            \includegraphics[width=\linewidth]{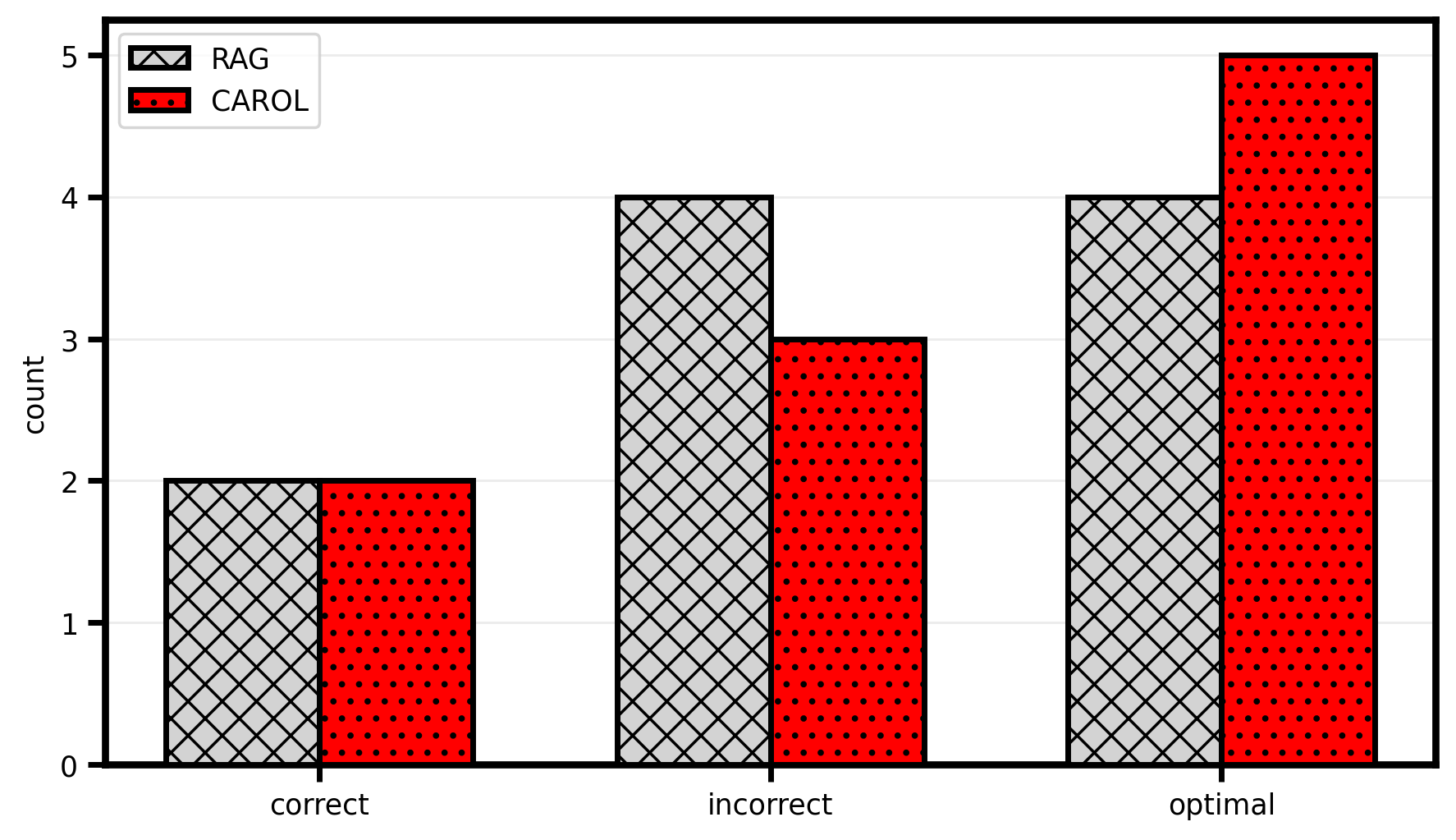}
            \caption{Weather}
        \end{subfigure}
    \end{subcaptiongroup}
\caption{\textbf{TruthfulQA} on GPT-5-nano extended results in each category, part $3$.}
\end{figure}

\begin{figure}[ht]
    \centering
    \begin{subcaptiongroup}
        \begin{subfigure}{0.45\textwidth}
            \centering
            \includegraphics[width=\linewidth]{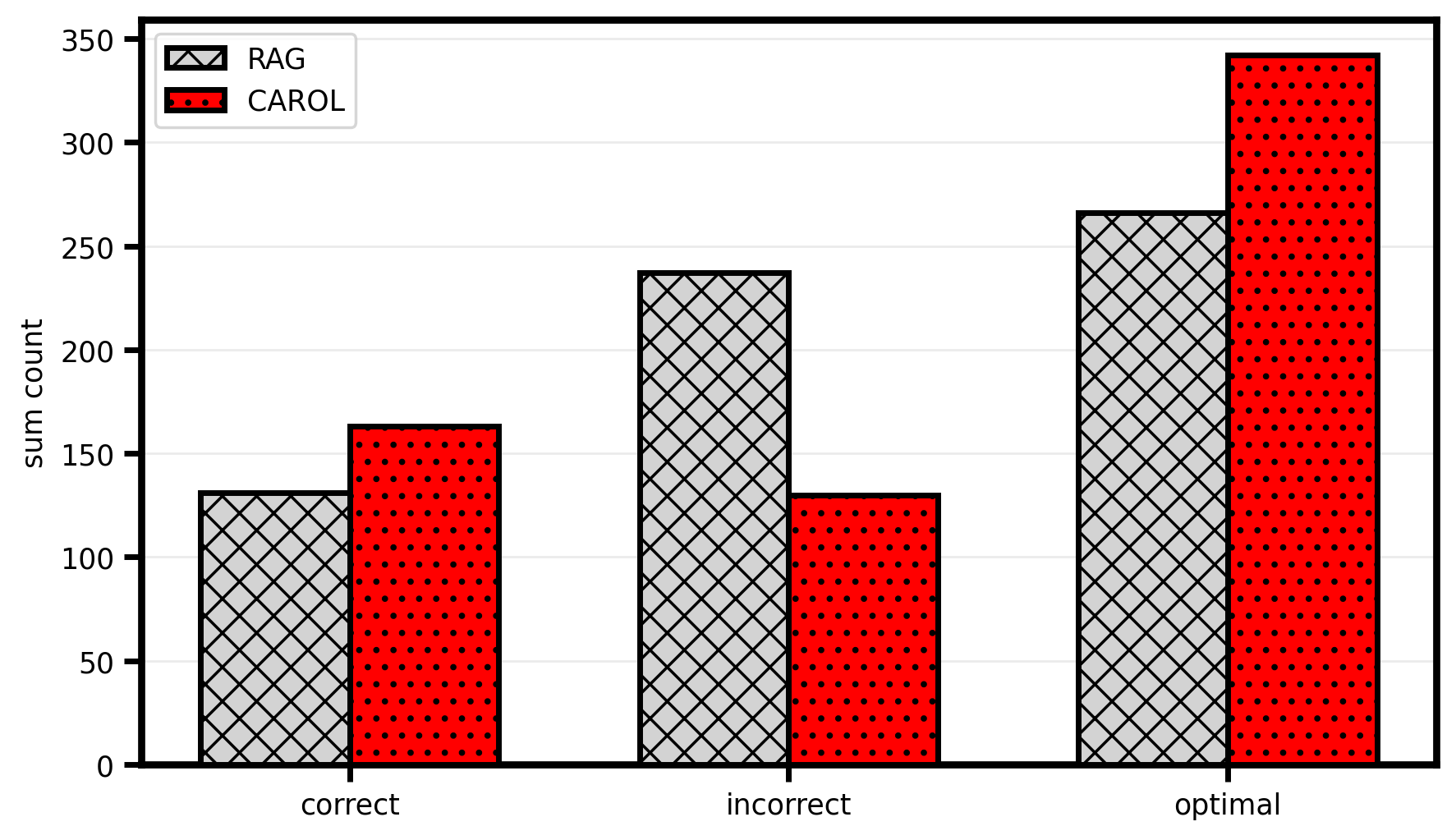}
            \caption{Results}
        \end{subfigure}
        \begin{subfigure}{0.45\textwidth}
            \centering
            \includegraphics[width=\linewidth]{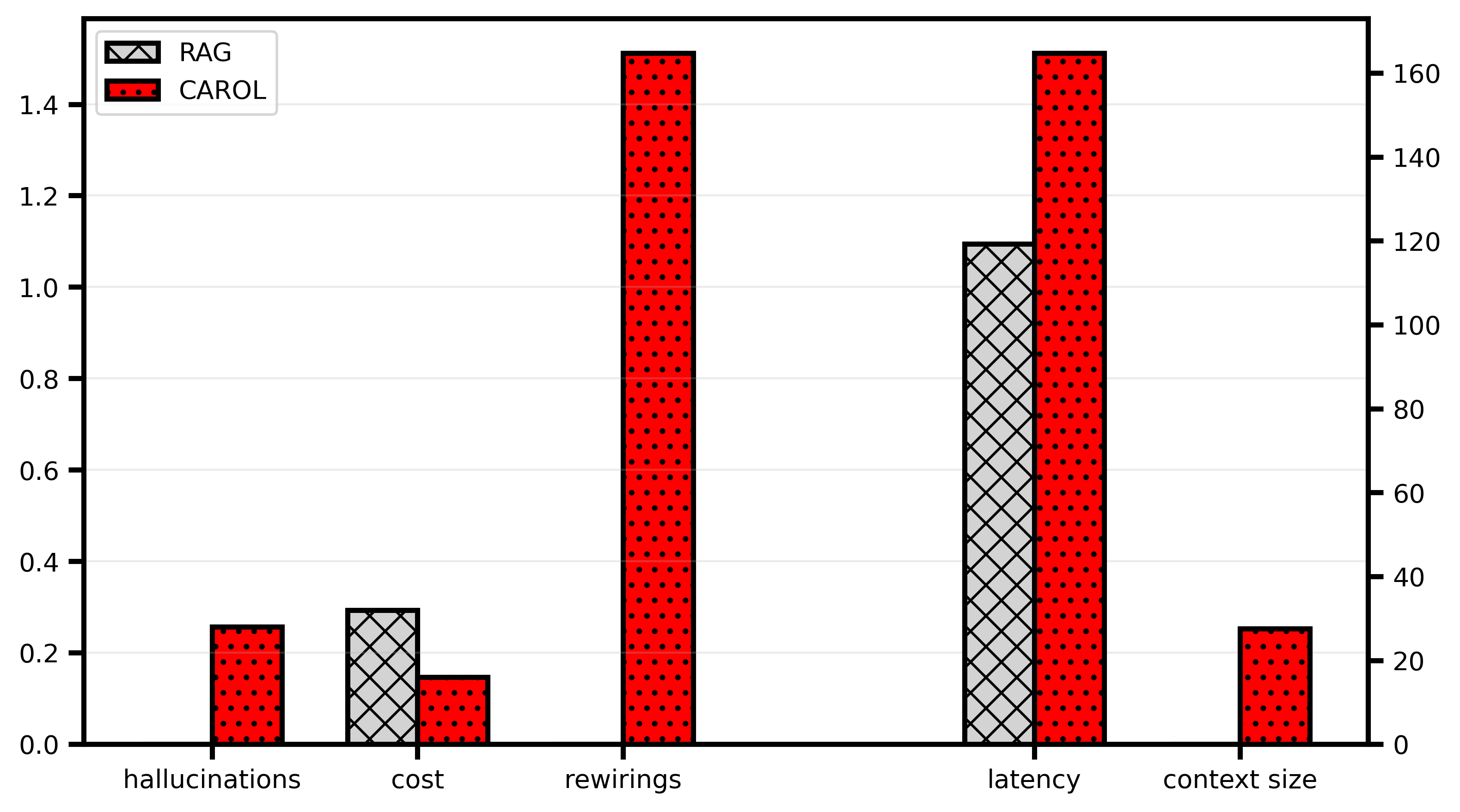}
            \caption{Execution Metrics}
        \end{subfigure}
    \end{subcaptiongroup}
\caption{\textbf{TruthfulQA} on GPT-5-nano overall results.}
\end{figure}


\begin{figure}[ht]
    \centering
    \begin{subcaptiongroup}
        \begin{subfigure}{0.3\textwidth}
            \centering
            \includegraphics[width=\linewidth]{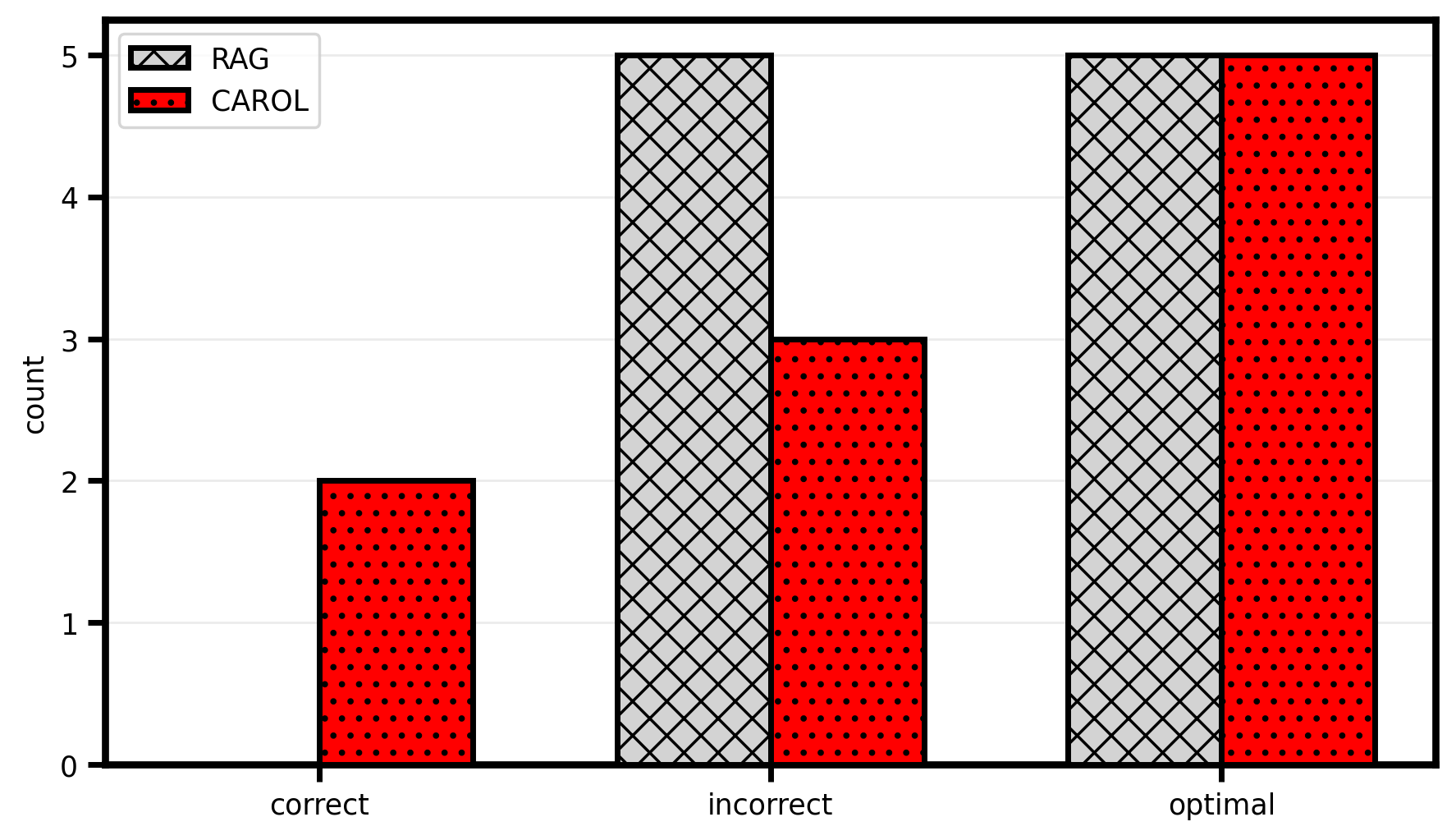}
            \caption{Advertising}
        \end{subfigure}
        \begin{subfigure}{0.3\textwidth}
            \centering
            \includegraphics[width=\linewidth]{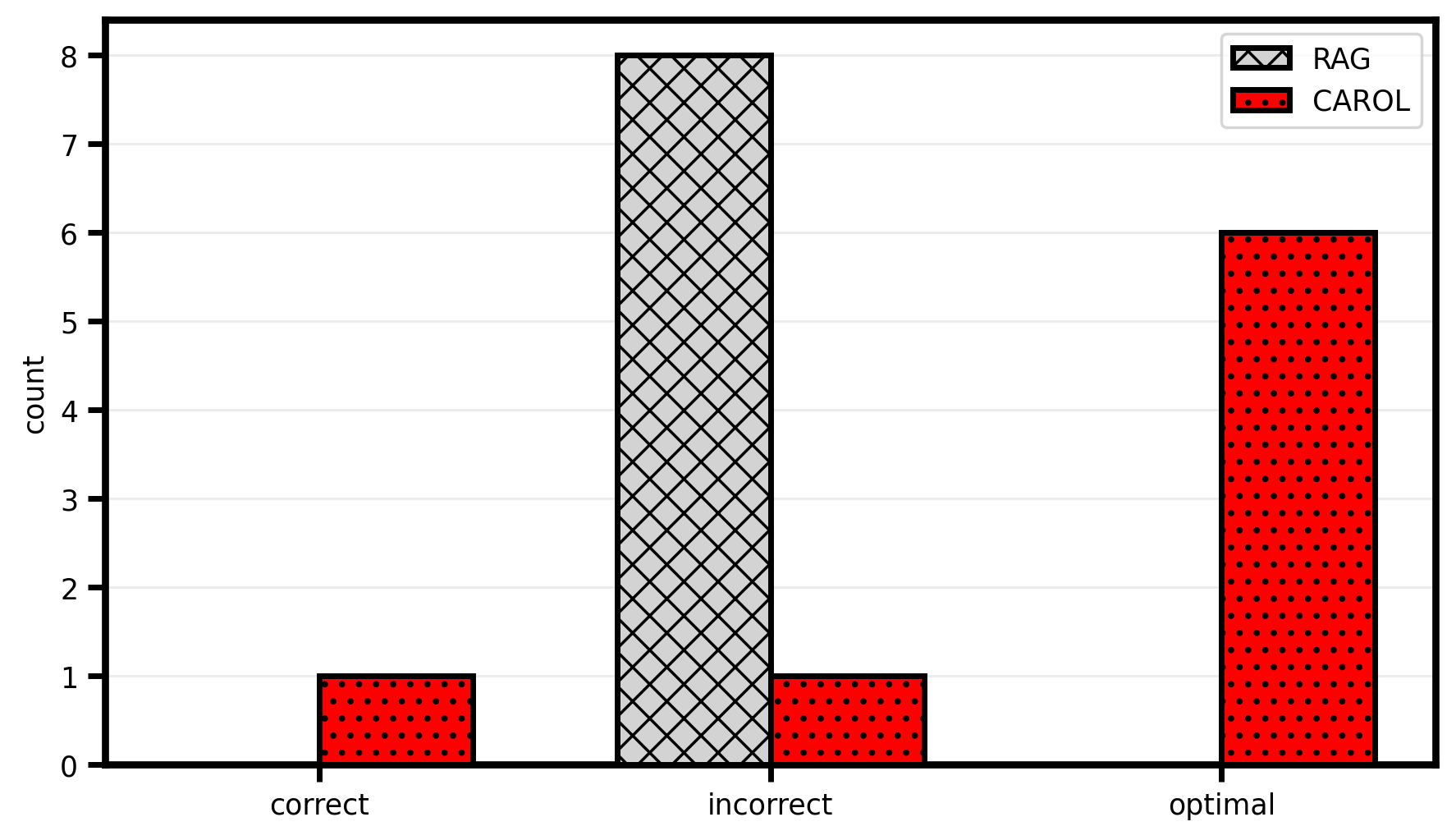}
            \caption{Confusion}
        \end{subfigure}
        \begin{subfigure}{0.3\textwidth}
            \centering
            \includegraphics[width=\linewidth]{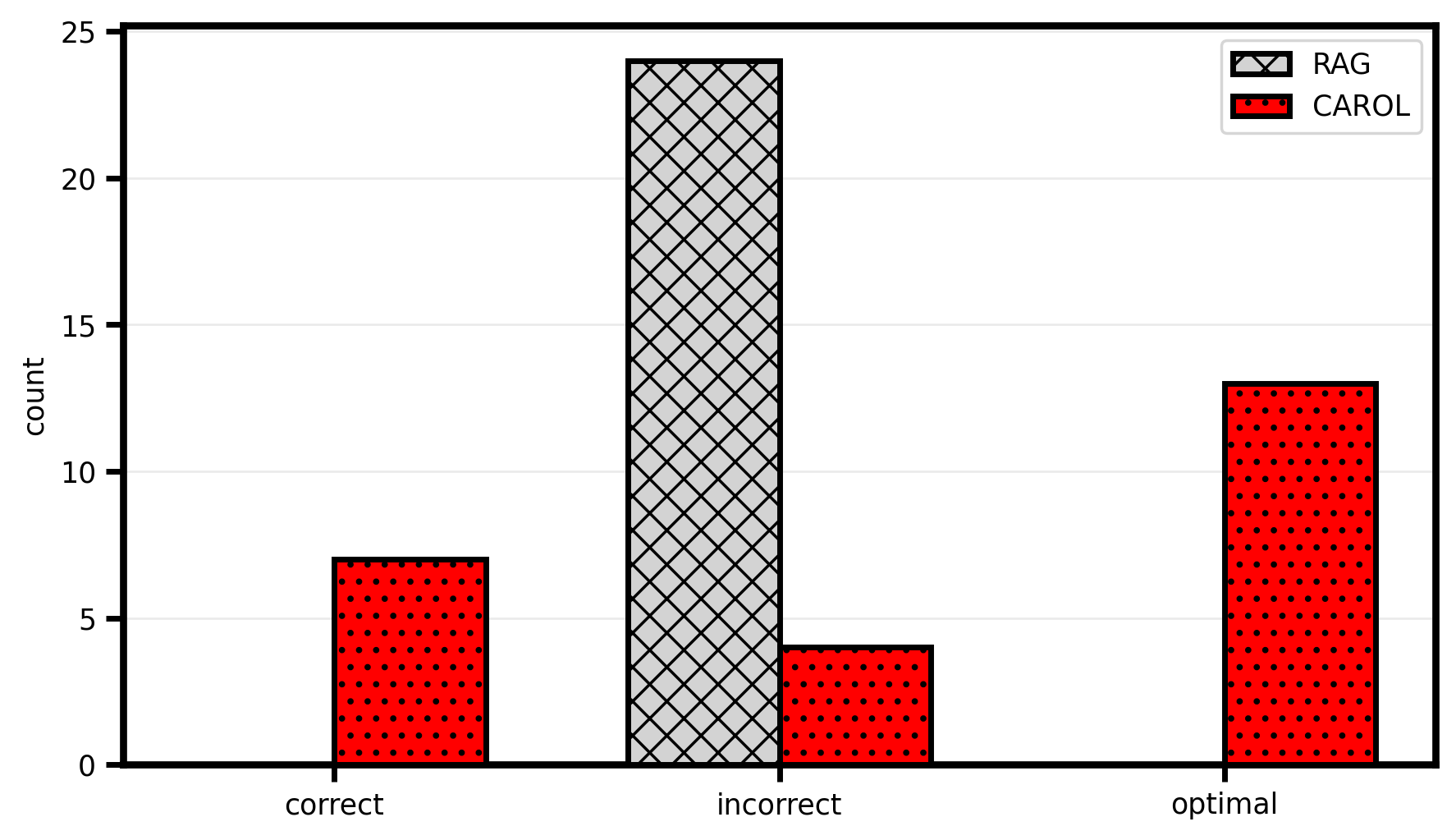}
            \caption{Conspiracies}
        \end{subfigure}
    \end{subcaptiongroup}
    \begin{subcaptiongroup}
        \begin{subfigure}{0.3\textwidth}
            \centering
            \includegraphics[width=\linewidth]{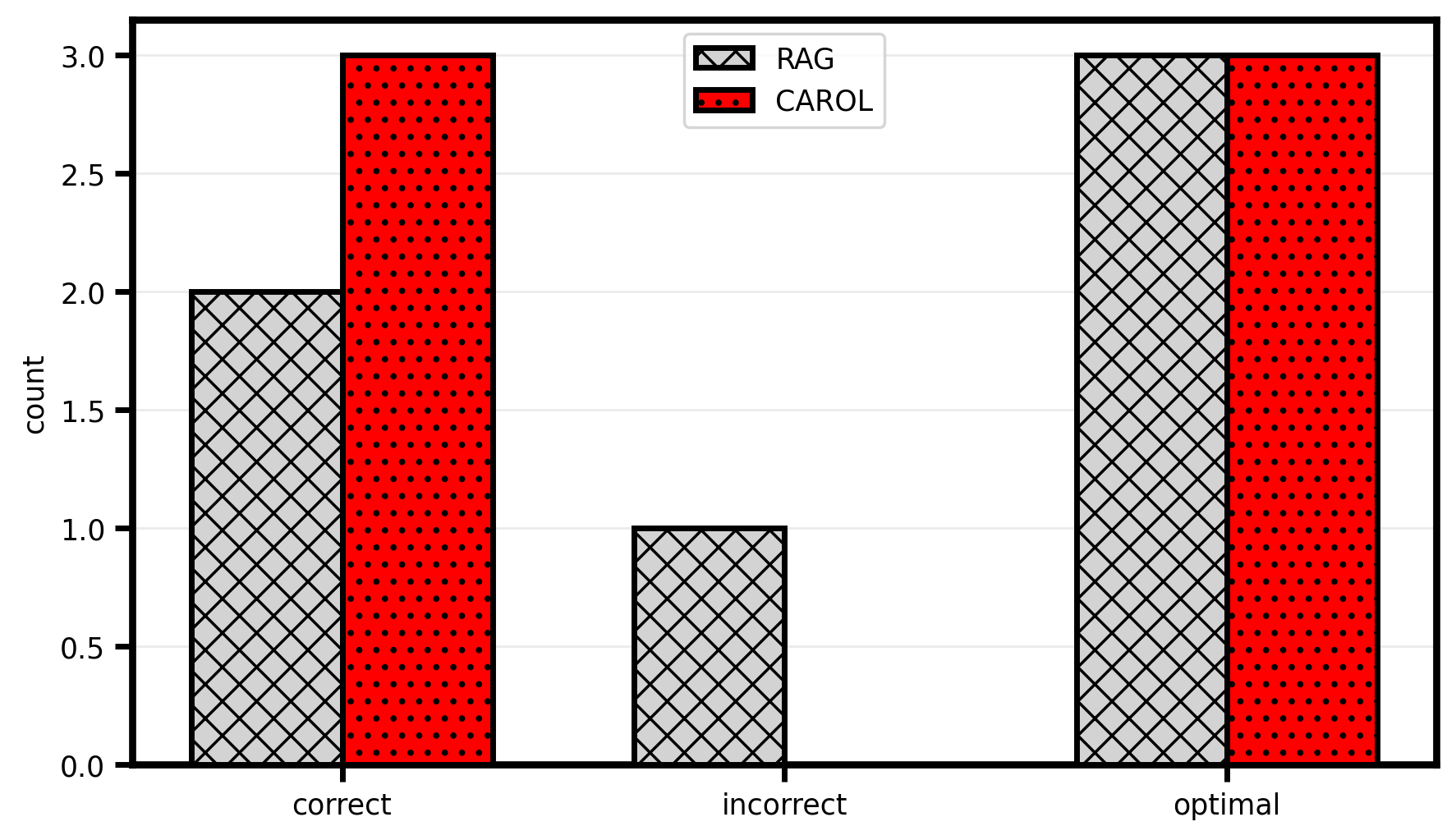}
            \caption{Mandela Effect}
        \end{subfigure}
        \begin{subfigure}{0.3\textwidth}
            \centering
            \includegraphics[width=\linewidth]{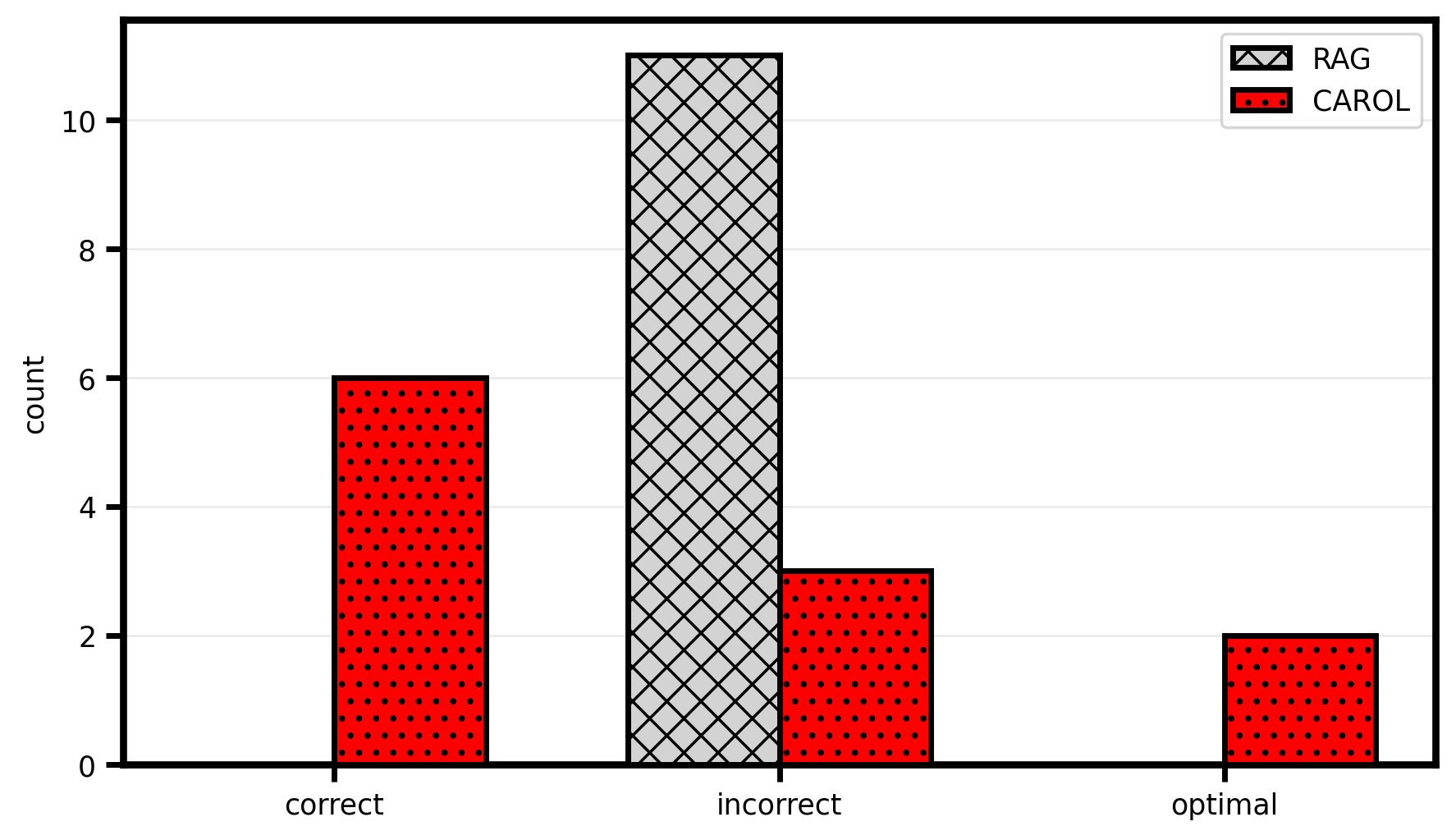}
            \caption{Distraction}
        \end{subfigure}
        \begin{subfigure}{0.3\textwidth}
            \centering
            \includegraphics[width=\linewidth]{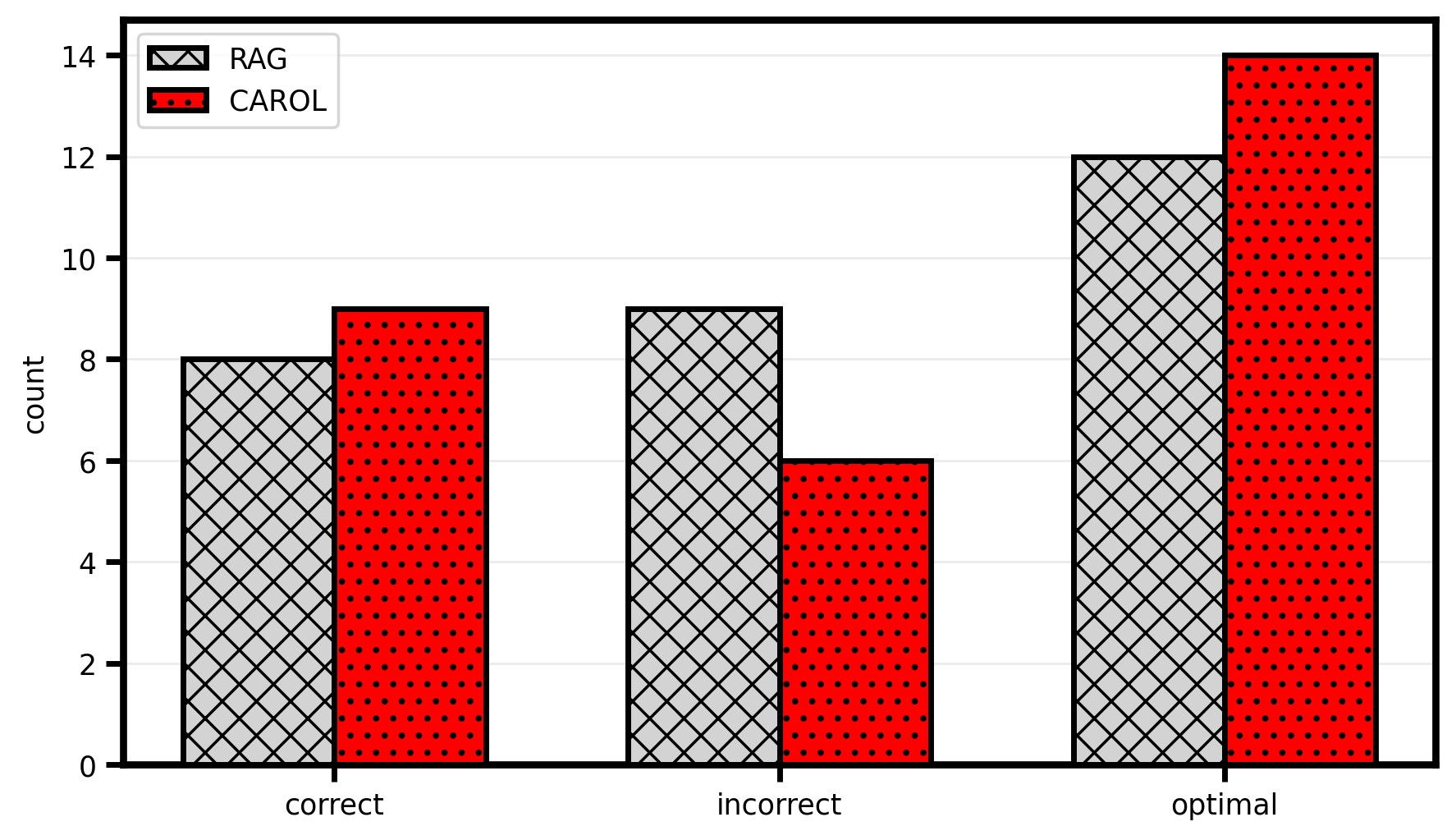}
            \caption{Economics}
        \end{subfigure}
    \end{subcaptiongroup}
    \begin{subcaptiongroup}
        \begin{subfigure}{0.3\textwidth}
            \centering
            \includegraphics[width=\linewidth]{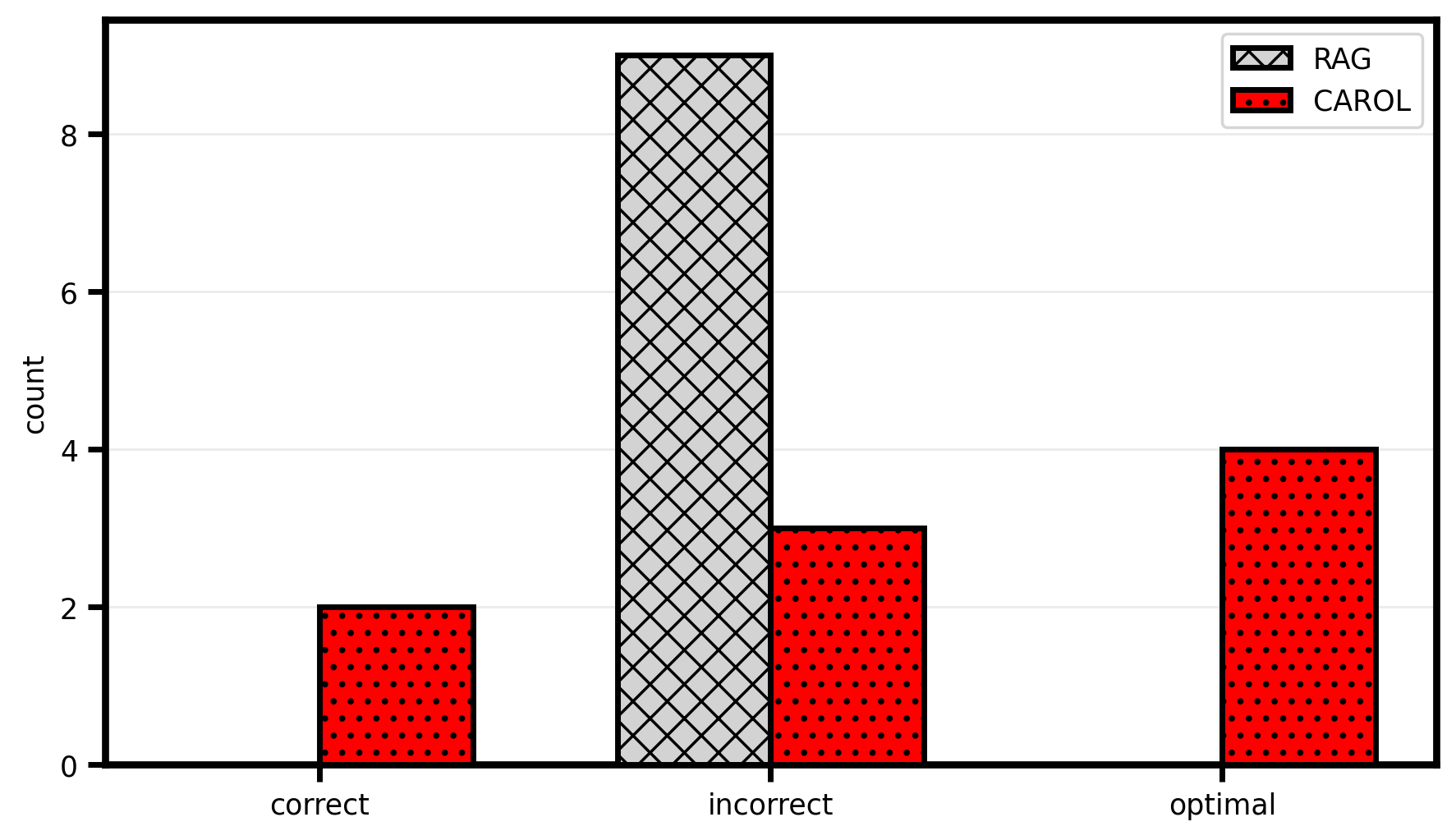}
            \caption{Education}
        \end{subfigure}
        \begin{subfigure}{0.3\textwidth}
            \centering
            \includegraphics[width=\linewidth]{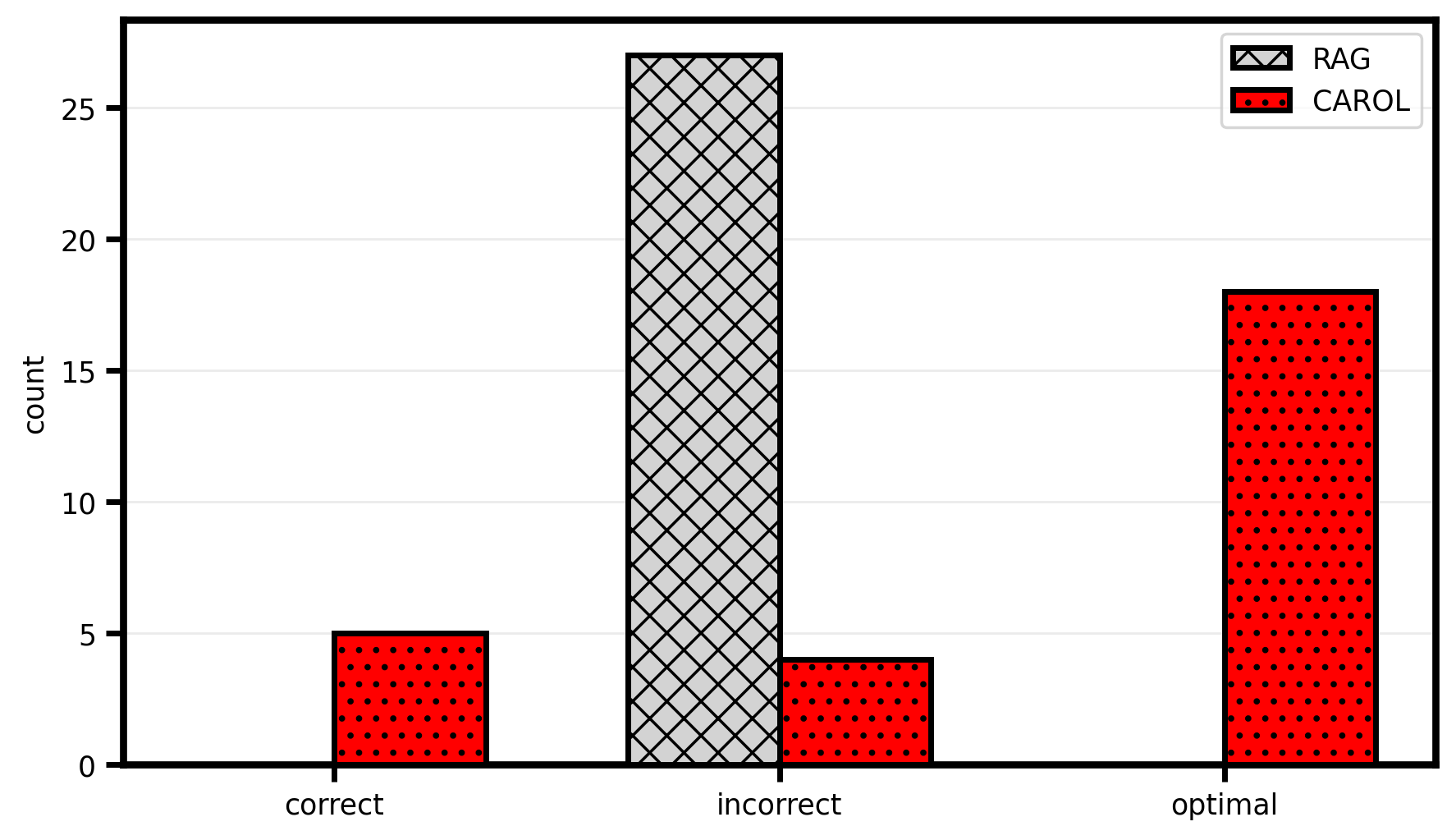}
            \caption{Fiction}
        \end{subfigure}
        \begin{subfigure}{0.3\textwidth}
            \centering
            \includegraphics[width=\linewidth]{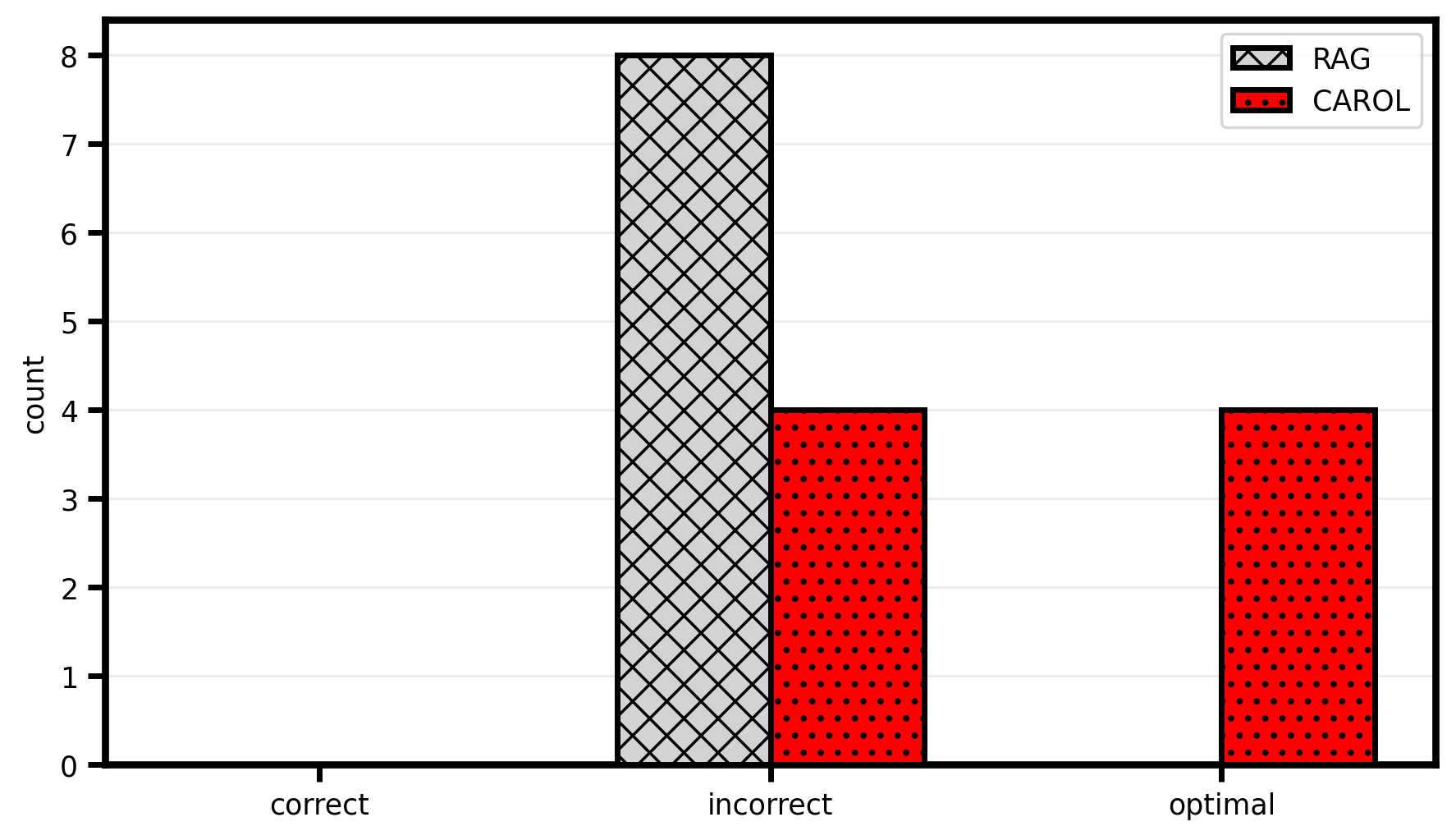}
            \caption{Finance}
        \end{subfigure}
    \end{subcaptiongroup}
    \begin{subcaptiongroup}
        \begin{subfigure}{0.3\textwidth}
            \centering
            \includegraphics[width=\linewidth]{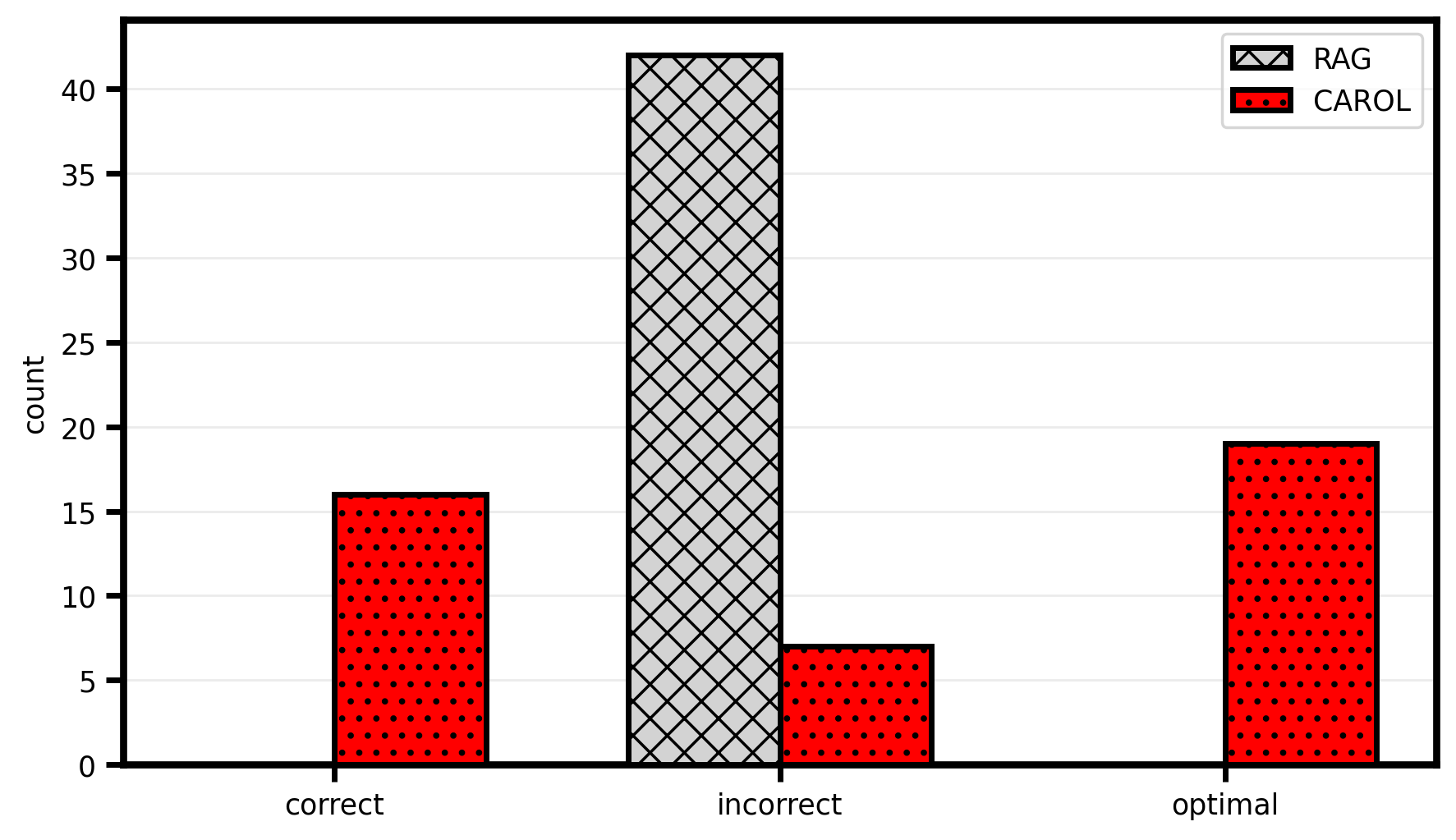}
            \caption{Health}
        \end{subfigure}
        \begin{subfigure}{0.3\textwidth}
            \centering
            \includegraphics[width=\linewidth]{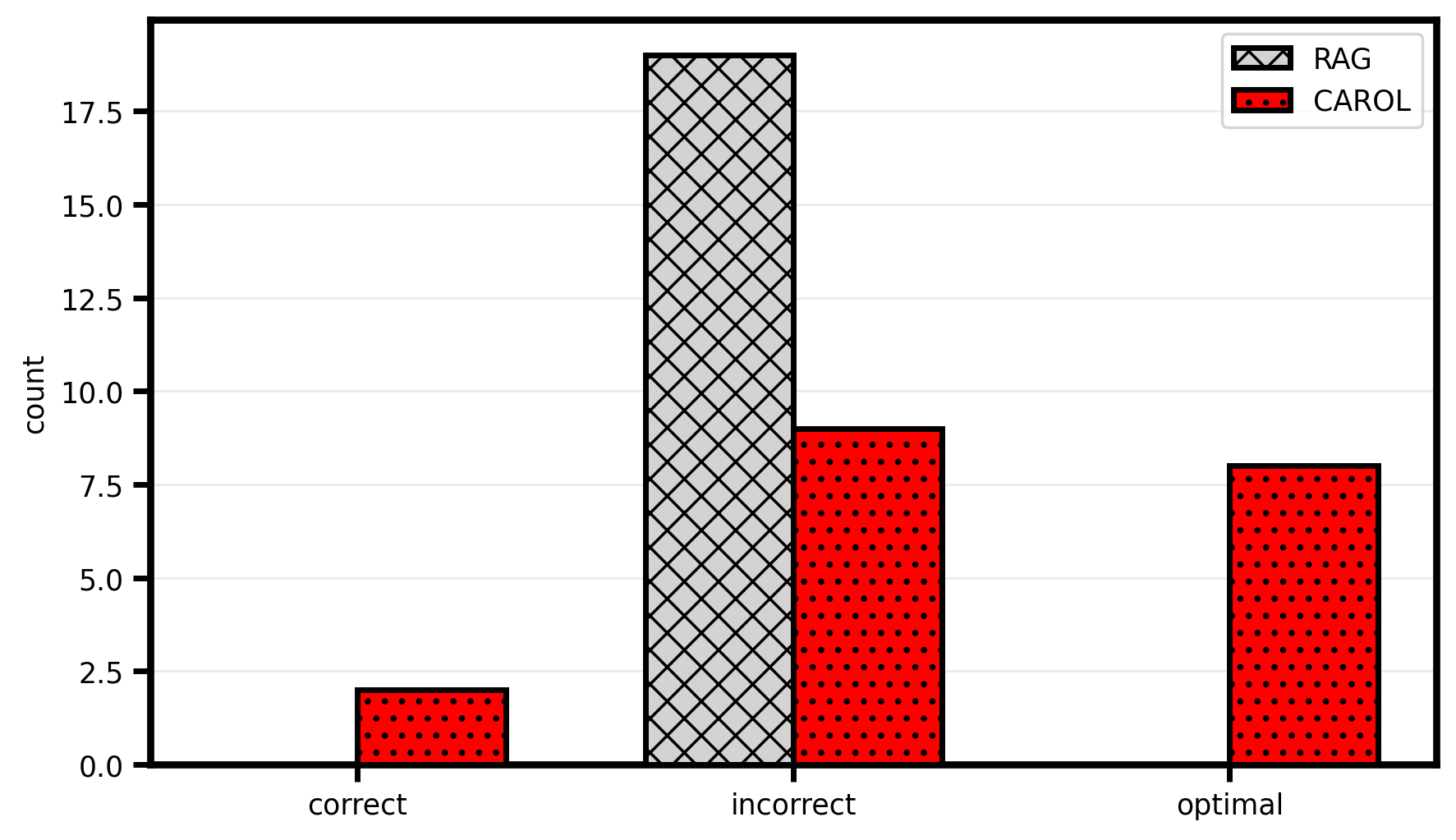}
            \caption{History}
        \end{subfigure}
        \begin{subfigure}{0.3\textwidth}
            \centering
            \includegraphics[width=\linewidth]{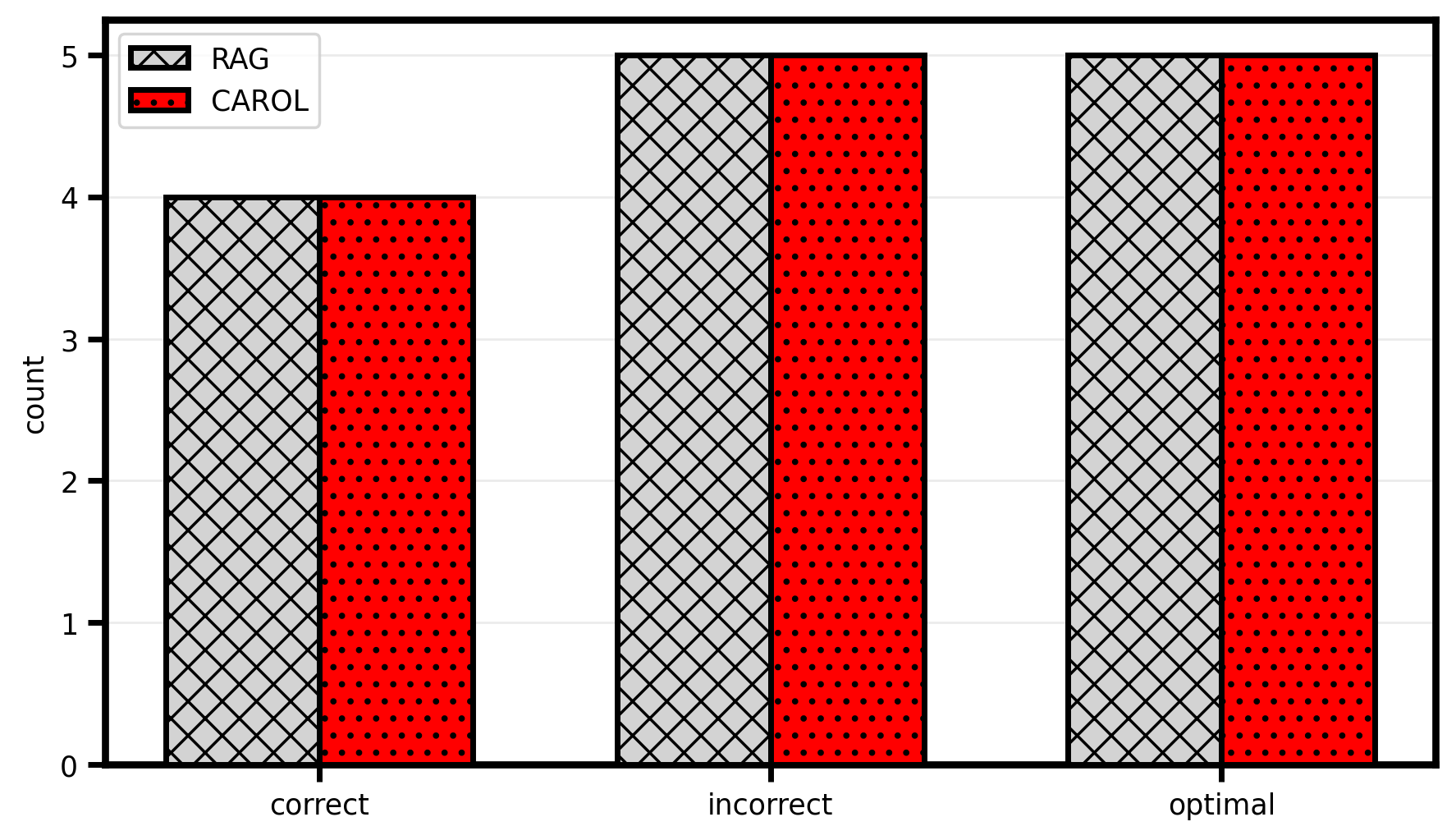}
            \caption{Indexical Error}
        \end{subfigure}
    \end{subcaptiongroup}
\caption{\textbf{TruthfulQA} on Llama-3.1-8B extended results in each category, part $1$.}
\end{figure}

\begin{figure}[ht]
    \centering
    \begin{subcaptiongroup}
        \begin{subfigure}{0.3\textwidth}
            \centering
            \includegraphics[width=\linewidth]{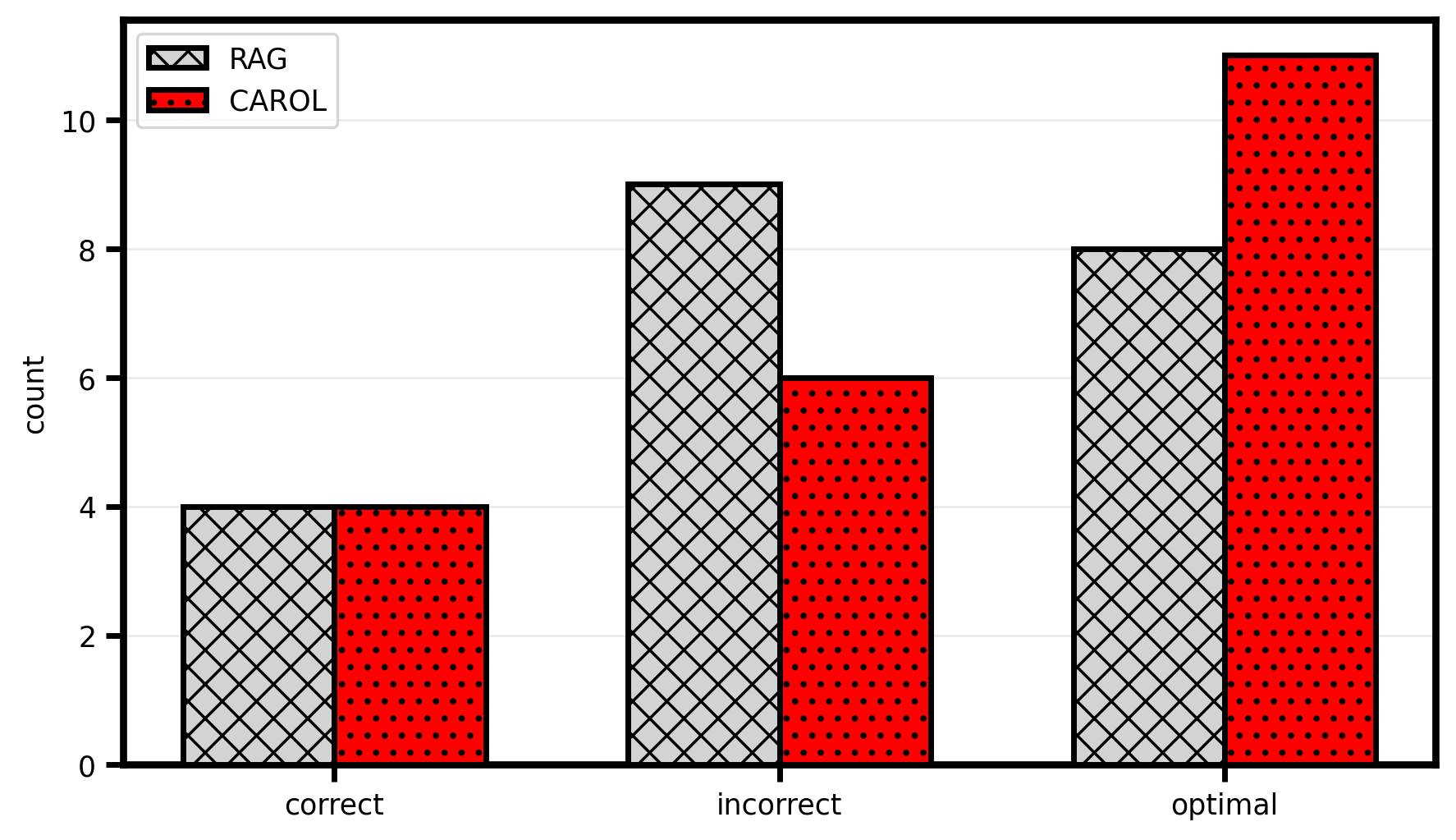}
            \caption{Language}
        \end{subfigure}
        \begin{subfigure}{0.3\textwidth}
            \centering
            \includegraphics[width=\linewidth]{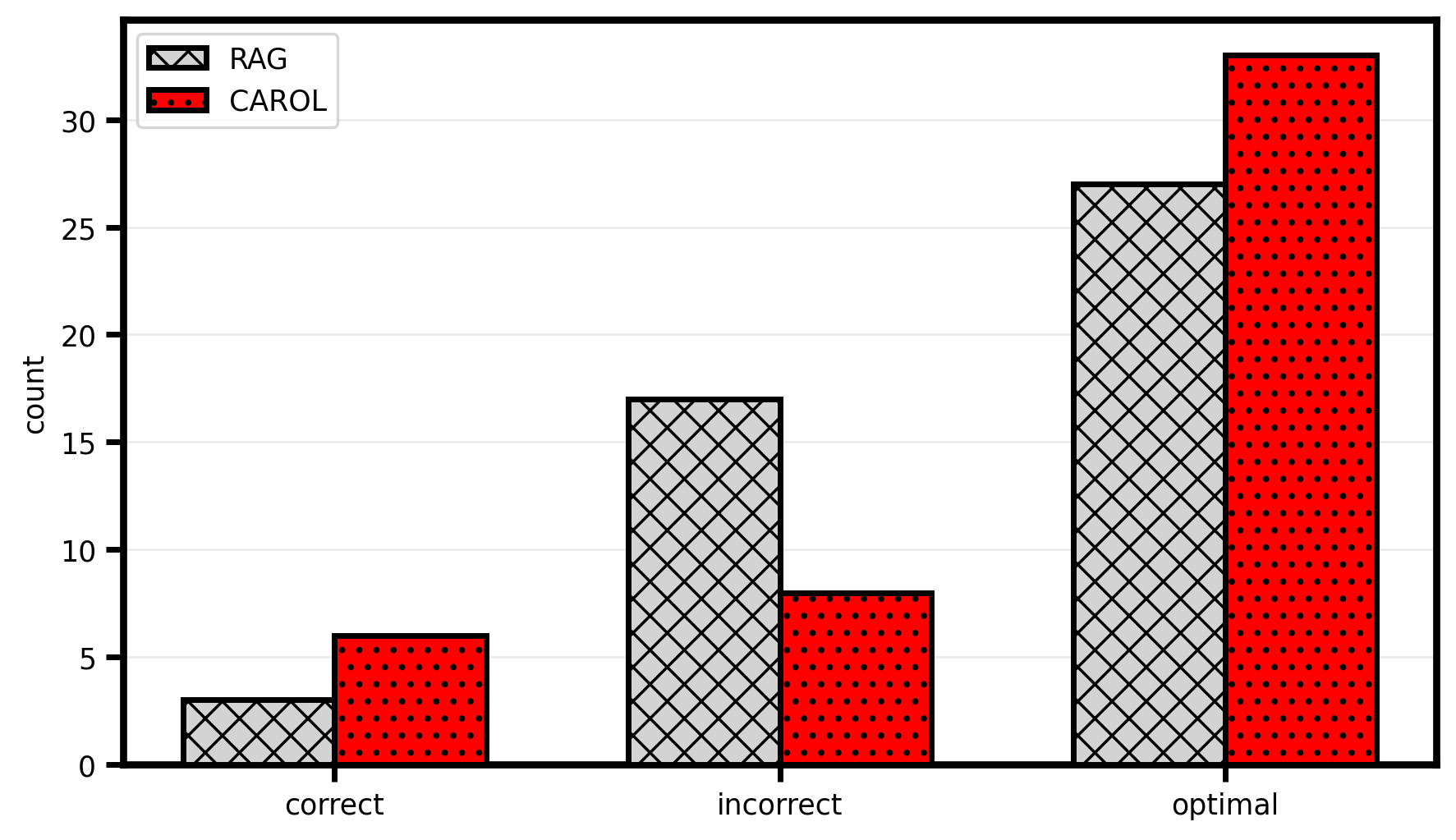}
            \caption{Law}
        \end{subfigure}
        \begin{subfigure}{0.3\textwidth}
            \centering
            \includegraphics[width=\linewidth]{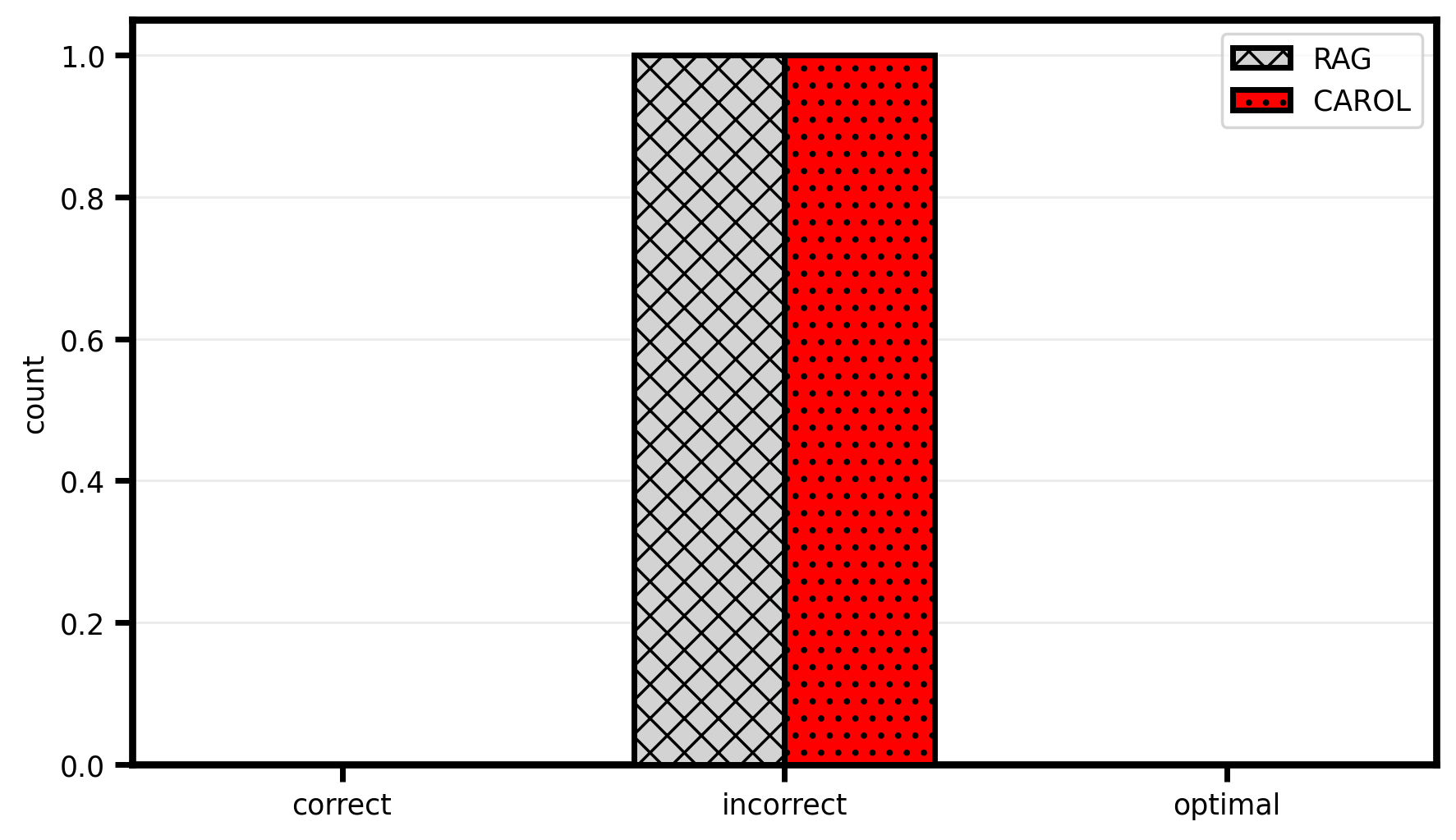}
            \caption{Logical Falsehood}
        \end{subfigure}
    \end{subcaptiongroup}
    \begin{subcaptiongroup}
        \begin{subfigure}{0.3\textwidth}
            \centering
            \includegraphics[width=\linewidth]{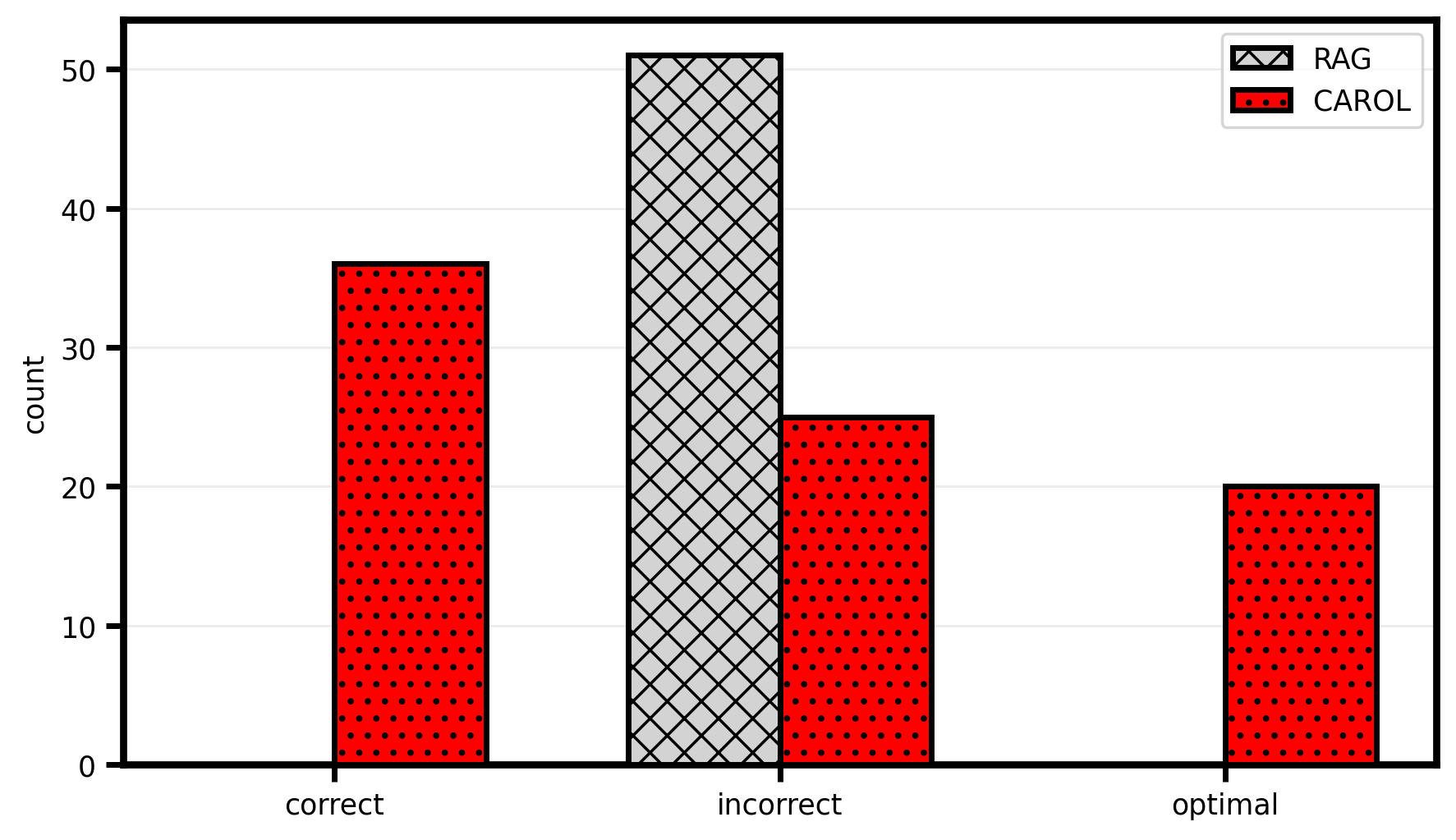}
            \caption{Misconceptions}
        \end{subfigure}
        \begin{subfigure}{0.3\textwidth}
            \centering
            \includegraphics[width=\linewidth]{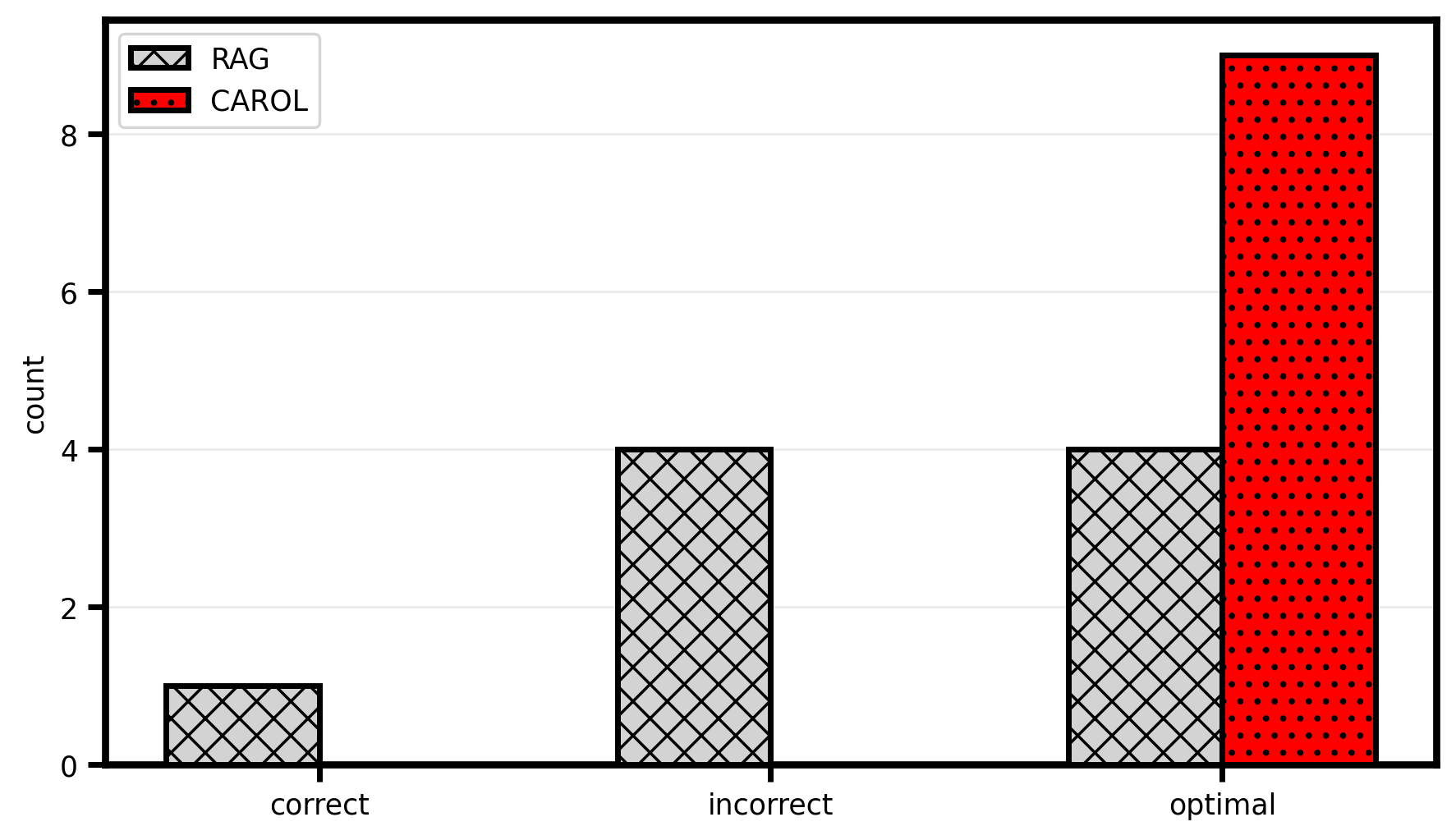}
            \caption{Misinformation}
        \end{subfigure}
        \begin{subfigure}{0.3\textwidth}
            \centering
            \includegraphics[width=\linewidth]{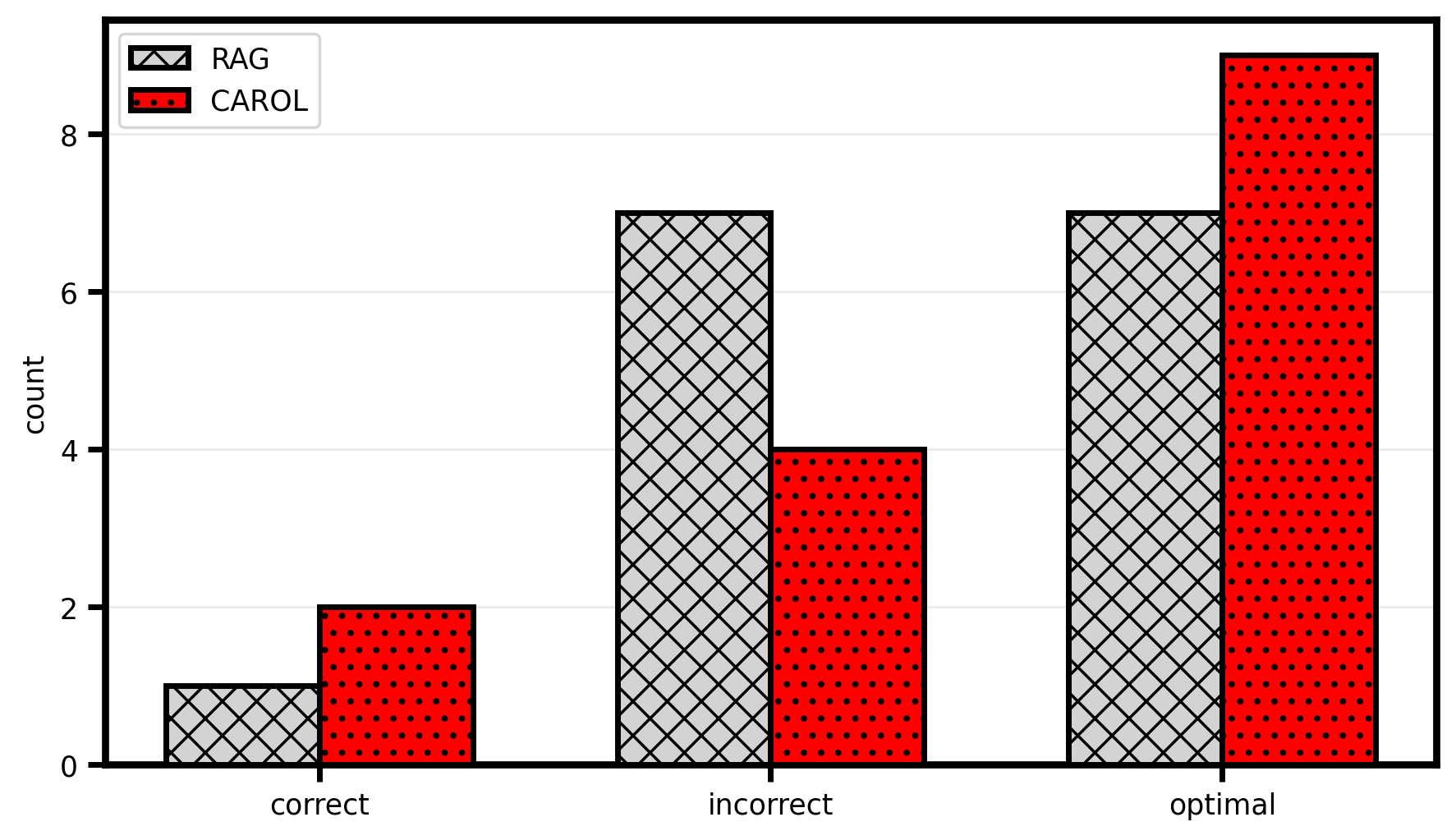}
            \caption{Misquotations}
        \end{subfigure}
    \end{subcaptiongroup}
    \begin{subcaptiongroup}
        \begin{subfigure}{0.3\textwidth}
            \centering
            \includegraphics[width=\linewidth]{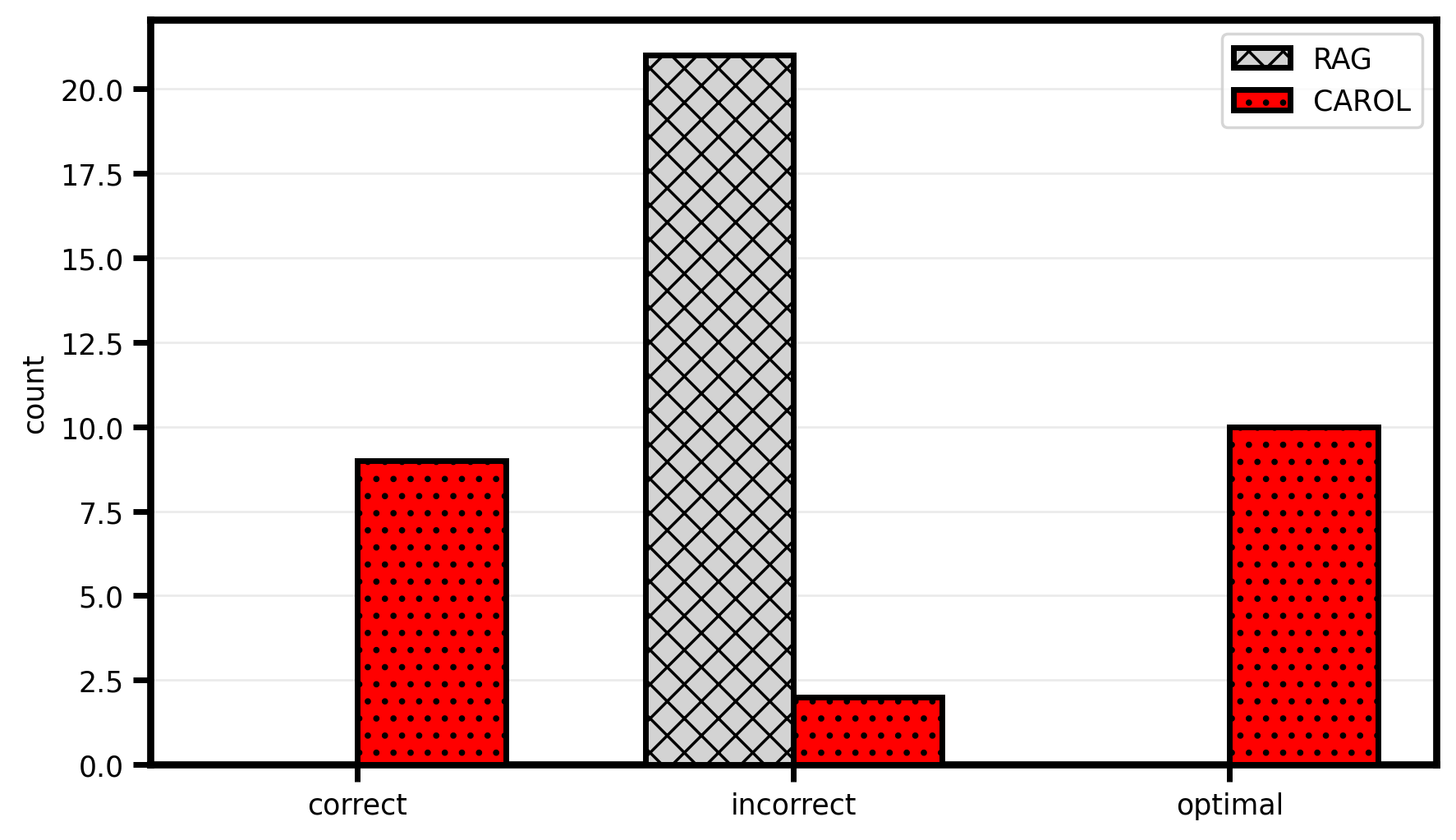}
            \caption{Myths}
        \end{subfigure}
        \begin{subfigure}{0.3\textwidth}
            \centering
            \includegraphics[width=\linewidth]{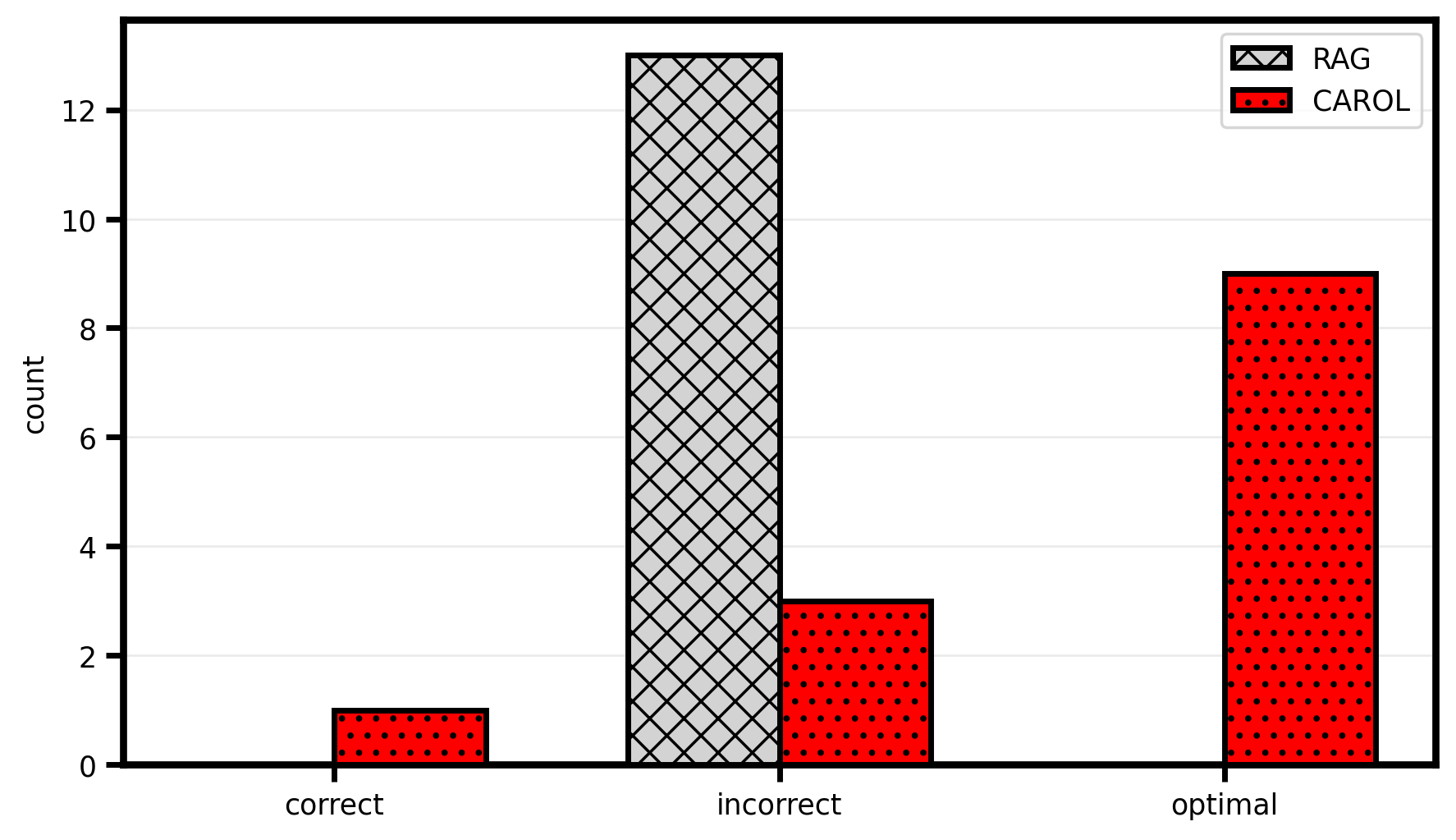}
            \caption{Nutrition}
        \end{subfigure}
        \begin{subfigure}{0.3\textwidth}
            \centering
            \includegraphics[width=\linewidth]{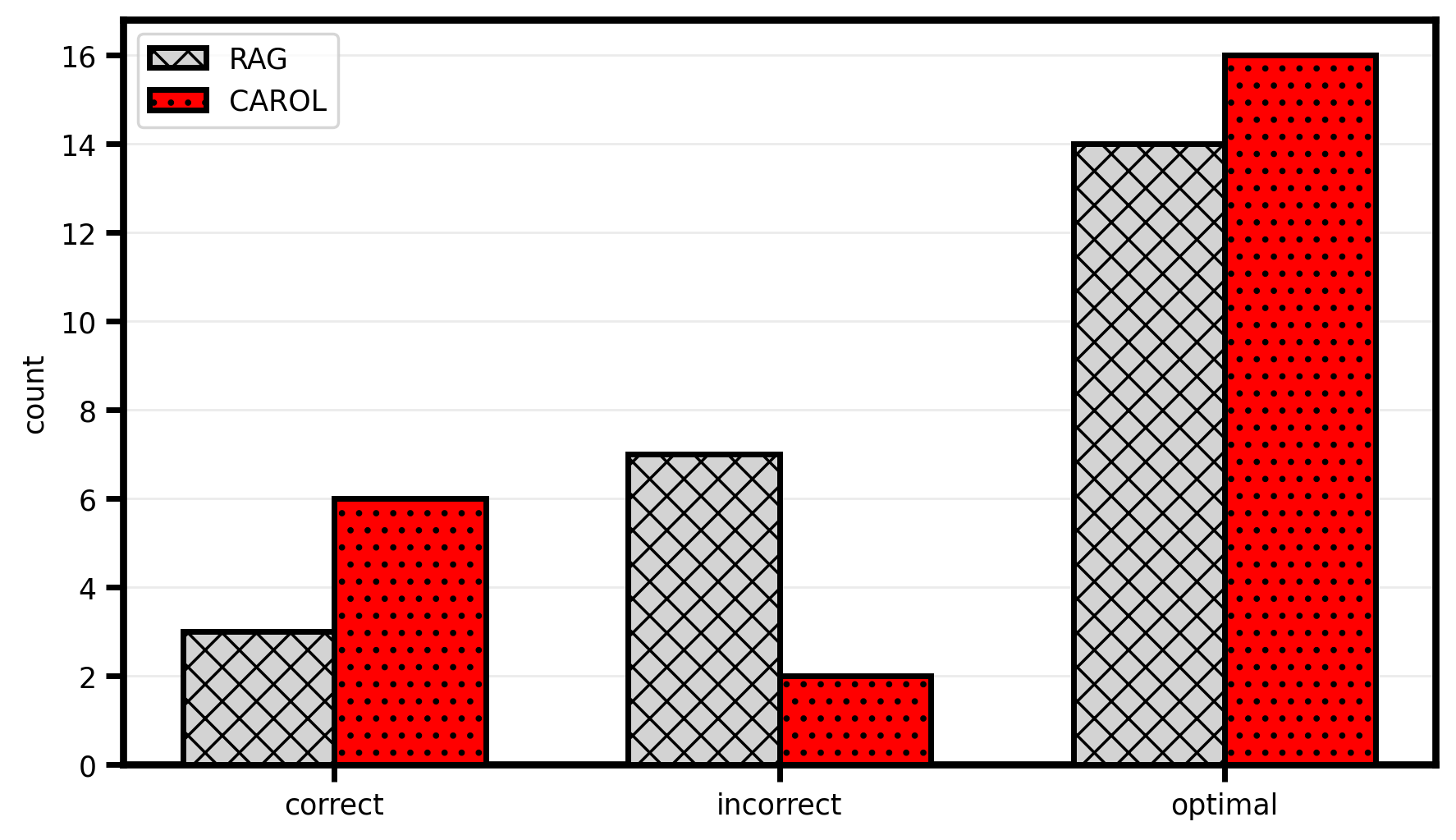}
            \caption{Paranormal}
        \end{subfigure}
    \end{subcaptiongroup}
    \begin{subcaptiongroup}
        \begin{subfigure}{0.3\textwidth}
            \centering
            \includegraphics[width=\linewidth]{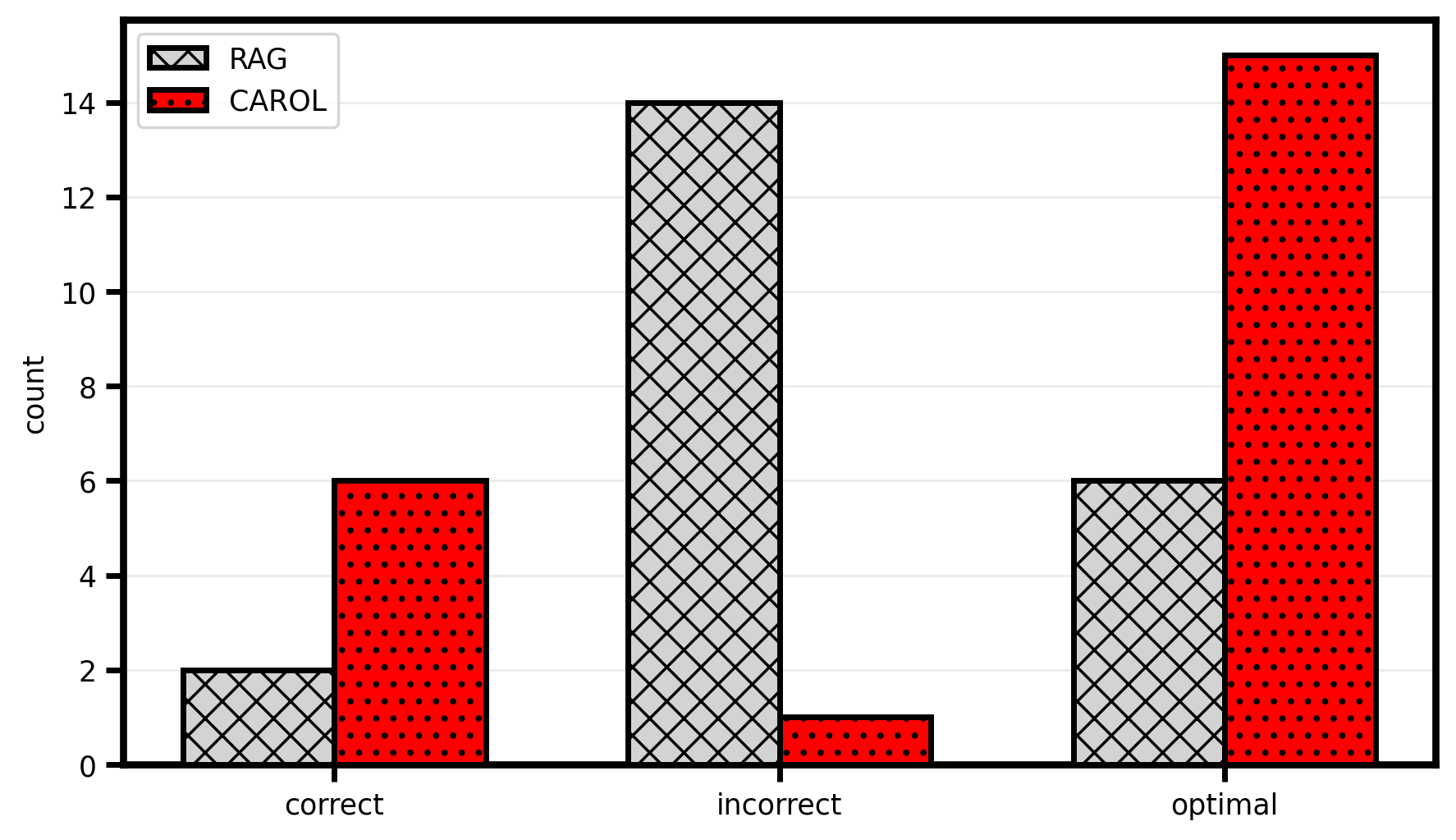}
            \caption{People}
        \end{subfigure}
        \begin{subfigure}{0.3\textwidth}
            \centering
            \includegraphics[width=\linewidth]{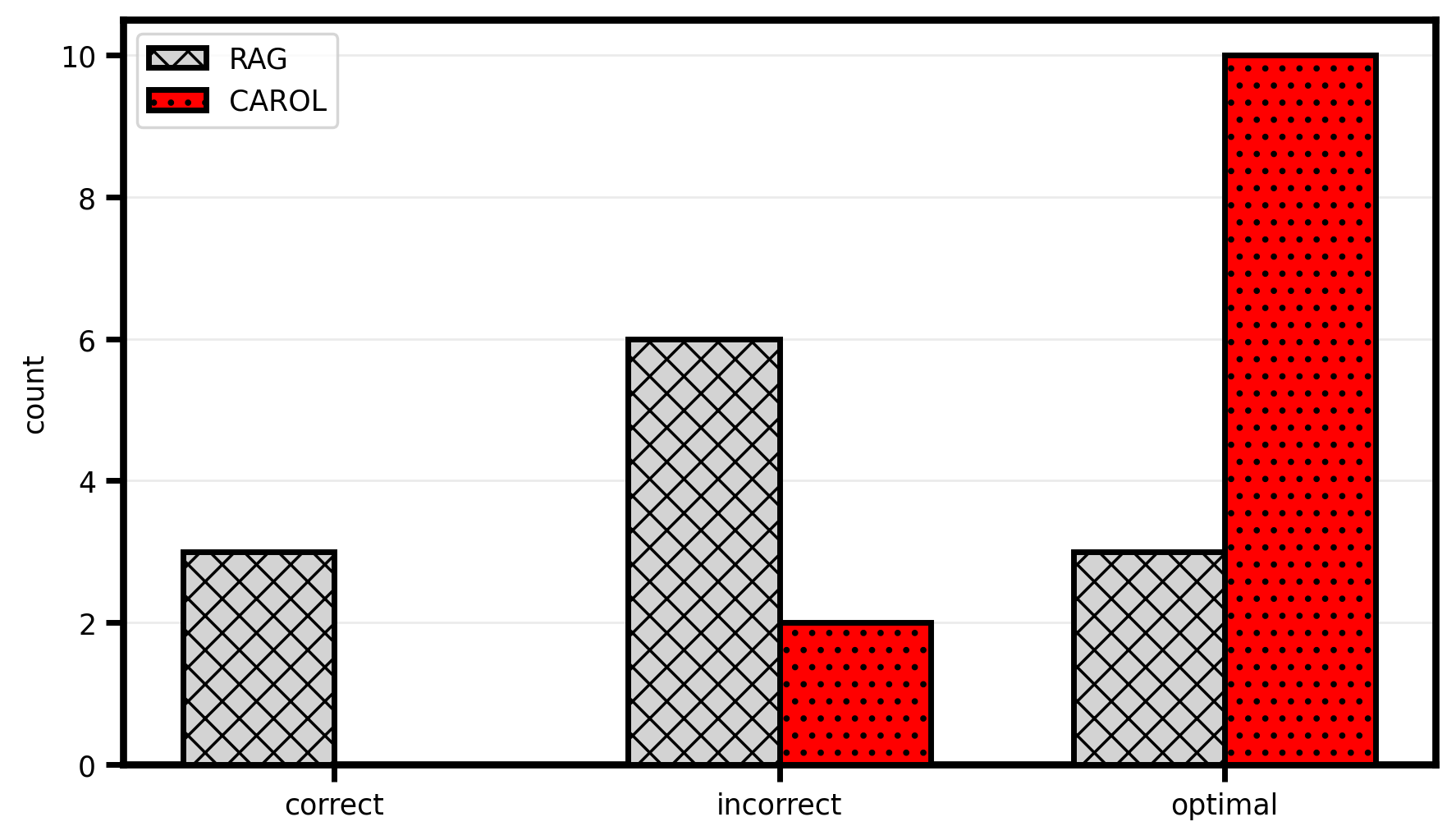}
            \caption{Places}
        \end{subfigure}
        \begin{subfigure}{0.3\textwidth}
            \centering
            \includegraphics[width=\linewidth]{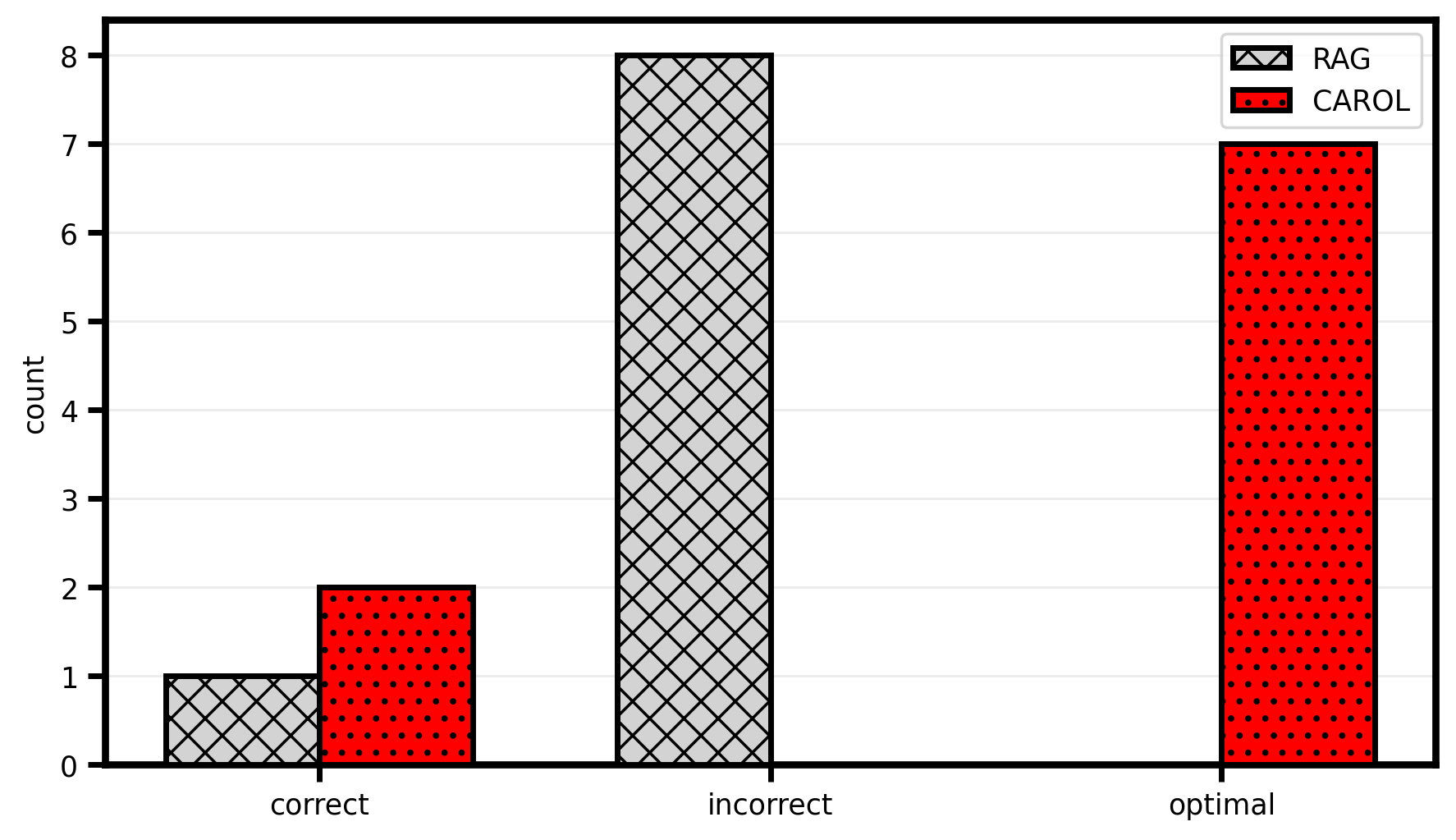}
            \caption{Politics}
        \end{subfigure}
    \end{subcaptiongroup}
\caption{\textbf{TruthfulQA} on Llama-3.1-8B extended results in each category, part $2$.}
\end{figure}

\begin{figure}[ht]
    \centering
    \begin{subcaptiongroup}
        \begin{subfigure}{0.3\textwidth}
            \centering
            \includegraphics[width=\linewidth]{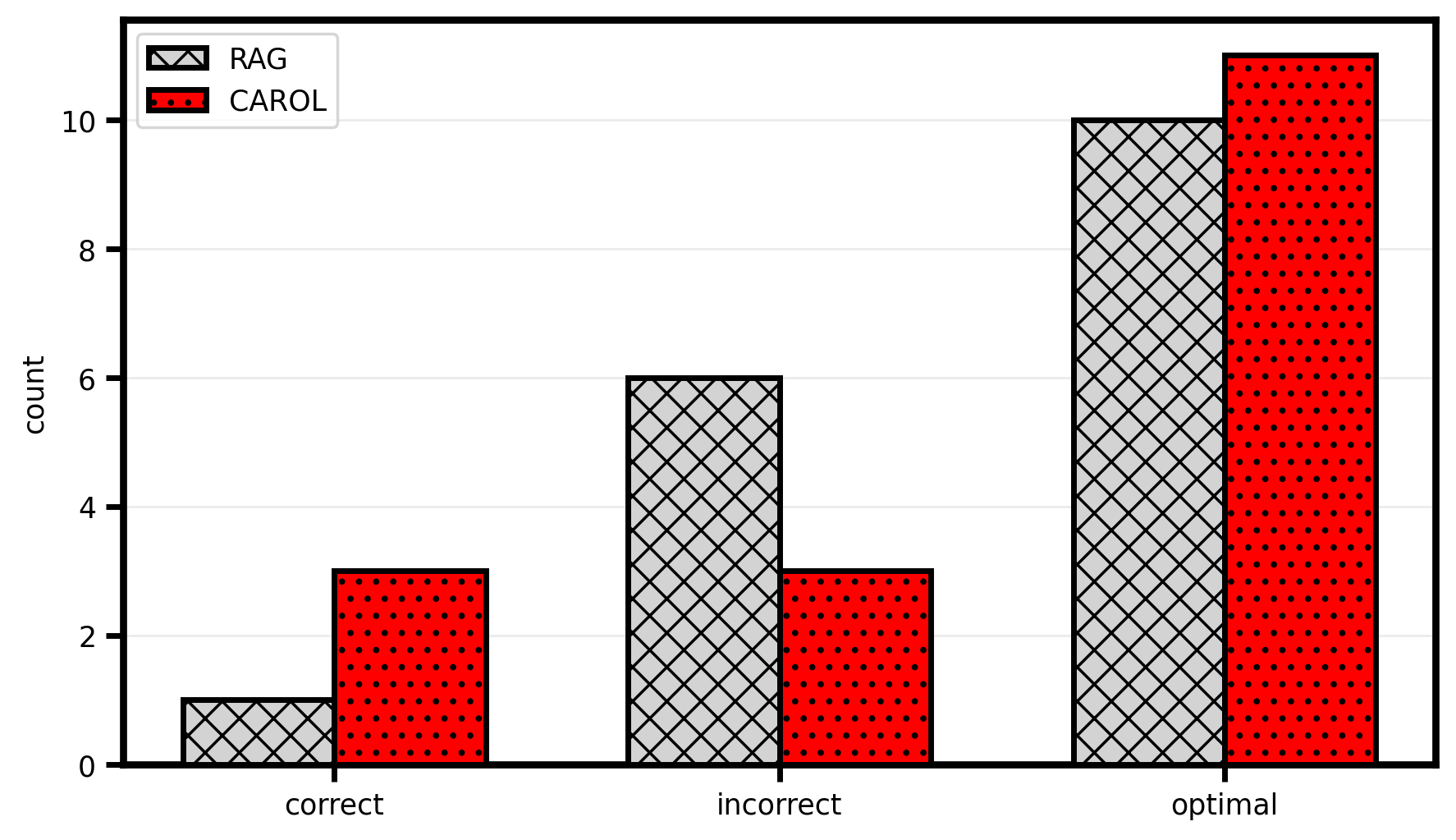}
            \caption{Proverbs}
        \end{subfigure}
        \begin{subfigure}{0.3\textwidth}
            \centering
            \includegraphics[width=\linewidth]{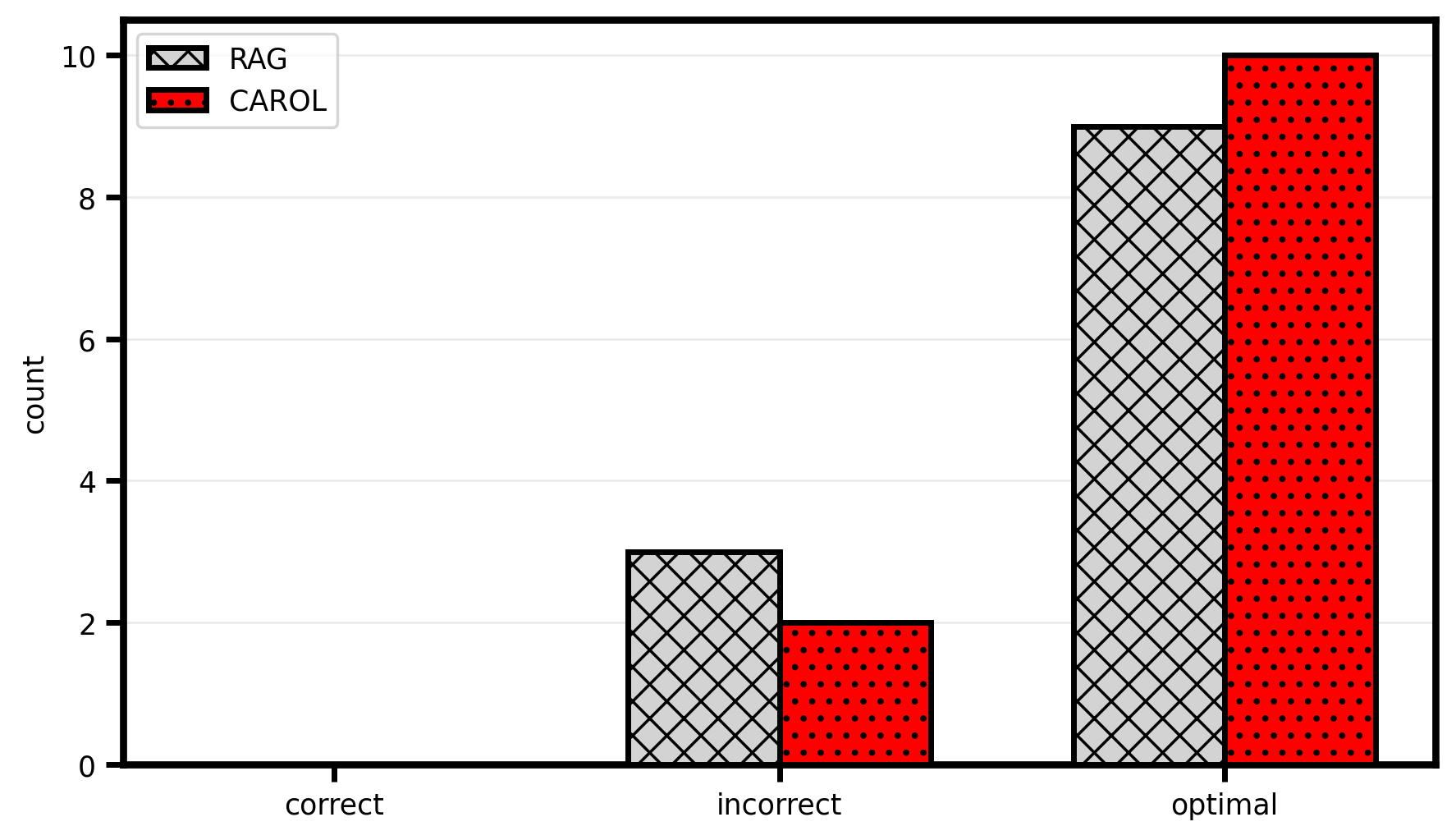}
            \caption{Psychology}
        \end{subfigure}
        \begin{subfigure}{0.3\textwidth}
            \centering
            \includegraphics[width=\linewidth]{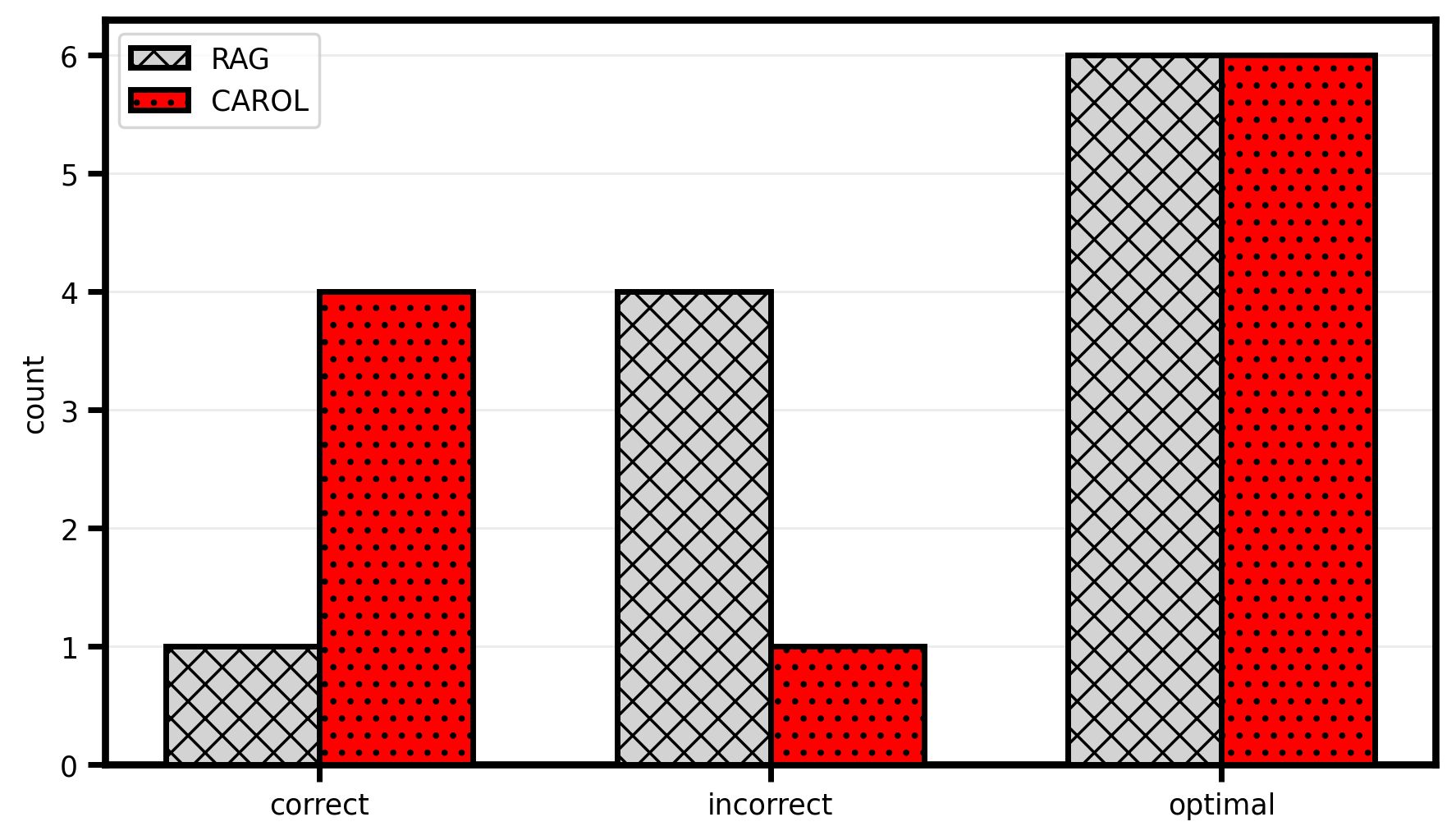}
            \caption{Religion}
        \end{subfigure}
    \end{subcaptiongroup}
    \begin{subcaptiongroup}
        \begin{subfigure}{0.3\textwidth}
            \centering
            \includegraphics[width=\linewidth]{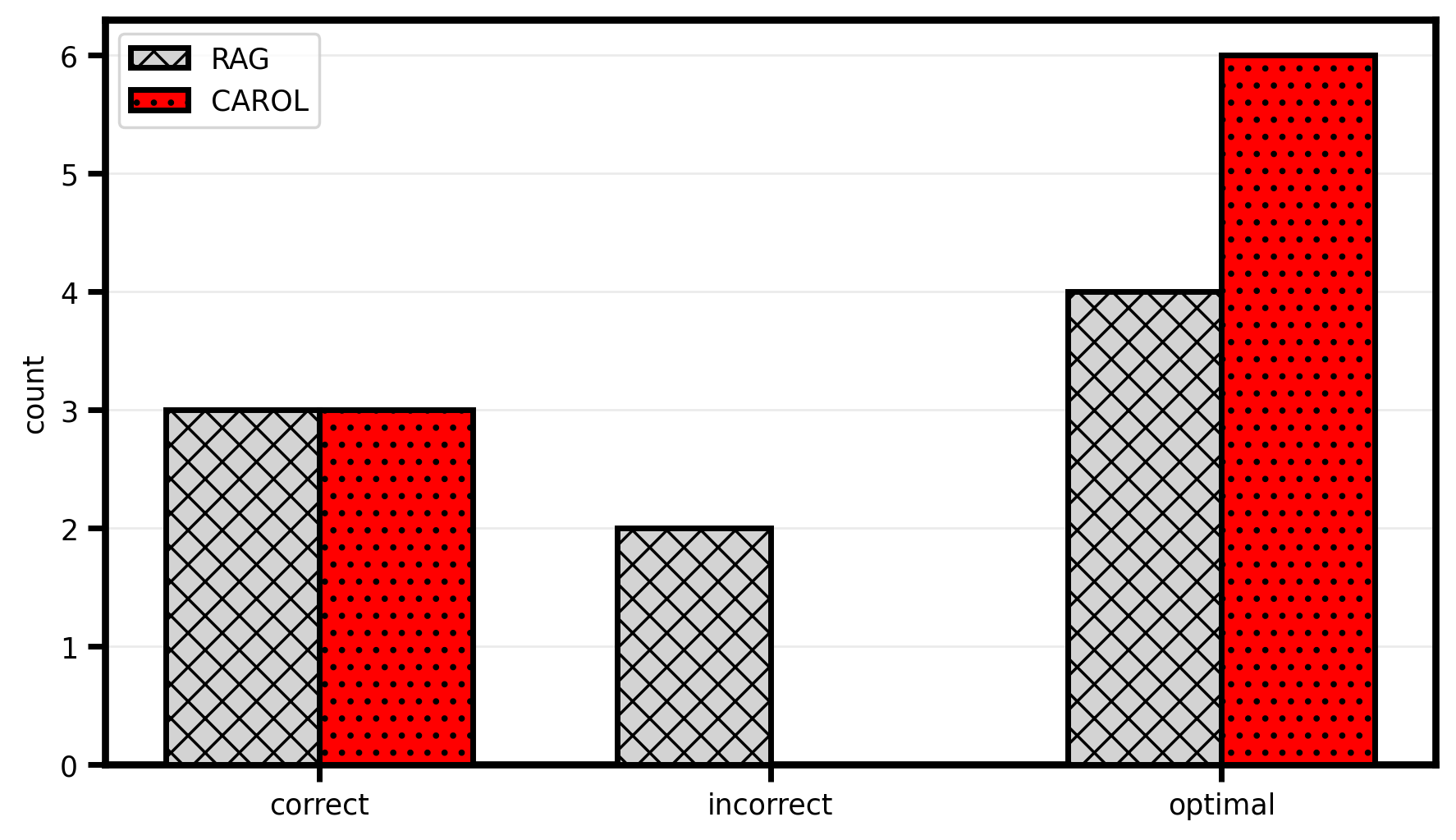}
            \caption{Science}
        \end{subfigure}
        \begin{subfigure}{0.3\textwidth}
            \centering
            \includegraphics[width=\linewidth]{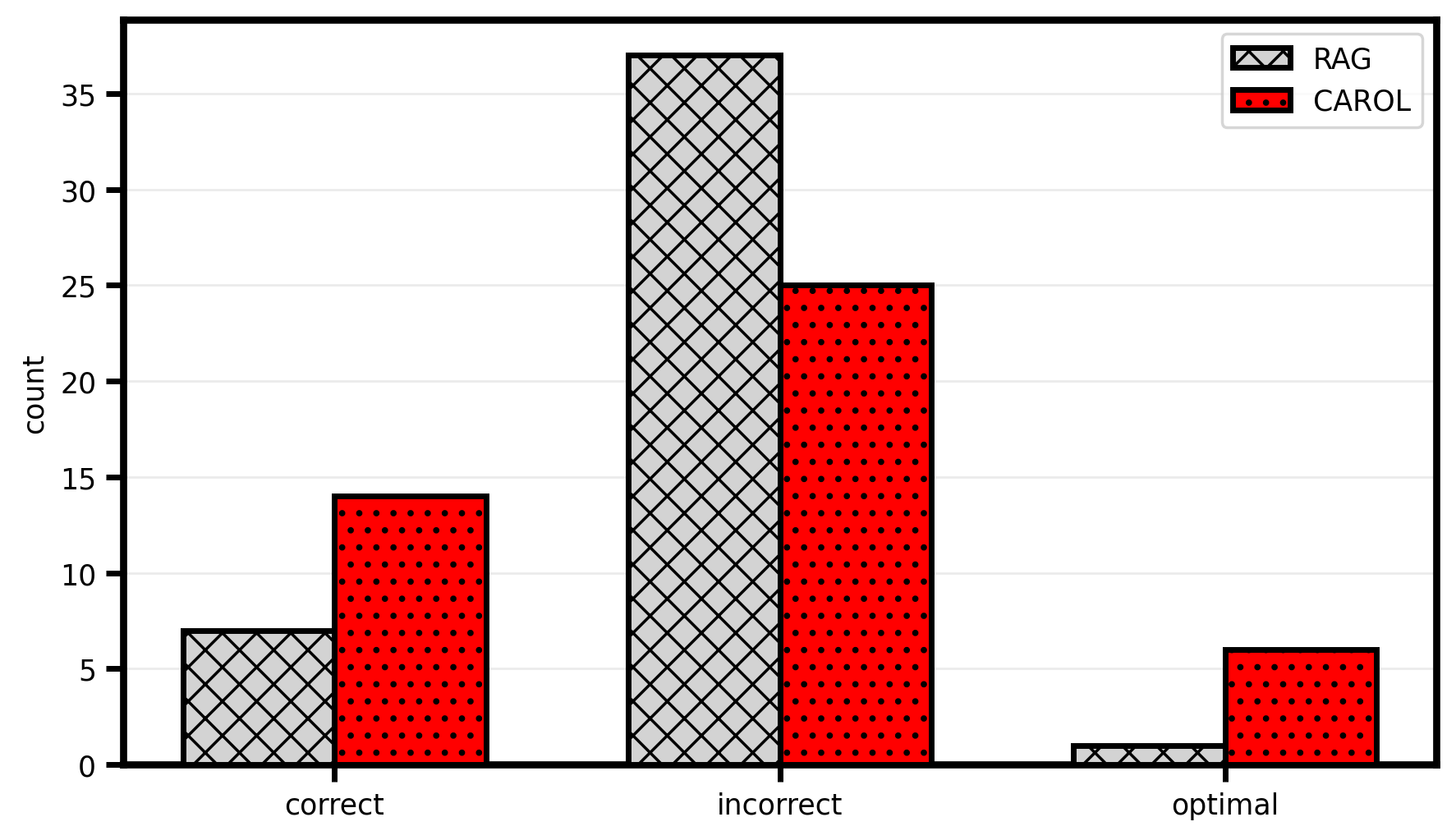}
            \caption{Sociology}
        \end{subfigure}
        \begin{subfigure}{0.3\textwidth}
            \centering
            \includegraphics[width=\linewidth]{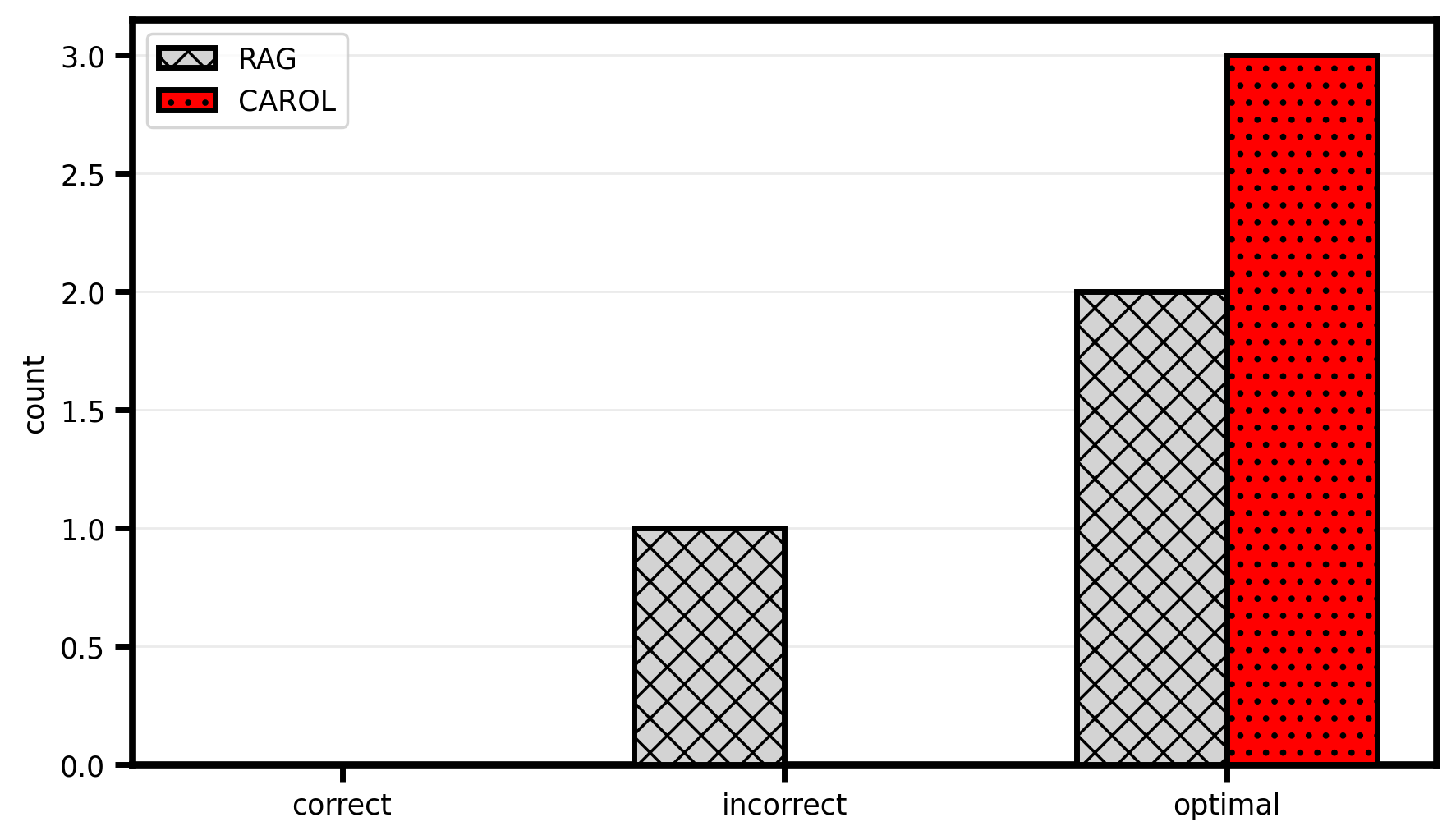}
            \caption{Statistics}
        \end{subfigure}
    \end{subcaptiongroup}
    \begin{subcaptiongroup}
        \begin{subfigure}{0.3\textwidth}
            \centering
            \includegraphics[width=\linewidth]{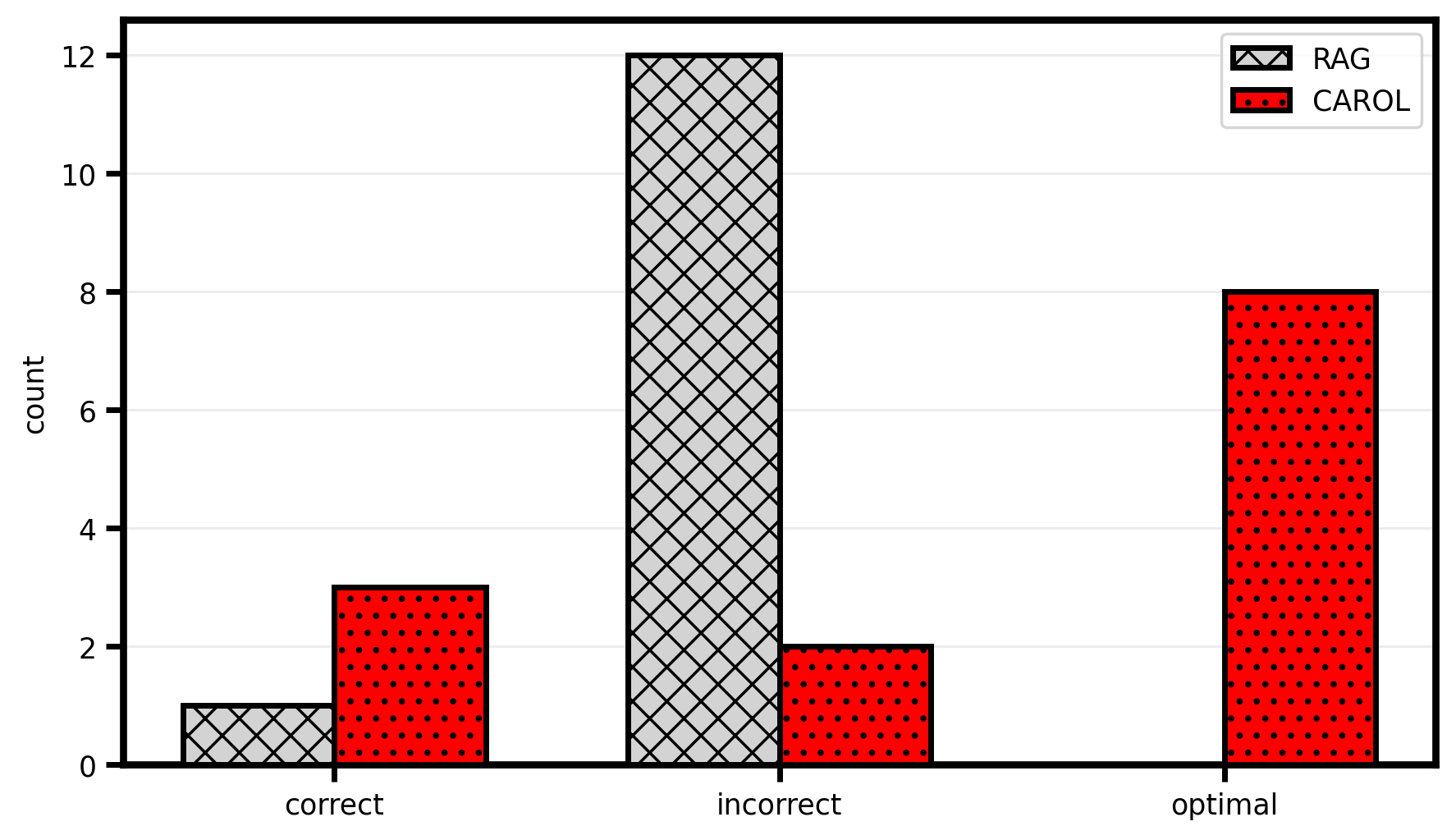}
            \caption{Stereotypes}
        \end{subfigure}
        \begin{subfigure}{0.3\textwidth}
            \centering
            \includegraphics[width=\linewidth]{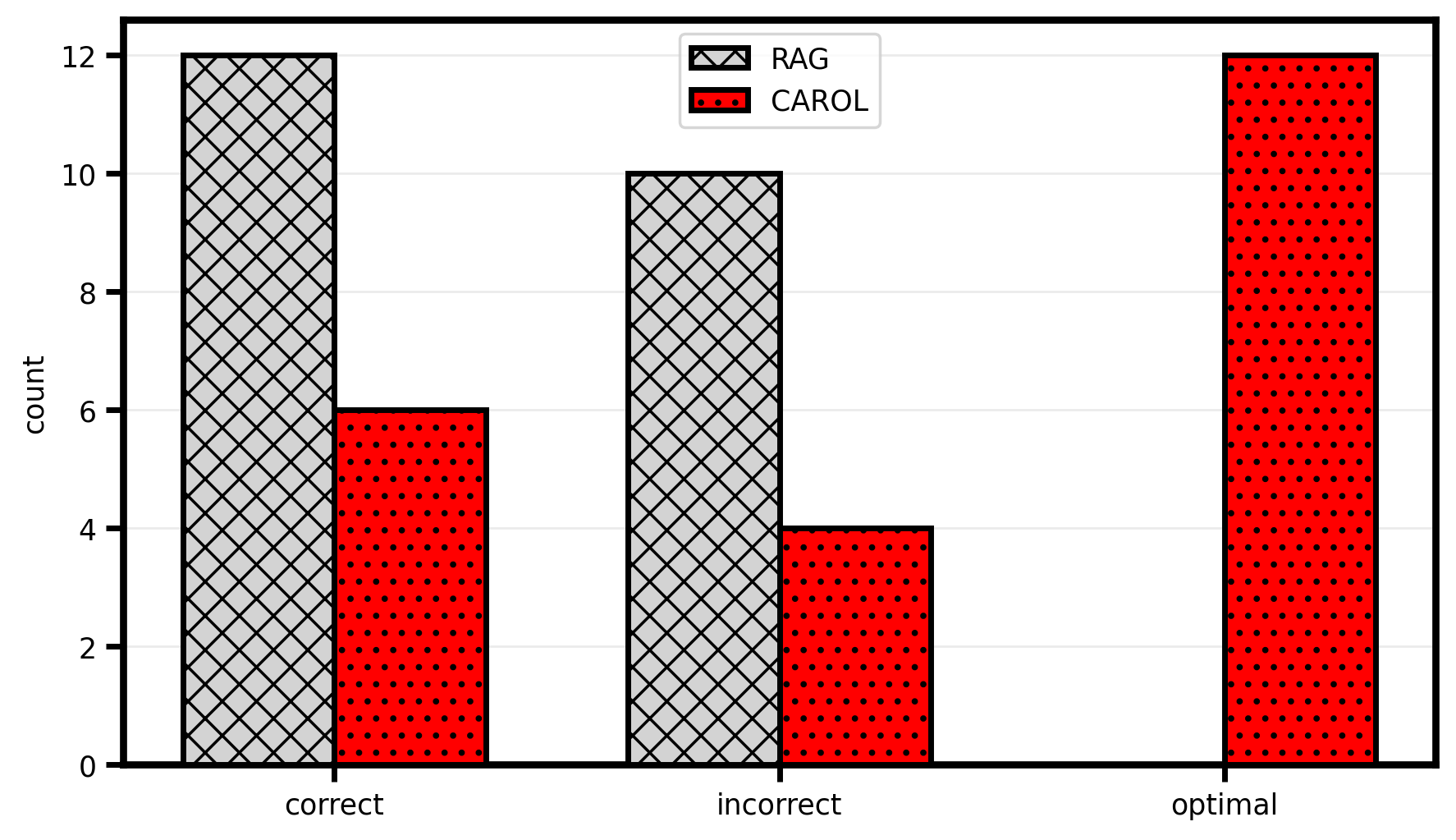}
            \caption{Superstition}
        \end{subfigure}
        \begin{subfigure}{0.3\textwidth}
            \centering
            \includegraphics[width=\linewidth]{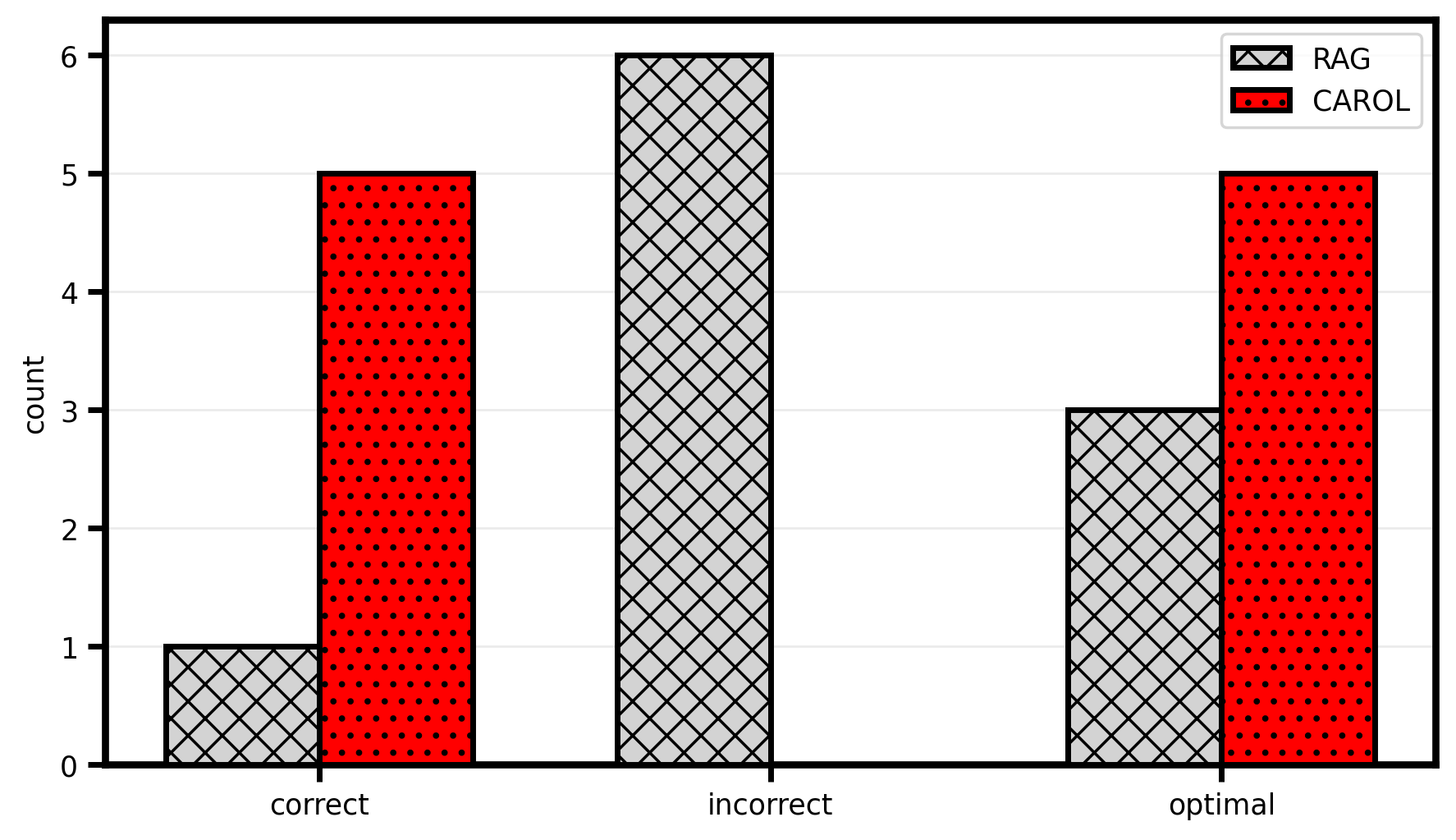}
            \caption{Subjective}
        \end{subfigure}
    \end{subcaptiongroup}
    \begin{subcaptiongroup}
        \begin{subfigure}{0.3\textwidth}
            \centering
            \includegraphics[width=\linewidth]{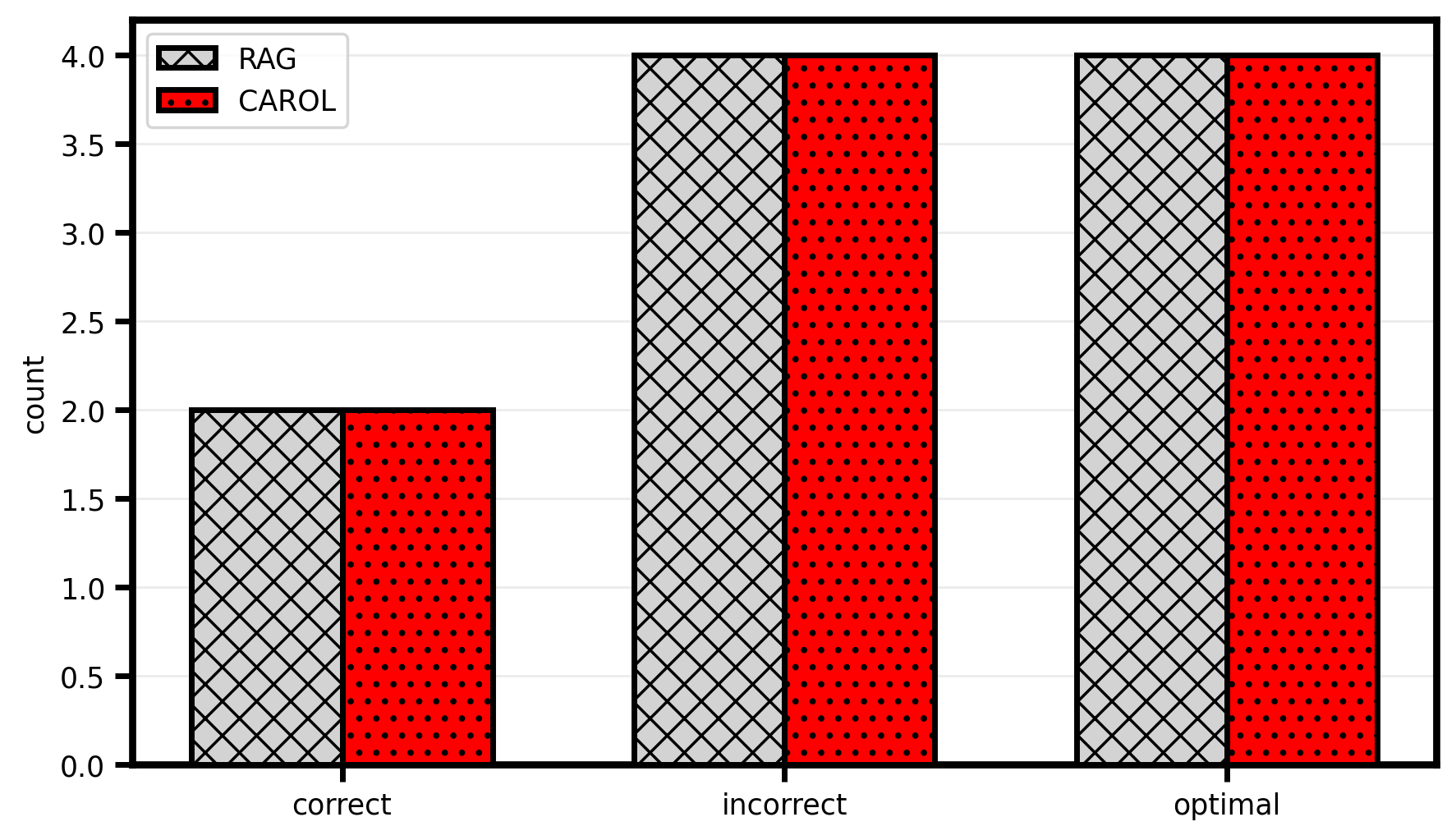}
            \caption{Weather}
        \end{subfigure}
    \end{subcaptiongroup}
\caption{\textbf{TruthfulQA} on Llama-3.1-8B extended results in each category, part $3$.}
\end{figure}


\begin{figure}[ht]
    \centering
    \begin{subcaptiongroup}
        \begin{subfigure}{0.45\textwidth}
            \centering
            \includegraphics[width=\linewidth]{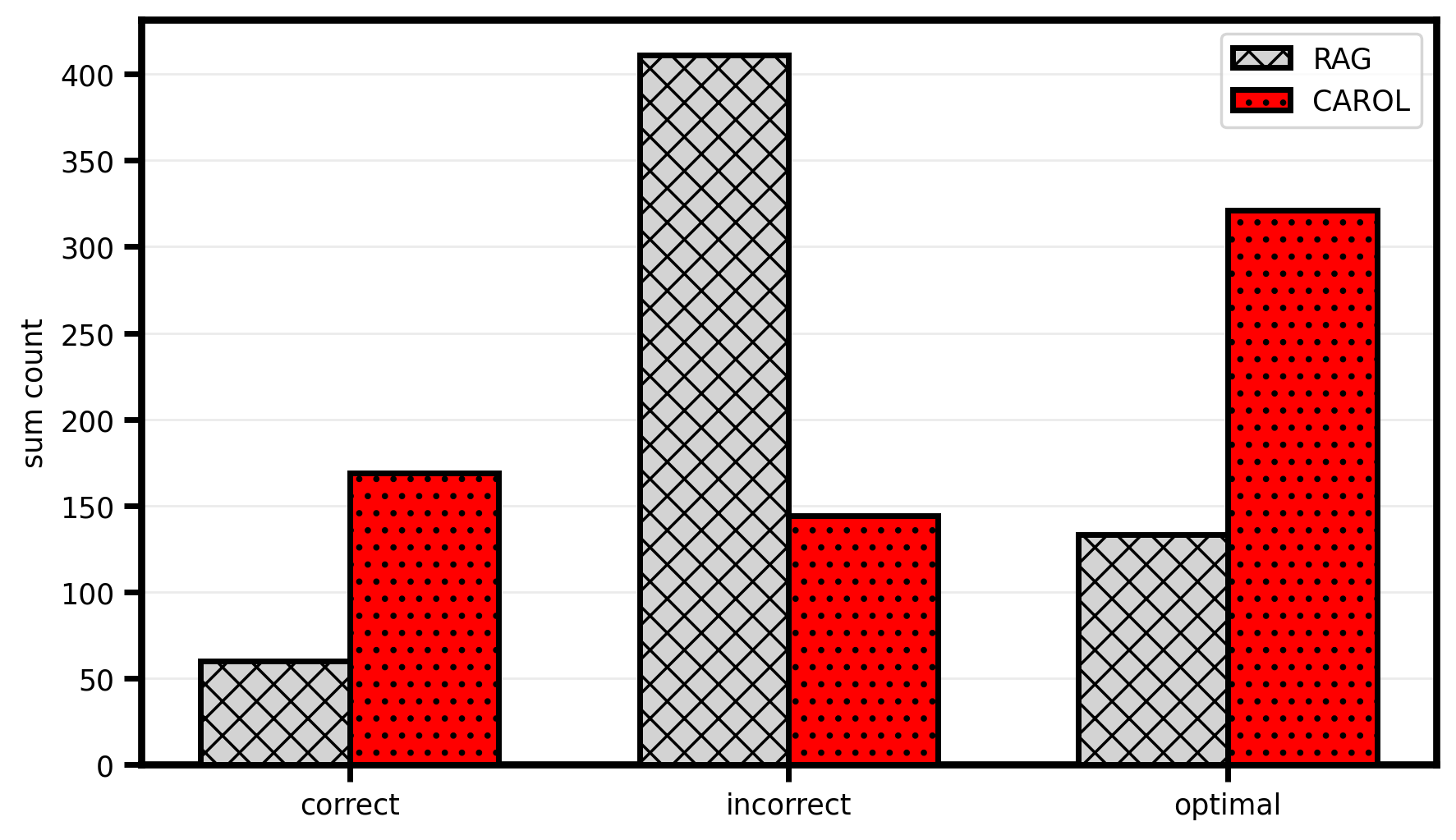}
            \caption{Results}
        \end{subfigure}
        \begin{subfigure}{0.45\textwidth}
            \centering
            \includegraphics[width=\linewidth]{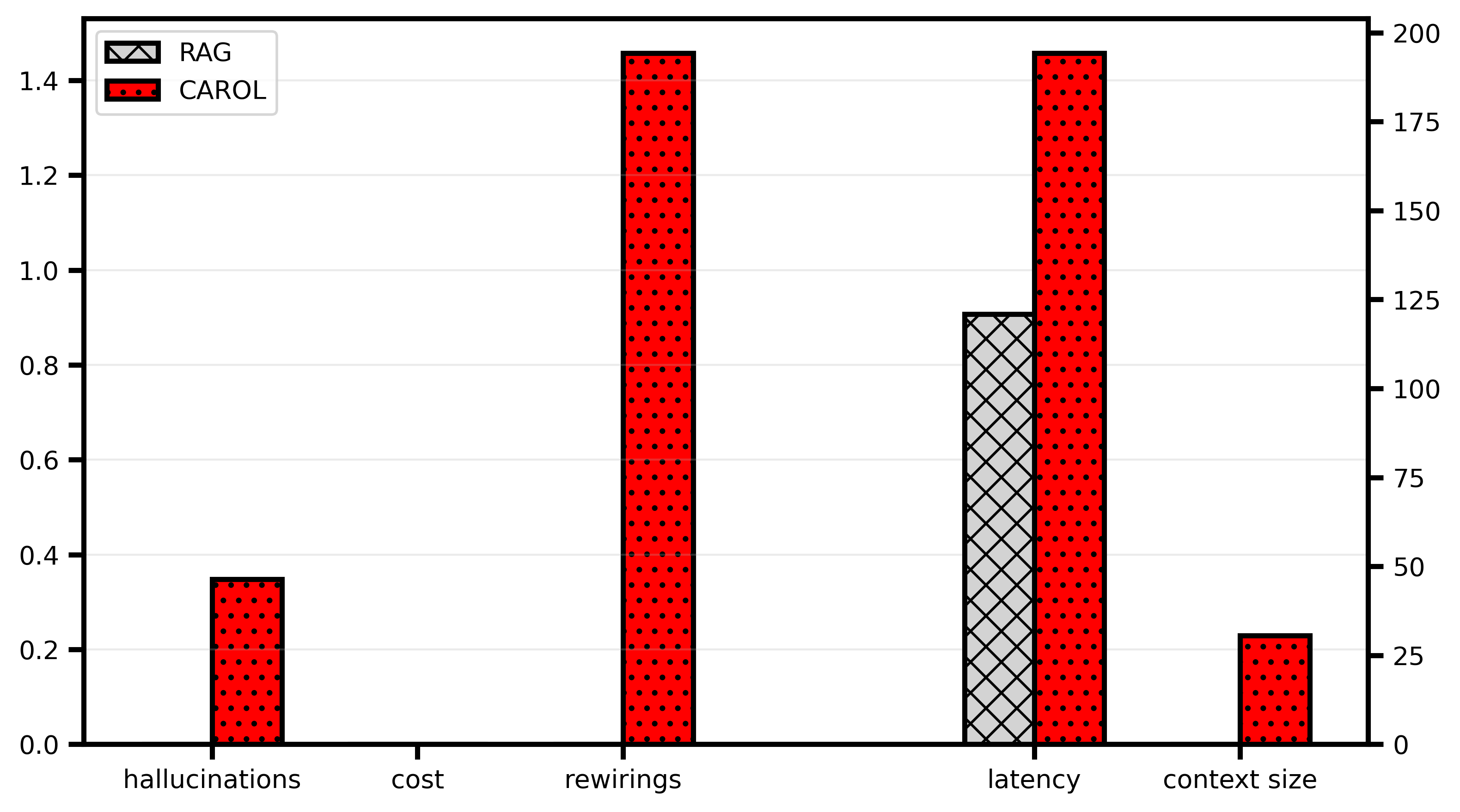}
            \caption{Execution Metrics}
        \end{subfigure}
    \end{subcaptiongroup}
\caption{\textbf{TruthfulQA} on Llama-3.1-8B overall results.}
\end{figure}

\FloatBarrier

\subsection{HaluEval Dataset}
\label{subsec:halueval}

\begin{figure}[ht]
    \centering
    \begin{subcaptiongroup}
        \begin{subfigure}{0.45\textwidth}
            \centering
            \includegraphics[width=\linewidth]{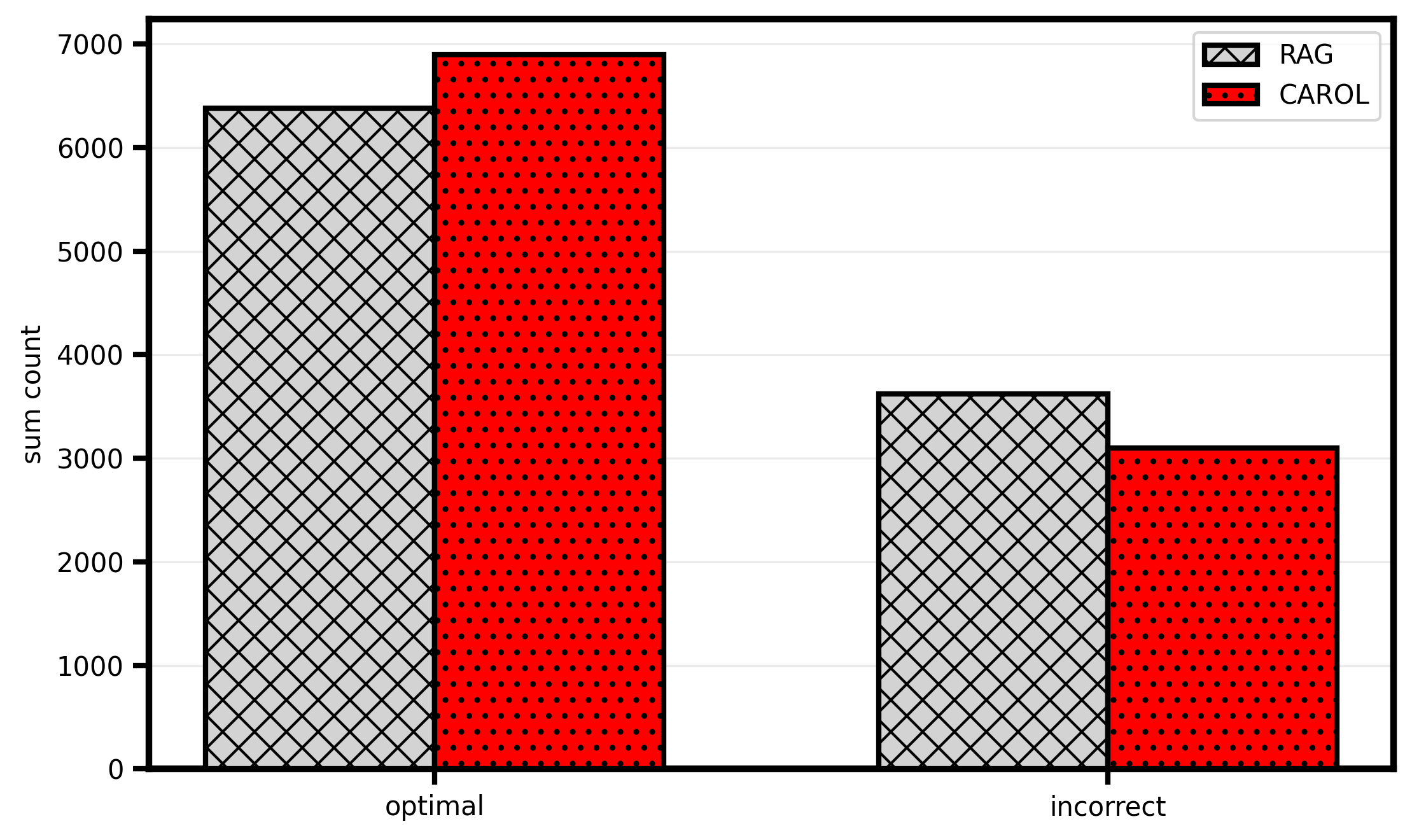}
            \caption{Results}
        \end{subfigure}
        \begin{subfigure}{0.45\textwidth}
            \centering
            \includegraphics[width=1.07\linewidth]{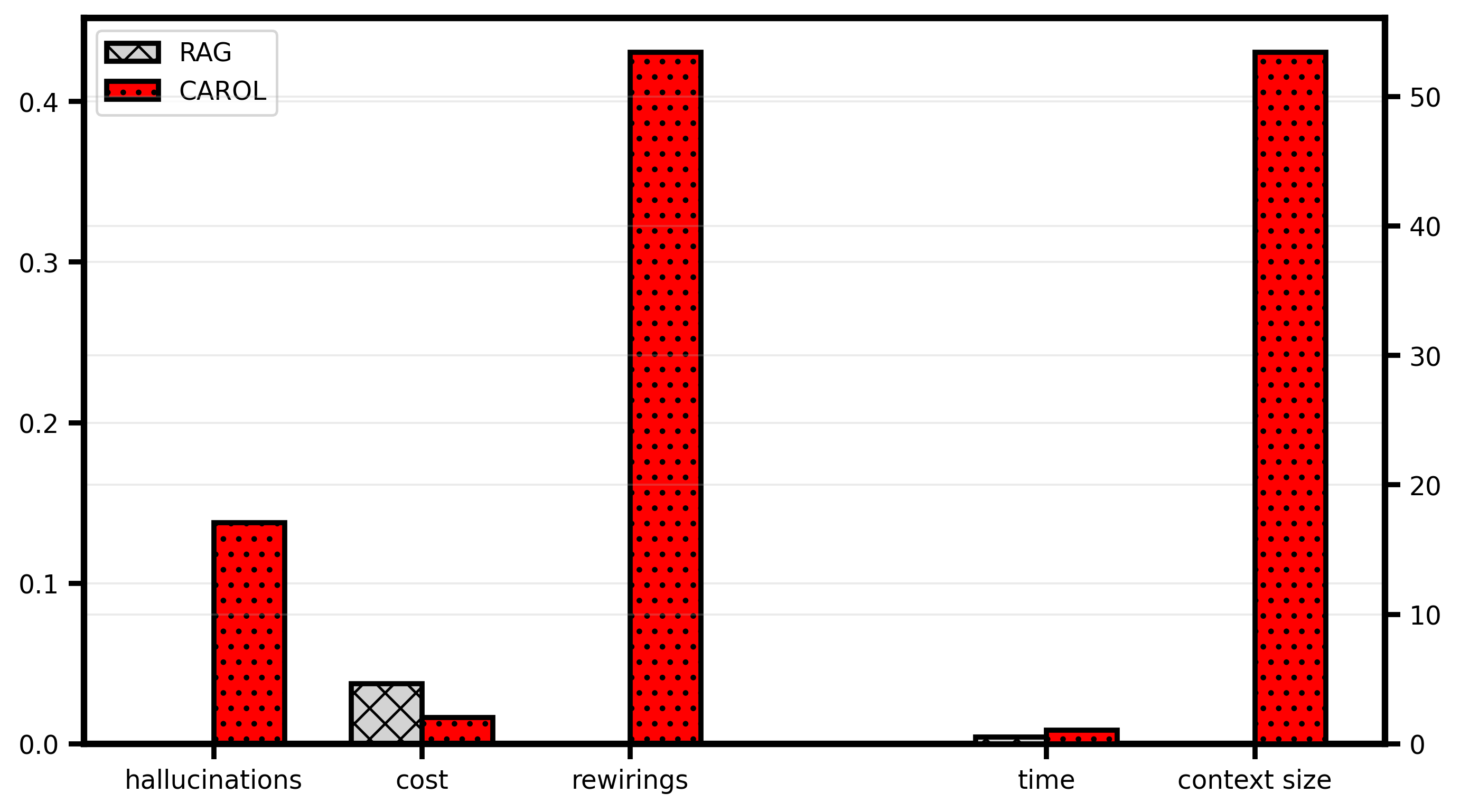}
            \caption{Execution Metrics}
        \end{subfigure}
    \end{subcaptiongroup}
\caption{\textbf{HaluEval} on GPT-5-nano overall results.}
\end{figure}

\begin{figure}[ht]
    \centering
    \begin{subcaptiongroup}
        \begin{subfigure}{0.45\textwidth}
            \centering
            \includegraphics[width=\linewidth]{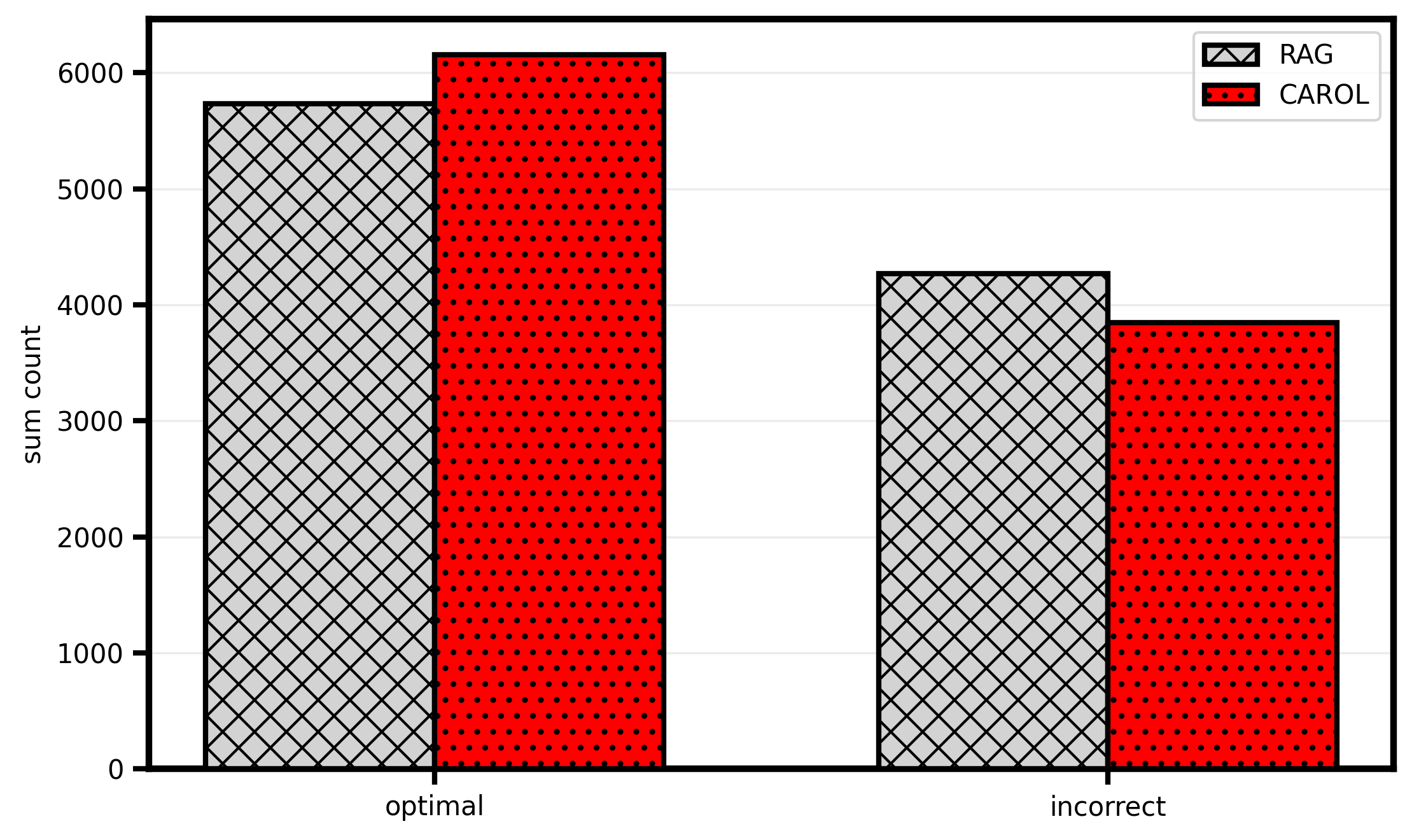}
            \caption{Results}
        \end{subfigure}
        \begin{subfigure}{0.45\textwidth}
            \centering
            \includegraphics[width=1.07\linewidth]{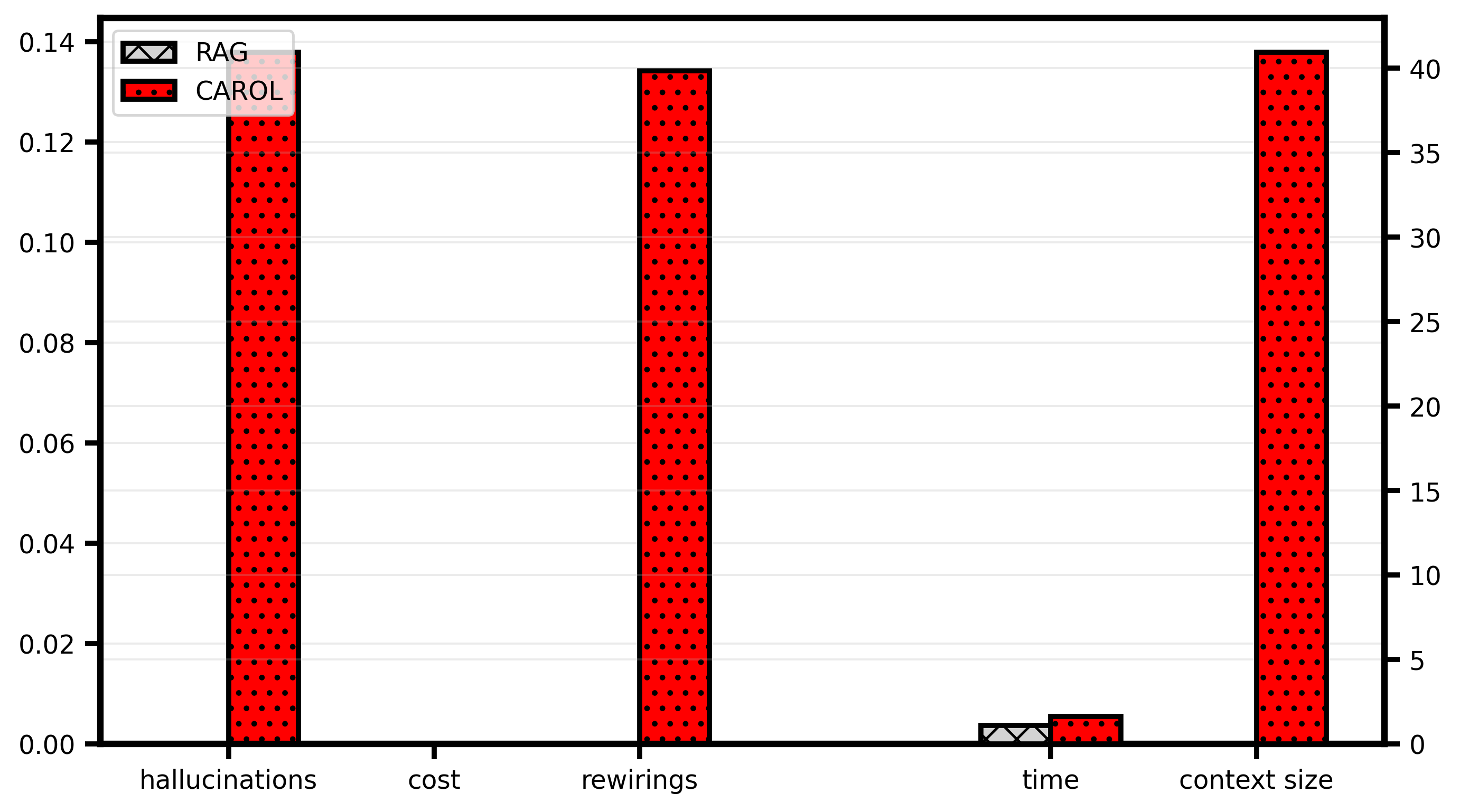}
            \caption{Execution Metrics}
        \end{subfigure}
    \end{subcaptiongroup}
\caption{\textbf{HaluEval} on Llama-3.1-8B overall results.}
\end{figure}

\FloatBarrier

\subsection{HotPotQA Dataset}
\label{subsec:hotpotqa}

\begin{figure}[ht]
    \centering
    \begin{subcaptiongroup}
        \begin{subfigure}{0.45\textwidth}
            \centering
            \includegraphics[width=\linewidth]{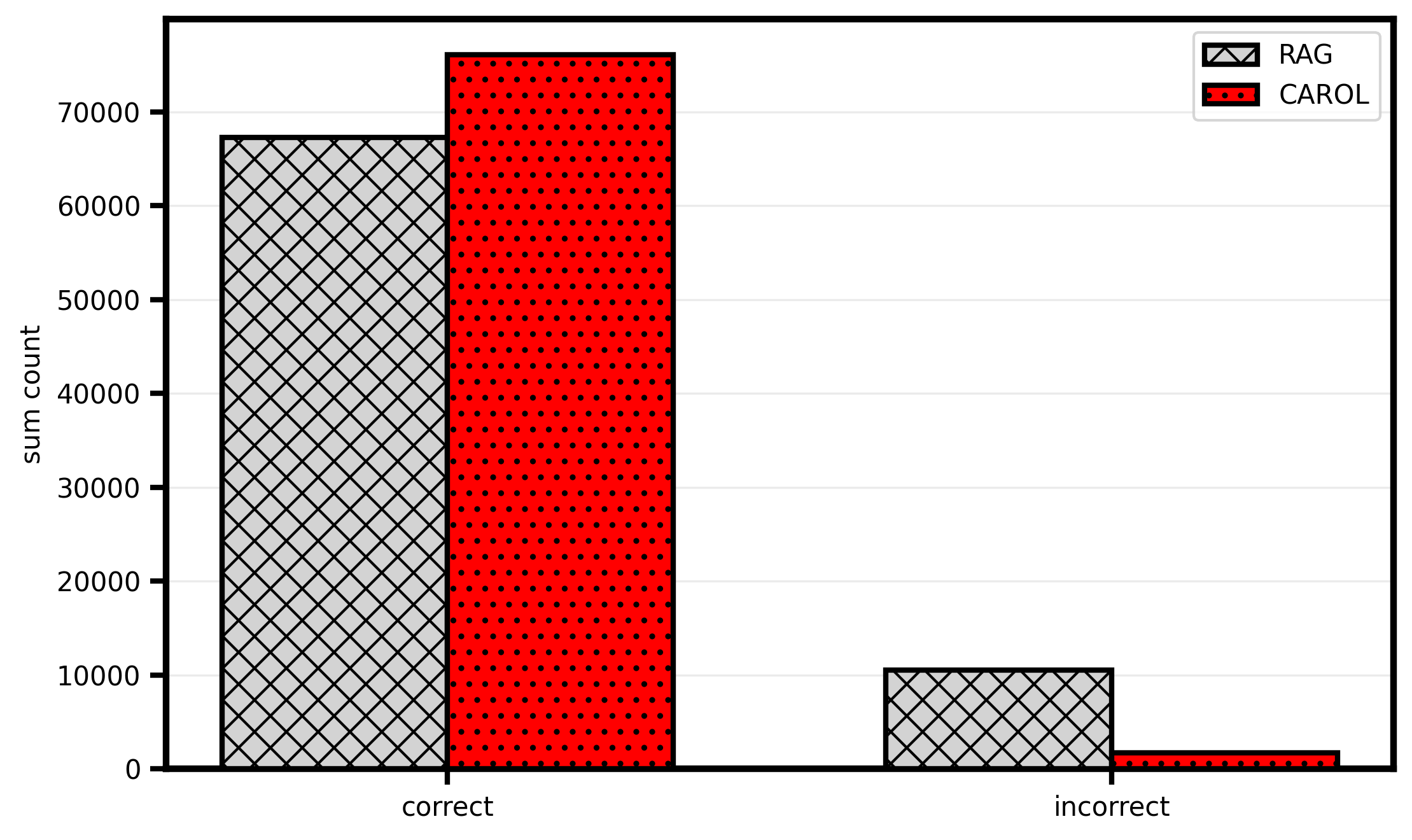}
            \caption{Results}
        \end{subfigure}
        \begin{subfigure}{0.45\textwidth}
            \centering
            \includegraphics[width=\linewidth]{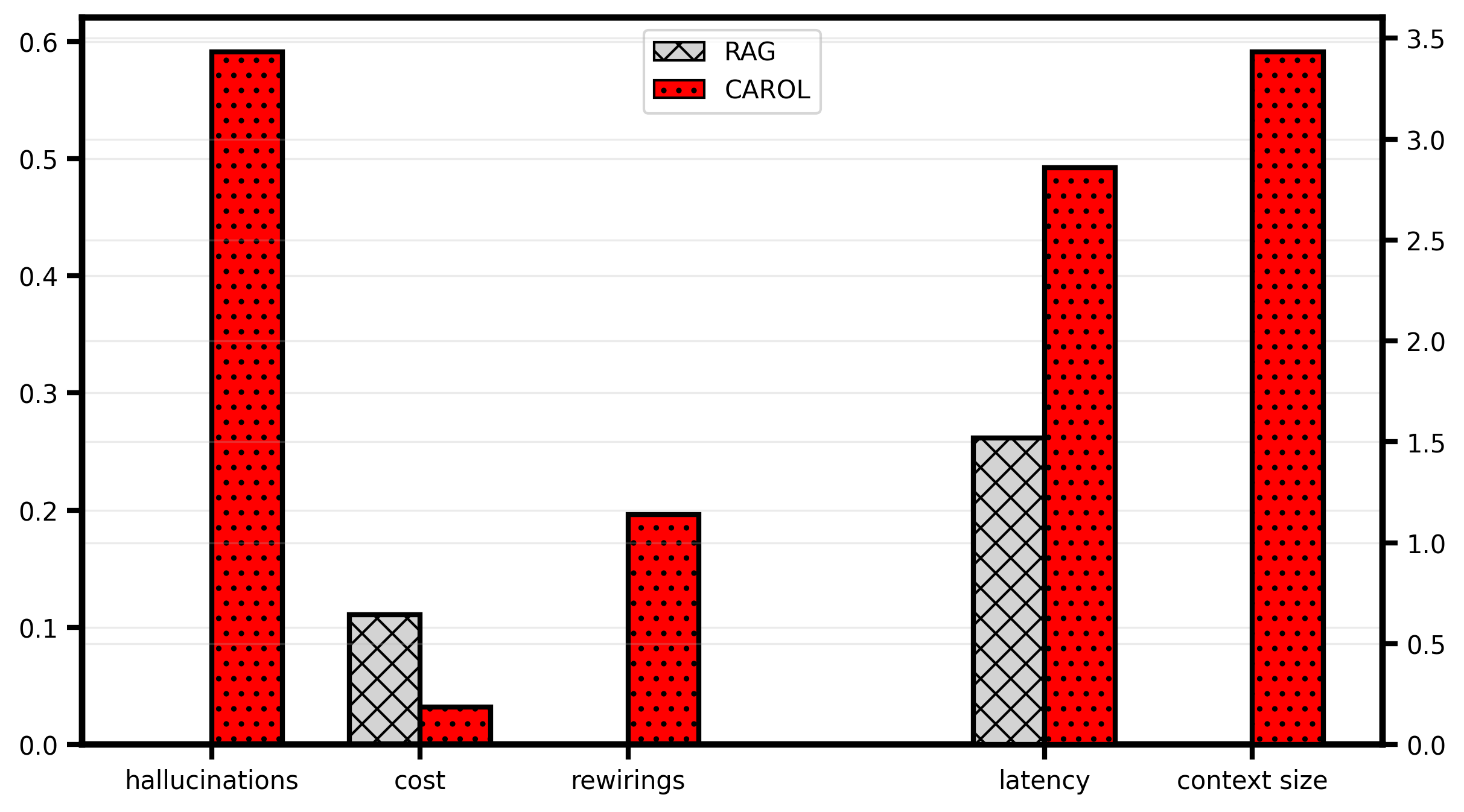}
            \caption{Execution Metrics}
        \end{subfigure}
    \end{subcaptiongroup}
\caption{\textbf{HotPotQA} on GPT-5-nano overall results.}
\end{figure}

\begin{figure}[ht]
    \centering
    \begin{subcaptiongroup}
        \begin{subfigure}{0.45\textwidth}
            \centering
            \includegraphics[width=\linewidth]{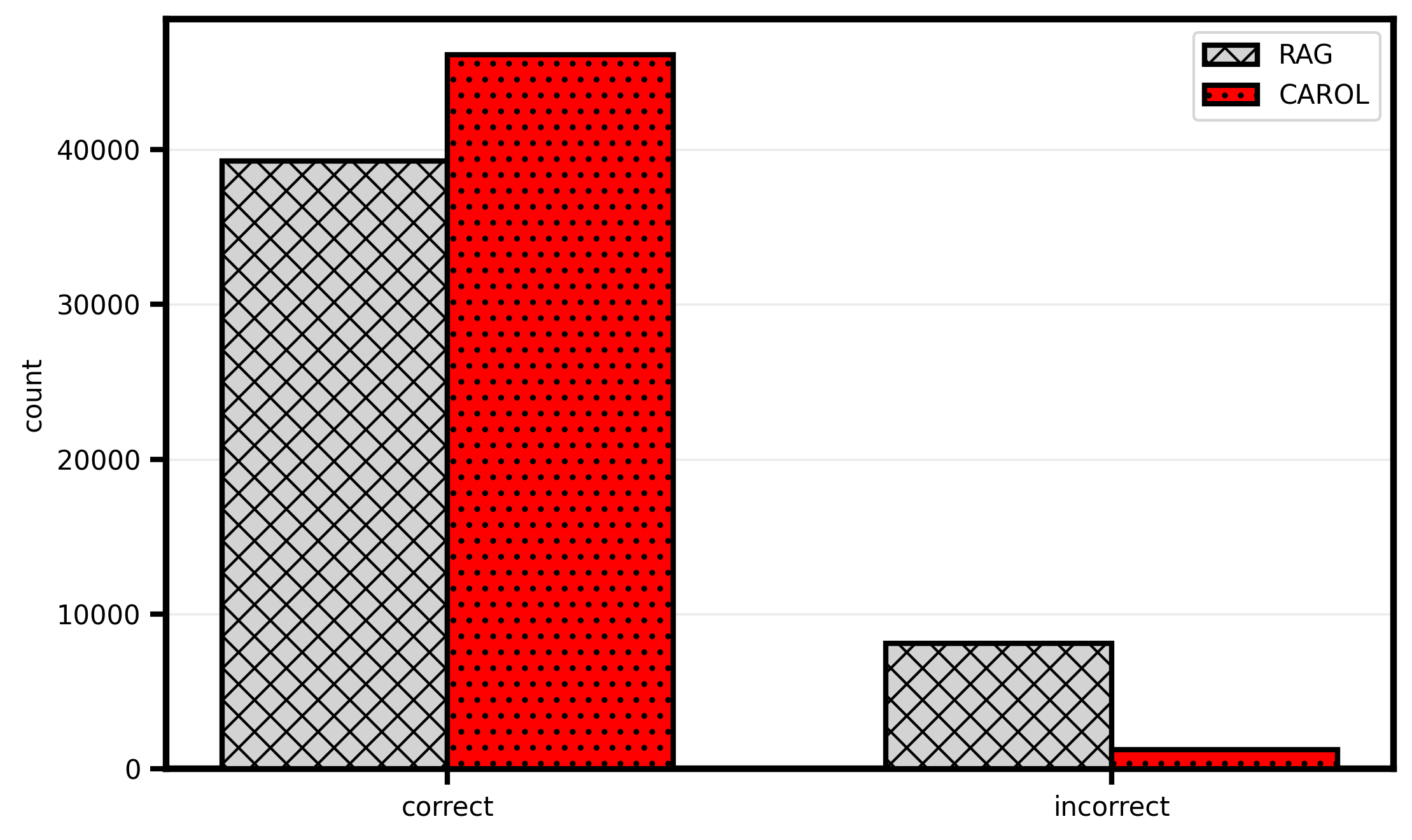}
            \caption{Bridge}
        \end{subfigure}
        \begin{subfigure}{0.45\textwidth}
            \centering
            \includegraphics[width=\linewidth]{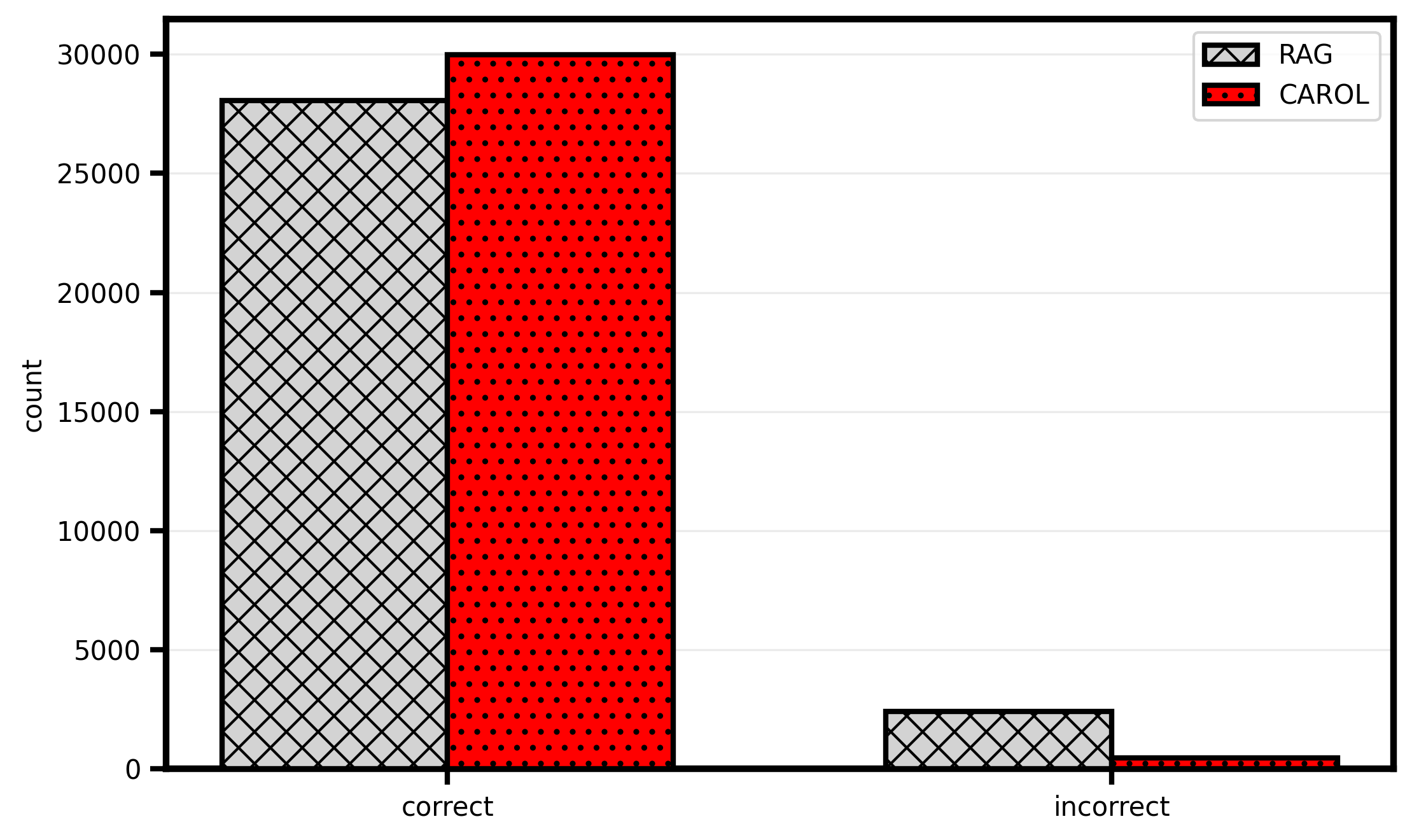}
            \caption{Comparison}
        \end{subfigure}
    \end{subcaptiongroup}
\caption{\textbf{HotPotQA} on GPT-5-nano results for each category.}
\end{figure}

\begin{figure}[ht]
    \centering
    \begin{subcaptiongroup}
        \begin{subfigure}{0.45\textwidth}
            \centering
            \includegraphics[width=\linewidth]{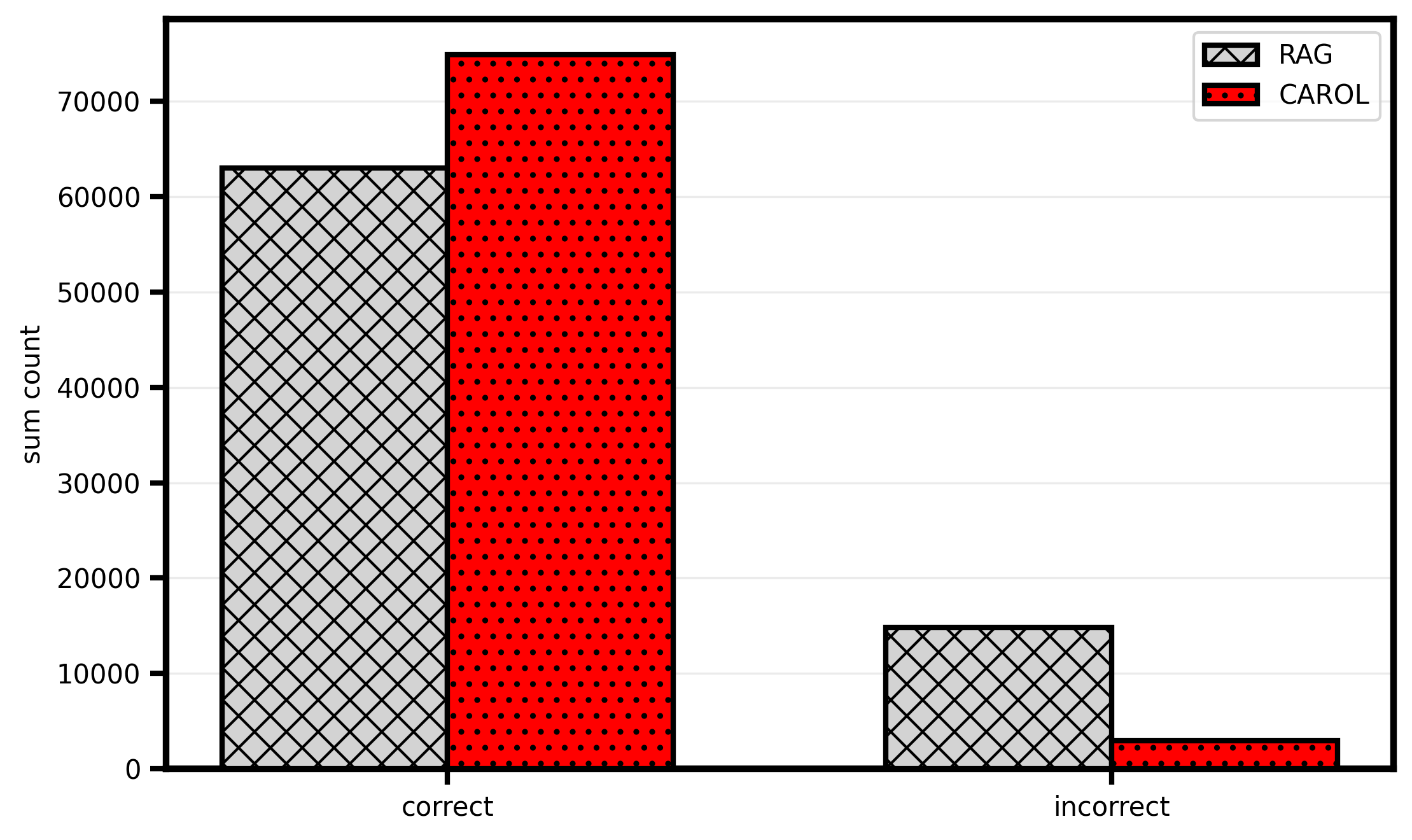}
            \caption{Results}
        \end{subfigure}
        \begin{subfigure}{0.45\textwidth}
            \centering
            \includegraphics[width=\linewidth]{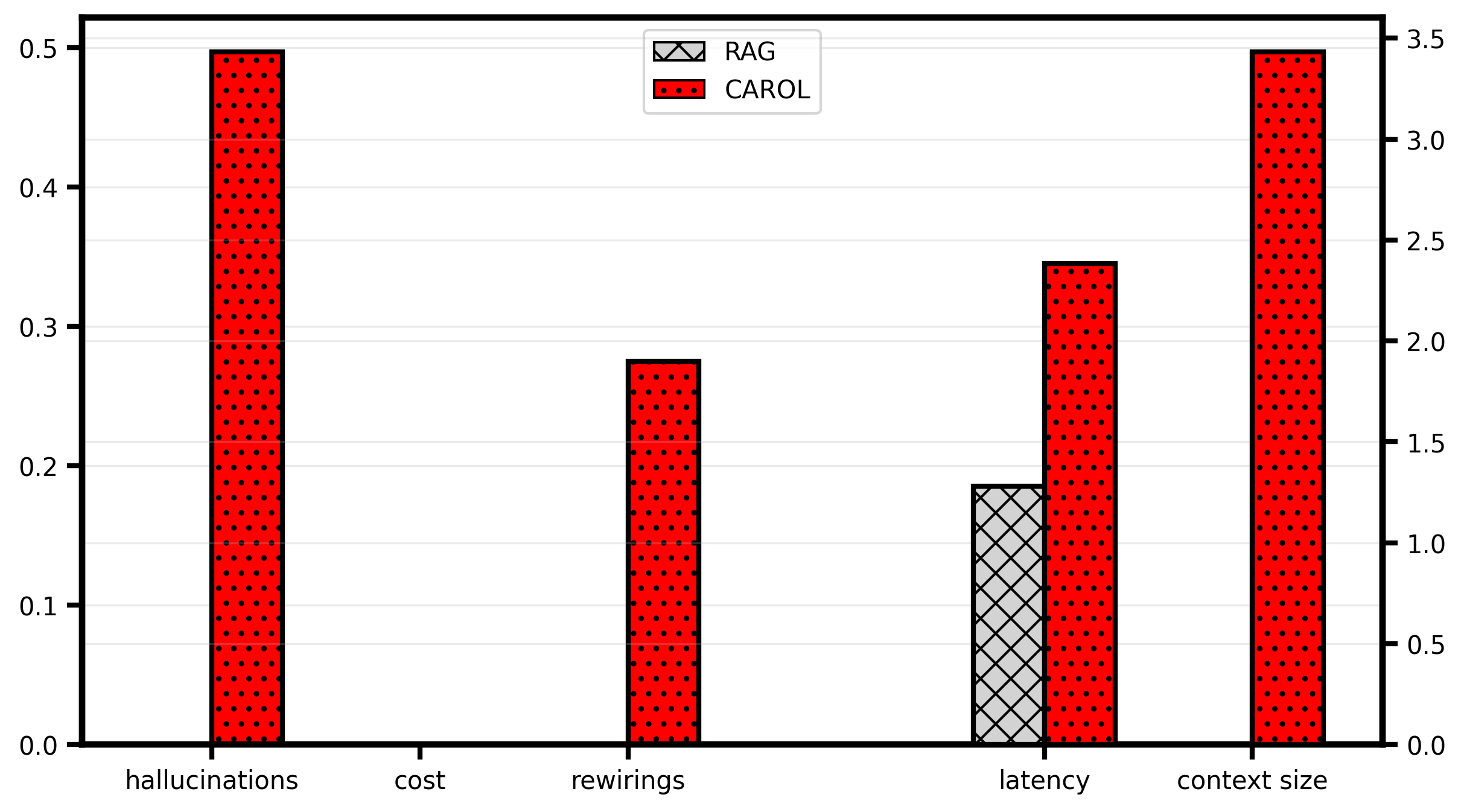}
            \caption{Execution Metrics}
        \end{subfigure}
    \end{subcaptiongroup}
\caption{\textbf{HotPotQA} on Llama-3.1-8B overall results.}
\end{figure}

\begin{figure}[ht]
    \centering
    \begin{subcaptiongroup}
        \begin{subfigure}{0.45\textwidth}
            \centering
            \includegraphics[width=\linewidth]{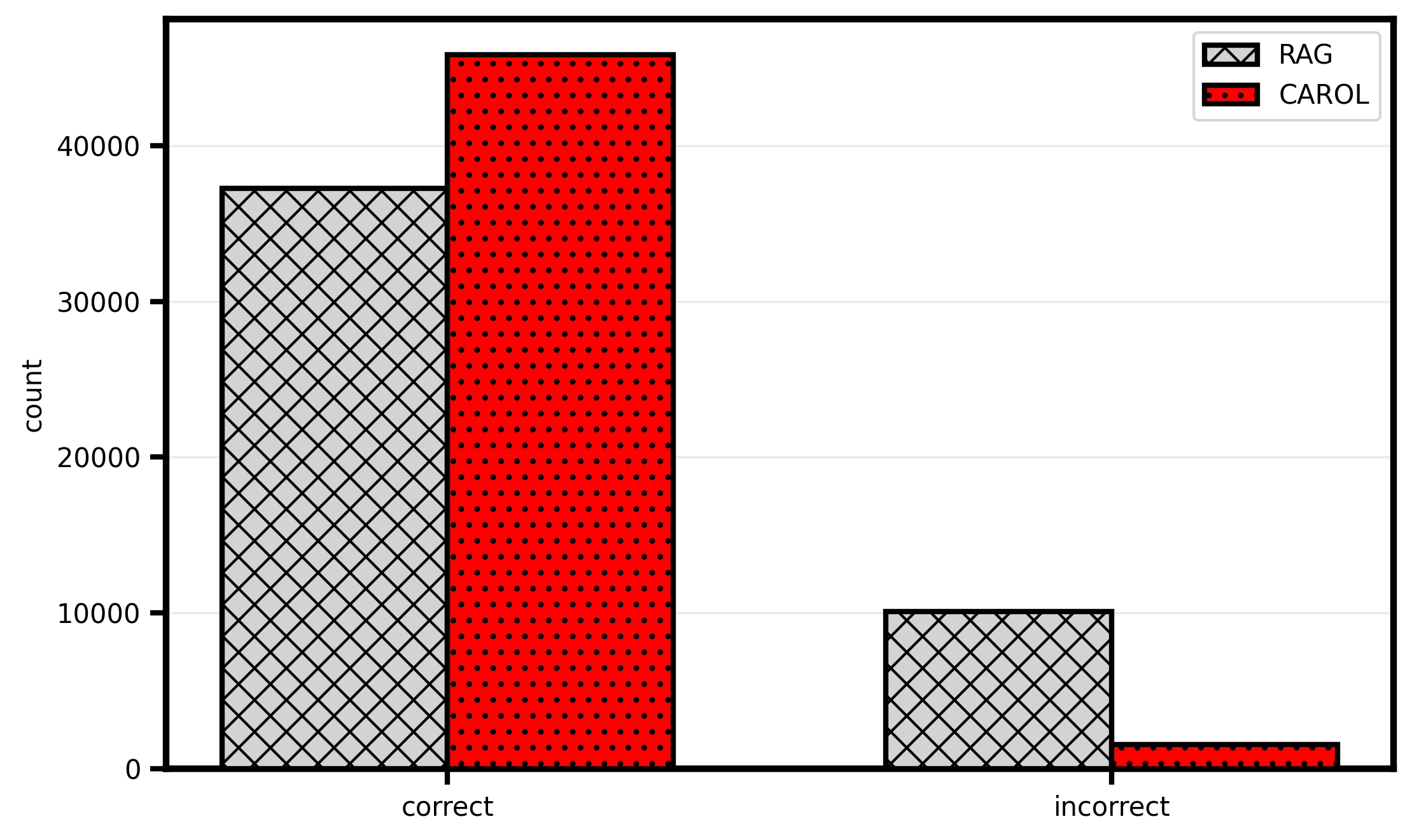}
            \caption{Bridge}
        \end{subfigure}
        \begin{subfigure}{0.45\textwidth}
            \centering
            \includegraphics[width=\linewidth]{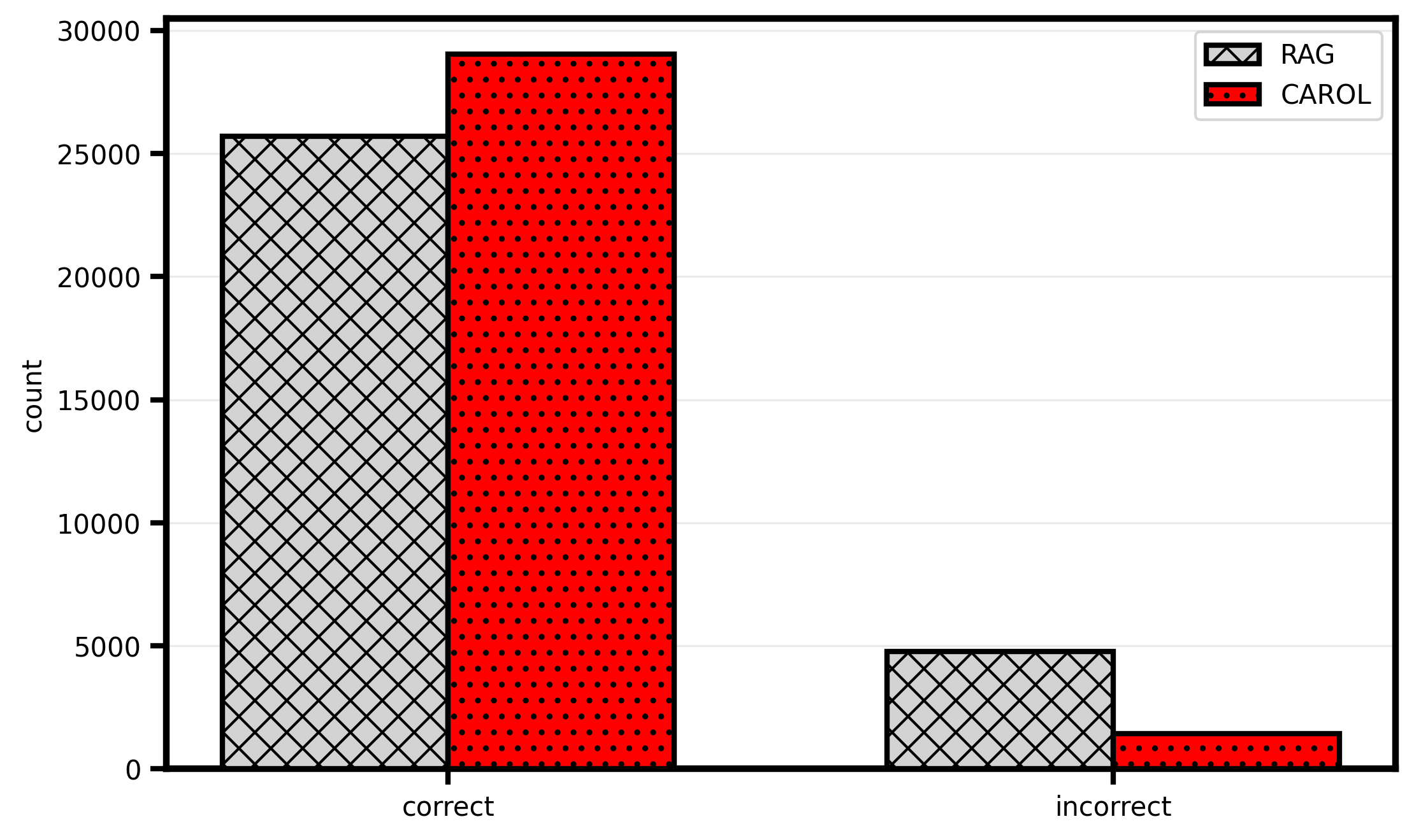}
            \caption{Comparison}
        \end{subfigure}
    \end{subcaptiongroup}
\caption{\textbf{HotPotQA} on Llama-3.1-8B results for each category.}
\end{figure}
\FloatBarrier

\subsection{Clustering Robustness Analysis}
\label{app:clustering_robustness}

This subsection provides an additional empirical analysis supporting the use of exemplar-based clustering in \texttt{CAROL}. The main paper introduces semantic entropy through a clustering-induced partition of semantic units with respect to the trusted context $\Gamma$. Since the resulting partition directly affects the estimated semantic entropy, the clustering mechanism must be stable under changes in sampling or assignment hyperparameters. In particular, temperature-sensitive clustering can produce substantially different partitions for the same semantic content, which may lead to unstable hallucination scores.

We compare two clustering mechanisms: a temperature-scaled soft $k$-means baseline and a temperature-free $k$-medoids method. The experiment uses a curated set of semantically structured sentences grouped into four categories: statements about Paris as the capital of France, general facts about France, Paris landmarks, and incorrect or outlier, see Table~\ref{tab:semantic_density_data}
\footnote{The dataset and code used in this experiment will be publicly available on GitHub after reviews.}. 
Each sentence is embedded with \texttt{BERT} sentence encoder \cite{JD-MC-KL-KT:18} and clustered under a sweep of temperatures $T \in [10^{-2},10]$. For each temperature, we run multiple independent trials and evaluate the resulting partitions using Adjusted Rand Index (ARI) \cite{LH-PA:85}, Normalized Mutual Information (NMI), and run-to-run variance.

\begin{figure}[ht]
    \centering
    \includegraphics[width=0.60\linewidth]{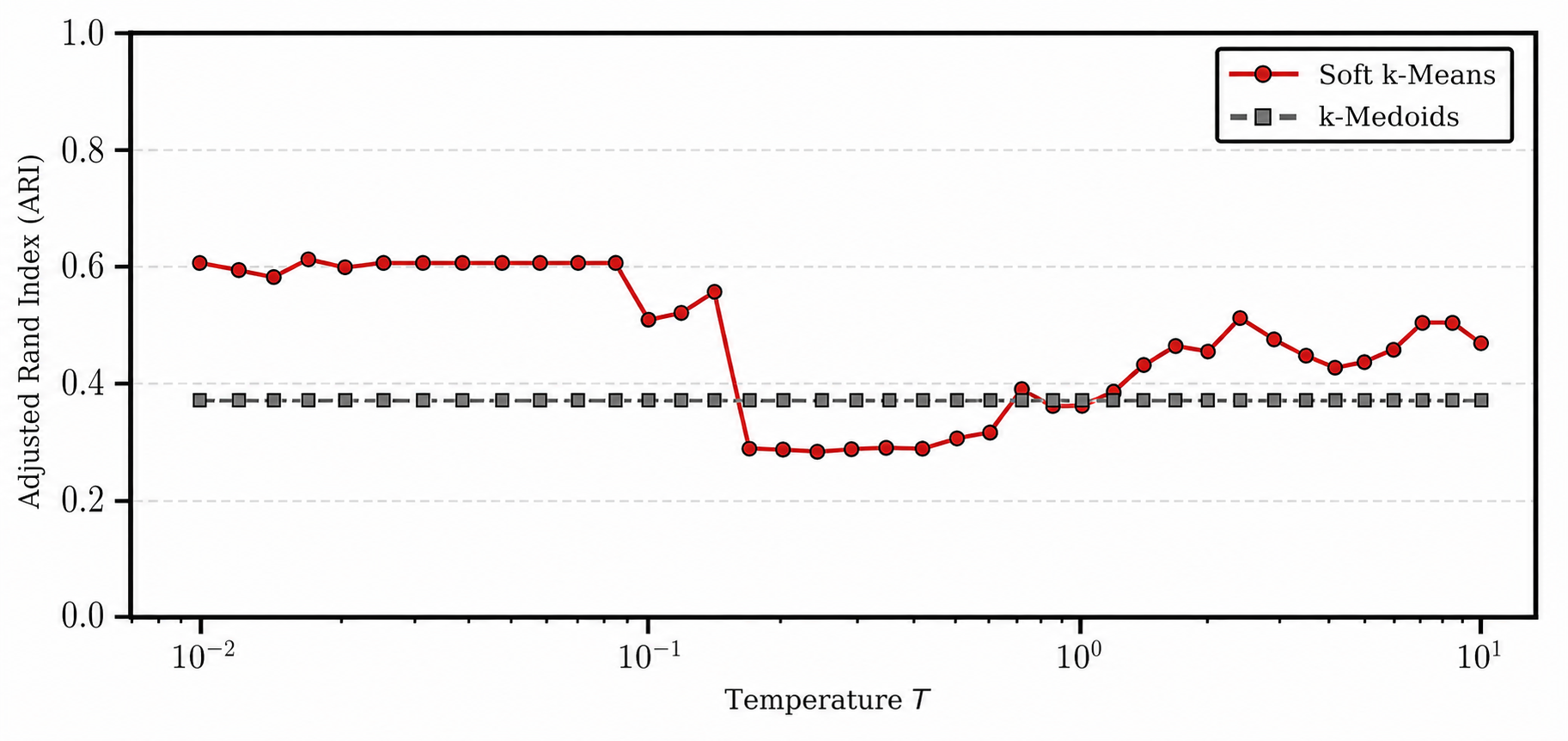}
    \caption{
    Adjusted Rand Index as a function of temperature. Soft $k$-means achieves higher agreement at low temperature but becomes unstable as temperature increases. In contrast, $k$-medoids remains nearly constant because it does not rely on a temperature-scaled soft assignment.
    }
    \label{fig:ari_temperature_appendix}
\end{figure}

\begin{figure}[ht]
    \centering
    \includegraphics[width=0.60\linewidth]{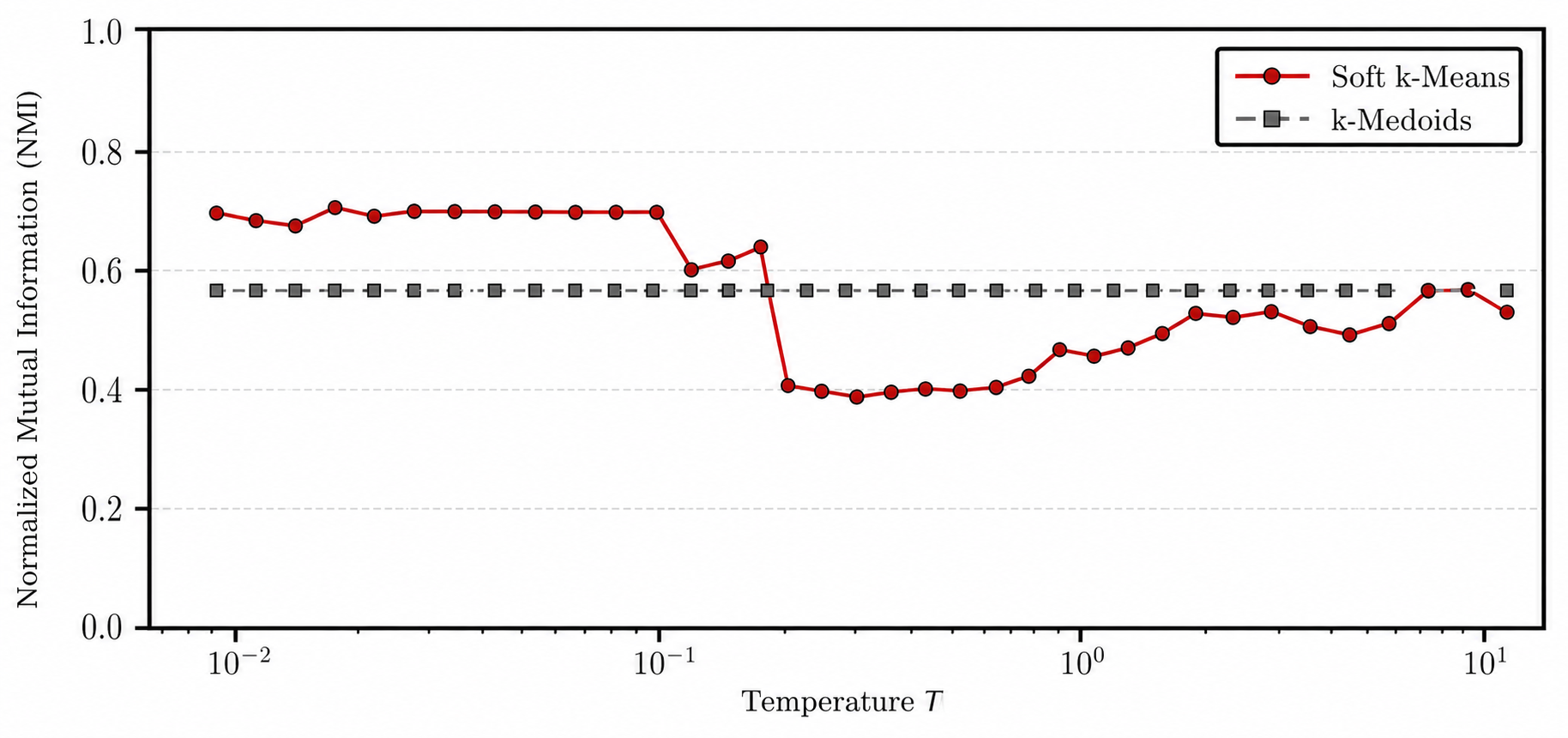}
    \caption{
    Normalized Mutual Information as a function of temperature. The $k$-medoids curve remains flat across the full temperature range, while soft $k$-means exhibits sharp degradation and partial recovery depending on the assignment temperature. This confirms that temperature introduces an additional source of variability into the semantic partition.
    }
    \label{fig:nmi_temperature_appendix}
\end{figure}

The results show that soft $k$-means is highly sensitive to the choice of temperature. At low values of $T$, the assignment is close to hard clustering and can recover meaningful groups. However, as $T$ increases, the assignment distribution becomes more diffuse, causing clusters to merge or split in ways that are not semantically stable. This behavior is undesirable for hallucination detection: two runs on the same semantic content may produce different clusterings, and therefore different entropy estimates, only because of the temperature parameter.

By contrast, $k$-medoids selects actual observed sentences as representatives. This is important for \texttt{CAROL}: medoids preserve semantic interpretability because every cluster center is itself a generated or contextual $\ell$-gram, rather than an averaged embedding that may not correspond to any meaningful sentence. This makes the induced semantic entropy more transparent and easier to inspect. Moreover, because no temperature parameter is required, the method removes an external tuning degree of freedom from the hallucination score.

\begin{figure}[ht]
    \centering
    \includegraphics[width=0.40\linewidth]{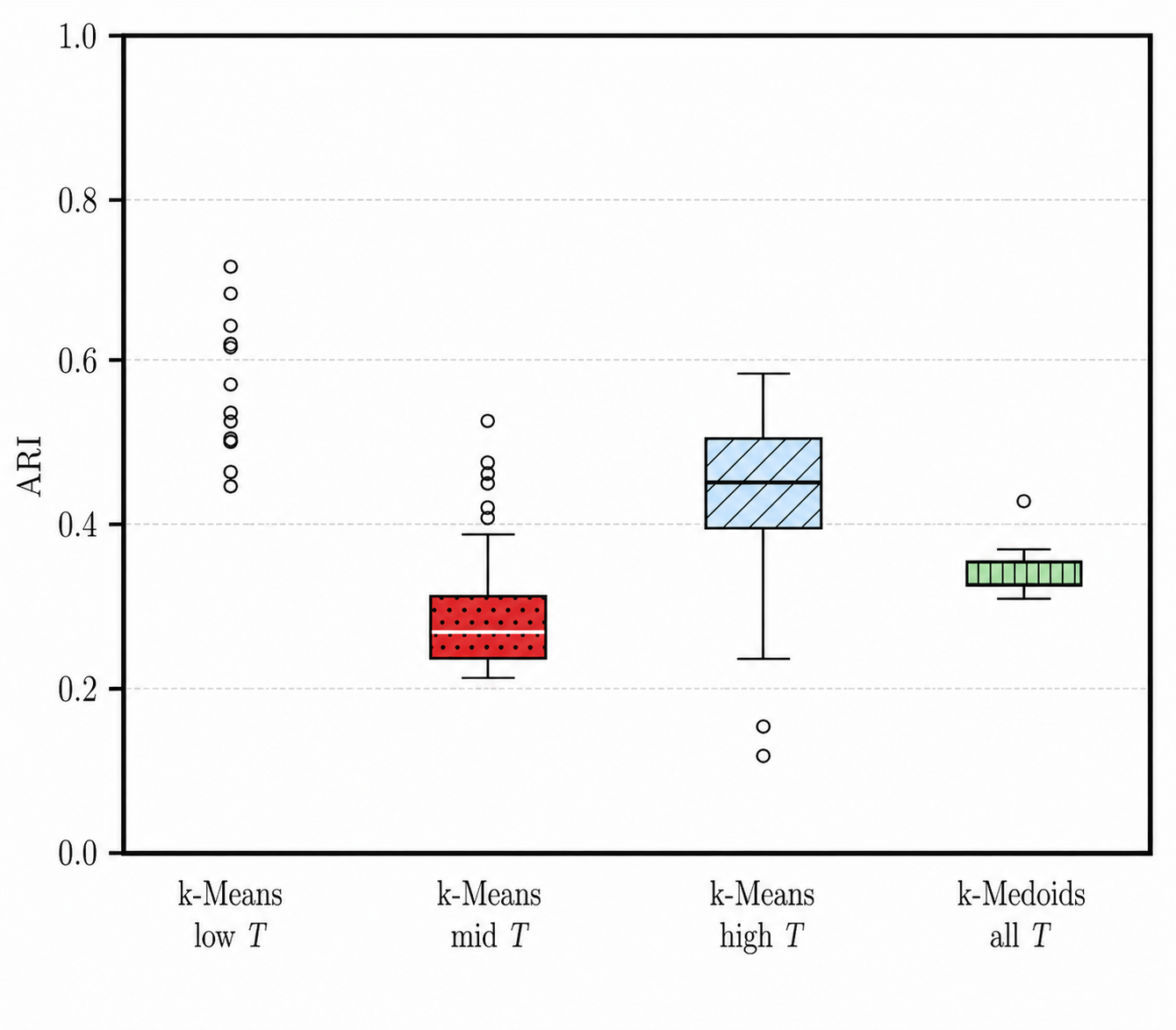}
    \caption{
    Distribution of ARI values across temperature regimes. Soft $k$-means displays strong variability across low-, mid-, and high-temperature regimes. $k$-medoids remains concentrated, indicating greater robustness under repeated trials.
    }
    \label{fig:ari_boxplot_appendix}
\end{figure}

\begin{figure}[ht]
    \centering
    \includegraphics[width=0.40\linewidth]{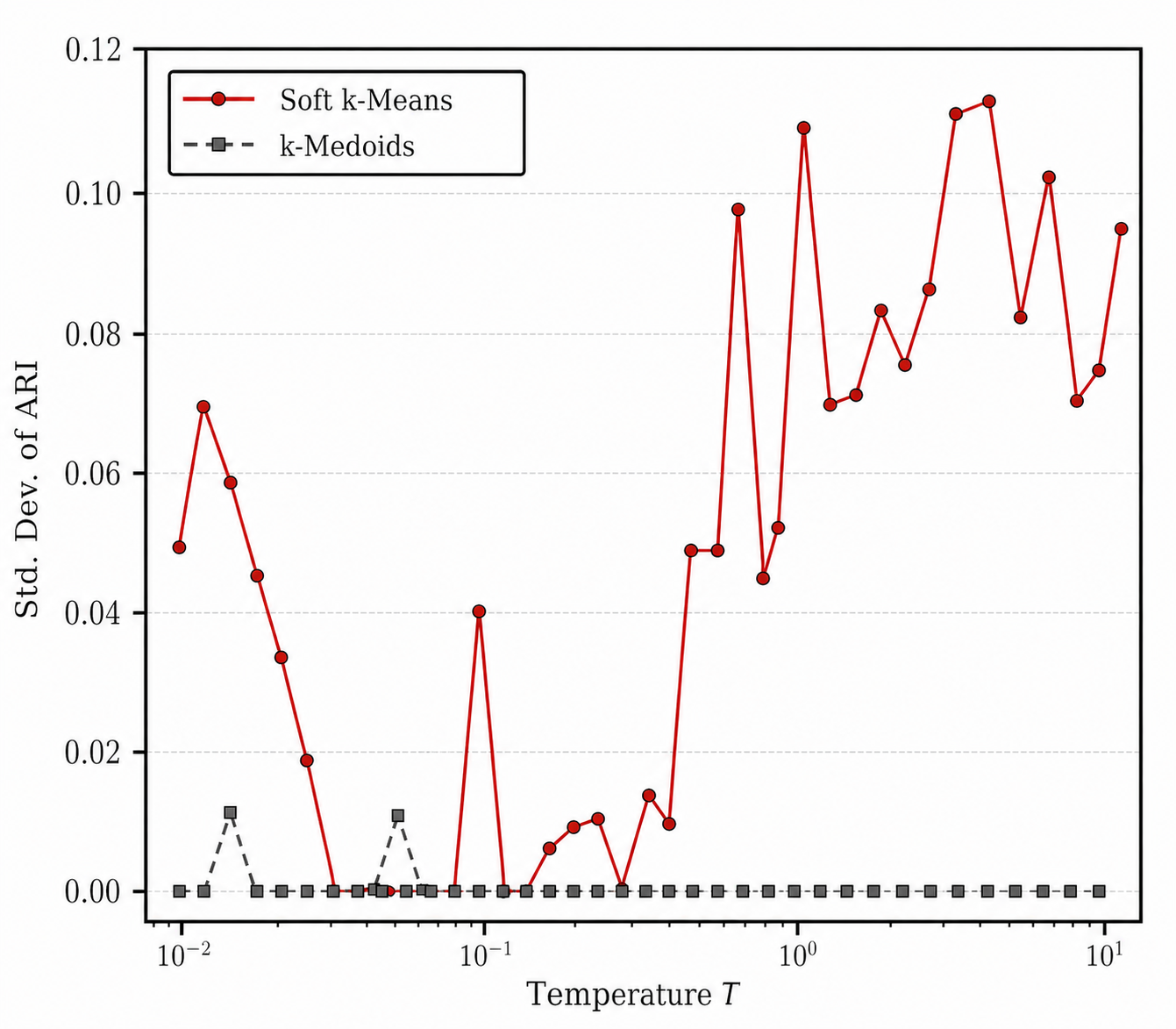}
    \caption{
    Run-to-run standard deviation of ARI across temperatures. The variance of soft $k$-means increases substantially in several temperature regimes, whereas $k$-medoids remains close to zero. This supports the use of exemplar-based clustering as a stable semantic partitioning mechanism.
    }
    \label{fig:ari_variance_appendix}
\end{figure}

Overall, this experiment highlights three advantages of medoid-based clustering over mean-based clustering in the context of semantic entropy. First, medoids are \emph{temperature-free}, avoiding sensitivity to soft assignment hyperparameters. Second, medoids are \emph{semantically interpretable}, since each representative is an actual sentence rather than an artificial centroid in embedding space. Third, medoids are \emph{stable across runs}, producing nearly invariant ARI and NMI values across the temperature sweep. These properties are especially important for hallucination reduction, where the reliability score should depend on semantic agreement with $\Gamma$, not on arbitrary clustering hyperparameters.

This supports the design choice in \texttt{CAROL}: semantic entropy is computed using exemplar-based partitions so that hallucination detection and accept--reject decisions are grounded in stable, interpretable semantic representatives.
\FloatBarrier


\subsection{Cluster Density and Semantic Entropy}
\label{app:cluster_density_entropy}

Following previous section, we designed an experiment centered in Exemplar-based clustering method that illustrates the intuition behind the semantic entropy score used in \texttt{CAROL}. We construct a controlled dataset consisting of a trusted context $\Gamma$ and three response regimes with varying levels of semantic agreement. The trusted context $\Gamma$ contains sentences describing consistent facts about Paris and France, covering geography, governance, and landmarks. We then define three scenarios of generated $\ell$-grams that progressively deviate from this context: (i) dense agreement, where all generated statements are semantically aligned with $\Gamma$; (ii) partially sparse support, where some generated statements are correct while others are inconsistent; and (iii) sparse disagreement, where the generated statements contradict the trusted context.

\begin{table}[t]
\centering
\caption{\footnotesize
Structure of the experimental data. The trusted context $\Gamma$ provides semantically consistent reference information, while the three scenarios introduce increasing levels of inconsistency.}
\small
\begin{tabular}{p{0.95\linewidth}}
\toprule
\textbf{Trusted context $\Gamma$} \\
\midrule
Paris is the capital of France; France's capital city is Paris; The French government is seated in Paris; Paris is the political center of France; France is a country in Western Europe; France belongs to the European Union; Paris is known for the Eiffel Tower; The Louvre Museum is located in Paris; The Seine River flows through Paris; Paris is famous for museums, monuments, and architecture. \\
\midrule
\textbf{Dense agreement (low hallucination)} \\
Paris is the capital of France; The French government is based in Paris; Paris contains the Eiffel Tower and the Louvre. \\
\midrule
\textbf{Partially sparse (mixed support)} \\
Paris is the capital of France; France is located in Europe; Madrid is the capital of France. \\
\midrule
\textbf{Sparse disagreement (high hallucination)} \\
Berlin is the capital of France; The Eiffel Tower is located in Rome; France is a country in South America. \\
\bottomrule
\end{tabular}
\label{tab:semantic_density_data}
\end{table}

For each scenario, we assign elements of $\Gamma$ to the generated $\ell$-gram exemplars and compute three quantities: semantic entropy, assignment confidence, and best cosine similarity. The goal is to separate two effects that are often conflated in semantic-entropy methods: the number of available contextual statements and the density of their semantic support. A larger context $\Gamma$ is useful only when its elements form coherent neighborhoods around the generated statements. If additional context is relevant and mutually consistent, it strengthens the assignment signal and stabilizes the entropy estimate. However, if $\Gamma$ is enlarged with weakly related, noisy, or contradictory statements, the induced partition becomes more diffuse and the entropy score may increase.

Dense semantic agreement should therefore induce concentrated assignments around a small number of well-supported exemplars, producing lower entropy. Conversely, sparse or contradictory responses distribute context elements more diffusely across weak representatives, increasing semantic entropy and signaling higher hallucination risk. In this sense, the sensitivity of the method is not simply to the cardinality $|\Gamma|$, but to the semantic density of $\Gamma$ around the generated $\ell$-grams.

\begin{figure}[ht]
    \centering
    \begin{subfigure}{0.32\linewidth}
        \centering
        \includegraphics[width=\linewidth]{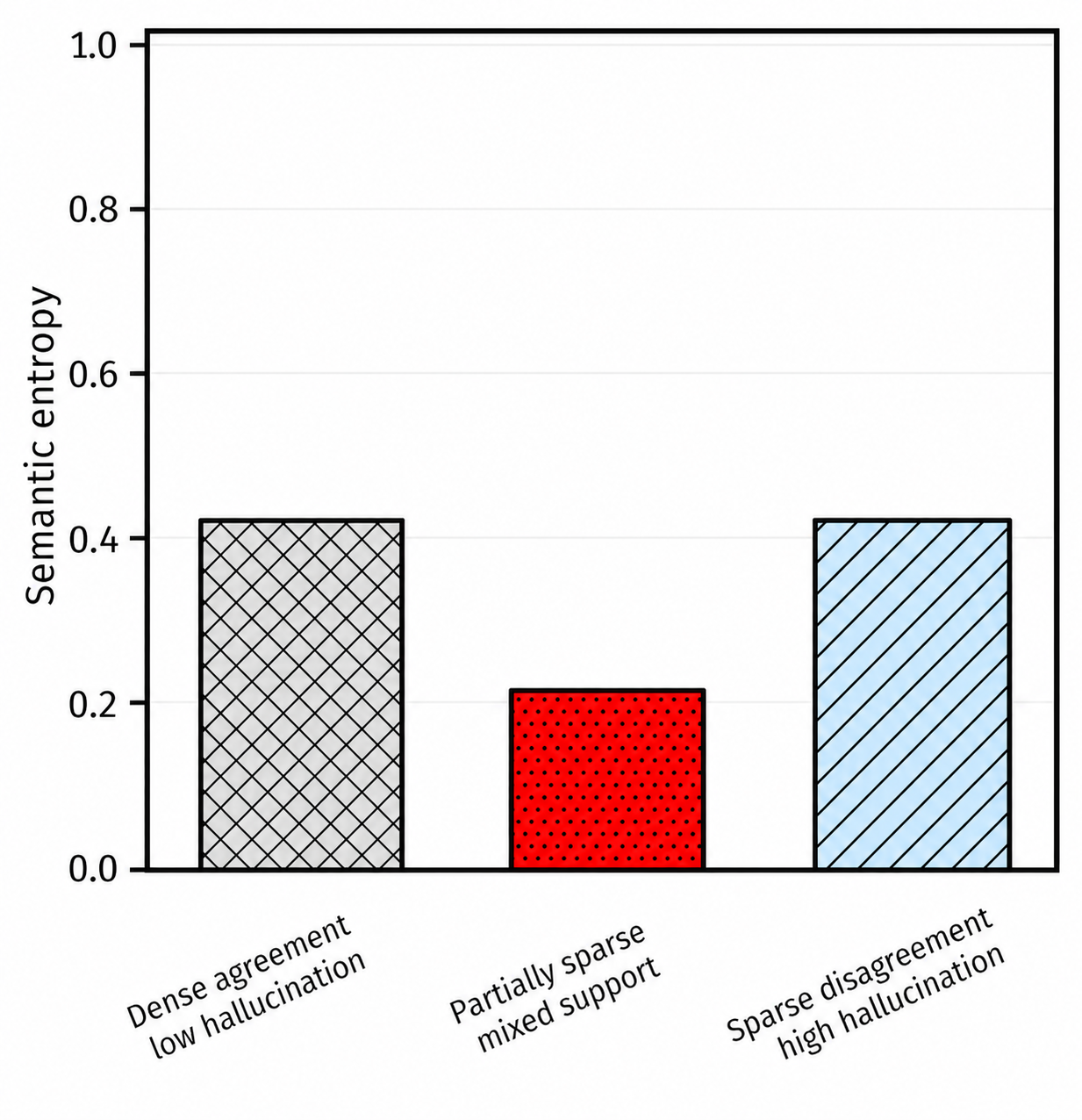}
        \caption{Semantic entropy.}
        \label{fig:density_entropy}
    \end{subfigure}
    \hfill
    \begin{subfigure}{0.32\linewidth}
        \centering
        \includegraphics[width=\linewidth]{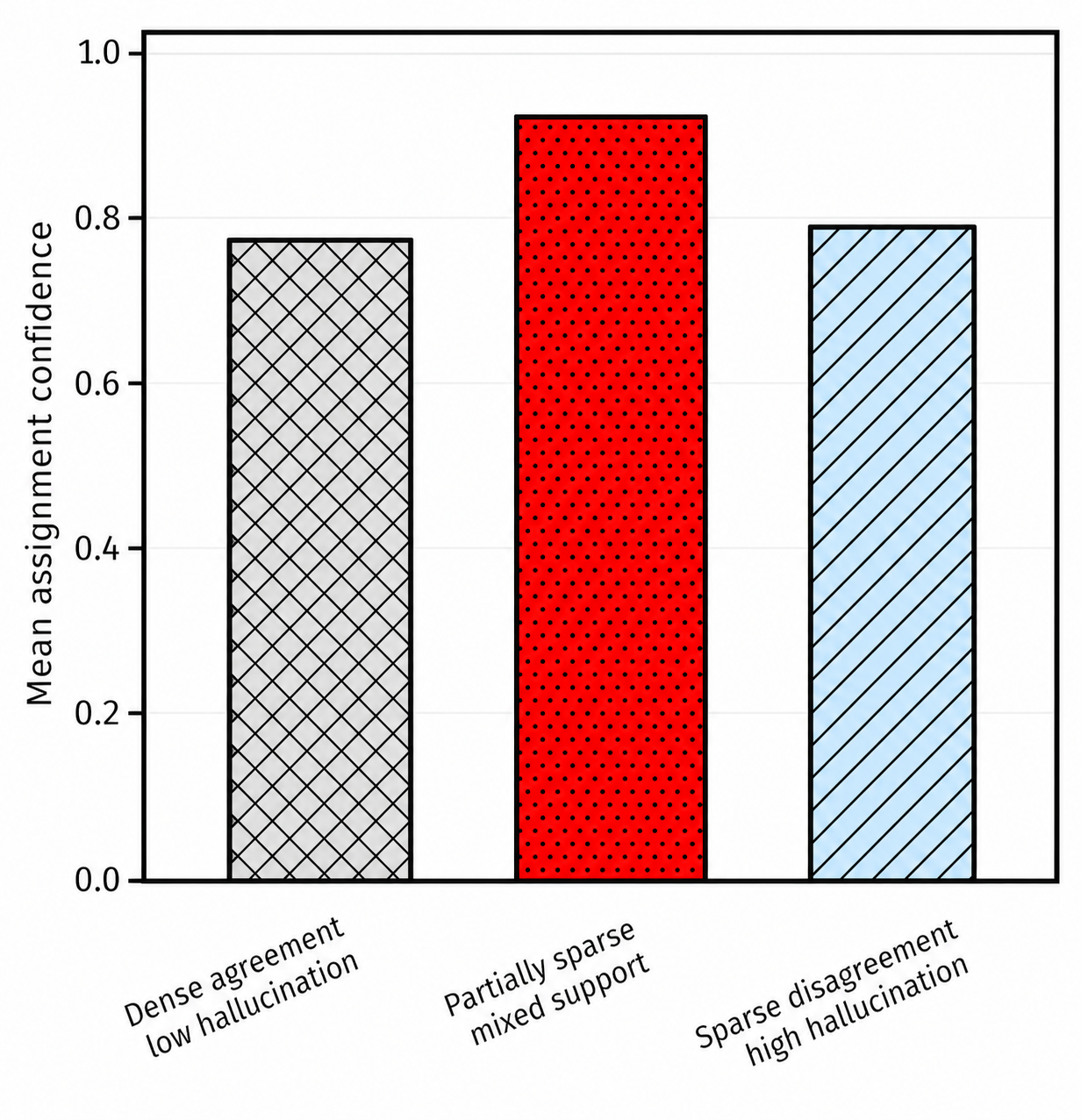}
        \caption{Assignment confidence.}
        \label{fig:density_confidence}
    \end{subfigure}
    \hfill
    \begin{subfigure}{0.32\linewidth}
        \centering
        \includegraphics[width=\linewidth]{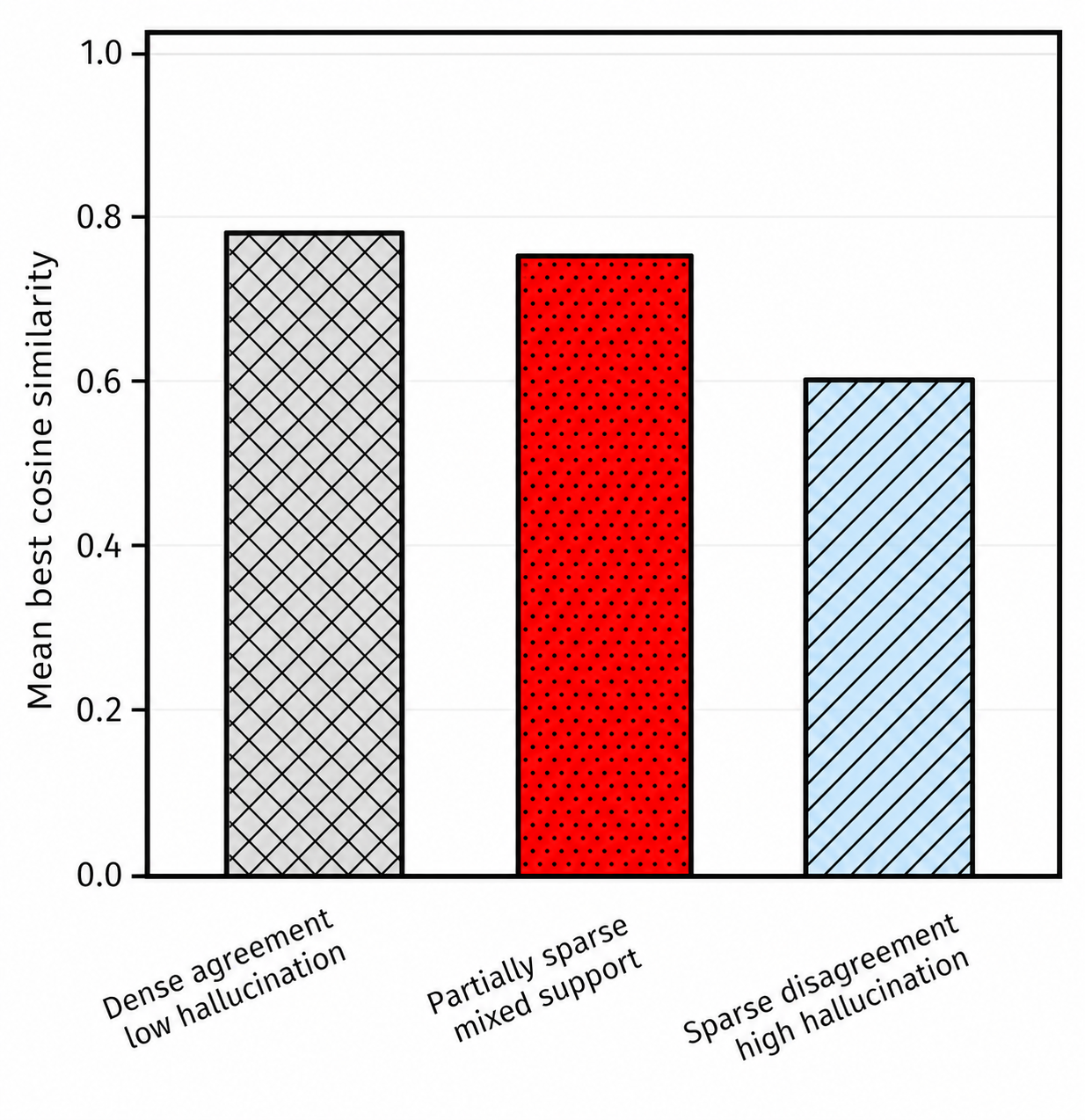}
        \caption{Best cosine similarity.}
        \label{fig:density_similarity}
    \end{subfigure}
    \caption{
    Density-based semantic entropy experiment. Dense agreement with $\Gamma$ yields lower entropy and strong semantic similarity, while sparse disagreement increases entropy and reduces alignment with the trusted context.
    }
    \label{fig:density_barplots}
\end{figure}

Figure~\ref{fig:density_barplots} supports this interpretation. In the dense-agreement regime, the generated $\ell$-grams explain the trusted context well: semantic entropy is low, assignment confidence is high, and the best cosine similarity remains strong. This indicates that the elements of $\Gamma$ are not merely numerous, but concentrated around the generated semantic units. In the partially sparse regime, the presence of both supported and unsupported claims creates a less coherent assignment structure. Some elements of $\Gamma$ remain close to valid exemplars, while others are pulled toward weaker or contradictory representatives. Finally, in the sparse-disagreement regime, the generated exemplars poorly explain the trusted context; assignments become less reliable and semantic entropy increases. Thus, the metric reacts to the quality of semantic coverage rather than to the size of $\Gamma$ alone.

This behavior also clarifies the expected sensitivity to $|\Gamma|$. If $\Gamma$ is too small, the entropy estimate may be unstable because a few statements can dominate the induced partition. In contrast, increasing $|\Gamma|$ with additional relevant and consistent statements should reduce this variance by providing a denser estimate of the trusted semantic support. However, increasing $|\Gamma|$ indiscriminately can have the opposite effect: irrelevant or weakly related statements create diffuse assignments and may artificially increase entropy. Therefore, the practical construction of $\Gamma$ should balance coverage and specificity, for example through top-$k$ retrieval, source filtering, or curation of trusted documents.

\begin{figure}[ht]
    \centering
    \includegraphics[width=0.5\linewidth]{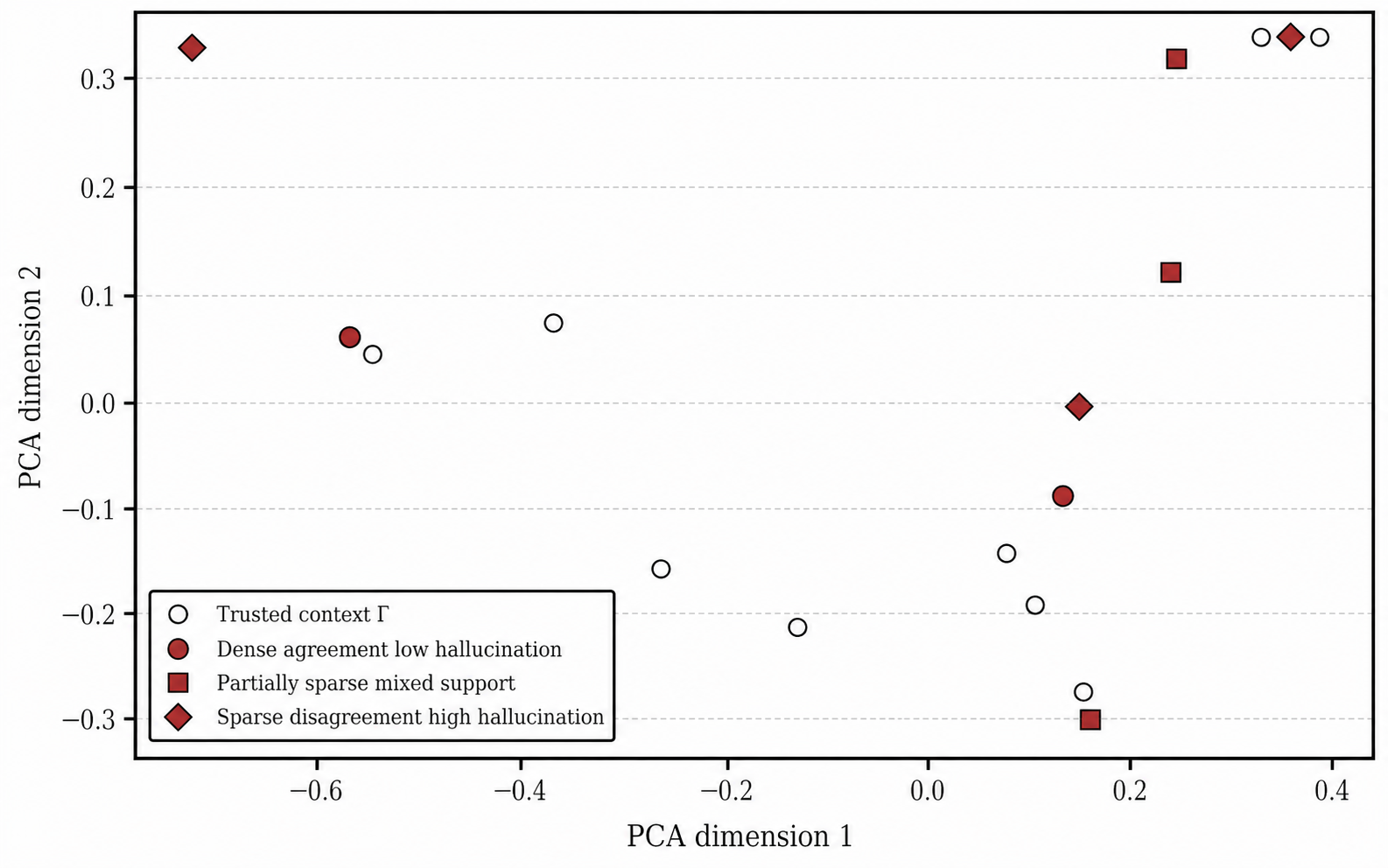}
    \caption{
    PCA visualization of the trusted context $\Gamma$ and the generated $\ell$-gram exemplars under the three regimes. Dense agreement places generated units near supported regions of $\Gamma$, whereas sparse disagreement moves exemplars away from the trusted semantic support, producing a more diffuse partition.
    }
    \label{fig:density_pca}
\end{figure}

Figure~\ref{fig:density_pca} provides a geometric view of the same phenomenon. In the dense-agreement case, the generated exemplars lie close to the regions occupied by the trusted context, so adding more consistent elements to $\Gamma$ would be expected to reinforce the same semantic neighborhood. In the sparse-disagreement case, the exemplars move away from the support of $\Gamma$, so additional trusted statements would not necessarily reduce entropy unless they are relevant to the generated claims. This shows that $\Gamma$ acts as a semantic reference distribution: its usefulness depends on whether it provides dense support near the correct response manifold.

The results show that semantic entropy formulated under \eqref{eqn:cluster_prob} is not merely a function of the number of clusters or the number of context statements, but of how strongly the generated exemplars explain the trusted context. When the generated response is semantically supported by a coherent $\Gamma$, the induced partition is concentrated and the entropy remains low. When the response contains unsupported or contradictory claims, context assignments become less coherent, increasing entropy. This supports the use of exemplar-based semantic entropy as a hallucination signal in \texttt{CAROL}, while also emphasizing that the construction of $\Gamma$ should prioritize reliable, relevant, and semantically coherent support rather than unfiltered context expansion.

\FloatBarrier

\newpage
\section{Axiomatic Context Construction}
\label{app:axioms}

In this section we clarify the nature and construction of the context $\Gamma$ introduced in Assumption~\ref{assump:context}.

\paragraph{Definition and role.}
The context $\Gamma$ is defined as a collection of trusted statements, modeled as a set of axioms $\mathcal{A}$, which serve as the reference for semantic evaluation throughout the generation process. Each element $x \in \Gamma$ represents a unit of information that is assumed to be factually correct within the scope of the task.

Importantly, $\Gamma$ does not aim to be exhaustive; rather, it provides a \emph{reliable grounding substrate} against which generated content is evaluated via entailment (Section~2).

\paragraph{Relation to retrieval-augmented generation (\texttt{RAG}).}
From a systems perspective, the construction of $\Gamma$ does not differ fundamentally from standard retrieval-augmented generation pipelines. In both cases, a query $\textsf{q}$ is used to retrieve relevant documents or pieces of information from external sources, forming an augmented prompt, see Figure~\ref{fig:agentic_pipeline}.

The distinction lies not in \emph{how} the context is retrieved, but in \emph{how it is interpreted}. Classical \texttt{RAG} treats retrieved context as auxiliary information to improve generation quality, without enforcing consistency between the output and the retrieved content. In contrast, our framework treats $\Gamma$ as an \emph{axiomatic reference}: generated statements are explicitly evaluated for semantic consistency with respect to $\Gamma$, and may be accepted or rejected accordingly.

\paragraph{Trusted knowledge requirement.}
A key requirement is that the elements of $\Gamma$ are \emph{trusted} within the application domain. This does not impose additional algorithmic complexity, but rather introduces a structural constraint on the source of the information. In practice, this can be implemented by restricting retrieval to curated or authoritative sources.

Examples include:
\begin{itemize}
    \item \textbf{Enterprise setting:} internal documents such as company policies, technical specifications, or verified knowledge bases.
    \item \textbf{Medical queries:} authoritative sources such as \cite{nih_website}, or peer-reviewed clinical guidelines.
    \item \textbf{Scientific research:} curated repositories such as arXiv or domain-specific digital libraries.
\end{itemize}

The \texttt{RAG} literature implicitly assumes that the retrieved context is reliable, although this assumption is rarely stated explicitly. In this work, we emphasize a closer connection with the logic literature, where the correctness of the underlying premises is fundamental. In particular, it is not sufficient for retrieved information to appear coherent or plausible; it must be grounded in verifiable truth. While this distinction is often overlooked in practical implementations, it is essential for ensuring that generated responses remain semantically valid and free from hallucination.

\paragraph{No increase in computational complexity.}
We emphasize that this requirement does not increase the computational complexity of the pipeline. The retrieval mechanism remains unchanged; only the \emph{selection criteria} for admissible sources differs. Therefore, $\Gamma$ can be constructed using standard retrieval systems, with an additional filtering or curation step to ensure reliability.

\paragraph{Interpretation.}
Conceptually, $\Gamma$ can be understood as defining a \emph{truth-constrained semantic space}. The role of the generation process is not merely to produce fluent text, but to construct a sequence $\mathsf{S}$ that remains within the entailment closure of $\Gamma$. In this sense, $\Gamma$ induces the feasible region of the optimization problem in (4).

This interpretation is consistent with the formulation in the main text, where hallucination is characterized as a deviation from the semantic support provided by $\Gamma$.

\paragraph{Beyond axiomatic truth.}
Although we model $\Gamma$ as an axiom set in this work, the same formulation can be extended to broader notions of correctness. In many scientific domains, truth is not specified purely by logical axioms, but by empirical validation, experimental evidence, simulation, or formal verification. Accordingly, $\Gamma$ may be generalized from a set of trusted statements to a task-specific correctness oracle. For example, in theorem proving or mathematical reasoning, $\Gamma$ could represent a proof checker that verifies whether each generated step follows from previous ones; in scientific settings, it could represent empirically validated knowledge or a domain simulator. This extension would not change the overall structure of \texttt{CAROL}: candidate strings would still be accepted or rejected according to their marginal contribution to correctness with respect to $\Gamma$. Formally, only the evaluation rule defining the acceptance probability changes, while the Markov chain interpretation, submodular objective structure, and resulting convergence bounds remain applicable whenever the induced correctness score satisfies the same diminishing-returns assumptions.

\newpage
\section{Lattice Representation}
\label{app:lattice}

Given the objective in~\eqref{eqn:main_problem}, a key aspect of our formulation is the interpretation of the generation process as a Markov chain over a semantic lattice. As discussed in Section~\ref{sec::preliminaries}, a language model $F(\rho,\theta)$ induces a probability distribution over tokens $v \in \mathbb{V}$. Consequently, the space of all possible generations can be represented as a lattice $\mathcal{L}(\mathbb{V})$, whose elements correspond to all finite sequences (or combinations) over the vocabulary.

This structure can be visualized as a Hasse diagram, where each node represents a sequence and edges encode prefix expansions. Under this view, the generation of an $\ell$-gram corresponds to a path through the lattice induced by (greedy) sampling from the model, as illustrated in Figure~\ref{fig::lattice_combined2}.

\begin{figure*}[h]
    \centering
    \begin{minipage}{0.45\textwidth}
        \centering
        \begin{tikzpicture}[
            scale=0.35,
            every node/.style={draw, circle, inner sep=1pt, font=\tiny},
            level 1/.style={sibling distance=4cm},
            level 2/.style={sibling distance=1.5cm},
            level 3/.style={sibling distance=0.8cm}
        ]
        \node (12345) at (0,12) {12345};
        \node (1234) at (-4,9.5) {1234};
        \node (1235) at (-2,9.5) {1235};
        \node (1345) at (0,9.5) {1345};
        \node (1245) at (2,9.5) {1245};
        \node (2345) at (4,9.5) {2345};
        \node (123) at (-6.8,6.5) {123};
        \node (124) at (-5.3,6.5) {124};
        \node (125) at (-3.8,6.5) {125};
        \node (134) at (-2.3,6.5) {134};
        \node (135) at (-0.8,6.5) {135};
        \node (145) at (0.7,6.5) {145};
        \node (234) at (2.2,6.5) {234};
        \node (235) at (3.7,6.5) {235};
        \node (245) at (5.2,6.5) {245};
        \node (345) at (6.7,6.5) {345};
        \node (12) at (-6.7,4) {12};
        \node (13) at (-5.2,4) {13};
        \node (14) at (-3.7,4) {14};
        \node (15) at (-2.2,4) {15};
        \node (23) at (-0.7,4) {23};
        \node (24) at (0.7,4) {24};
        \node (25) at (2.2,4) {25};
        \node (34) at (3.7,4) {34};
        \node (35) at (5.2,4) {35};
        \node (45) at (6.7,4) {45};
        \node (1) at (-4,2) {1};
        \node (2) at (-2,2) {2};
        \node (3) at (0,2) {3};
        \node (4) at (2,2) {4};
        \node (5) at (4,2) {5};
        \node (0) at (0,0) {$\emptyset$};
        \draw[black!20!white,dashed] (1) -- (0)
            (2) -- (0)
            (3) -- (0)
            (4) -- (0)
            (5) -- (0);
        \draw[black!20!white,dashed] (1) -- (12)
            (1) -- (13)
            (1) -- (14)
            (1) -- (15)
            (2) -- (12);
        \draw[black!20!white,dashed] (3) -- (13)
            (3) -- (23)
            (3) -- (34)
            (3) -- (35)
            (4) -- (14)
            (4) -- (24)
            (4) -- (34)
            (4) -- (45)
            (5) -- (15)
            (5) -- (25)
            (5) -- (35)
            (5) -- (45)
            (12) -- (123)
            (12) -- (124)
            (12) -- (125);
        \draw[black!20!white,dashed] (13) -- (123)
            (13) -- (134)
            (13) -- (135)
            (14) -- (124)
            (14) -- (134)
            (14) -- (145)
            (23) -- (234)
            (23) -- (123)
            (23) -- (235)
            (24) -- (234)
            (24) -- (124)
            (24) -- (245)
            (34) -- (134)
            (34) -- (234)
            (34) -- (345)
            (15) -- (125)
            (15) -- (145)
            (15) -- (135)
            (25) -- (245)
            (25) -- (235)
            (25) -- (125)
            (35) -- (135)
            (35) -- (235)
            (35) -- (345)
            (45) -- (145)
            (45) -- (245)
            (45) -- (345);
        \draw[black!20!white,dashed] (123) -- (1234)
            (123) -- (1235)
            (125) -- (1235)
            (125) -- (1245);
        \draw[black!20!white,dashed] (134) -- (1234)
            (134) -- (1345)
            (234) -- (1234)
            (234) -- (2345)
            (124) -- (1234)
            (124) -- (1245)
            (135) -- (1345)
            (135) -- (1235)
            (145) -- (1245)
            (145) -- (1345)
            (235) -- (1235)
            (235) -- (2345)
            (245) -- (1245)
            (245) -- (2345)
            (345) -- (2345)
            (345) -- (1345);
        \draw[black!20!white,dashed] (1234) -- (12345)
            (1235) -- (12345)
            (1345) -- (12345)
            (1245) -- (12345)
            (2345) -- (12345);
        \begin{scope}[on background layer]
            \node[draw=red!60!black, thick, rectangle, 
                  inner sep=1pt,
                  fit=(12)(13)(14)(15)(23)(24)(25)(34)(35)(45)] {};
        \end{scope}
        \begin{scope}[on background layer]
            \node[draw=red!60!black, thick, rectangle, 
                  inner sep=1pt,
                  fit=(1234)(1234)(1345)(1245)(2345)] {};
        \end{scope}
        \end{tikzpicture}
        \label{fig:lattice}
        
        \text{(a) Word lattice space}
    \end{minipage}
    \hfill
    \begin{minipage}{0.45\textwidth}
    \begin{tikzpicture}[
            scale=0.38,
            every node/.style={draw, circle, inner sep=1pt, font=\tiny},
            edgebg/.style={black!40!white, dashed}
        ]
            \node (12345) at (0,7.5) {12345};
            \node (1234) at (-4,5) {1234};
            \node (1235) at (-2,5) {1235};
            \node (1345) at (0,5) {1345};
            \node (1245) at (2,5) {1245};
            \node (2345) at (4,5) {2345};
            \node (12) at (-6.7,2.5) {12};
            \node (13) at (-5.2,2.5) {13};
            \node (14) at (-3.7,2.5) {14};
            \node (15) at (-2.2,2.5) {15};
            \node (23) at (-0.7,2.5) {23};
            \node (24) at (0.7,2.5) {24};
            \node (25) at (2.2,2.5) {25};
            \node (34) at (3.7,2.5) {34};
            \node (35) at (5.2,2.5) {35};
            \node (45) at (6.7,2.5) {45};
            \node (0) at (0,0) {$\emptyset$};
            \draw[black!20!white,dashed] (0) -- (12)
            (0) -- (13)
            (0) -- (14)
            (0) -- (15)
            (0) -- (23)
            (0) -- (24)
            (0) -- (25)
            (0) -- (34)
            (0) -- (35)
            (0) -- (45);
            \draw[black!20!white,dashed] (12) -- (1234)
            (12) -- (1235)
            (12) -- (1245)
            (13) -- (1235)
            (13) -- (1234)
            (13) -- (1345)
            (14) -- (1234)
            (14) -- (1345)
            (14) -- (1245)
            (15) -- (1235)
            (15) -- (1345)
            (15) -- (1245)
            (23) -- (1234)
            (23) -- (1235)
            (23) -- (2345)
            (24) -- (1234)
            (24) -- (1245)
            (24) -- (2345)
            (25) -- (1235)
            (25) -- (1245)
            (25) -- (2345)
            (34) -- (1234)
            (34) -- (1345)
            (34) -- (2345)
            (35) -- (1235)
            (35) -- (1345)
            (35) -- (2345)
            (45) -- (1345)
            (45) -- (2345)
            (45) -- (1245);
            \draw[black!20!white,dashed] (1234) -- (12345)
            (1235) -- (12345)
            (1345) -- (12345)
            (1245) -- (12345)
            (2345) -- (12345);
        \end{tikzpicture}
        \vspace{37pt}\\
        \text{(b) Contracted $\ell$-grams lattice space}
        \label{fig::lattice_combined}
    \end{minipage}
    \caption{Hasse diagram for a ground set of five words $\mathbb{V}=\{in, the, mat, cat, is\}$, represented as $\{1,2,3,4,5\}$. In a), all the possible combinations of words are represented in an ordered lattice space where each label comprises combinations of same cardinality. In b) a contracted lattice space is represented, formed by taking $\ell$-grams of cardinality $2$ and $4$ of a). Abstractly, b) represents the combinations of \emph{meaningful} $\ell$-grams from a). For example, $'in~mat~the'$ is an $\ell$-gram with no meaning, so it will be included in a) but not in b).}
    \label{fig::lattice_combined2}
\end{figure*}

It is important to note that the full lattice $\mathcal{L}(\mathbb{V})$ is combinatorially large and, in practice, computationally intractable to explore exhaustively. However, not all elements of $\mathcal{L}(\mathbb{V})$ correspond to semantically meaningful sequences. In particular, we are only interested in $\ell$-grams that form coherent units of meaning (e.g., \emph{``the cat is''}), while sequences such as \emph{``the of in''} are disregarded.

This observation motivates the introduction of a \emph{contracted lattice} $\hat{\mathcal{L}}(\mathbb{V})$, which restricts the search space to semantically valid $\ell$-grams (Figure~\ref{fig::lattice_combined2}). In this reduced space, the generation process can be interpreted as navigating only through meaningful regions of the lattice, effectively pruning infeasible or non-coherent sequences.

From a functional perspective, this restriction induces a \emph{removable discontinuity} in the mapping $F(\rho,\theta)$: the function is well-defined only on the subset of semantically valid sequences, while remaining undefined elsewhere. This reflects the fact that the model implicitly assigns negligible or irrelevant probability mass to non-meaningful combinations.

In practice, Algorithm~\ref{alg:algorithm} implements this restriction explicitly, as lines $5$--$8$ enforce the construction of semantically meaningful $\ell$-grams before evaluation.

\begin{defn}
A \emph{removable discontinuous string function} $f$ is a function defined over $\mathbb{V}^*$ whose value is undefined for a subset of sequences $\mathbb{S} \subset \mathbb{V}^*$ corresponding to semantically invalid or infeasible elements.
\label{defn::removable_function}
\end{defn}

\newpage
\section{Fine-tuning Modeling}
\label{app:fine_tuning}

As discussed in Section~\ref{sec::method}, \texttt{CAROL} incorporates an \emph{update} step to guide the model toward improved responses. One possible implementation, not explored experimentally in this work, is to perform model fine-tuning via a reinforcement learning (RL) procedure. In this section, we outline a mathematical formulation of such an update mechanism and its integration within Algorithm~\ref{alg:algorithm}.

Let $\mathcal{X}$ denote the space of token sequences and $\mathcal{D} \subseteq \mathcal{X}$ a dataset of labeled examples. We assume access to a pretrained auto-regressive language model that maps an input sequence $x \in \mathcal{X}$ to a probability distribution over the next token in a vocabulary $\mathbb{V}$ of size $n$. The probability assigned to a token $y \in \mathbb{V}$ is given by the softmax transformation
\begin{equation}
    s_y(F(x;\theta)) = \frac{\exp(F_{y}(x;\theta))}{\sum_{i=1}^{n} \exp(F_i(x;\theta))},
    \label{eqn::softmax}
\end{equation}
where $F(x;\theta) \in \mathbb{R}^n$ denotes the vector of logits. We use $F(x;\theta^\star)$ to denote the ground-truth logits associated with input $x$.

For illustration, consider the prompt $x = \text{``The cat sat on the''}$ and a vocabulary $\mathbb{V} = [\text{mat, table, chair, car}]$. Suppose the model produces logits $F(x;\theta) = [2.0, 1.0, 0.5, -1.0]$. Then, using~\eqref{eqn::softmax}, the probability assigned to \textit{mat} is
\[
s_{\text{mat}}(F(x;\theta)) = \frac{\exp(2.0)}{\exp(2.0) + \exp(1.0) + \exp(0.5) + \exp(-1.0)} \approx 0.61.
\]
We refer to the model input as a \emph{prompt} $\rho$, which may include both query and contextual information.

To model fine-tuning, we adopt a surrogate linear representation $F(x;\theta) = W_\theta \phi(x)$, where $W_\theta \in \mathbb{R}^{n \times d}$ are learnable parameters and $\phi(x) \in \mathbb{R}^d$ is a feature embedding. This surrogate captures the latent structure induced by the pretrained model and provides a tractable approximation for optimization. Such representations are commonly used to approximate and update large models.

We define the regularized negative log-likelihood loss
\begin{equation}
\mathcal{L}(\theta; \mathcal{D}) = -\sum_{(x,y)\in\mathcal{D}} \log s_y(F(x;\theta)) 
+ \frac{\lambda}{2} \|W_\theta - W_\theta^{\text{pre}}\|_F^2,
\end{equation}
where $\|\cdot\|_F$ denotes the Frobenius norm, $W_\theta^{\text{pre}}$ are the pretrained weights, and $\lambda > 0$ controls the deviation from the original model.

The fine-tuned parameters are obtained as
\begin{equation}
W_\theta^\star = \arg\min_{W_\theta} \mathcal{L}(\theta; \mathcal{D}).
\end{equation}

\paragraph{Construction of the fine-tuning dataset.}
A key practical aspect is the construction of the dataset $\mathcal{D}$ used for fine-tuning. Within the \texttt{CAROL} framework, a natural choice is to collect tuples of the form $(\rho, \mathbb{S}^\star)$, where $\rho$ denotes the prompt (including query and context $\Gamma$) and $\mathbb{S}^\star$ corresponds to a semantically consistent response obtained through the acceptance--rejection process. In this sense, $\mathcal{D}$ can be built online by logging accepted generations, optionally paired with rejected candidates to provide negative feedback signals.

Importantly, the required dataset size is significantly smaller than in standard pretraining or supervised fine-tuning settings. Since \texttt{CAROL} operates at test time and enforces semantic consistency through its objective, the role of fine-tuning is primarily to \emph{bias} the proposal distribution toward more informative regions of the lattice, rather than to learn the task from scratch. Consequently, even a limited number of high-quality prompt–response pairs can be sufficient to improve performance, particularly when combined with regularization toward the pretrained weights. We leave a systematic study of data requirements and online dataset construction strategies for future work.

\newpage
\section{Theoretical Analysis}
\label{app:analysis}

\subsection{Greedoid structure of LLMs generation}

\begin{prop}[Greedoid structure of feasible LLM generations]
\label{prop:llm_greedoid}
Let $\mathbb{V}$ denote the set of semantic units (e.g., tokens, phrases, or $\ell$-grams) and let $\mathcal{I}\subseteq \mathbb{V}^*$ denote the family of
partial generations that satisfy a semantic feasibility oracle
$\mathcal{O}:\mathbb{V}^*\rightarrow\{0,1\}$. Assume that $\mathcal{O}$ satisfies:

\begin{enumerate}[label=\textbf{(A\arabic*)}]
    \item \textbf{Empty feasibility:} $\mathcal{O}(\emptyset)=1$.
    \item \textbf{Prefix accessibility:} if $\mathcal{O}(\mathsf{S})=1$ and
    $\mathsf{S}\neq \emptyset$, then deleting the last semantic unit preserves
    feasibility, i.e.,
    $\mathcal{O}(\mathsf{S}_{1:|\mathsf{S}|-1})=1$.
    \item \textbf{Extendability:} if $\mathsf{S}_1,\mathsf{S}_2\in\mathcal{I}$
    and $|\mathsf{S}_1|<|\mathsf{S}_2|$, then there exists a semantic unit
    $s\in \mathsf{S}_2\ominus \mathsf{S}_1$ such that
    $\mathsf{S}_1\oplus s\in\mathcal{I}$.
\end{enumerate}

Then $(\mathbb{V},\mathcal{I})$ forms a greedoid. \boxend
\end{prop}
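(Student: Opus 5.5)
The plan is to check the three greedoid axioms (G1)--(G3) one at a time. Throughout, $\mathcal{I}=\{\mathsf{S}\in\mathbb{V}^*:\mathcal{O}(\mathsf{S})=1\}$, and each axiom should follow from the matching assumption (A1)--(A3). Membership in $\mathcal{I}$ is defined by the oracle, so every step amounts to rewriting the statement ``$\mathsf{S}\in\mathcal{I}$'' as ``$\mathcal{O}(\mathsf{S})=1$''.

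\textbf{(G1).} By (A1), $\mathcal{O}(\emptyset)=1$, so $\emptyset\in\mathcal{I}$.

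\textbf{(G2).} Let $\mathsf{S}=(s_1,\ldots,s_k)\in\mathcal{I}$ be nonempty. Choose the witness $s:=s_k$, the last unit. Then $\mathsf{S}\ominus\{s_k\}=\mathsf{S}_{1:k-1}$, and (A2) gives $\mathcal{O}(\mathsf{S}_{1:k-1})=1$. Hence $\mathsf{S}\ominus\{s\}\in\mathcal{I}$.

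I will state explicitly the convention that $\ominus$ applied to the last element is prefix truncation. If $s_k$ occurs more than once in $\mathsf{S}$, then ``removing $s$'' is ambiguous for sequences, so fixing this convention is what makes the step go through. Prefix-closedness is strictly stronger than the (G2) requirement, which only asks that \emph{some} element can be removed. In particular, iterating (A2) shows that every prefix of a feasible sequence is feasible, which is the hereditary property on chains used elsewhere.

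\textbf{(G3).} This is literally (A3) once $\mathcal{I}$ is identified with the feasible family: $\mathsf{S}_1\oplus s\in\mathcal{I}$ is the same statement as $\mathcal{O}(\mathsf{S}_1\oplus s)=1$. Nothing remains to prove, but I will note that the witness $s$ is taken from the set difference $\mathsf{S}_2\ominus\mathsf{S}_1$ of the underlying element sets, exactly as in the greedoid definition.

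The only real obstacle is bookkeeping rather than mathematics. The paper's greedoid definition uses set-style operations ($\ominus$, $s\in\mathsf{S}$) on sequences, while the proposition is phrased over strings. The step I expect to need care is (G2): I would first fix the interpretation of $\mathsf{S}\ominus\{s\}$ as deleting the final occurrence. This makes (A2) a direct instance of (G2). After that, the axioms match one-to-one and the proposition follows immediately.
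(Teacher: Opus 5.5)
Your proposal is correct and follows the paper's proof: both check (G1)--(G3) directly from (A1)--(A3), take the last unit as the accessibility witness so that (A2) gives the feasible prefix $\mathsf{S}_{1:|\mathsf{S}|-1}$, and note that (A3) is verbatim the exchange axiom. Your explicit convention that $\mathsf{S}\ominus\{s_k\}$ means deleting the final occurrence, i.e., prefix truncation, is a useful clarification the paper leaves implicit.
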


\begin{proof}
We verify the three greedoid axioms. First, by \textbf{(A1)}, the empty
generation is feasible, hence $\emptyset\in\mathcal{I}$, which proves
non-emptiness.

Second, let $\mathsf{S}\in\mathcal{I}$ be nonempty. Since $\mathsf{S}$ is a
valid partial generation, $\mathcal{O}(\mathsf{S})=1$. By prefix accessibility
\textbf{(A2)}, removing the last generated semantic unit yields the prefix
$\mathsf{S}_{1:|\mathsf{S}|-1}$, which remains feasible. Therefore, there exists
an element $s\in\mathsf{S}$ whose removal preserves feasibility, establishing
accessibility.

Third, let $\mathsf{S}_1,\mathsf{S}_2\in\mathcal{I}$ with
$|\mathsf{S}_1|<|\mathsf{S}_2|$. By the extendability assumption
\textbf{(A3)}, there exists a semantic unit
$s\in\mathsf{S}_2\ominus\mathsf{S}_1$ such that
$\mathsf{S}_1\oplus s\in\mathcal{I}$. This is exactly the exchange property.
Therefore, $\mathcal{I}$ satisfies non-emptiness, accessibility, and exchange,
and hence $(\mathbb{V},\mathcal{I})$ is a greedoid.
\end{proof}

\subsection{Proof of String Submodularity and Monotonicity in \eqref{eqn:main_problem}}

\begin{lem}[Extension from tokens to semantic strings]
\label{lem:token_to_string_extension}
Let $F:\mathbb{V}^*\rightarrow\mathbb{R}_{\ge 0}$ be string-submodular and prefix-monotone with respect to single-token extensions. Let $\mathsf{U}=(u_1,\ldots,u_m)\in\mathbb{V}^*$ be a finite semantic string, e.g., an $\ell$-gram. Then, for any two prefixes $\mathbf{a}\preceq \mathbf{b}$,
$$F(\mathbf{a}\oplus \mathsf{U})-F(\mathbf{a}) \ge F(\mathbf{b}\oplus \mathsf{U})-F(\mathbf{b}).$$
Moreover,
$$F(\mathsf{S}\oplus \mathsf{U})\ge F(\mathsf{S})$$
for any $\mathsf{S}\in\mathbb{V}^*$. \boxend
\end{lem}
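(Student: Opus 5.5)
The plan is a telescoping argument over the tokens of $\mathsf{U}$, using only the token-level properties assumed in Lemma~\ref{lem:token_to_string_extension}. For $k=0,\ldots,m$ write $\mathsf{U}_{1:k}=(u_1,\ldots,u_k)$, with $\mathsf{U}_{1:0}=\emptyset$. Then
\begin{equation*}
F(\mathbf{a}\oplus\mathsf{U})-F(\mathbf{a})=\sum_{k=1}^{m}\Big[F(\mathbf{a}\oplus\mathsf{U}_{1:k-1}\oplus u_k)-F(\mathbf{a}\oplus\mathsf{U}_{1:k-1})\Big],
\end{equation*}
and the same identity holds with $\mathbf{b}$ in place of $\mathbf{a}$. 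The inequality will follow by comparing the two sums term by term.

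\textbf{Step 1 (prefix order is preserved).} From $\mathbf{a}\preceq\mathbf{b}$ alone we cannot conclude $\mathbf{a}\oplus\mathsf{U}_{1:k-1}\preceq\mathbf{b}\oplus\mathsf{U}_{1:k-1}$; for example, take $\mathbf{a}=(x)$, $\mathbf{b}=(x,y)$, $u_1=z$. This is the main obstacle, since string submodularity compares marginal gains only along a single prefix chain.

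I would handle it with a nested-interleaving chain. Set $\mathbf{b}=\mathbf{a}\oplus\mathbf{c}$. The state $\mathbf{a}\oplus\mathsf{U}_{1:k-1}$ is a prefix of $\mathbf{a}\oplus\mathsf{U}_{1:k-1}\oplus\mathbf{c}$. So if the string-submodularity inequality is read permutation-invariantly, i.e.\ applied to the ordered prefix $\mathbf{a}\oplus\mathsf{U}_{1:k-1}\oplus\mathbf{c}$ in place of $\mathbf{b}\oplus\mathsf{U}_{1:k-1}$, we get
\begin{equation*}
\Delta(u_k\mid\mathbf{a}\oplus\mathsf{U}_{1:k-1})\ \ge\ \Delta(u_k\mid\mathbf{a}\oplus\mathsf{U}_{1:k-1}\oplus\mathbf{c}).
\end{equation*}
This requires that $F$, restricted to the clustering objective $\mathsf{I}(\cdot;\Gamma)$, depends on a sequence only through its set of medoids. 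That is true for the exemplar clustering of Algorithm~\ref{alg:clustering}, because cluster assignment ignores the order of the medoids. Under this identification $F(\mathbf{a}\oplus\mathsf{U}_{1:k-1}\oplus\mathbf{c})=F(\mathbf{b}\oplus\mathsf{U}_{1:k-1})$, and the comparison becomes legitimate. I would make this assumption explicit, noting that in the purely string setting the statement otherwise holds only when $\mathbf{a}=\mathbf{b}$ or under this kind of order-insensitivity.

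\textbf{Step 2 (sum).} Summing the per-token inequalities over $k$ and telescoping both sides gives
\begin{equation*}
F(\mathbf{a}\oplus\mathsf{U})-F(\mathbf{a})\ \ge\ F(\mathbf{b}\oplus\mathsf{U})-F(\mathbf{b}).
\end{equation*}
The side condition that $u_k$ does not appear in the longer string also needs attention. When repeated medoids occur, the marginal gain on the right is zero, since a duplicated medoid changes no assignments. The inequality then still holds provided the left-hand gain is nonnegative, which is exactly what monotonicity in Step 3 supplies.

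\textbf{Step 3 (monotonicity).} Single-token prefix-monotonicity gives $F(\mathsf{S}\oplus\mathsf{U}_{1:k})\ge F(\mathsf{S}\oplus\mathsf{U}_{1:k-1})$ for every $k$. Chaining these $m$ inequalities yields $F(\mathsf{S}\oplus\mathsf{U})\ge F(\mathsf{S})$.
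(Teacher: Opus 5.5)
Your telescoping skeleton and your Step~3 are the same as the paper's proof. Your Step~1 diagnosis is correct, and it exposes a real gap in that proof. The paper simply asserts that $\mathbf{a}\preceq\mathbf{b}$ implies $\mathbf{a}\oplus(u_1,\ldots,u_j)\preceq\mathbf{b}\oplus(u_1,\ldots,u_j)$, and then applies string submodularity term by term. That fails exactly as in your example. The paper also never checks the side condition that $u_j$ not occur in $\mathbf{b}_{j-1}$.

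The gap cannot be closed, because the first claim of the lemma is false for general string-submodular, prefix-monotone $F$. Take $\mathbb{V}=\{c,u_1,u_2\}$ and let $F(\mathsf{S})$ be the number of distinct symbols in $\mathsf{S}$ minus $\mathbf{1}[\mathsf{S}\text{ starts with }u_1\text{ and contains }u_2]$. Its marginal gains are:
\begin{itemize}
\item $\Delta(s\mid\emptyset)=1$ for every $s$;
\item $\Delta(s\mid\mathsf{S})=\mathbf{1}[s\notin\mathsf{S}]$ for nonempty $\mathsf{S}$ not starting with $u_1$;
\item $\Delta(s\mid\mathsf{S})=\mathbf{1}[s\notin\mathsf{S},\ s\neq u_2]$ for $\mathsf{S}$ starting with $u_1$.
\end{itemize}
Two comparable nonempty strings share their first symbol. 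Hence $F$ is normalized, nonnegative, prefix-monotone and string submodular, even without the side condition. Yet for $\mathbf{a}=\emptyset\preceq\mathbf{b}=(c)$ and $\mathsf{U}=(u_1,u_2)$ we get $F(\mathbf{a}\oplus\mathsf{U})-F(\mathbf{a})=1<2=F(\mathbf{b}\oplus\mathsf{U})-F(\mathbf{b})$. So an extra hypothesis such as yours is necessary, and under it your argument is sound. Be explicit that you are proving a narrower, correct statement, not the lemma as written; no proof can establish the latter.

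A few points to tighten.
\begin{itemize}
\item In Step~1 no ``permutation-invariant reading'' of the inequality is needed. $\mathbf{a}\oplus\mathsf{U}_{1:k-1}\preceq\mathbf{a}\oplus\mathsf{U}_{1:k-1}\oplus\mathbf{c}$ is a genuine prefix relation, so string submodularity applies as stated. Order-insensitivity is used only to replace $F(\mathbf{a}\oplus\mathsf{U}_{1:k-1}\oplus\mathbf{c})$ and $F(\mathbf{a}\oplus\mathsf{U}_{1:k-1}\oplus\mathbf{c}\oplus u_k)$ by $F(\mathbf{b}\oplus\mathsf{U}_{1:k-1})$ and $F(\mathbf{b}\oplus\mathsf{U}_{1:k})$.
\item State the hypothesis cleanly as ``$F(\mathsf{S})$ depends only on the set of symbols occurring in $\mathsf{S}$.'' String submodularity then reduces to ordinary set submodularity, and the lemma becomes the standard fact for set functions. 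Your handling of repeated symbols (zero right-hand gain, nonnegative left-hand gain by monotonicity) is correct in that setting.
\item Your justification that $\mathsf{I}(\cdot;\Gamma)$ has this property is only heuristic. The medoids are whole $\ell$-grams embedded through $\phi$. If the $u_k$ are tokens inside a single $\ell$-gram, the objective is not a function of the token set. The hypothesis is plausible only when the atoms of $\mathbb{V}$ are themselves the semantic units used as medoids.
\item ``Holds only when $\mathbf{a}=\mathbf{b}$'' overstates things. For $m=1$ the claim is just the definition of string submodularity; the failure begins at $m\ge 2$.
\end{itemize}
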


\begin{proof}
Write the semantic string $\mathsf{U}$ as an ordered sequence of atomic units
$\mathsf{U}=(u_1,\ldots,u_m)$. Define the intermediate prefixes
$\mathbf{a}_j := \mathbf{a}\oplus(u_1,\ldots,u_j),
~\mathbf{b}_j := \mathbf{b}\oplus(u_1,\ldots,u_j),$
with $\mathbf{a}_0=\mathbf{a}$ and $\mathbf{b}_0=\mathbf{b}$. Since
$\mathbf{a}\preceq \mathbf{b}$, we also have $\mathbf{a}_j\preceq \mathbf{b}_j$
for every $j=0,\ldots,m-1$.

Using a telescoping decomposition, 
$$F(\mathbf{a}\oplus\mathsf{U})-F(\mathbf{a}) =\sum_{j=1}^{m} \Big(F(\mathbf{a}_{j-1}\oplus u_j)-F(\mathbf{a}_{j-1}) \Big).$$

By string submodularity for single-token extensions and
$\mathbf{a}_{j-1}\preceq \mathbf{b}_{j-1}$,
$$F(\mathbf{a}_{j-1}\oplus u_j)-F(\mathbf{a}_{j-1}) \ge F(\mathbf{b}_{j-1}\oplus u_j)-F(\mathbf{b}_{j-1}).$$
Summing over $j=\{1,\ldots,m\}$ gives
$$F(\mathbf{a}\oplus \mathsf{U})-F(\mathbf{a}) \ge F(\mathbf{b}\oplus \mathsf{U})-F(\mathbf{b}),$$
which proves diminishing returns for appending an entire semantic string.

Finally, prefix-monotonicity for single-token extensions implies
$$F(\mathsf{S}\oplus u_1)\ge F(\mathsf{S}),\quad
F(\mathsf{S}\oplus u_1\oplus u_2)\ge F(\mathsf{S}\oplus u_1),$$
and so on. Chaining these inequalities yields,
$$F(\mathsf{S}\oplus \mathsf{U})\ge F(\mathsf{S}),$$
so monotonicity also extends from token additions to semantic-string additions.
\end{proof}

\begin{proof}
    This proof is based on \cite[Theorem 8.11]{AK-JH:25}. We first define the marginal gain of appending token $v$ to a string $\mathsf{S}$ as
    $$\Delta_F(v \mid \mathsf{S}) := F(\mathsf{S} \oplus v) - F(\mathsf{S}).$$
    Using the chain rule for mutual information,
    $$I(\mathsf{S} \oplus v; \Gamma)=I(\mathsf{S}; \Gamma) + I(v; \Gamma \mid \mathsf{S}),$$
    hence
    $$\Delta_F(v \mid \mathsf{S}) = I(v; \Gamma \mid \mathsf{S}).$$
    Expanding conditional mutual information with semantic entropy gives
    $$\Delta_F(v \mid \mathsf{S}) = \textsf{SE}(v \mid \mathsf{S}) - \textsf{SE}(v \mid \Gamma, \mathsf{S}).$$
    By the conditional-noise assumption,
    $$\textsf{SE}(v \mid \Gamma, \mathsf{S}) = \textsf{SE}(v \mid \Gamma),$$
    so
    $$\Delta_F(v \mid \mathsf{S}) = \textsf{SE}(v \mid \mathsf{S}) - \textsf{SE}(v \mid \Gamma).$$
    Now let $\mathbf{a} \preceq \mathbf{b}$. Since $\mathbf{b}$ contains all information in $\mathbf{a}$ plus possibly more, conditioning on $\mathbf{b}$ cannot increase entropy. Therefore, by the ``information never hurts'' principle,
    $$\textsf{SE}(v \mid \mathbf{a}) \ge \textsf{SE}(v \mid \mathbf{b}).$$
    Subtracting the same quantity \(H(v \mid \Gamma)\) from both sides yields
    $$\Delta_F(v \mid \mathbf{a}) = \textsf{SE}(v \mid \mathbf{a}) - \textsf{SE}(v \mid \Gamma) \ge \textsf{SE}(v \mid \mathbf{b}) - \textsf{SE}(v \mid \Gamma) = \Delta_F(v \mid \mathbf{b}).$$
    Thus $F$ satisfies diminishing returns along prefixes, so it is string-submodular.

    To show monotonicity, observe that for any string $\mathsf{S}$ and token $v$,
    $$F(\mathsf{S} \oplus v) - F(\mathsf{S}) = I(v; \Gamma \mid \mathsf{S}) \ge 0,$$
    since conditional mutual information is always nonnegative. Hence
    $$F(\mathsf{S}) \le F(\mathsf{S} \oplus v),$$
    and by repeated concatenation, $F$ is monotone over $\mathbb{V}^*$. Then, by Lemma~\ref{lem:token_to_string_extension}, the proof is concluded.
\end{proof}

\subsection{Proof of Proposition~\ref{prop:greedy+clustering}}

\begin{proof}
    This proof is based on \cite[Definition 5]{HP-AS-SB-KM:21}. For each context element $x \in \Gamma$, define the point-wise utility
    \begin{equation}
    f_x(\mathbb{S}):=d(x,e_0) - \min_{\mathsf{S} \in \mathbb{S} \cup \{e_0\}} d(x,\mathsf{S}).
    \label{eq:pointwise_utility}
    \end{equation}
    Then
    \begin{equation}
    f(\mathbb{S}) =\frac{1}{|\Gamma|}\sum_{x \in \Gamma} f_x(\mathbb{S}).
    \label{eq:sum_pointwise}
    \end{equation}
Hence it is enough to prove that each $f_x$ is normalized, monotone, and
submodular, since nonnegative linear combinations preserve these properties.

We first rewrite $f_x$. For any $x \in \Gamma$,
$$\min_{\mathsf{S} \in \mathbb{S} \cup \{e_0\}} d(x,\mathsf{S}) = d(x,e_0) - \max_{\mathsf{S} \in \mathbb{S}} \bigl(d(x,e_0)-d(x,\mathsf{S})\bigr)_+,$$
where $(a)_+ := \max\{a,0\}$. Therefore,
\begin{equation}
f_x(\mathbb{S}) = \max_{\mathsf{S} \in \mathbb{S}} w_x(\mathsf{S}), \qquad w_x(\mathsf{S}):=\bigl(d(x,e_0)-d(x,\mathsf{S})\bigr)_+.
\label{eq:max_form}
\end{equation}

\paragraph{Normalization.}
For the empty set, no exemplar is selected, so
$$f_x(\emptyset)=d(x,e_0)-d(x,e_0)=0.$$
Thus $f_x(\emptyset)=0$, and by \eqref{eq:sum_pointwise}, $f(\emptyset)=0$.

\paragraph{Monotonicity.}
If $A \subseteq B$, then the maximization in \eqref{eq:max_form} is taken over a
larger set for $B$, so
$$f_x(A)=\max_{\mathsf{S}\in A} w_x(\mathsf{S}) \le \max_{\mathsf{S}\in B} w_x(\mathsf{S})=f_x(B).$$
Hence each $f_x$ is monotone, and therefore so is $f$.

\paragraph{Submodularity.}
Let $A \subseteq B \subseteq \mathcal{U}$ and let $u \in \mathcal{U}\ominus B$.
We must show
$$f_x(A \cup \{u\}) - f_x(A) \;\ge\; f_x(B \cup \{u\}) - f_x(B).$$
Write, $\qquad a := \max_{\mathsf{S}\in A} w_x(\mathsf{S}), \qquad b := \max_{\mathsf{S}\in B} w_x(\mathsf{S}), \qquad c := w_x(u).$

Since $A \subseteq B$, we have $a \le b$. Using \eqref{eq:max_form},
$$f_x(A \cup \{u\}) - f_x(A) = \max\{a,c\} - a,$$
and
$$f_x(B \cup \{u\}) - f_x(B) = \max\{b,c\} - b.$$
Now consider two cases.

\emph{Case 1:} $c \le b$. Then
$$\max\{b,c\} - b = 0,$$
while
$$\max\{a,c\} - a \ge 0.$$
Hence
$$\max\{a,c\} - a \ge \max\{b,c\} - b.$$

\emph{Case 2:} $c > b$. Since $a \le b < c$, we have
$$\max\{a,c\} - a = c-a, \qquad \max\{b,c\} - b = c-b.$$
Because $a \le b$, it follows that
$$c-a \ge c-b.$$
Thus again
$$f_x(A \cup \{u\}) - f_x(A)\;\ge\; f_x(B \cup \{u\}) - f_x(B).$$

Therefore $f_x$ is submodular. Since $f$ is an average of the functions $f_x$, it is also submodular.

Having established that $f$ is normalized, monotone, and submodular, the
classical \cite{GN-LW-MF:78} result for greedy maximization under the cardinality
constraint $|\mathsf{S}|\le \ell$ yields
$$f(\mathbb{S}) \ge (1-e^{-1}) f(\mathbb{S}^\star),$$
which proves the claim.
\end{proof}

\subsection{Equivalence of Algorithm~\ref{alg:algorithm} to Gibbs Sampling}

\begin{rem}[Gibbs accept--reject step]
\label{rem:gibbs}
Following~\cite{AG-HH-AK:15}, \texttt{CAROL} induces a distribution over a finite candidate set $\mathbb{V}$ of $\ell$-grams,
\[
p(\mathsf{S}) \propto \exp(\beta F(\mathsf{S})), \quad \mathsf{S}\subseteq\mathbb{V}^*,
\]
with $F(\mathsf{S})=\mathsf{I}(\mathsf{S};\Gamma)$. Given a candidate $s_i\in\mathbb{V}$, the add/remove marginal
\[
\Delta F(s_i\mid \mathsf{S}_t)
=
F(\mathsf{S}_t\cup\{s_i\})-
F(\mathsf{S}_t\ominus\{s_i\})
\]
induces the update
\[
p_{\mathrm{add}}=
\frac{\exp(\beta \Delta F(s_i\mid \mathsf{S}_t))}
{1+\exp(\beta \Delta F(s_i\mid \mathsf{S}_t))},
\]
which corresponds to a Gibbs step for $p(\mathsf{S})$ without requiring explicit normalization.
\end{rem}


\subsection{Proof of Theorem~\ref{thm:fast_mixing}}

\begin{proof}
This proof is developed by \emph{path coupling} technique where, following Definition~\ref{defn::mixing_times}, mixing time is defined such as
\begin{subequations}
 \begin{align*}
     \Delta(t) &= \max_{s} || P_s^t-\pi || = \max_s || P_s^t -\sum_w\pi(w)P_w^t || \leq \max_{s,s^\prime} || P_s^t-P_{s,s^\prime}^t || \leq 2\Delta(t)
 \end{align*}
\end{subequations}
being $\pi$ the stationary distribution and $P_s^t$ the probability distribution at time $t$ starting from $s$. Then, let $J_t$ be a joint distribution over two marginal distributions $X_t\sim P_s^t$ and $Y_t\sim P_{s^\prime}^t$ where $|| P_s^t-P_{s^\prime}^t||\leq Pr[X_t\neq Y_t]$. Therefore, if we can find a transition function $P$ such that both processes join so that $X_t = Y_t$, this bound bound the total mixing time such that
$$\Delta(t)\leq\max_{s,s^\prime} Pr[T_{s,s^\prime}>t]$$
where $T_{s,s^\prime}=\min\{t:X_t=Y_t|X_0=s,Y_0=s^\prime\}$.

\begin{thm}[\cite{RB-MD:97}]
    For any Markov Chain $J_t$, let $(X_t,Y_t)$ be a coupling such that for some $a\geq 0$ and $x,y\in\mathcal{P}$ with $x\sim y$, it holds
    $$\mathbb{E}[d(X_{t+1},Y_{t+1})|X_t=x,Y_t=y]\leq e^{-\alpha} d(x,y)$$
    then
    $$t_{mix}(\epsilon)\leq\frac{1}{\alpha}(\log(\text{diam}(\mathcal{P}))+\log(\frac{1}{\epsilon})).$$
    \label{thm:bubley}
\end{thm}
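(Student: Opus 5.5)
The plan is the standard path-coupling argument of~\cite{RB-MD:97}. It has three steps: lift the one-step contraction from adjacent pairs $x\sim y$ to arbitrary pairs, iterate it, and convert the resulting bound on expected distance into a total-variation bound through the coupling inequality already recalled above, $\|P_s^t-P_{s'}^t\|\le \Pr[X_t\neq Y_t]$.

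\textbf{Assumptions on $d$.} Throughout, $d$ is the path metric induced by the adjacency relation $\sim$ on $\mathcal{P}$. This means $d(x,y)$ is the minimum, over paths $x=z_0\sim z_1\sim\cdots\sim z_k=y$, of $\sum_i d(z_{i-1},z_i)$. I also assume $d(x,y)\ge 1$ whenever $x\neq y$, as for the Hamming-type distance on the lattice $\mathcal{L}$ used later in the proof of Theorem~\ref{thm:fast_mixing}.

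\textbf{Step 1: extension to arbitrary pairs.} Fix arbitrary $x,y$ and a geodesic $x=z_0\sim\cdots\sim z_k=y$. For each edge take the hypothesised one-step coupling of $(P_{z_{i-1}},P_{z_i})$. Glue these consecutively (gluing lemma), conditioning on the shared marginal $Z'_i\sim P_{z_i}$, to obtain random variables $Z'_0,\dots,Z'_k$. The pair $(Z'_0,Z'_k)$ is then a valid coupling of $P_x$ and $P_y$. The triangle inequality and linearity give
\[
\mathbb{E}[d(Z'_0,Z'_k)]\le\sum_{i=1}^k \mathbb{E}[d(Z'_{i-1},Z'_i)]\le e^{-\alpha}\sum_{i=1}^k d(z_{i-1},z_i)=e^{-\alpha}d(x,y).
\]

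\textbf{Step 2: iteration.} Apply Step 1 at every time step, conditionally on $(X_t,Y_t)$. By the Markov property and the tower rule, this yields a coupling of the two chains with
\[
\mathbb{E}[d(X_t,Y_t)]\le e^{-\alpha t}d(X_0,Y_0)\le e^{-\alpha t}\,\mathrm{diam}(\mathcal{P}).
\]

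\textbf{Step 3: conversion to a mixing-time bound.} Since $d\ge 1$ off the diagonal, Markov's inequality gives $\Pr[X_t\neq Y_t]\le \mathbb{E}[d(X_t,Y_t)]$. Combining this with the coupling inequality and the bound $\Delta(t)\le\max_{s,s'}\|P_s^t-P_{s'}^t\|$ from the display above, we get $\Delta(t)\le e^{-\alpha t}\mathrm{diam}(\mathcal{P})$. Requiring the right-hand side to be at most $\epsilon$ and solving for $t$ gives
\[
t\ge \frac{1}{\alpha}\Bigl(\log(\mathrm{diam}(\mathcal{P}))+\log\bigl(\tfrac{1}{\epsilon}\bigr)\Bigr),
\]
which is the stated bound on $t_{mix}(\epsilon)$.

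\textbf{Main obstacle.} The delicate step is Step 1. One must check that the glued object is a genuine coupling with the correct marginals $P_x$ and $P_y$. One must also check that the per-edge bounds can be summed, and this is exactly where we need $d$ to be a path metric, so that the geodesic length equals $d(x,y)$. I would handle the gluing explicitly through conditional distributions on the finite state space $\mathcal{P}$, which is finite here since $\mathcal{L}$ restricted to length-$\ell$ responses is finite. This sidesteps any measurability issues.
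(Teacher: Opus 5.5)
Your argument is correct and is the standard Bubley--Dyer path-coupling proof: geodesic gluing, then iteration, then Markov's inequality using $d\ge 1$ off the diagonal. The paper gives no proof of its own. It simply cites \cite{RB-MD:97} after recalling the coupling inequality $\Delta(t)\le\max_{s,s'}\Pr[X_t\neq Y_t]$, which is what your Step 3 uses. The hypotheses you make explicit are exactly what the paper's loosely worded statement leaves implicit: a path metric, contraction for \emph{every} adjacent pair (not ``some''), and $\alpha>0$.

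One fix is needed in your last line. Since $t_{mix}$ is integer-valued, what you actually obtain is $t_{mix}(\epsilon)\le\bigl\lceil\frac{1}{\alpha}(\log\mathrm{diam}(\mathcal{P})+\log\frac{1}{\epsilon})\bigr\rceil$, not the ceiling-free bound. The ceiling cannot be dropped. Take a two-state chain with the discrete metric that resamples from $\pi$ at every step. Coupling both copies to the same draw shows it satisfies the hypothesis for every $\alpha$. Yet $\Delta(0)\ge 1/2$ and $\Delta(1)=0$, so $t_{mix}(\epsilon)=1$ for $\epsilon<1/2$, while the right-hand side $\frac{1}{\alpha}\log\frac{1}{\epsilon}$ tends to $0$ as $\alpha\to\infty$.
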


Considering Theorem~\ref{thm:bubley}, note that given two corresponding sequences $\mathsf{S},\mathsf{R}\subseteq\mathbb{V}^*$, representing two distinct states, such that they differ only by one string $\mathsf{R}=\mathsf{S}\oplus\{r\}$, Algorithm~\ref{alg:algorithm} samples one string $i\in\mathbb{V}^*$, see Appendix~\ref{app:lattice}, following a distribution probability $q$, and then adds or removes the element by a conditional probability $p$. Then the expected distance between the new states $\mathsf{S}^\prime$ and $\mathsf{R}^\prime$ can change in two distinct ways:

\underline{Case 1:} If $i=r$ then the conditionals for both chains are identical, and we can couple both chains with probability
$$p_{add} := \frac{p(\mathsf{S}\oplus\{r\})}{p(\mathsf{S}) + p(\mathsf{S}\oplus\{r\}) }$$
or remove it by $1-p_{add}$.

\underline{Case 2:} If $i\neq r$ we cannot always couple the updates of the chains, because the conditional probabilities of the updates are different. In fact, we are forced
to have different updates (one chain adding i, the other chain removing i)
with probability equal to the difference of the corresponding conditionals,
\begin{subequations}
 \begin{align*}
     p_{dif} (i) &:= |\frac{p(\mathsf{S}\oplus\{i\})}{p(\mathsf{S}\ominus\{i\}) + p(\mathsf{S}\oplus\{i\}) } - \frac{p(\mathsf{R}\oplus\{i\})}{p(\mathsf{R}\ominus\{i\}) + p(\mathsf{R}\oplus\{i\}) } | \\
     &= | \frac{\exp(\beta F(\mathsf{S}\oplus\{i\}))}{\exp(\beta F(\mathsf{S}\ominus\{i\})) + \exp(\beta F(\mathsf{S}\oplus\{i\}))}- \frac{\exp(\beta F(\mathsf{R}\oplus\{i\}))}{\exp(\beta F(\mathsf{R}\ominus\{i\})) + \exp(\beta F(\mathsf{R}\oplus\{i\}))}| \\
     &= | \frac{\exp(\beta \Delta_F(i|\mathsf{S}))}{1 + \exp(\beta \Delta_F(i|\mathsf{S}))} - \frac{\exp(\beta \Delta_F(i|\mathsf{R}))}{1 + \exp(\beta \Delta_F(i|\mathsf{R}))}| \\
     &= | \frac{\exp(\beta \Delta_F(i|\mathcal{S})) - \exp(\beta \Delta_F(i|\mathsf{R}))}{(1 + \exp(\beta \Delta_F(i|\mathsf{S})))(1 + \exp(\beta \Delta_F(i|\mathsf{R})))}|\\
     &\leq | \frac{\exp(\beta \Delta_F(i|\mathsf{S})) - \exp(\beta \Delta_F(i|\mathsf{R}))}{\exp(\beta \Delta_F(i|\mathsf{S})) + \exp(\beta \Delta_F(i|\mathsf{R}))}| \\
     &= |\frac{\exp(\beta \Delta_F(i|\mathsf{S})- \Delta_F(i|\mathsf{R}))-1}{\exp(\beta \Delta_F(i|\mathsf{S}) - \Delta_F(i|\mathsf{R}))+1}| \\
     &=\tanh(\frac{\beta}{2}|\Delta_F(i|\mathsf{S}) - \Delta_F(i|\mathsf{R})|).
 \end{align*}
\end{subequations}

Hence, the expectation can be written as 
\begin{subequations}
 \begin{align*}
     \mathbb{E}[d(\mathsf{S}^\prime,\mathsf{R}^\prime)] &= d(\mathsf{S}^\prime,\mathsf{R}^\prime)\cdot( 1 - q(r)p_{add}(r) - q(r)(1-p_{add}(r)) + \sum_{i\neq r}q(i)p_{dif}(i))\\
     &= d(\mathsf{S}^\prime,\mathsf{R}^\prime)\cdot (1 - q(r) + \sum_{i\neq r}q(i)p_{dif}(i)) \\
     &\leq d(\mathsf{S}^\prime,\mathsf{R}^\prime)\cdot (1 - q_{min} + q_{max}\sum_{i\neq r}p_{dif}(i)) \\
     &\leq d(\mathsf{S}^\prime,\mathsf{R}^\prime)\cdot (1 - q_{min} + q_{max}\cdot(n-1)\cdot\gamma) \\
     &\leq d(\mathsf{S}^\prime,\mathsf{R}^\prime)\cdot e^{-(q_{min}-q_{max}\cdot(n-1)\cdot\gamma)}
 \end{align*}
\end{subequations}
Then,
$$t_{mix}(\epsilon)\leq \frac{1}{q_{min}-q_{max}\cdot\bar{\gamma}}(\log{(n)}+\log{(\frac{1}{\epsilon})})$$
which concludes the proof.
\end{proof}

\subsection{Proof of Non-vacuous Theorem~\ref{thm:fast_mixing}}

\begin{lem}[Positivity of the mixing denominator]
\label{lem:positive_denominator}
Let $q$ be the auxiliary proposal distribution over candidate $\ell$-grams, and let $q_{\min}(\mathsf{S}^{\ell}) := \min_{s\in\mathsf{S}^{\ell}} q(s),~q_{\max}(\mathsf{S}^{\ell}) := \max_{s\in\mathsf{S}^{\ell}} q(s).$ Assume $q_{\min}(\mathsf{S}^{\ell})>0$ and
$$\bar{\gamma}_{\mathsf{I},\beta}<\frac{q_{\min}(\mathsf{S}^{\ell})}{q_{\max}(\mathsf{S}^{\ell})}.$$
Then, $q_{\min}(\mathsf{S}^{\ell})-q_{\max}(\mathsf{S}^{\ell})\bar{\gamma}_{\mathsf{I},\beta}>0.$ \boxend
\end{lem}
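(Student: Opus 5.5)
The plan is a direct algebraic rearrangement of the hypothesis. The only facts needed are that $q_{\max}(\mathsf{S}^{\ell})$ is strictly positive and that multiplying a strict inequality by a positive number preserves its direction.

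First, establish positivity of $q_{\max}$. By definition $q_{\max}(\mathsf{S}^{\ell}) = \max_{s\in\mathsf{S}^{\ell}} q(s) \geq q_{\min}(\mathsf{S}^{\ell})$, since a maximum over a nonempty set dominates the minimum over the same set. Combined with the assumption $q_{\min}(\mathsf{S}^{\ell})>0$, this gives $q_{\max}(\mathsf{S}^{\ell})>0$. In particular the ratio $q_{\min}/q_{\max}$ in the hypothesis is well defined; I would note explicitly that the candidate set of $\ell$-grams is nonempty so that both extrema exist.

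Second, multiply the hypothesis $\bar{\gamma}_{\mathsf{I},\beta}<q_{\min}(\mathsf{S}^{\ell})/q_{\max}(\mathsf{S}^{\ell})$ by $q_{\max}(\mathsf{S}^{\ell})>0$. This yields $q_{\max}(\mathsf{S}^{\ell})\,\bar{\gamma}_{\mathsf{I},\beta}<q_{\min}(\mathsf{S}^{\ell})$, which rearranges to the claimed $q_{\min}(\mathsf{S}^{\ell})-q_{\max}(\mathsf{S}^{\ell})\,\bar{\gamma}_{\mathsf{I},\beta}>0$. Nonnegativity of $\bar{\gamma}_{\mathsf{I},\beta}$ is automatic, since it is a maximum of sums of $\tanh$ of nonnegative arguments, but it is not needed for this step.

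There is no real obstacle. The one point to flag is the link to Theorem~\ref{thm:fast_mixing}: the lemma guarantees that the denominator there is positive, so $\tau$ is finite and nonnegative and the bound is non-vacuous. I would add a sentence observing that, because $q_{\min}/q_{\max}\le 1$, this hypothesis is stronger than (it implies) the condition $\bar{\gamma}_{\mathsf{I},\beta}<1$ used in Theorem~\ref{thm:fast_mixing}.
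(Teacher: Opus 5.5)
Your proposal is correct and takes essentially the same route as the paper's proof: both multiply the hypothesis by $q_{\max}(\mathsf{S}^{\ell})>0$ and rearrange. Your argument that $q_{\max}(\mathsf{S}^{\ell})\ge q_{\min}(\mathsf{S}^{\ell})>0$ justifies a positivity step that the paper only asserts.
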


\begin{proof}
By assumption,
$$\bar{\gamma}_{\mathsf{I},\beta}<\frac{q_{\min}(\mathsf{S}^{\ell})}{q_{\max}(\mathsf{S}^{\ell})},$$
see Remark~\ref{rem:condition}, multiplying both sides by $q_{\max}(\mathsf{S}^{\ell})>0$ gives $q_{\max}(\mathsf{S}^{\ell})\bar{\gamma}_{\mathsf{I},\beta}
<q_{\min}(\mathsf{S}^{\ell}).$
Rearranging yields $q_{\min}(\mathsf{S}^{\ell})-q_{\max}(\mathsf{S}^{\ell})\bar{\gamma}_{\mathsf{I},\beta}>0,$ as desired.
\end{proof}

\begin{rem}
  \label{rem:condition}
    The condition $\bar{\gamma}_{\mathsf{I},\beta}<\frac{q_{\min}(\mathsf{S}^{\ell})}{q_{\max}(\mathsf{S}^{\ell})}$ is mild in practice. Indeed, all quantities lie in $[0,1]$: the ratio $q_{\min}/q_{\max}\in(0,1]$ captures the imbalance of the proposal distribution, while $\bar{\gamma}_{\mathsf{I},\beta}\in[0,1)$ measures the curvature of the objective through differences of marginal gains across distinct sequences. When the proposal is not overly skewed (i.e., $q_{\min}$ is not negligible compared to $q_{\max}$) and the objective exhibits diminishing returns with moderate curvature, the inequality is naturally satisfied. Intuitively, $\bar{\gamma}_{\mathsf{I},\beta}$ reflects how distinguishable two candidate extensions are in terms of their contribution to $F$: when marginal gains do not vary sharply across sequences, the sampler does not concentrate mass on a few configurations, and the proposal distribution remains sufficiently well-balanced. In the extreme case of uniform proposals, $q_{\min}=q_{\max}$, and the condition reduces to $\bar{\gamma}_{\mathsf{I},\beta}<1$, which holds for submodular objectives.
\end{rem}

\subsection{Effect of uncertainty on $\Gamma$ consistency}

The preceding analysis assumes that the axiom set $\Gamma$ is fixed and reliable. 
We now discuss how uncertainty in $\Gamma$ propagates through the mixing bound. 
Let $\Gamma^\star$ denote the ideal trusted context and let $\widehat{\Gamma}$ denote 
the context available to the algorithm. Suppose that the induced mutual-information 
objective is uniformly perturbed, i.e.,
$$\sup_{\mathsf{S}\in\mathcal{I}}\left|\mathsf{I}(\mathsf{S};\widehat{\Gamma})-\mathsf{I}(\mathsf{S};\Gamma^\star)
\right|\leq \eta_{\Gamma}.$$
Then, for any candidate string $i$ and state $\mathsf{S}$,
$$\left|\widehat{\Delta}_F(i\mid \mathsf{S})-\Delta_F^\star(i\mid \mathsf{S})\right|\leq 2\eta_{\Gamma}.$$

Consequently, for two neighboring states $\mathsf{S}$ and $\mathsf{R}$,
$$\left|\left[\widehat{\Delta}_F(i\mid \mathsf{S})-\widehat{\Delta}_F(i\mid\mathsf{R})
\right]-\left[\Delta_F^\star(i\mid \mathsf{S})-\Delta_F^\star(i\mid \mathsf{R})\right]\right|\leq 4\eta_{\Gamma}.$$

Since the function $x\mapsto \tanh(x)$ is $1$-Lipschitz on $\mathbb{R}_{\geq 0}$, the effective curvature term appearing in the path-coupling argument satisfies
$$\widehat{\gamma}_{\mathsf{I},\beta}\leq\gamma_{\mathsf{I},\beta}^{\star}+2\beta m\eta_{\Gamma},$$
where $m$ denotes the number of candidate $\ell$-grams considered in the summation defining  $\gamma_{\mathsf{I},\beta}$. Therefore, the robust mixing-time bound becomes
$$t_{\mathrm{mix}}(\epsilon)\leq\frac{1}{q_{\min}(\mathsf{S}^{\ell})-q_{\max}(\mathsf{S}^{\ell})
\left(\bar{\gamma}_{\mathsf{I},\beta}^{\star}+2\beta m\eta_{\Gamma}\right)
}\cdot \left(\log n+\log\frac{1}{\epsilon}\right).$$
Thus, uncertainty in $\Gamma$ slows convergence by increasing the effective curvature of the semantic objective. The bound remains informative whenever
$q_{\min}(\mathsf{S}^{\ell})>q_{\max}(\mathsf{S}^{\ell})\left(\bar{\gamma}_{\mathsf{I},\beta}^{\star}+2\beta m\eta_{\Gamma}\right)$. If this condition fails, the path-coupling argument becomes vacuous: the algorithm may still mix empirically, but the present proof no longer certifies contraction.

\subsection{Relation to perplexity and model confidence}

The quantities $q_{\min}(\mathsf{S}^\ell)$ and $q_{\max}(\mathsf{S}^\ell)$ are induced by the proposal distribution $q$, which in practice corresponds to the normalized logits of the language model. Therefore, the gap $q_{\max} - q_{\min}$ reflects how concentrated the model distribution is over candidate $\ell$-grams. This concentration is closely related to the notion of \emph{perplexity}, a standard measure of uncertainty in language models. Formally, given a distribution $q$ over a discrete space $\mathcal{X}$, the perplexity is defined as
\begin{equation}
\mathrm{PPL}(q) := \exp\big(H(q)\big)
= \exp\left(-\sum_{x \in \mathcal{X}} q(x)\log q(x)\right),
\end{equation}
where $H(q)$ denotes the Shannon entropy. Low perplexity corresponds to low entropy and thus a highly concentrated (confident) distribution, whereas high perplexity indicates a diffuse and uncertain distribution.

In our setting, a low-perplexity proposal $q$ implies that probability mass is concentrated on a small subset of $\ell$-grams, yielding a large separation between $q_{\max}$ and $q_{\min}$. Inspecting Theorem~\ref{thm:fast_mixing}, we observe that the mixing time scales inversely with the term $q_{\min}(\mathsf{S}^\ell) - q_{\max}(\mathsf{S}^\ell)\bar{\gamma}_{\mathsf{I},\beta}$. Consequently, when the model is confident (low perplexity), this gap becomes more pronounced, increasing the denominator and leading to faster mixing. Intuitively, a confident proposal distribution guides the Markov chain toward high-value regions of the lattice more consistently, reducing exploratory randomness and accelerating convergence to the stationary distribution. In contrast, high-perplexity regimes (uncertain models) flatten $q$, shrinking the gap between $q_{\min}$ and $q_{\max}$ and thus slowing down the mixing process.

In the extreme case of a uniform distribution (maximum perplexity), $q_{\max}(\mathsf{S}^\ell)\bar{\gamma}_{\mathsf{I},\beta}\approx q_{\min}(\mathsf{S}^\ell)$, and the bound deteriorates, reflecting slow mixing due to lack of directional guidance.

\newpage
\section{Limitations and Broader Impact}
\label{app:limitations_broader_impact}

\paragraph{\textbf{Dependence on Trusted Context $\Gamma$.}}
\texttt{CAROL} relies on the availability of a trusted contextual reference set $\Gamma$. This is both a strength and a limitation. On the one hand, separating generation from semantic evaluation allows the method to avoid treating retrieved context merely as prompt augmentation. Instead, $\Gamma$ acts as an axiomatic reference against which generated $\ell$-grams are accepted or rejected. On the other hand, the guarantees of the framework are only meaningful relative to the reliability of $\Gamma$. If $\Gamma$ is incomplete, biased, outdated, or factually incorrect, then \texttt{CAROL} may enforce consistency with an invalid reference and thereby preserve or even amplify errors.

This limitation is inherent to any grounding-based hallucination mitigation approach: factuality cannot be certified without assumptions on the truthfulness of the grounding source. In the sense of Tarskian semantics, truth is defined relative to a model or interpretation. Therefore, while \texttt{CAROL} can test whether a generated statement is semantically supported by $\Gamma$, it cannot independently certify that $\Gamma$ itself is universally true. Ultimately, trust must be grounded externally, and it is in general impossible for the system to verify it in isolation. For this reason, practical deployments should construct $\Gamma$ from curated, auditable, and domain-authoritative sources, especially in high-stakes applications such as medicine, law, finance, or cyber--physical systems.

\paragraph{\textbf{Clustering as a Computational Bottleneck.}}
A second limitation concerns the clustering step used to compute semantic entropy and marginal semantic gain. Since \texttt{CAROL} evaluates candidate $\ell$-grams by comparing them against the trusted context, the clustering operation may become a computational bottleneck when $\Gamma$ is large. However, this bottleneck is not fundamental. Because $\Gamma$ is fixed for a given query or task instance, its embeddings can be precomputed and cached. Distance computations can be accelerated using approximate nearest-neighbor search, low-dimensional indexing, or relative-distance schemes that compare candidates only against representative key points rather than all elements of $\Gamma$ \cite{PJ-BK-ID-KG:10}. 

In addition, stochastic variants of clustering \cite{DH-SL-TP-AS-KT:18} can estimate the marginal gain using random subsets of $\Gamma$, reducing computational cost while preserving the accept--reject structure in expectation. These approximations provide natural directions for scaling \texttt{CAROL} to large contexts without altering the conceptual framework and are left for future work.

\paragraph{\textbf{Dependence on Sufficient $\ell$-gram Generation.}}
The clustering mechanism requires a sufficient number of generated $\ell$-grams to form meaningful semantic partitions. If the model produces only a very short response, or if the task admits only a single atomic answer, the entropy estimate may become less informative due to the lack of diversity in semantic units. Similarly, highly creative or open-ended tasks may require a richer context $\Gamma$ to distinguish unsupported novelty from valid creative extrapolation.

However, this requirement is not a significant practical limitation in the agentic settings targeted by \texttt{CAROL}. Modern LLM-based systems typically generate intermediate reasoning steps, tool outputs, and candidate responses across multiple iterations. These intermediate outputs naturally provide enough $\ell$-grams for clustering to be meaningful. Thus, while \texttt{CAROL} may be less informative in extremely short single-step settings, it is well aligned with multi-step agentic workflows where sufficient generation is inherently available. A deeper analysis on the  effect of clustering size is left for future work. 

\paragraph{\textbf{Broader Impact and Responsible Use.}}
\texttt{CAROL} is designed to reduce hallucinations and improve interpretability by explicitly conditioning generation on semantic consistency with trusted information. This can enhance reliability in applications where unsupported outputs may lead to downstream harm. However, the same mechanism may also induce a false sense of certainty if consistency with $\Gamma$ is interpreted as absolute truth. 

It is therefore important to emphasize that \texttt{CAROL} enforces grounded consistency, not universal correctness. Responsible deployment requires transparency in how $\Gamma$ is constructed, awareness of its potential incompleteness or bias, and human oversight in high-stakes domains. Specifically, that is why Assumption~\ref{assump:context} is grounded on axioms, which are the basic information unit in any scientific approach. When used appropriately, \texttt{CAROL} provides a principled tool for improving reliability in large language model systems without modifying their internal structure.

\paragraph{\textbf{Integration in verifiers pipelines.}} We acknowledge that recent verifier-based approaches such as \texttt{SelfCheckGPT} \cite{PM-AL-MG:23}, \texttt{RARR} \cite{JR-EK-AV-BK-GZ:25}, and attribution-based metrics provide complementary perspectives. Our focus in this work is on a unified objective-based formulation rather than verifier pipelines. Integrating \texttt{CAROL} with such methods is an important direction for future work. While ablations on $\ell$-gram size, clustering choices, and $\beta$ are valuable, our goal in this work is to establish a principled framework. We leave systematic empirical analysis of these design choices for future work.

In addition, we include comparisons with the recent \texttt{HalluciNot} framework, which evaluates hallucination detection across multiple benchmarks, thereby providing a complementary and up-to-date baseline that partially covers the verifier-based evaluation space within our experiments.

\newpage
\section{Experimental Setting: \texttt{URSA} Framework and Reproducibility}
\label{app:ursa}

All experiments are implemented using \texttt{URSA} (Universal Research and Scientific Agent), an open-source framework for building modular agentic systems. \texttt{URSA} provides a standardized interface for defining, composing, and executing agents, enabling reproducible multi-agent experimentation.

\paragraph{Installation.}
To ensure reproducibility, we rely on a fixed version of \texttt{URSA}. The framework can be installed directly from the official repository:
\begin{verbatim}
git clone https://github.com/lanl/ursa.git
cd ursa
pip install -e .
\end{verbatim}
We recommend using a virtual environment with a fixed Python version (e.g., Python 3.10) and freezing dependencies via \texttt{requirements.txt}.

\paragraph{BaseAgent abstraction.}
\texttt{URSA} defines a core abstraction, \texttt{BaseAgent}, which standardizes agent behavior through a unified interface. Each agent implements:
\begin{itemize}
    \item an internal model (e.g., LLM or tool),
    \item a \texttt{forward()} method that maps inputs to outputs,
    \item optional memory and state management.
\end{itemize}

This abstraction allows all agents in our pipeline to be defined as modular and interchangeable components, facilitating controlled experimentation.

\paragraph{Agent definitions.}
Following the architecture described in Section~\ref{sec::numerical} (see also Fig.~\ref{fig:agentic_pipeline}), we instantiate three agents:

\begin{itemize}
    \item \textbf{Planner Agent:} generates the initial query $q$ and constructs the context $\Gamma$.
    \item \textbf{Researcher Agent:} produces candidate responses (treated as $\ell$-grams in \texttt{CAROL}).
    \item \textbf{Reasoner Agent:} aggregates and refines responses across iterations.
\end{itemize}

Each agent is implemented by subclassing \texttt{BaseAgent}. A minimal example is shown below:

\begin{verbatim}
from ursa.agents.base_agent import BaseAgent

class ResearcherAgent(BaseAgent):
    def forward(self, input_text):
        response = self.model.generate(input_text)
        return response
\end{verbatim}

\paragraph{Integration with \texttt{CAROL}.}
\texttt{URSA} is used as the execution backbone for \texttt{CAROL}. At each iteration:
\begin{enumerate}
    \item The \textbf{Planner} provides $(q, \Gamma)$.
    \item The \textbf{Researcher} generates a candidate string $\mathsf{S}^t$.
    \item \texttt{CAROL} evaluates $\mathsf{S}^t$ via the submodular objective, Algorithm~\ref{alg:algorithm}.
    \item The \textbf{Reasoner} aggregates accepted responses.
\end{enumerate}

Importantly, \texttt{CAROL} treats each Researcher output as a semantic unit (i.e., an $\ell$-gram), aligning with the agent-level abstraction rather than token-level generation.

\paragraph{Determinism and reproducibility.}
To ensure reproducibility, we enforce:
\begin{itemize}
    \item Fixed random seeds for all stochastic components (LLM sampling, \texttt{CAROL} acceptance).
    \item Fixed temperature and decoding parameters.
    \item Deterministic execution order of agents.
    \item Logged intermediate states $(\mathcal{S}_t)$ and acceptance decisions.
\end{itemize}

All experiments are run on a controlled hardware environment (NVIDIA RTX 2080 Ti GPU), matching the setup described in Section~\ref{sec::numerical}.

\paragraph{Why \texttt{URSA}.}
\texttt{URSA} enables:
\begin{itemize}
    \item \textbf{Modularity:} agents can be swapped without affecting the pipeline.
    \item \textbf{Traceability:} each agent’s output is explicitly recorded.
    \item \textbf{Reproducibility:} standardized interfaces reduce implementation variance.
\end{itemize}

This makes it particularly suitable for evaluating agentic algorithms such as \texttt{CAROL}, where the interaction between components is central to performance.

\begin{figure*}[t]
\centering
\begin{tikzpicture}[
    scale=0.75,
    transform shape,
    font=\small,
    >=Latex,
    node distance=8mm and 10mm,
    box/.style={draw, rounded corners, thick, align=center, minimum height=10mm, minimum width=24mm},
    agent/.style={draw, rounded corners, thick, align=center, minimum height=11mm, minimum width=26mm, fill=gray!8},
    proc/.style={draw, rounded corners, thick, align=center, minimum height=11mm, minimum width=30mm, fill=gray!4},
    data/.style={draw, rounded corners, dashed, align=center, minimum height=9mm, minimum width=22mm},
    arrow/.style={->, thick},
    feedback/.style={->, thick, dashed}
]

\node[box, minimum width=40mm] (base) {\texttt{URSA BaseAgent}\\standardized agent interface};

\node[agent, below left=14mm and 20mm of base] (planner) {Planner Agent:\\builds query $\mathsf{q}$ and context $\Gamma$};
\node[agent, below=14mm of base] (researcher) {Researcher Agent:\\generates candidate response $\mathsf{S}^t$};
\node[agent, below right=14mm and 20mm of base] (reasoner) {Reasoner Agent:\\aggregates accepted outputs};

\node[proc, below=30mm of researcher, minimum width=42mm] (carol) {\texttt{CAROL} acceptance--rejection step\\evaluate $\Delta F(\mathsf{S}^t \mid \mathcal{S}_{t-1},\Gamma)$};
\node[proc, below left=10mm and 18mm of carol, minimum width=30mm] (accept) {Accept\\append to state};
\node[proc, below right=10mm and 18mm of carol, minimum width=30mm] (reject) {Reject\\rewire / revise};

\node[data, left=14mm of carol, minimum width=24mm] (state) {Current state\\$(\mathsf{q},\Gamma,\mathsf{S}_{t-1})$};
\node[data, right=14mm of carol, minimum width=24mm] (cand) {Candidate\\$\mathsf{S}^t$};
\node[data, below=17mm of accept, minimum width=36mm] (accstate) {Updated trajectory\\$\mathcal{S}_t \leftarrow \mathcal{S}_{t-1} \oplus \mathsf{S}^t$};
\node[data, below=10mm of reject, minimum width=36mm] (revstate) {Revised prompt / state\\CoT or policy update};

\node[draw, rounded corners, ultra thick, inner sep=8mm,
      fit=(base)(planner)(researcher)(reasoner),
      label={[yshift=1mm]\textbf{\texttt{URSA} execution layer}}] (ursafit) {};

\node[draw, rounded corners, ultra thick, inner sep=8mm,
      fit=(carol)(accept)(reject)(state)(cand)(accstate)(revstate),
      label={[yshift=1mm]\textbf{\texttt{CAROL} control layer}}] (carolfit) {};

\draw[arrow] (base) -- (planner);
\draw[arrow] (base) -- (researcher);
\draw[arrow] (base) -- (reasoner);

\draw[arrow] (planner) -- node[above left, pos=0.55] {$\mathsf{q},\Gamma$} (researcher);
\draw[arrow] (researcher) -- node[right] {$\mathsf{S}^t$} (cand);
\draw[arrow] (cand) -- (carol);
\draw[arrow] (state) -- (carol);

\draw[arrow] (carol) -- node[above left, pos=0.55] {accept} (accept);
\draw[arrow] (carol) -- node[above right, pos=0.55] {reject} (reject);

\draw[arrow] (accept) -- (accstate);
\draw[arrow] (reject) -- (revstate);

\draw[arrow]
  (accstate.east) -- ++(118mm,0) |-([xshift=-0.2mm]reasoner.east);

\coordinate (accaux) at ($(accstate.south)+(0,-6mm)$);
\draw[arrow]
  (reasoner.east)
  |- ++(8mm,0)
  |- 
  (accaux)
  -- node[pos=0.25,left] {final response} (accstate.south);

\coordinate (aux1) at ([xshift=-28mm]researcher.south);
\coordinate (aux2) at ($(aux1)+(0,6mm)$);
\draw[feedback]
  (revstate.west)
  -| (aux1)
  -- (aux2)
  |- (researcher.west);

\draw[feedback] (accstate.west) -| ([xshift=-16mm]state.west) |- (state.west);
\draw[feedback] ($(planner.south)+(-12mm,0)$) |- ([xshift=-10mm]state.west) -- (state.west);

\node[align=center, above=1mm of state, font=\scriptsize] {semantic memory\\and running context};
\node[align=center, above=1mm of cand, font=\scriptsize] {Researcher output\\treated as $\ell$-gram};

\end{tikzpicture}
\caption{\texttt{URSA}--\texttt{CAROL} experimental pipeline. \texttt{URSA} provides the reproducible multi-agent execution layer through a shared \texttt{BaseAgent} abstraction, while \texttt{CAROL} acts as a control layer that evaluates each Researcher output $\mathcal{S}_t$ against the current state $(q,\Gamma,\mathsf{S})$ and either accepts it into the trajectory or rejects it and rewires the prompt/state. This representation reflects the experimental setting in which Planner, Researcher, and Reasoner agents are orchestrated in \texttt{URSA} and \texttt{CAROL} operates on each intermediate Researcher response as the semantic unit of refinement.}
\label{fig:ursa_carol_pipeline}
\end{figure*}

\paragraph{\textbf{Statistical Significance and Experimental Variability.}}
To ensure the reliability and reproducibility of our empirical findings, all reported results are evaluated over multiple independent runs with different random seeds controlling (i) stochastic decoding in the language model, (ii) the probabilistic acceptance--rejection mechanism of \texttt{CAROL}, and (iii) dataset shuffling. We perform $k=3$ independent runs per configuration using fixed seeds $\{0,1,2\}$. For each metric, we report the mean performance together with the standard deviation. In addition, we compute $95\%$ confidence intervals via non-parametric bootstrap resampling (with $1000$ resamples) over the evaluation set.

To assess statistical significance of performance differences between \texttt{CAROL} and baseline methods (e.g., \texttt{RAG} and \texttt{HalluciNot}), we perform paired hypothesis tests over identical evaluation samples. Specifically, we use two-sided paired $t$-tests, and confirm results with Wilcoxon signed-rank tests to account for potential non-normality. Improvements reported in Tables~\ref{tab:hallucination_comparison} and~\ref{tab:carol_vs_rag_hierarchy} are statistically significant at the $p < 0.05$ level.

We observe consistently low variance across runs (see reported standard deviations), indicating that the proposed method is robust to stochastic effects in both generation and inference-time control. This suggests that performance gains of \texttt{CAROL} are not driven by favorable sampling effects, but reflect consistent improvements across stochastic realizations.

\vspace{4pt}
\begin{center}
\scriptsize
\begin{tabular}{l l}
\toprule
\textbf{Component} & \textbf{Configuration} \\
\midrule
Number of runs ($k$) & $3$ \\
Random seeds & $\{0,1,2\}$ \\
Decoding & temperature sampling (fixed across runs) \\
\texttt{CAROL} stochasticity & acceptance--rejection sampling \\
Dataset shuffling & enabled per run \\
Bootstrap resamples & $1000$ \\
Confidence intervals & $95\%$ (non-parametric bootstrap) \\
Significance threshold & $p < 0.05$ \\
Evaluation protocol & identical samples across methods \\
\bottomrule
\end{tabular}
\end{center}

\paragraph{\textbf{Compute Resources and Runtime.}}
All experiments are conducted on a workstation equipped with an \emph{NVIDIA GeForce RTX 2080 Ti GPU} ($11$GB VRAM), an Intel Xeon-class CPU, and $64$GB of system memory. For open-weight models (e.g., \emph{Llama-3.1-8B}), inference is performed locally on the GPU, while proprietary models (e.g., \emph{OpenAI}-family) are accessed via API calls, with compute handled externally by the provider. 

End-to-end evaluation runtime varies depending on the dataset and model. On average, a full evaluation pass over a benchmark dataset (e.g., \textbf{TruthfulQA} or \textbf{HaluEval}) requires approximately a few GPU-hours for local models, with per-sample latency ranging from sub-second (baseline \texttt{RAG}) to a few seconds (\texttt{CAROL}), primarily due to the clustering step and iterative acceptance--rejection procedure. Experiments involving API-based models incur additional wall-clock time due to network latency but negligible local compute cost.

We report per-sample latency and relative computational cost in Tables~\ref{tab:hallucination_comparison} and~\ref{tab:carol_vs_rag_hierarchy}. All experiments are executed with fixed hyperparameters and deterministic seeds to ensure reproducibility. Overall, \texttt{CAROL} introduces moderate computational overhead compared to standard generation pipelines, but remains tractable within commonly available research hardware. Future optimizations, such as precomputing embeddings of $\Gamma$ and approximate distance evaluations, can further reduce runtime.

\newpage
\section{Datasets and External Resources}
\label{app:datasets}

We evaluate \texttt{CAROL} on four widely used benchmarks covering factual verification, hallucination detection, and multi-hop reasoning: \textbf{FEVER}, \textbf{TruthfulQA}, \textbf{HaluEval}, and \textbf{HotPotQA}. Each dataset provides a query (or claim), optional supporting context, and a ground-truth label or reference answer. In our framework, the query is used to construct the prompt $\rho$, while the retrieved or provided context is mapped into the trusted axiom set $\Gamma$. Below we summarize the structure of each dataset in a standardized format dataset cards.

\begin{tcolorbox}[breakable,colback=gray!5,colframe=gray!50,title=\textbf{FEVER} Dataset Structure]
\textbf{Fields:}
\begin{itemize}
    \item \texttt{claim}: textual statement to be verified
    \item \texttt{evidence}: list of supporting/refuting sentences (Wikipedia)
    \item \texttt{label}: \{\texttt{SUPPORTS}, \texttt{REFUTES}, \texttt{NOT ENOUGH INFO}\}
\end{itemize}
\textbf{Usage in \texttt{CAROL}:}
\begin{itemize}
    \item Query $q$: \texttt{claim}
    \item Context $\Gamma$: retrieved evidence sentences
    \item Target: detect and reduce hallucinated statements
\end{itemize}
\end{tcolorbox}

\begin{tcolorbox}[breakable,colback=gray!5,colframe=gray!50,title=\textbf{TruthfulQA} Dataset Structure]
\textbf{Fields:}
\begin{itemize}
    \item \texttt{question}: natural language query
    \item \texttt{best\_answer}: ground-truth correct answer
    \item \texttt{incorrect\_answers}: list of common misconceptions
\end{itemize}
\textbf{Usage in \texttt{CAROL}:}
\begin{itemize}
    \item Query $q$: \texttt{question}
    \item Context $\Gamma$: retrieved factual sources (external or curated)
    \item Target: minimize alignment with incorrect answers while maximizing semantic consistency
\end{itemize}
\end{tcolorbox}

\begin{tcolorbox}[breakable,colback=gray!5,colframe=gray!50,title=\textbf{HaluEval} Dataset Structure]
\textbf{Fields:}
\begin{itemize}
    \item \texttt{query}: input prompt (QA, summarization, or dialogue)
    \item \texttt{response}: generated output
    \item \texttt{label}: \{\texttt{hallucinated}, \texttt{factual}\}
\end{itemize}
\textbf{Usage in \texttt{CAROL}:}
\begin{itemize}
    \item Query $q$: \texttt{query}
    \item Context $\Gamma$: task-specific retrieved documents
    \item Target: classify and mitigate hallucinated outputs
\end{itemize}
\end{tcolorbox}

\begin{tcolorbox}[breakable,colback=gray!5,colframe=gray!50,title=\textbf{HotPotQA} Dataset Structure]
\textbf{Fields:}
\begin{itemize}
    \item \texttt{question}: multi-hop query
    \item \texttt{context}: supporting paragraphs (Wikipedia)
    \item \texttt{answer}: ground-truth answer
\end{itemize}
\textbf{Usage in \texttt{CAROL}:}
\begin{itemize}
    \item Query $q$: \texttt{question}
    \item Context $\Gamma$: supporting facts and retrieved passages
    \item Target: evaluate hallucination in multi-hop reasoning
\end{itemize}
\end{tcolorbox}

\paragraph{\textbf{Access and Loading.}}
All datasets are publicly available and can be accessed through standard repositories such as \emph{HuggingFace} Datasets. A typical loading pattern is:
\begin{verbatim}
from datasets import load_dataset

fever = load_dataset("fever", split="validation")
truthfulqa = load_dataset("truthful_qa", "generation")
halueval = load_dataset("halueval")
hotpotqa = load_dataset("hotpot_qa", "fullwiki")
\end{verbatim}

\paragraph{\textbf{\texttt{HalluciNot} Baseline.}}
We compare against the \texttt{HalluciNot} hallucination detection framework, which combines contextual verification with external knowledge sources. The method is available as an open-source implementation accompanying the original publication. It can be obtained from public repositories (e.g., GitHub) or via the authors’ release page associated with the paper \cite{BP-AL-PJ-PA:25}.

In practice, \texttt{HalluciNot} is used as a post-hoc detector:
\begin{verbatim}
from hallucination_detector import HalluciNot

detector = HalluciNot()
score = detector.predict(response, context)
\end{verbatim}

The model evaluates whether a generated response is supported by the provided context and common knowledge. In our experiments, we use \texttt{HalluciNot} as a baseline for hallucination detection, comparing its predictions with the semantic entropy metric induced by \texttt{CAROL}. While \texttt{HalluciNot} relies on external verification signals, \texttt{CAROL} operates through an intrinsic consistency mechanism with respect to $\Gamma$, enabling a unified detection and mitigation framework.

\paragraph{\textbf{Licenses, Data Usage, and External Assets.}}
All datasets and models used in this work are publicly available and employed in accordance with their respective licenses and usage terms. Specifically, \textbf{FEVER}, \textbf{TruthfulQA}, \textbf{HaluEval}, and \textbf{HotPotQA} are distributed for research purposes and are commonly accessed through open repositories such as \emph{HuggingFace} Datasets. These datasets are used strictly for evaluation and benchmarking, without modification of their original content beyond standard preprocessing.

Regarding models, experiments involving proprietary APIs (e.g., \emph{OpenAI}-family models) are conducted under the terms of service of the respective providers, while open-weight models (e.g., \emph{Llama-3.1-8B}) are used in compliance with their research and non-commercial licenses. The \texttt{URSA} framework is open-source and utilized under its corresponding license, ensuring reproducibility of the agentic pipeline. 

We do not introduce new datasets, collect personal data, or involve human subjects in this work. All experiments rely exclusively on publicly available resources. Users of \texttt{CAROL} should ensure that downstream applications respect the licensing constraints of both the underlying models and the contextual data used to construct $\Gamma$, particularly in commercial or sensitive deployments.

\end{document}